\newcommand{\regretent}{\texttt{Regret-Min}$+$\texttt{Entropy}\xspace}
\newcommand{\regretlhalf}{\texttt{Regret-Min}$+\ell_{1/2}$\xspace}
\newcommand{\regregretent}{\texttt{Regularized-Regret-Min}$+$\texttt{Entropy}\xspace}
\newcommand{\regregretlhalf}{\texttt{Regularized-Regret-Min}$+\ell_{1/2}$\xspace}
\newcommand{\alphaacc}{\alpha_{\text{acc}}^*}
\newcommand{\alphaobj}{\alpha_{\text{obj}}^*}
\renewcommand{\b}[1]{{\bf #1}}
\newcommand{\X}{\b{X}}
\newcommand{\PT}{\b{P^\top}}
\newcommand{\BS}{\b{S}}
\newcommand{\XX}{\b{\tilde{X}}}
\newcommand{\XXT}{\b{\tilde{X}^\top}}
\newcommand{\A}{\b{A}}
\newcommand{\Aint}{\b{\tilde{A}}}
\newcommand{\F}{\b{F}}
\newcommand{\Y}{\b{Y}}
\newcommand{\U}{\b{U}}
\newcommand{\xx}{\tilde{\b x}}
\newcommand{\xxt}{\tilde{\b x}^\top}
\newcommand{\x}{{\b x}}
\newcommand{\y}{{\b y}}
\newcommand{\xt}{{\b x}^\top}
\newcommand{\pistar}{\pi_i^*}
\newcommand{\xnorm}{\|\x_i \|_2^2}
\newcommand{\xxnorm}{\|\xx_i \|_2^2}
\newcommand{\sumin}{\sum_{i=1}^n}
\newcommand{\w}{\b{w}}
\newcommand{\what}{\widehat{\w}}
\newcommand{\yhat}{\widehat{y}}
\newcommand{\bpi}{\bm \pi}
\newcommand{\bwi}{\bm \omega}
\newcommand{\bepsilon}{\bm \epsilon}
\newcommand{\btheta}{\bm \theta}
\newcommand{\emptheta}{\widehat{\bm \theta}}
\newcommand{\blambda}{\bm \Lambda}
\newcommand{\bH}{\b H}
\DeclareMathOperator{\E}{\mathbb{E}}
\DeclareMathOperator{\Tr}{Tr}
\newcommand{\simplex}{\Delta_{d}}
\DeclareMathOperator{\sigmas}{\b\Sigma}
\DeclareMathOperator*{\argmin}{arg\,min}
\DeclareMathOperator*{\argmax}{arg\,max}
\newcommand{\entropyfactor}{\frac{\alpha \xnorm}{1 - \exp(-\alpha \xnorm) }}
\definecolor{mygreen}{RGB}{0,153,0}
\definecolor{light-gray}{gray}{0.93}
\definecolor{mid-gray}{gray}{0.88}
\DeclareMathOperator{\bSigma}{\b\Sigma}
\DeclareMathOperator{\B}{\b B}
\newcommand{\V}{\b V}
\newcommand{\Z}{\b Z}
\newcommand{\bLambda}{\b \Lambda}
\newcommand{\tD}{\widetilde{\b D}}
\newcommand{\optsigma}{\bSigma_{\diamond}}
\newcommand*{\rom}[1]{\expandafter\@slowromancap\romannumeral #1@}
\setlist[enumerate]{leftmargin=.5in}
\setlist[itemize]{leftmargin=.5in}
\crefname{hypothesis}{Hypothesis}{Hypotheses}
\crefname{fact}{Fact}{Facts}
\title{Extensions of the regret-minimization algorithm for optimal design}
\author{Youguang Chen
\and George Biros\thanks{Oden Institute of Computational Engineering and Science, The University of Texas at Austin, 
  (\email{youguang@utexas.edu}, \email{gbiros@acm.org}).}}
\DeclareMathOperator{\diag}{diag}
\newcolumntype{C}{>{\centering\arraybackslash}p{3.1cm}}%
\newcolumntype{D}{>{\centering\arraybackslash}p{1.0cm}}%
\newcolumntype{F}{>{\centering\arraybackslash}p{1.0cm}}%
\newcolumntype{P}{>{\centering\arraybackslash}p{9cm}}%
\newcolumntype{E}{>{\centering\arraybackslash}p{1.5cm}}%
\newcolumntype{G}{>{\centering\arraybackslash}p{1.6cm}}%
\newtheorem{assumption}{Assumption}
\begin{document}
\sloppy

\maketitle

\begingroup
\renewcommand\thefootnote{}
\footnotetext{Accepted for publication in SIAM Journal on Matrix Analysis and Applications.}
\addtocounter{footnote}{-1}
\endgroup

\begin{abstract}
We consider the problem of selecting a subset of points from a dataset of $n$ unlabeled examples for labeling, with the goal of training a multiclass classifier. To address this, we build upon the regret minimization framework introduced by Allen-Zhu et al. in "Near-optimal design of experiments via regret minimization" (ICML, 2017). We propose an alternative regularization scheme within this framework, which leads to a new sample selection objective along with a provable sample complexity bound that guarantees a $(1+\epsilon)$-approximate solution. Additionally, we extend the regret minimization approach to handle experimental design in the ridge regression setting. We evaluate the selected samples using logistic regression and compare performance against several state-of-the-art methods. Our empirical results on MNIST, CIFAR-10, and a 50-class subset of ImageNet demonstrate that our method consistently outperforms competing approaches across most scenarios.
\end{abstract}

\begin{keywords}
optimal design, online optimization, logistic regression 
\end{keywords}

\begin{MSCcodes}
62J12, 62L05, 68W27, 68W40, 68T05
\end{MSCcodes}


\section{Introduction}

{\color{black}
Supervised learning has driven major advances in areas such as computer vision, natural language processing, and healthcare. These successes, however, depend on access to large labeled datasets, and in many domains labeling is the primary bottleneck. In medical imaging, for example, expert annotations from radiologists are costly and often require multiple labels per image to assess inter-rater variability~\cite{menze2014multimodal}. Similar challenges arise in scientific data collection and other specialized fields, where obtaining reliable labels demands scarce expert effort. In such cases, we cannot afford to label all available data, and must instead decide \textit{which samples should be labeled first.} Two natural scenarios arise in this context:
\begin{itemize}
    \item \textbf{Active learning:} a model is trained in rounds, and at each step, the learner adaptively queries the most informative samples. This interactive approach has been widely studied and can substantially reduce labeling costs.
    \item \textbf{One-shot selection from a large unlabeled pool:} in other cases, one must commit upfront to labeling a small subset of data, without the ability to adapt later. This situation arises when annotation must be batched, when retraining is too costly, or when an initial seed set is required to bootstrap active learning.
\end{itemize}

This paper focuses on the second scenario: selecting a representative subset from a large pool of unlabeled data. While less flexible than active learning, this problem is both practically important and theoretically tractable. Moreover, the quality of the initial subset can strongly influence the effectiveness of downstream active/semi-supervised learning. {\color{black}For example, FixMatch~\cite{sohn2020fixmatch}, a state-of-the-art semi-supervised learning algorithm, achieves test accuracy ranging from 40\% to 80\% in CIFAR-10~\cite{cifar10} when trained with just one labeled sample per class, depending solely on which samples are selected (see Figure 7 in~\cite{sohn2020fixmatch}).} This stark variability highlights the need for principled strategies to guide subset selection.

To study this question systematically, we turn to statistical learning theory. {\color{black}For multiclass logistic regression, the excess risk of the maximum likelihood estimator is both upper and lower bounded by the Fisher Information Ratio (FIR), as shown in Theorem 3 of~\cite{firal}. The FIR depends on both the selected samples and the classifier parameters. In \Cref{sec:regression}, we relax this dependency by replacing the FIR with the V-optimal design objective (defined in \cref{eq:v-design}), which depends only on the selected samples. This observation motivates our approach: treating sample selection as an instance of optimal experimental design, especially in settings where labels and well-trained classifiers are not yet available.}
}

{\color{black}Building on this motivation, we formalize the optimal experimental design problem in \cref{def:design}}, where the objective $f$ in \cref{eq:design} is derived from the statistical efficiency of linear regression models. Interpretations of these optimality criteria are discussed further in \Cref{sec:linear-regression} and \Cref{sec:design-meaning}.

\begin{definition}[Experimental design problem]\label{def:design}
Let $\X \in \mathbb{R}^{n \times d}$ denote a pool of $n$ data points and let $k \geq d$ be the number of points to be selected. Define $\X_S \in \mathbb{R}^{k \times d}$ as the matrix consisting of the $k$ selected points indexed by $S$. Let $f: \mathbb{R}^{d \times d} \rightarrow \mathbb{R}$ be a function that evaluates the quality or optimality of $\X_S$. The goal of experimental design is to choose a subset $S^*$ such that
\begin{align}\label{eq:design}
    S^* = \argmin_{S\subseteq[n], |S| = k} f(\X_S^\top \X_S).
\end{align}
\end{definition}

This combinatorial optimization problem is computationally challenging: for many commonly used objectives $f$, solving \cref{eq:design} exactly is NP-hard~\cite{CIVRIL20094801,vcerny2012two}. To overcome this, Zhu et al.~\cite{design} proposed a tractable two-step algorithm—known as \textbf{Regret-Min} approach—that provides near-optimal solutions for a wide class of functions $f$ satisfying the following assumptions: 
\begin{assumption}\label{assume:f}
{\color{black}Let $\mathbb{S}_{++}^d$ denote the cone of $d \times d$ symmetric positive definite matrices.  
We assume that the optimality criterion $f:\mathbb{S}_{++}^d \to \mathbb{R}$ satisfies:  }
\begin{enumerate}
    \item \textbf{Convexity:} for all $\A,\B \in \mathbb{S}_{++}^d$ and $t \in [0,1]$,  $f(t\A + (1-t)\B) \leq t f(\A) + (1-t) f(\B)$.
    \item \textbf{Monotonicity:} if $\A,\B \in \mathbb{S}_{++}^d$ with $\A \preceq \B$, then  
$f(\A) \geq f(\B)$.
    \item \textbf{Reciprocal sub-linearity:} for any $\A \in \mathbb{S}_{++}^d$ and $t \in (0,1)$, $ f(t\A) \leq t^{-1} f(\A)$.
\end{enumerate}
\end{assumption}


The Regret-Min algorithm consists of two stages. First, it relaxes the discrete optimization problem into a continuous one. Then, it sparsifies the resulting solution using a greedy sample selection strategy based on the Follow-the-Regularized-Leader (FTRL) regret minimization framework.

{\color{black}
The choice of regularizer is important for FTRL in regret minimization. Two prominent regularizers have been studied extensively: the $\ell_{1/2}$-regularizer, defined as $w(\A) = -2\Tr(\A^{1/2})$, and the entropy regularizer $w(\A) = \Tr(\A \log \A - \A)$, where $\A$ is symmetric positive definite. The original Regret-Min method~\cite{design} employs the $\ell_{1/2}$-regularizer, while the entropy regularizer is widely used in the matrix multiplicative weights method~\cite{arora2012multiplicative}.  In the context of regret minimization, the $\ell_{1/2}$-regularizer achieves a smaller regret width at the expense of a larger diameter term~\cite{Zhu-2015}, making it preferable when the width term dominates the regret bound.

However, sample selection in Regret-Min fundamentally differs from regret minimization, despite both utilizing the FTRL framework. In regret minimization, we have \textit{no control} over the loss matrix $\F_t$ (formally defined later in \Cref{sec:rounding})  at each step $t$---it is determined by the environment and revealed after we select the action matrix $\A_t$. The objective is to choose $\A_t$ to minimize the regret upper bound across arbitrary loss sequences. Conversely, in sample selection for Regret-Min, we \textit{do control} $\F_t$ through our sample choices. Here, the goal is to select $\F_t$ to maximize $\lambda_{\min}\left(\sum_{t=1}^k \F_t\right)$, the minimum eigenvalue of the cumulative loss matrix sum.

This fundamental difference implies that the theoretical advantages of the $\ell_{1/2}$-regularizer in regret minimization do not necessarily transfer to the sample selection task. This observation naturally raises the question: can we instead employ the entropy regularizer for sample selection in Regret-Min? And if so, how does it compare to the $\ell_{1/2}$-regularizer, both theoretically and empirically?
}

In many practical settings—such as when features are correlated, the sample size is small, or overfitting is a concern—ridge regression is preferred over ordinary least squares. In this case, the optimal design objective in \cref{eq:design} becomes $f(\X_S^\top \X_S + \lambda \b I)$, where $\lambda>0 $ is a regularization parameter. We explore whether the regret minimization framework can be extended to handle this ridge regression scenario, and whether comparable performance guarantees can be established.

We evaluate our method on synthetic datasets as well as multiclass image classification datasets. In the classification experiments, we consider the regime where the number of selected samples $k$ is only a small multiple of the number of classes. We assess both regularizers for Regret-Min and further compare Regret-Min with a variety of other sample selection approaches.
\subsection{Our contributions}
Our contributions can be summarized as follows:

\begin{itemize}
\item \textbf{Relaxing FIR bounds with V-optimal design (\cref{prop:risk-log}):} We show that the excess risk of multiclass logistic regression can be bounded in terms of the V-optimal design objective in \Cref{sec:regression}.
\item \textbf{Entropy-based regret minimization for optimal design (\cref{algo:rounding}, \cref{thm:design-complexity}):} We incorporate the unnormalized negentropy (entropy-regularizer) into the Regret-Min framework. Our algorithm achieves an \(\epsilon\)-approximate guarantee with sample complexity \(\widetilde{\mathcal{O}}(d/\epsilon^2)\), matching the guarantee of the \(\ell_{1/2}\)-regularizer. We also derive a tighter, sample-dependent bound that improves to \(\widetilde{\mathcal{O}}(d/\epsilon)\) in favorable cases.
\item \textbf{Extension to ridge regression (\cref{algo:rounding-regularize}, \cref{thm:reg-design-complexity}):} We adapt the regret minimization framework to handle regularized optimal design problems for both entropy and \(\ell_{1/2}\) regularizers. We prove that the sample complexity remains \(\widetilde{\mathcal{O}}(d/\epsilon^2)\), matching that of the unregularized case.  Notably, adapting the framework and establishing these guarantees in the presence of regularization requires overcoming several non-trivial technical challenges.
\item \textbf{Empirical validation (\Cref{sec:experiments}):} We benchmark Regret-Min against baseline methods (listed in \cref{fig:diagram}) on synthetic and real-world datasets (MNIST, CIFAR-10, ImageNet-50). Our experiments demonstrate consistently better performance of Regret-Min and show that the entropy-regularizer achieves more reliable objective-classification accuracy alignment than the $\ell_{1/2}$-regularizer.
\end{itemize}

\begin{figure}
    \centering
    \footnotesize
\begin{tikzpicture}
\node[] (a) at (0,0) {Methods for selecting representative samples};
\node[] (b) at (-5,-1) {Uniform};
\draw[] (a) -- (b);
\node[] (c) at (-3, -1) {K-Means};
\draw[] (a) -- (c);
\node[] (d) at (-1.5, -1) {RRQR};
\draw[] (a) -- (d);
\node[] (e) at (0, -1) {MMD};
\draw[] (a) -- (e);
\node[] (f) at (3, -1) {Optimal design};
\draw[] (a) -- (f);
\node[] (e) at (4, -1.8) {Greedy};
\draw[] (f) -- (e);
\node[] (e2) at (0, -1.8) {Relaxation-based};
\draw[] (f) -- (e2);
\node[] (e3) at (-2.5, -2.5) {Max-weights};
\draw[] (e2) -- (e3);
\node[] (e4) at (0, -2.5) {Weighted-sampling};
\draw[] (e2) -- (e4);
\node[] (e5) at (2.5, -2.5) {Regret-Min};
\draw[] (e2) -- (e5);
\end{tikzpicture}    
\caption{Methods for selecting representative samples without label information.}
    \label{fig:diagram}
\end{figure}

\subsection{Notation}

Let $\X \in \mathbb{R}^{n \times d}$ denote a pool of $n$ points in $d$ dimensions, and let $k$ be the sample selection budget. {\color{black}We write $[n]:= \{1,2,\cdots,n\}$. We write $f\lesssim g$ or $f = \mathcal{O}(g)$ to represent that $f(\cdot) \leq C g(\cdot)$ for any admissible arguments of $f(\cdot)$ and $g(\cdot)$ and some constant $C>0$;  analogously for $f\gtrsim g$. We use $\widetilde{\mathcal{O}}(\cdot)$ to suppress logarithmic factors.}
For matrices $\A, \B$, their inner product is denoted by $\langle \A, \B \rangle$. Denote by $\mathbb{S}_{+}^d$ the cone of $d \times d$ symmetric positive semidefinite (PSD) matrices, and by $\mathbb{S}_{++}^d$ the cone of symmetric positive definite (PD) matrices. For symmetric $\A, \B \in \mathbb{R}^{d \times d}$, we use $\A \preceq \B$ to mean $\b v^\top(\B-\A)\b v \geq 0$ for all $\b v \in \mathbb{R}^d$. The symbol $\otimes$ denotes the Kronecker product between two matrices.

{\color{black} We follow the formalism of sub-Gaussian random variables as in \cite{vershynin2018high}.  
For a sub-Gaussian random variable $\xi \in \mathbb{R}$, the $\psi_2$-norm $\|\xi\|_{\psi_2}$ can be defined in several equivalent ways (see Proposition 2.6.1 in \cite{vershynin2018high}).  
In this paper, we adopt the definition $\|\xi\|_{\psi_2} := \inf \left\{ K > 0 : \; \mathbb{E}\exp(\xi^2/K^2)\leq 2 \right\}$.
This extends naturally to sub-Gaussian random vectors $\x \in \mathbb{R}^d$ by $\|\x\|_{\psi_2} := \sup_{\|\boldsymbol{\theta}\|_2 \leq 1} \|\langle \x, \boldsymbol{\theta}\rangle\|_{\psi_2}$, 
i.e., the maximal $\psi_2$-norm over all one-dimensional marginals of $\x$.}

\subsection{Organization of the Paper}
The remainder of the paper is organized as follows. In \Cref{sec:regression}, we present non-asymptotic results that connect the V-optimal design objective to the excess risk in linear and logistic regression models. These results motivate our use of experimental design as a guiding principle for sample selection, though they are not directly tied to the algorithmic contributions of the paper. Readers primarily interested in the algorithmic aspects may wish to skip ahead to \Cref{sec:opt-design}, which introduces our regret minimization framework for solving the experimental design problem using either the $\ell_{1/2}$ or entropy regularizer. We extend this approach to the experimental design in ridge regression setting in \Cref{sec:reg}. Finally, \Cref{sec:experiments} presents our empirical results: we first compare the two regularizers on synthetic data, and then evaluate our method on real-world multiclass classification tasks, comparing it with other sample selection strategies listed in \cref{fig:diagram}.

\section{Optimal Experimental design with linear and logistic regression models}\label{sec:regression}

In this section, we present a finite-sample analysis linking optimal experimental design to the excess risk (generalization error) of linear and logistic regression models. Under sub-Gaussian assumptions, we establish non-asymptotic high-probability bounds (\cref{prop:risk-linear,prop:risk-log}), demonstrating that the V-optimal design objective provides both upper and lower bounds on the excess risk for these models. We conclude the section with a discussion of \cref{prop:risk-log}, summarized in two remarks. The proofs of this section are presented in \Cref{sec:appendix-pf-regression}.

We assume that the sample matrix $\X\in\mathbb{R}^{n\times d}$ consists of $n$ i.i.d. samples drawn from a sub-Gaussian distribution $p(\x)$ (see \cref{assume:px}). 
\begin{assumption}\label{assume:px}
    Let $p(\x)$ be a sub-Gaussian distribution for $\x \in{\color{black}\mathbb{R}^d}$. We also assume that the covariance matrix of $p(\x)$ denoted by $\V_p:= \E[\x \x^\top]$ is positive definite. 
\end{assumption}
Let $S\subset [n]$ denote the index set  of $k>d$ selected samples, and let $\X_S\in\mathbb{R}^{k\times d}$ be the corresponding submatrix. We further assume that both $\X^\top \X$ and $\X_S^\top \X_S$ are invertible. The V-optimal design objective is defined by
\begin{align}\label{eq:v-design}
    f_V(\X_S) :=\left \langle \left(\frac{1}{k}\X_S^\top \X_S\right)^{-1}, \frac{1}{n}\X^\top\X \right\rangle = \frac{k}{n}\left \langle\left(\X_S^\top \X_S\right)^{-1}, \X^\top \X \right\rangle,
\end{align}
where $\frac{1}{k} \X_S^\top \X_S$ is the average covariance matrix of selected samples, $\frac{1}{n} \X^\top \X$ is the average covariance matrix of all samples.

\subsection{Linear regression}\label{sec:linear-regression}
For linear regression model, we assume that given a sample $\x$, the observation $\mathbb{R}\ni y= \w_*^\top \x + \epsilon$, where $\epsilon$ is the noise with $\E[\epsilon]=0$ and variance $\E[\epsilon^2]=\sigma^2$. Given samples $\X_S$ and observations $\y_S =\X_S \w_*  + \bepsilon_S$, the estimate of the weights is defined by
\begin{align}\label{eq:ls-what}
    \what \in \argmin_{\w \in\mathbb{R}^d} \|\y_S - \X_S \w \|_2^2 = (\X_S^\top \X_S)^{-1} \X_S^\top \y_S.
\end{align}
The excess risk is defined as the expected error of the prediction using estimated weight $\what$ compared to the ground truth weight $\w$, i.e.
\begin{align}\label{eq:ls-lp-def}
    R_p(\what) := \E_{\epsilon_S, \x}[(\w_*^\top \x-\what^\top \x)^2].
\end{align}

We prove that the V-optimal design objective both upper and lower bounds the excess risks in \cref{prop:risk-linear}. Consequently, selecting samples $\X_S$ that achieve a low V-optimal objective value can effectively reduce excess risk.

\begin{proposition}\label{prop:risk-linear}
    Assume that \cref{assume:px} holds, let $K>0$ be the constant s.t. $\| \V_p^{-1} \x_i\|_{\psi_2} = K$. Then for any $\gamma, \delta \in(0,1)$, if 
    \begin{align}\label{eq:ls-nbound}
        n {\color{black}\gtrsim \max\left\{\frac{4 d}{\gamma^2},\frac{4\log(2/\delta)}{ \gamma^2} \right\},}
    \end{align}
    with probability at least $1-\delta$, we have
    \begin{align}\label{eq:risk-linear}
       \frac{\sigma^2}{(1+\gamma)k} f_V(\X_S)\leq R_p(\what) \leq \frac{\sigma^2}{(1-\gamma)k} f_V(\X_S).
    \end{align}
\end{proposition}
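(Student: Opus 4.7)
The plan is to reduce the risk bound to a deterministic matrix inequality between $\V_p$ and $\tfrac{1}{n}\X^\top\X$, and then invoke a standard sub-Gaussian concentration inequality for the sample covariance to justify that inequality under the sample-size condition \cref{eq:ls-nbound}.

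First I would derive a closed-form expression for $R_p(\what)$. Substituting $\y_S = \X_S\w_* + \bepsilon_S$ into \cref{eq:ls-what} gives $\what - \w_* = (\X_S^\top\X_S)^{-1}\X_S^\top\bepsilon_S$. Since $\bepsilon_S$ is independent of $\x$, I can take the expectation over $\x$ first, using $\E[\x\x^\top] = \V_p$, to obtain
$$R_p(\what) = \E_{\bepsilon_S}\!\left[(\what-\w_*)^\top \V_p (\what-\w_*)\right].$$
Expanding this as a quadratic form in $\bepsilon_S$, using $\E[\bepsilon_S\bepsilon_S^\top] = \sigma^2\b I_k$ together with the identity $\E[\b u^\top\b M\b u] = \Tr(\b M\,\E[\b u\b u^\top])$, yields
$$R_p(\what) = \sigma^2\,\Tr\!\bigl((\X_S^\top\X_S)^{-1}\V_p\bigr) = \sigma^2\bigl\langle (\X_S^\top\X_S)^{-1},\V_p\bigr\rangle.$$

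Next, plugging this into \cref{eq:risk-linear} and dividing out $\sigma^2/k$, I see that the claim is equivalent to the scalar sandwich
$$\tfrac{1}{1+\gamma}\bigl\langle (\X_S^\top\X_S)^{-1},\tfrac{1}{n}\X^\top\X\bigr\rangle \le \bigl\langle (\X_S^\top\X_S)^{-1},\V_p\bigr\rangle \le \tfrac{1}{1-\gamma}\bigl\langle (\X_S^\top\X_S)^{-1},\tfrac{1}{n}\X^\top\X\bigr\rangle.$$
Because $(\X_S^\top\X_S)^{-1}\in\mathbb{S}_{++}^d$, the linear map $\b M\mapsto\langle(\X_S^\top\X_S)^{-1},\b M\rangle$ is monotone on $\mathbb{S}_+^d$, so it suffices to establish the Löwner-order sandwich
$$(1-\gamma)\V_p \preceq \tfrac{1}{n}\X^\top\X \preceq (1+\gamma)\V_p, \qquad (\star)$$
with probability at least $1-\delta$, and contract both sides with $(\X_S^\top\X_S)^{-1}$.

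Finally, I would prove $(\star)$ by whitening. Let $\tilde\x_i := \V_p^{-1/2}\x_i$; these are i.i.d.\ isotropic sub-Gaussian vectors whose $\psi_2$-norm is controlled by the constant $K$ of \cref{assume:px}. Pre- and post-multiplying $(\star)$ by $\V_p^{-1/2}$ reduces it to
$$\bigl\|\tfrac{1}{n}\textstyle\sum_{i=1}^{n}\tilde\x_i\tilde\x_i^\top - \b I_d\bigr\|_{\mathrm{op}} \le \gamma.$$
A standard sample-covariance concentration inequality for isotropic sub-Gaussians (e.g., Theorem~4.6.1 of~\cite{vershynin2018high}) gives, with probability at least $1-\delta$, an operator-norm bound of order $\sqrt{d/n} + \sqrt{\log(2/\delta)/n}$ (with a constant depending on $K$), which the hypothesis \cref{eq:ls-nbound} forces below $\gamma$.

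The hard part is the last step: one must track the dependence of the concentration constant on the sub-Gaussian parameter $K$ (as well as on the conditioning of $\V_p$ introduced by the whitening) so that the absolute constant in \cref{eq:ls-nbound} matches. The first two steps are routine algebra and Löwner-order monotonicity, and taken together they turn the probabilistic claim into a single deterministic matrix comparison.
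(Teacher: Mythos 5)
Your proposal is correct and follows essentially the same route as the paper's proof: derive the closed form $R_p(\what)=\sigma^2\langle(\X_S^\top\X_S)^{-1},\V_p\rangle$, reduce the claim to the L\"owner sandwich $(1-\gamma)\V_p\preceq\tfrac{1}{n}\X^\top\X\preceq(1+\gamma)\V_p$, and establish that sandwich by whitening with $\V_p^{-1/2}$ and applying sub-Gaussian sample-covariance concentration. The only cosmetic difference is the reference: you cite Theorem~4.6.1 of \cite{vershynin2018high}, while the paper invokes the equivalent Theorem~5.39 of \cite{vershynin2011introduction} (its \cref{lm:ls-subgaussian}), splitting the deviation term as $C_K\sqrt{d/n}+t/\sqrt{n}\le\gamma/2+\gamma/2$ exactly as your final step requires.
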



\subsection{Multi-class logistic regression} 
\begin{figure}[t]
    \centering
\begin{tikzpicture}
    \node[] at (0,0)  {\includegraphics[width=5cm]{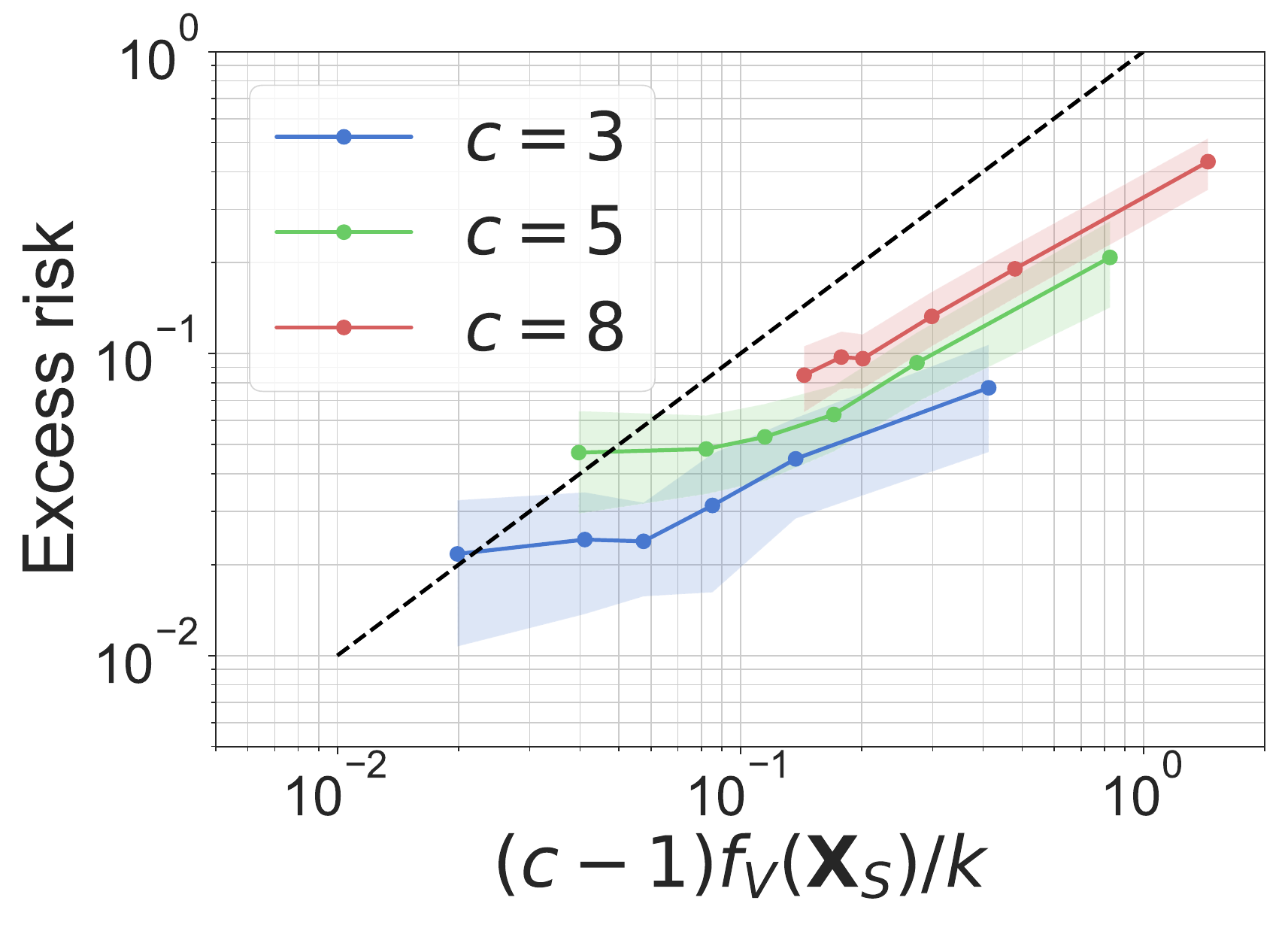}};
   \node[] at (6,0)  {\includegraphics[width=5cm]{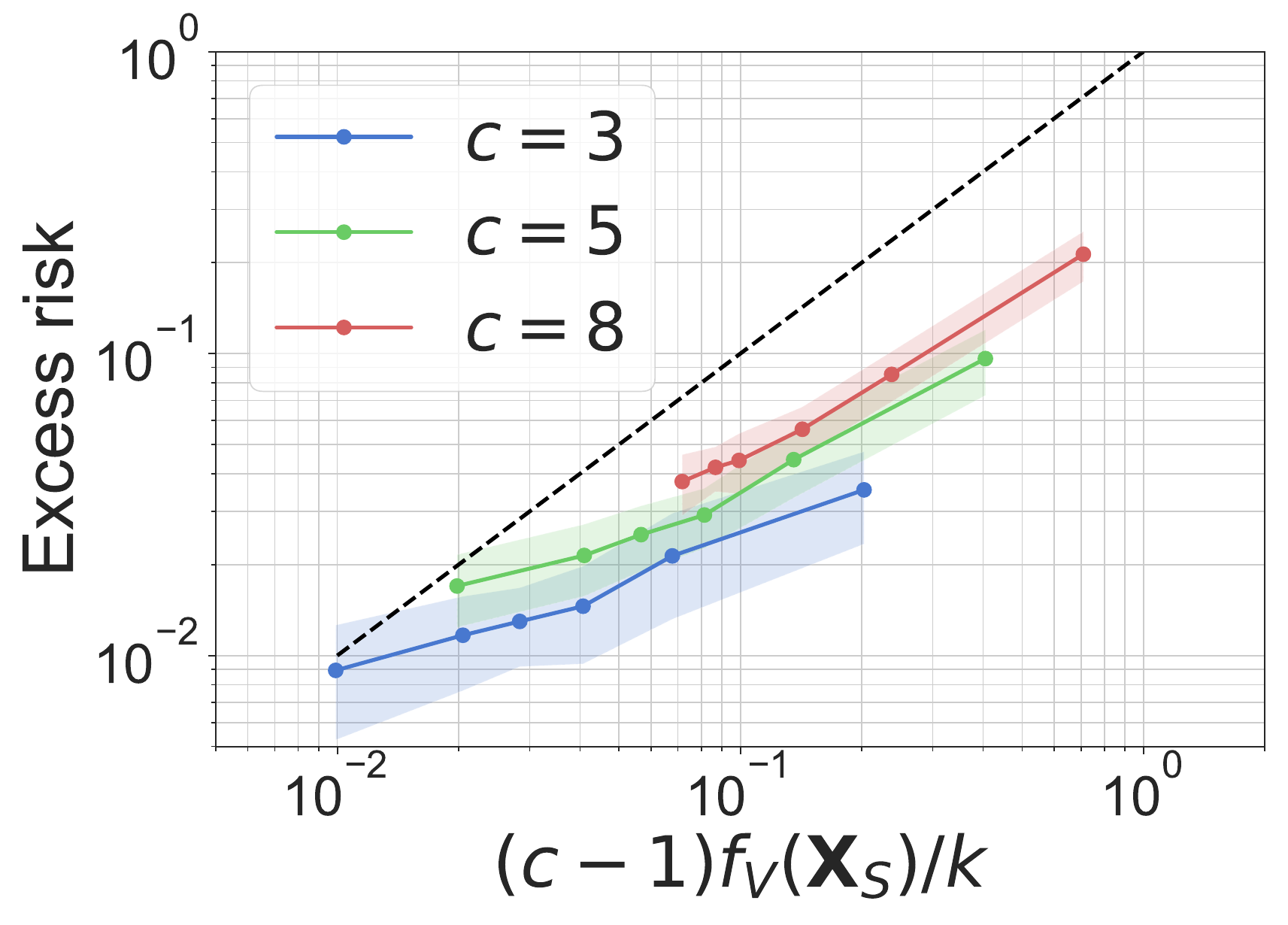}};
   \node[] at (.5,2.) {\small$k=400$};
   \node[] at (6.5,2.) {\small $k=800$};
\end{tikzpicture}
    \caption{Excess risk of logistic regression on synthetic Gaussian data with input dimension 8. Each point in the plots represents a selected subset of $k = |S|$ samples. The parameter $c$ in each plot indicates the number of classes. The quantity $f_V(\X_S)$ denotes the V-optimality objective as defined in \cref{eq:v-design}. {\color{black}Black dashed lines are included as references for the linear relationship.} }
    \label{fig:risk-synthetic}
\end{figure}

We denote a labeled sample as a pair $(\x,y)$, where $\x\in\mathbb{R}^d$ is a data point, $y \in \{1,2,\cdots, c \}$ is its label, and $c$ is the number of classes. Let $\btheta\in\mathbb{R}^{(c-1) \times d}$ be the parameters of a $c$-class logistic regression classifier. Given $\x$ and $\btheta$, the likelihood of the label $y$ is defined by
\begin{align}\label{eq:setup-conditional}
    p(y|\x,\btheta) = \begin{cases}
   \frac{\exp(\btheta_y^\top \x)}{1 + \sum_{l\in[c-1]} \exp(\btheta_l^\top \x)} ,\qquad y \in [c-1]\\
    \frac{1}{1 + \sum_{l\in[c-1]} \exp(\btheta_l^\top \x)},\qquad y = c.
    \end{cases}
\end{align}

We assume there is a ground truth $\btheta_*$ such that true label $y\sim p(y|\x, \btheta)$.  We use the negative log-likelihood as the loss function $\ell_{(\x,y)}(\btheta) := -\log p(y|\x,\btheta)$. The expected loss at $\btheta$ is defined by
\begin{align}\label{eq:setup-generalization-error}
    L_p(\btheta) := \E_{\x\sim p(\x)} \E_{y\sim  p(y|\x,\btheta_*)}[\ell_{(\x,y)}(\btheta)].
\end{align}

 Let $\emptheta$ be the empirical risk minimizer (ERM) for data points in $\X_S$ ($\emptheta$ is also denoted as M-estimator):
 \begin{align}\label{eq:ERM-q}
    \emptheta\in \argmin_{\theta}  \frac{1}{k}\sum_{\x\in\X_S}\ell_{(x,y)}(\btheta)=: Q_k(\btheta), \qquad \forall \x\in \X_S, \quad y\sim p(y|\x, \btheta_*).
 \end{align}
The excess risk of $\emptheta$ is defined as the error of the expected loss compared to the ground truth $\theta_*$, i.e. $L_p(\emptheta) - L_p(\btheta_*)$. We define the following Hessians w.r.t $\btheta$:
\begin{align}
    \bH_p:= \nabla^2 L_p(\btheta_*), \qquad \bH_k(\btheta):= \nabla^2 \E_{y}[Q_k(\btheta)].
\end{align}
To obtain non-asymptotic bounds on the excess risk, we further assume in \cref{assume:hessian} that the sample Hessian $\bH_k$ is positive definite for all $\btheta$ in a neighborhood of $\btheta_*$. With \cref{assume:px} and \cref{assume:hessian}, we derive high probability bounds for the excess risk in \cref{prop:risk-log}. 

\begin{assumption}\label{assume:hessian}
There is a radius $r\gtrsim 1$ such that for any $\theta \in \mathcal{B}_{r}(\theta_*) = \{\theta: \|\theta-\theta_* \|_{2,\infty} \leq r \}$, $\bH_k(\theta)$ is positive definite, where $\|\cdot\|_{2,\infty}$ denotes the maximum row norm of a matrix.
\end{assumption}

\begin{proposition}\label{prop:risk-log}
    Assume that \cref{assume:px} and \cref{assume:hessian} hold. Denote $\widetilde{c}:=c-1$. Let $\rho_1^+, \rho_1^-, \rho_2^+, \rho_2^->1$ be constants such that $ \rho_1^-\bH_p\preceq \b I_{\widetilde{c}} \otimes \V_p \preceq \rho_1 ^+\bH_p$ and $\rho_2^- \bH_k(\btheta_*)\preceq\b I_{\widetilde{c}} \otimes \left( \frac{1}{k} \X_S^\top \X_S\right) \preceq \rho_2^+ \bH_k(\btheta_*) $. Then for any $\gamma, \delta \in(0,1)$, if 
    \begin{align}\label{eq:complexity-risklog}
    k\gtrsim \widetilde{c} d\log(ed/\delta)\quad \mathrm{and}\quad n \gtrsim  d/\gamma^2,
    \end{align}
    we have with probability at least $1-\delta$,
    \begin{samepage}
    \begin{align}\label{eq:risk-log}
  \frac{e^{-\alpha} + \alpha -1}{\alpha^2}\frac{\rho_2^-}{ \rho_1^+(1+\gamma) }\frac{\widetilde{c}f_V(\X_S)}{k}  
  &\lesssim \E[L_p(\emptheta)] - L_p(\btheta_*) \nonumber\\
  &\lesssim   \frac{e^{\alpha} - \alpha -1}{ \alpha^2} \frac{ \rho_2^+}{\rho_1^-(1-\gamma)}\frac{\widetilde{c}f_V(\X_S)}{k},
    \end{align}
    \end{samepage}
where $\alpha=\mathcal{O}\left(\sqrt{\rho_1\widetilde{c} d\log(e/\delta)/k}\right)$ and $\E$ is expectation over $\{y\sim p(y|\x, \btheta_*)\}_{\x \in \X_S}$.
\end{proposition}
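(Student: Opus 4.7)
The plan is to prove \cref{prop:risk-log} in three stages: (i) a local quadratic expansion that relates the excess risk to the M-estimator error $\emptheta - \btheta_*$; (ii) a self-concordance sandwich for the multiclass softmax loss that produces the $(e^{\pm \alpha} \mp \alpha - 1)/\alpha^2$ prefactors; and (iii) a concentration step that replaces the population Hessian $\bH_p$ and the sample covariance by the V-optimal objective $f_V(\X_S)$ defined in \cref{eq:v-design}.

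First, since $\nabla L_p(\btheta_*) = 0$, I would write $L_p(\emptheta) - L_p(\btheta_*) = \tfrac{1}{2}(\emptheta - \btheta_*)^\top \nabla^2 L_p(\bar\btheta)(\emptheta - \btheta_*)$ for some $\bar\btheta$ on the segment between $\emptheta$ and $\btheta_*$. The softmax log-likelihood in \cref{eq:setup-conditional} is generalized-self-concordant: $\nabla^2 \ell_{(\x,y)}$ is multiplicatively stable in the sense that $e^{-|\langle \btheta - \btheta_*, \b e_j \otimes \x \rangle|} \nabla^2 \ell_{(\x,y)}(\btheta_*) \preceq \nabla^2 \ell_{(\x,y)}(\btheta) \preceq e^{+|\langle \btheta - \btheta_*, \b e_j \otimes \x\rangle|} \nabla^2 \ell_{(\x,y)}(\btheta_*)$. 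A matrix-Bernstein bound on $\bH_k$ together with a sub-Gaussian tail for $\nabla Q_k(\btheta_*)$ shows that $\|\emptheta - \btheta_*\|_{2,\infty} \lesssim \alpha/\sqrt{\rho_1^+}$ with probability $\ge 1 - \delta$, which is where the sample threshold $k \gtrsim \widetilde{c}\,d\log(ed/\delta)$ and \cref{assume:hessian} enter (they guarantee $\emptheta \in \mathcal{B}_r(\btheta_*)$ so that the sandwich is valid along the whole segment). Integrating the pointwise sandwich and bounding $(e^{\pm t} \mp t - 1)/t^2$ by its value at $t = \alpha$ yields $\tfrac{e^{-\alpha}+\alpha-1}{\alpha^2}\bH_p \preceq \nabla^2 L_p(\bar\btheta) \preceq \tfrac{e^{\alpha}-\alpha-1}{\alpha^2}\bH_p$.

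Second, I would control the quadratic form via the standard M-estimator linearization $\emptheta - \btheta_* = -\bH_k(\btheta_*)^{-1} \nabla Q_k(\btheta_*) + \mathrm{remainder}$, where the remainder is again controlled by the $\mathcal{B}_r$ bound above. Under the well-specified model, the Fisher-information identity gives $\E[\nabla Q_k(\btheta_*) \nabla Q_k(\btheta_*)^\top] = \tfrac{1}{k}\bH_k(\btheta_*)$, so taking expectation over the labels of $\X_S$ yields the leading-order identity $\E\bigl[(\emptheta - \btheta_*)^\top \bH_p (\emptheta - \btheta_*)\bigr] = \tfrac{1}{k} \Tr\bigl(\bH_p\,\bH_k(\btheta_*)^{-1}\bigr) + \mathrm{l.o.t.}$, with the lower-order term absorbed into the self-concordance prefactor from step (i). The spectral comparisons in the statement then give $\Tr\bigl(\bH_p\,\bH_k(\btheta_*)^{-1}\bigr) \le \tfrac{\rho_2^+}{\rho_1^-}\,\Tr\bigl((\b I_{\widetilde{c}} \otimes \V_p)(\b I_{\widetilde{c}} \otimes \tfrac{1}{k}\X_S^\top \X_S)^{-1}\bigr) = \tfrac{\widetilde{c}\,\rho_2^+}{\rho_1^-}\,k\,\Tr\bigl(\V_p (\X_S^\top \X_S)^{-1}\bigr)$, with an analogous lower bound via $\rho_2^-/\rho_1^+$ (both using the PSD-trace monotonicity $\Tr(\A\C) \le \Tr(\B\C)$ when $\A \preceq \B$ and $\C \succeq 0$).

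Finally, I would translate $\Tr(\V_p(\X_S^\top\X_S)^{-1})$ into $f_V(\X_S)$. A sub-Gaussian covariance concentration applied to $\V_p^{-1/2}\x$ (whose $\psi_2$-norm equals $K$ by \cref{assume:px}) gives $(1-\gamma)\V_p \preceq \tfrac{1}{n}\X^\top\X \preceq (1+\gamma)\V_p$ with probability $\ge 1 - \delta$ when $n \gtrsim d/\gamma^2$. Plugging this into the definition \cref{eq:v-design} converts $k\,\Tr(\V_p(\X_S^\top\X_S)^{-1})$ into $(1\pm\gamma)^{-1}f_V(\X_S)$, and a union bound over the two concentration events combined with the self-concordance sandwich yields \cref{eq:risk-log}. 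The main obstacle is step (i): the multiplicative sandwich must be integrated carefully against the law of $\x$ so that the random exponent $|\langle \btheta - \btheta_*, \b e_j \otimes \x\rangle|$ is replaced by the scalar $\alpha \asymp \sqrt{\rho_1^+ \widetilde{c}\, d \log(e/\delta)/k}$ that already absorbs the sub-Gaussian tail, and obtaining the sharp prefactor $(e^{\pm \alpha} \mp \alpha - 1)/\alpha^2$ — rather than the loose $e^{\pm \alpha}/2$ — requires a second-order expansion along the segment together with the convexity of $t \mapsto (e^t - t - 1)/t^2$. This bookkeeping, together with showing that $\emptheta \in \mathcal{B}_r(\btheta_*)$ with the claimed probability, is the delicate part of the argument.
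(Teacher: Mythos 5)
You arrive at the right chain of inequalities, but by a genuinely different route on the core step. The paper's own proof is essentially three lines: it invokes Theorem~3 of \cite{firal} as a black box, which already supplies the excess-risk sandwich with the prefactors $(e^{\pm\alpha}\mp\alpha-1)/\alpha^2$ and the Fisher Information Ratio $\langle \bH_k(\btheta_*)^{-1},\bH_p\rangle/k$; it then performs exactly your stages (ii) and (iii) --- the $\rho_1^{\pm},\rho_2^{\pm}$ spectral comparison with PSD trace monotonicity and the Kronecker identity, giving $\widetilde c\,k\,\Tr\!\left(\V_p(\X_S^\top\X_S)^{-1}\right)$, followed by the sub-Gaussian covariance concentration $(1-\gamma)\V_p \preceq \tfrac1n\X^\top\X \preceq (1+\gamma)\V_p$ under $n\gtrsim d/\gamma^2$ (the same \cref{lm:ls-subgaussian} argument used for \cref{prop:risk-linear}) to convert $\Tr\!\left(\V_p(\X_S^\top\X_S)^{-1}\right)$ into $f_V(\X_S)$ up to factors $(1\pm\gamma)^{-1}$ and $1/k$. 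What you do differently is stage (i): you re-derive the cited theorem from first principles via generalized self-concordance, integration along the segment ($\int_0^1(1-s)e^{\pm s\alpha}\,ds=(e^{\pm\alpha}\mp\alpha-1)/\alpha^2$), M-estimator linearization, and the Fisher identity $\E[\nabla Q_k(\btheta_*)\nabla Q_k(\btheta_*)^\top]=\tfrac1k\bH_k(\btheta_*)$. This buys self-containedness, and your outline is the standard and correct strategy for results of this type; but everything delicate that you yourself flag --- the correct multiclass self-concordance exponent (your $|\langle\btheta-\btheta_*,\b e_j\otimes\x\rangle|$ is not quite the right quantity for the $c$-class Hessian; it must involve a norm over all $\widetilde c$ class directions), the label-dependent localization event $\emptheta\in\mathcal B_r(\btheta_*)$ sitting inside the expectation over labels, and the control of the linearization remainder --- is precisely what the citation disposes of. So within this paper's framework the citation route is both shorter and complete, while your route amounts to reproving Theorem~3 of \cite{firal} and would require those three points to be executed in full before the argument closes.
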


\begin{remark}\label{remark:risk-log}
The bounds in \cref{prop:risk-log} are derived based on Theorem 3 from \cite{firal}, which shows that the excess risk in logistic regression is both upper and lower bounded by the Fisher Information Ratio (FIR), given by $\langle \bH_k(\btheta_*)^{-1}, \bH_p(\theta_*)\rangle$.  The FIR depends on the true parameter $\btheta_*$, the data distribution $p(x)$ and the selected sample subset $\X_S$. In contrast to Theorem 3 in \cite{firal}, our bounds in \cref{eq:risk-log} are \textit{looser} in that they approximate the FIR using the constants $\rho_2^-$, $\rho_2^+$, and the V-optimal design objective  $f_V(\X_S)$. An alternative interpretation arises when  $\btheta_*=\b 0$—that is, when labels are uniformly distributed across all classes for every input. This scenario corresponds to the setting where no label information is available. Under this condition, and assuming the full dataset $\X$ is used to empirically approximate the distribution, the FIR simplifies exactly to the V-optimal design objective. 
\end{remark}

We further illustrate the relationship between excess risk and V-optimal design objective $f_V(\X_S)$ using a Gaussian synthetic dataset. As shown in \cref{fig:risk-synthetic}, the results indicate a strong correlation between the excess risk and $f_V(\X_S)$.

\begin{remark}
From \cref{eq:risk-log}, we observe that the V-optimal design objective $f_V(\X_S)$ appears in both the upper and lower bounds of the excess risk. \textit{This motivates selecting the subset $\X_S$ by minimizing $f_V(\X_S)$ in classification tasks where label information is unavailable.} In fact, many optimal design criteria (see \cref{table:design-criteria}) focus on selecting samples that yield well-conditioned sample covariance matrices $\X_S^\top\X_S$. A key advantage of the regret minimization approach discussed in \Cref{sec:opt-design} is its applicability to a broad class of design objectives that satisfy \cref{assume:f}. The effectiveness of the optimal design criteria may vary depending on the specific application and dataset characteristics, which we further explore in our experiments (see \Cref{sec:experiments}).
\end{remark}

\section{Optimal experimental design via regret minimization}\label{sec:opt-design}

\begin{table}[t]
\small
    \centering
\caption{Objectives of (A-, D-, E-, V-, G-) optimal design. $\X \in\mathbb{R}^{n\times d}$ is the pool of design points and $\X_S\in\mathbb{R}^{n\times k}$ is the  $k$ selected points. Assume that $\bSigma:= \X_S^\top\X_S$ is positive definite matrix. }
  \label{table:design-criteria}
    \begin{tabular}{ccccc}
     \toprule
       A  &D &E &V & G\\\midrule
$\frac{1}{d} \Tr (\sigmas^{-1})$    &$-\frac{1}{d}( \log \det \sigmas)$  & $\|\sigmas^{-1} \|_2$  & $\frac{1}{n}\langle \bSigma^{-1}, \X^\top \X\rangle$ &$\max \text{diag}( {{ \X} \sigmas^{-1} \X^\top})$\\
\bottomrule
    \end{tabular}
\end{table}

Suppose the sample selection budget is $k \geq \color{black}{d}$. The goal of optimal experimental design is to select a subset of points that minimizes a given optimality criterion $f:\mathbb{S}_{+}^d\rightarrow \mathbb{R}$ over the covariance matrix of the selected subset, i.e.,
   \begin{align}\label{eq:ip}
        {\b s}^* = \argmin_{{\b s} = (s_1, \dots, s_n)} f\left(\sum_{i=1}^n s_i \x_i \xt_i \right), \qquad s.t.\quad \|\b s\|_1= k, \quad\b{s} \in \{0,1,\dots, k\}^n.
   \end{align}

{\color{black}For simplicity, we consider the with-replacement setting throughout our analysis.} The optimality criterion $f$ in \cref{eq:ip} typically represents a measure of statistical efficiency. We denote $f^*$ as the optimal value of the objective, i.e. $f^* \triangleq f(\sum_{i\in[n]} s_i^* \x_i \x_i^\top)$. We assume that $f$ satisfies the properties outlined in  \cref{assume:f}, which are met by many commonly used optimality criteria.  \Cref{table:design-criteria} summarizes the definitions of several popular criteria, and \Cref{sec:design-meaning} provides a brief review of their statistical interpretation.

{\color{black}We adopt a two-step strategy, following the framework of~\cite{design}, to obtain a near-optimal solution to problem~\cref{eq:ip}. In the first step, we solve a relaxed version of~\cref{eq:ip}; in the second step, we convert this relaxed solution into a valid one via regret minimization. While the prior work~\cite{design} employs the $\ell_{1/2}$-regularizer in the rounding step, our analysis extends this approach to the entropy-regularizer. For clarity, we refer to Regret-Min with the entropy-regularizer as \regretent, and Regret-Min with the $\ell_{1/2}$-regularizer as \regretlhalf.

All corresponding proofs of the results in this section are deferred to \Cref{sec:appendix-pf-design}.
}

\subsection{Relaxed Problem}\label{sec:relax}

We first relax the constraint of \cref{eq:ip} on ${\b s}$ from integer values into real values:
    \begin{align}\label{eq:lp}
        \bpi^\diamond = \argmin_{\bpi = (\pi_1, \pi_2, \dots, \pi_n)} f\left(\sum_{i=1}^n \pi_i x_i x_i^\top\right),\qquad s.t.\quad \| \bpi \|_1= k, \quad\bpi \geq \b 0.
    \end{align}
Since the constraint in~\cref{eq:ip} is a subset of the constraint in~\cref{eq:lp}, we have the following lemma.
\begin{lemma}\label{lm:ip-lp-relation}
$f^\diamond \triangleq f\left(\sum_{i=1}^n \pi_i^\diamond \x_i \xt_i \right) \leq f(\sum_{i=1}^n s_i^* \x_i \xt_i) \triangleq f^*$.
\end{lemma}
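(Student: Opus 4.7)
The plan is essentially a one-line relaxation argument. I would observe that the feasible set of the integer program \cref{eq:ip} is contained in the feasible set of the relaxed program \cref{eq:lp}: any $\b s \in \{0,1,\dots,k\}^n$ with $\|\b s\|_1 = k$ automatically satisfies $\b s \geq \b 0$ and $\|\b s\|_1 = k$, hence is a valid choice of $\bpi$ in \cref{eq:lp}.

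Concretely, I would take the optimal integer solution $\b s^*$ from \cref{eq:ip} and treat it as a candidate $\bpi$ for \cref{eq:lp}. Since $\bpi^\diamond$ is the minimizer of \cref{eq:lp} over a superset of feasible points, we get
\begin{equation*}
f^\diamond = f\!\left(\sum_{i=1}^n \pi_i^\diamond \x_i \xt_i\right) \;\leq\; f\!\left(\sum_{i=1}^n s_i^* \x_i \xt_i\right) = f^*,
\end{equation*}
which is exactly the claim. No use of the properties in \cref{assume:f} is needed, since this is purely a statement about minimization over nested feasible sets.

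There is no real obstacle here; the lemma is a routine observation about the LP relaxation. The only thing to double-check is that the constraint sets are actually nested as stated, which is immediate from comparing the constraints in \cref{eq:ip} and \cref{eq:lp}. This lemma presumably sets up the rest of \Cref{sec:opt-design}, where the substantive work will be controlling how much is lost in the rounding step that converts a fractional $\bpi^\diamond$ back into an integral selection.
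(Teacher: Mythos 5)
Your proposal is correct and matches the paper's own justification, which is exactly the one-sentence observation that the feasible set of \cref{eq:ip} is contained in that of \cref{eq:lp}, so the relaxed minimum cannot exceed the value at the integer optimum. Nothing further is needed, and you are right that \cref{assume:f} plays no role here.
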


Since both objective and constraint of the relaxed problem \cref{eq:lp} are convex, it can be easily solved by conventional convex optimization algorithms. Note that the constraint of~\cref{eq:lp} is a scaling of the probability simplex, if we define $\bwi = \bpi/k$, \cref{eq:lp} can be reformulated as
\begin{align}\label{eq:lp-2}
    \bwi^* = \argmin_{\bwi = (\omega_1, \omega_2, \dots, \omega_n)} f\left(\sum_{i=1}^n k \omega_i x_i x_i^\top\right),\qquad s.t.\quad \| \bwi \|_1= 1, \quad\bwi \geq \b 0.
\end{align}
We can efficiently solve such convex optimization problem by entropic mirror descent~\cite{beck2003mirror}). In particular, starting with $\bwi^1$ as an interior point in the probability simplex, let $\gamma_t$ be the step size at step $t$, the update of $\bwi$ is given by
\begin{align}
    \omega_i^{t+1} = \frac{\omega_i^{t} \exp(-\gamma_t f^\prime_i(\bwi^t))}{\sum_{j=1}^n \omega_j^{t} \exp(-\gamma_t f^\prime_j(\bwi^t))} ,
\end{align}
where  $f^\prime(\bwi^t) = ({\color{black}f^\prime_1(\bwi^t)},\cdots, f^\prime_n(\bwi^t))^\top\in \partial f(\bwi^t)$. The {\color{black}gradient}/subgradient of the design criteria  can be easily obtained by~\cref{table:design-criteria} by chain rule. The rate of convergence using entropic mirror descent is $\mathcal{O}\left(\sqrt{\frac{\log n}{T}}\right)$ (Theorem 5.1 in \cite{beck2003mirror}).


\subsection{Rounding via regret minimization}\label{sec:rounding}
After solving the relaxed problem~\cref{eq:lp}, we need to round the solution $\bpi^\diamond$ into an integer solution to~\cref{eq:ip}. Now we introduce how to use regret minimization approach to achieve this goal with performance guarantee. First, we make some definitions for the ease of discussion. 
\begin{align}\label{eq:transforms}
    \bSigma_\diamond\triangleq \sum_{i\in[n]} \pi_i^\diamond \x_i \x_i^\top , \qquad \xx_i \triangleq \bSigma_{\diamond}^{-1/2} \x_i,\qquad \XX  \triangleq \X \bSigma_{\diamond}^{-1/2},
\end{align}
where $\bSigma_\diamond$ is the covariance matrix with the weights as the optimal relaxed solution, $\xx_i$ is the denoised data point, $\XX\in \mathbb{R}^{n\times d}$ is the denoised sample matrix stacked by the denoised data points as rows.
\paragraph{Goal of rounding} Let $\b s$ be a feasible solution to~\cref{eq:ip}, $\b S \triangleq \mathrm{diag}(\b s)$, then $\X^\top \b S \X $ is the covariance matrix of the selected samples, i.e. $\X^\top \b S \X = \sum_{i\in[n]} s_i \x_i \x_i^\top$. We have the following observation. 

\begin{proposition}\label{prop:goal}
    Given $\tau \in (0,1)$, if $ \lambda_{\min}\left(\XX^\top \b S \XX\right) \geq \tau$, then
    \begin{align}
        f\left(\X^\top \b S \X\right) \leq \tau^{-1} f^*.
    \end{align}
\end{proposition}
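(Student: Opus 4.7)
\medskip

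\noindent\textbf{Proof proposal.} The plan is to translate the spectral condition on the denoised Gram matrix $\XX^\top \BS \XX$ into a Loewner inequality on the original matrix $\X^\top \BS \X$, and then read off the claimed bound by chaining the three properties of $f$ in \cref{assume:f} together with \cref{lm:ip-lp-relation}.

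First, I would unwind the definition $\XX = \X \bSigma_{\diamond}^{-1/2}$ from \cref{eq:transforms}, which gives the identity $\XX^\top \BS \XX = \bSigma_{\diamond}^{-1/2} \X^\top \BS \X \, \bSigma_{\diamond}^{-1/2}$. The hypothesis $\lambda_{\min}(\XX^\top \BS \XX) \geq \tau$ is equivalent to $\bSigma_{\diamond}^{-1/2} \X^\top \BS \X \, \bSigma_{\diamond}^{-1/2} \succeq \tau \b I_d$. Conjugating both sides by $\bSigma_{\diamond}^{1/2}$ (which preserves the Loewner order since $\bSigma_{\diamond}^{1/2} \in \pd$) yields
\begin{equation*}
  \X^\top \BS \X \;\succeq\; \tau \bSigma_{\diamond}.
\end{equation*}

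Next, I would apply the three properties of $f$ in sequence. By monotonicity (part 2 of \cref{assume:f}), the above Loewner inequality implies
\begin{equation*}
  f\bigl(\X^\top \BS \X\bigr) \;\leq\; f\bigl(\tau \bSigma_{\diamond}\bigr).
\end{equation*}
Since $\tau \in (0,1)$ and $\bSigma_{\diamond} \in \pd$, reciprocal sub-linearity (part 3 of \cref{assume:f}) gives $f(\tau \bSigma_{\diamond}) \leq \tau^{-1} f(\bSigma_{\diamond}) = \tau^{-1} f^{\diamond}$. Finally, \cref{lm:ip-lp-relation} yields $f^{\diamond} \leq f^*$, so chaining the inequalities delivers $f(\X^\top \BS \X) \leq \tau^{-1} f^*$ as claimed.

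Convexity of $f$ is not needed here; only monotonicity and reciprocal sub-linearity enter the rounding bound, together with the relaxation inequality. There is no substantive obstacle in this proposition—it is essentially a bookkeeping exercise—but the one point that deserves care is verifying that $\bSigma_{\diamond}$ is genuinely positive definite so that $\bSigma_{\diamond}^{-1/2}$ and the conjugation step above are well-defined; this is implicit in the setup since the relaxed problem \cref{eq:lp} is posed over $\pd$ and we have assumed $f:\pd\to\mathbb{R}$.
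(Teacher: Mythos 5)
Your proof is correct and follows essentially the same route as the paper's: converting the eigenvalue condition to the Loewner inequality $\X^\top \b S \X \succeq \tau \bSigma_\diamond$ via conjugation by $\bSigma_\diamond^{1/2}$, then chaining monotonicity, reciprocal sub-linearity, and \cref{lm:ip-lp-relation}. Your added remarks—that convexity is unused and that positive definiteness of $\bSigma_\diamond$ is needed for the conjugation step—are accurate but do not change the argument.
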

\begin{proof}
    $\lambda_{\min}\left(\XX^\top \b S \XX\right) \geq \tau$ is equivalent to $\XX^\top \b S \XX\succeq \tau \b I$.
    
    By~\cref{eq:transforms}, we have $\bSigma_\diamond^{-1/2} \left(\X^\top \b S \X\right) \bSigma_\diamond^{-1/2}\succeq \tau \b I$, and thus $\X^\top \b S \X\succeq \tau \bSigma_\diamond$. Then
    \begin{align}
          f\left(\X^\top \b S \X\right) &\leq f\left(\tau \bSigma_\diamond\right) \tag*{(item 2 of \cref{assume:f}) }\nonumber \\
          &\leq \tau^{-1} f\left(\bSigma_\diamond\right) = \tau^{-1} f^\diamond
          \tag*{(item 3 of \cref{assume:f}) } \nonumber\\
          &\leq \tau^{-1} f^*      \tag*{(\cref{lm:ip-lp-relation}) } 
    \end{align}
\end{proof}

From \cref{prop:goal}, a larger value of $\tau$ might indicate that $f\left(\X^\top \b S \X\right)$ is closer to the optimal value $f^*$. Thus our rounding goal is to select points such that $\lambda_{\min}\left(\XX^\top \b S \XX\right)$ is maximized.

\paragraph{Regret minimization}
We now introduce the regret minimization problem of the adversarial linear bandit problem. Consider a game of $k$ rounds. At each step $t\in[k]$:
\begin{itemize}
    \item the player chooses an action matrix $\A_t\in\simplex$, where 
    \begin{align}\label{eq:matrix-simplex}
        \simplex \triangleq {\color{black}\left \{\A\in \mathbb{S}_{+}^{d}: \Tr(\A) = 1\right\}.}
    \end{align}
    \item then the environment reveals a loss matrix $\F_t \in  \mathbb{S}^d_{+}$.
    \item a loss $\langle \A_t, \F_t \rangle$ is incurred for round $t$.
\end{itemize}
Note that the loss matrix $\F_t$ is always determined \textbf{after} the action $\A_t$ is chosen at round $t$. After $k$ rounds, the total regret of such a game is defined by
    \begin{align}\label{eq:regret}
        R :=\sum_{t=1}^k \left\langle\A_t , \F_t \right\rangle -  \inf_{\U \in \simplex} \left\langle \U, \sum_{t=1}^k \F_t \right\rangle.
       \end{align}
The goal is to find $\A_t$ such that the total regret can be minimized.


\paragraph{Choosing the action matrix $\A_t$ via FTRL}  
The Follow-the-Regularized-Leader (FTRL) method~\cite{shalev2012online} provides a standard approach for selecting the action matrix $\A$ within the regret minimization framework introduced earlier. The procedure is parameterized by a regularizer $w: \simplex \to \mathbb{R}$ together with a learning rate $\alpha > 0$. In our analysis, we focus on two specific choices of regularizers:  
\begin{itemize}
    \item \textbf{Entropy-regularizer:} $w(\A) = \langle \A, \log \A - \b I \rangle$.  
    \item \textbf{$\ell_{1/2}$-regularizer:} $w(\A) = -2 \Tr(\A^{1/2})$.
\end{itemize}
{\color{black}While prior work~\cite{design} developed a sample selection algorithm based on the $\ell_{1/2}$-regularizer, here we extend this framework to the entropy-regularizer.  
}

FTRL  chooses $\A_1 = \argmin_{\A \in \simplex} w(\A)$ for the first round. In subsequent rounds $t\geq 2$, the actions are determined by
\begin{align}
        \A_{t} = \argmin_{\A \in \simplex} \left\{ \alpha \sum_{s=1}^{t-1}\left\langle \A,\F_s \right\rangle + w(\A)\right\}.\label{eq:ftrl}
\end{align}

We obtain closed-form expressions of $\A_t$ under both the entropy and $\ell_{1/2}$ regularizers in~\cref{prop:expressions}. {\color{black}Using these action matrices determined by FTRL, we then derive the regret bounds stated in~\cref{prop:regret-bound}.}

\begin{proposition}[Closed forms of $\A_t$ by FTRL]\label{prop:expressions}
 FTRL chooses action $\A_1 = \frac{1}{d} {\b I}$ for the first round, For the subsequent rounds $t\geq 2$, the actions are
\begin{align}
    \A_t &= \exp\left(-\alpha\sum_{s=1}^{t-1} \F_s - \nu_t {\b I}\right),\qquad (\text{entropy-regularizer})\label{eq:At-ent}\\
    \A_t &= \left( \alpha\sum_{s=1}^{t-1} \F_s + \nu_t {\b I} \right)^{-2}, \qquad\qquad (\ell_{1/2}\text{-regularizer; \color{black}\cite[Eq.~(10)]{design}}) \label{eq:At-l12}
\end{align}
where $\nu_t$ is the unique constant that ensures $\A_t \in \simplex$.
\end{proposition}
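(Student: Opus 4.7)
The plan is to solve the FTRL update as a constrained convex optimization over the spectraplex $\simplex$, dualizing only the trace constraint with a scalar Lagrange multiplier $\nu_t$ and closing the stationarity equation via matrix calculus on the symmetric cone.

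Both regularizers are strictly convex on $\mathbb{S}_{++}^d$ (entropy via strict convexity of $x\log x$, the $\ell_{1/2}$-regularizer via strict convexity of $-\sqrt{x}$), lifted to matrix functions by the spectral-function calculus. Adding the linear term $\alpha\sum_s \langle\A,\F_s\rangle$ preserves strict convexity, so the FTRL objective has a unique minimizer $\A_t$, and one can verify $\A_t \in \simplex \cap \mathbb{S}_{++}^d$: the entropy regularizer diverges as $\lambda_{\min}(\A)\to 0$, while for the $\ell_{1/2}$-regularizer the first-order condition will force $\A_t^{-1/2}$ to be finite and positive definite, ruling out singular minimizers. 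Thus the PSD constraint is inactive and only the trace constraint requires an explicit multiplier.

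For $t=1$, both $w$ and $\simplex$ are invariant under orthogonal conjugation $\A \mapsto \b U \A \b U^\top$, so by uniqueness of the minimizer $\A_1$ commutes with every orthogonal $\b U$ and must be a scalar multiple of $\b I$; the constraint $\Tr(\A_1)=1$ then pins down $\A_1 = \tfrac{1}{d}\b I$ in both cases. For $t\geq 2$, I would form the Lagrangian
\begin{equation*}
L(\A,\nu) = \alpha\sum_{s=1}^{t-1}\langle \A,\F_s\rangle + w(\A) + \nu\bigl(\Tr(\A) - 1\bigr),
\end{equation*}
and invoke the spectral-calculus identities $\nabla_\A \langle \A,\log\A - \b I\rangle = \log \A$ and $\nabla_\A(-2\Tr(\A^{1/2})) = -\A^{-1/2}$. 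Setting $\nabla_\A L = \b 0$ yields $\log\A_t = -\alpha\sum_s \F_s - \nu_t \b I$, which rearranges to \cref{eq:At-ent} in the entropy case, and $\A_t^{-1/2} = \alpha\sum_s \F_s + \nu_t \b I$, which rearranges to \cref{eq:At-l12} in the $\ell_{1/2}$ case.

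To finish, I would verify uniqueness of $\nu_t$ from the trace normalization. In the entropy case, $\nu \mapsto \Tr(\exp(-\alpha\sum_s \F_s - \nu\b I)) = e^{-\nu}\Tr(\exp(-\alpha\sum_s \F_s))$ is continuous and strictly decreasing from $+\infty$ to $0$, so a unique $\nu_t$ solves $\Tr(\A_t)=1$. In the $\ell_{1/2}$ case the admissible range is $\nu > -\lambda_{\min}(\alpha\sum_s\F_s)$ (which reduces to $\nu>0$ since $\F_s\succeq \b 0$), and on this range $\nu \mapsto \Tr((\alpha\sum_s \F_s + \nu\b I)^{-2})$ is continuous and strictly decreasing from $+\infty$ to $0$, again giving a unique $\nu_t$. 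The main technical subtlety is the spectral derivative $\nabla_\A \Tr(\A^{1/2}) = \tfrac{1}{2}\A^{-1/2}$, which I would justify by diagonalizing $\A$, differentiating $\sum_i \lambda_i^{1/2}$ eigenvalue-wise, and invoking Lewis's theorem on spectral functions to confirm that the scalar computation yields the Euclidean gradient on the space of symmetric matrices; the entropy-case gradient follows by the same route.
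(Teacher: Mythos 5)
Your proposal follows essentially the same route as the paper's proof: form the Lagrangian of the FTRL objective over $\simplex$ with a single multiplier for the trace constraint, use the spectral gradients $\nabla w(\A)=\log\A$ and $\nabla w(\A)=-\A^{-1/2}$ to obtain the stationarity conditions, and pin down $\nu_t$ by trace normalization (the paper packages existence and uniqueness of the pair $(\A_t,\nu_t)$ in a separate lemma, proved by essentially your monotonicity argument). Your symmetry argument for $t=1$ is a pleasant alternative to the paper's direct KKT computation, but it is a cosmetic difference.

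Two of your justifications are inaccurate, and one of them would fail as literally stated. First, the claim that ``the entropy regularizer diverges as $\lambda_{\min}(\A)\to 0$'' is false: $w(\A)=\sum_i\lambda_i\log\lambda_i-1$ is bounded on $\simplex$ (it lies in $[-\log d-1,\,-1]$). What diverges at the boundary is the \emph{gradient} $\log\A$, and interiority of the minimizer follows from this steepness (essential smoothness), not from divergence of the function value; your $\ell_{1/2}$ interiority argument is similarly circular (it invokes the first-order condition, which presupposes an interior minimizer) and should likewise be replaced by the blow-up of $-\A^{-1/2}$ near the boundary. Second, the parenthetical that the admissible range $\nu > -\lambda_{\min}\left(\alpha\sum_s\F_s\right)$ ``reduces to $\nu>0$'' is backwards: PSD-ness of the $\F_s$ gives $-\lambda_{\min}\left(\alpha\sum_s\F_s\right)\le 0$, so the admissible interval \emph{contains}, but generally strictly exceeds, $(0,\infty)$. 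This matters in practice: once $\sum_{s<t}\F_s$ is nonsingular with large eigenvalues (which happens in later rounds as the rank-one losses $\xx_{i_s}\xx_{i_s}^\top$ accumulate), one has $\Tr\left(\left(\alpha\sum_s\F_s\right)^{-2}\right)<1$, and the trace-normalizing $\nu_t$ is then \emph{negative}; restricted to $\nu>0$, your monotone map never attains the value $1$ and the existence argument finds no solution. The fix is simply to run the same monotonicity argument on the full interval $\left(-\lambda_{\min}\left(\alpha\sum_s\F_s\right),\infty\right)$, on which $\nu\mapsto\sum_i(\alpha\lambda_i+\nu)^{-2}$ does decrease continuously from $+\infty$ to $0$; with that correction your proof closes.
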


{\color{black}
\begin{proposition}[Regret bound of FTRL]\label{prop:regret-bound}
Let $\alpha > 0$, and suppose that $\{\A_t\}_{t \in [k]}$ are selected by FTRL with either the entropy regularizer or the $\ell_{1/2}$-regularizer. Then the regret defined in~\cref{eq:regret} satisfies
\begin{align}
    R &\leq \frac{\log d}{\alpha} 
        + \sum_{t=1}^k \langle \A_t, \F_t \rangle 
        - \frac{1}{\alpha} \sum_{t=1}^k \Bigl(1 - \langle \A_t, \exp(-\alpha \F_t) \rangle \Bigr),
        \quad\text{(entropy regularizer)} \label{eq:regret-entropy}\\[1ex]
    R &\leq \frac{2\sqrt{d}}{\alpha} 
        + \sum_{t=1}^k \langle \A_t, \F_t \rangle 
        - \frac{1}{\alpha} \sum_{t=1}^k 
          \Tr\!\left[\A_t^{1/2} - \bigl(\A_t^{-1/2} + \alpha \F_t \bigr)^{-1}\right],
         \quad\text{($\ell_{1/2}$ regularizer)}. \label{eq:regret-l12}
\end{align}
\end{proposition}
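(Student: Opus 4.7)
The plan is to prove both bounds via a scalar-potential argument tailored to each regularizer, in each case telescoping a one-step change and relating the endpoint to $\lambda_{\min}(\sum_t\F_t) = \inf_{\U\in\simplex}\langle\U,\sum_t\F_t\rangle$.

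For the entropy regularizer, set $W_t := \exp(-\alpha\sum_{s=1}^{t-1}\F_s)$, so $\A_t = W_t/\Tr(W_t)$ by~\cref{eq:At-ent} and $\Tr(W_1) = d$. The Golden--Thompson inequality $\Tr(\exp(X+Y))\leq\Tr(\exp(X)\exp(Y))$ for symmetric $X,Y$ yields the stability step $\Tr(W_{t+1}) \leq \Tr(W_t)\,\langle\A_t,\exp(-\alpha\F_t)\rangle$. Taking logarithms, applying $\log x \leq x - 1$, and telescoping gives $\log\Tr(W_{k+1}) - \log d \leq \sum_t\bigl(\langle\A_t,\exp(-\alpha\F_t)\rangle - 1\bigr)$. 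Combined with the lower bound $\Tr(W_{k+1}) \geq \exp(-\alpha\lambda_{\min}(\sum_t\F_t))$, dividing by $\alpha$ gives $-\lambda_{\min}(\sum_t\F_t) \leq \frac{\log d}{\alpha} - \frac{1}{\alpha}\sum_t(1-\langle\A_t,\exp(-\alpha\F_t)\rangle)$. Adding $\sum_t\langle\A_t,\F_t\rangle$ to both sides produces~\cref{eq:regret-entropy}.

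For the $\ell_{1/2}$ regularizer, write $M_t := \alpha\sum_{s=1}^{t-1}\F_s + \nu_t\b I$, so $\A_t = M_t^{-2}$, $\A_t^{1/2} = M_t^{-1}$, $\Tr(M_t^{-2}) = 1$, and $\nu_1 = \sqrt d$. Since $\F_t\succeq\b 0$ gives $\Tr((M_t+\alpha\F_t)^{-2})\leq 1$ while $\nu\mapsto\Tr((M_t+\alpha\F_t+(\nu-\nu_t)\b I)^{-2})$ is strictly decreasing, the trace-one constraint forces $\delta_t := \nu_t-\nu_{t+1}\geq 0$, so that $M_t+\alpha\F_t = M_{t+1}+\delta_t\b I$. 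The resolvent identity $(M+\delta\b I)^{-1} = M^{-1} - \delta M^{-1}(M+\delta\b I)^{-1}$ then telescopes the target sum to
\begin{align*}
  \sum_{t=1}^k\Tr\!\bigl[M_t^{-1} - (M_t + \alpha\F_t)^{-1}\bigr] = \sqrt d - \Tr(M_{k+1}^{-1}) + \sum_{t=1}^k\delta_t\,\Tr\!\bigl[M_{t+1}^{-1}(M_{t+1}+\delta_t\b I)^{-1}\bigr].
\end{align*}
Since $M_{t+1}$ commutes with $M_{t+1}+\delta_t\b I$ and $(M_{t+1}+\delta_t\b I)^{-1}\preceq M_{t+1}^{-1}$, we have $\Tr\bigl[M_{t+1}^{-1}(M_{t+1}+\delta_t\b I)^{-1}\bigr] \leq \Tr(M_{t+1}^{-2}) = 1$, and $\sum_t\delta_t = \sqrt d - \nu_{k+1}$, so the sum is bounded by $2\sqrt d - \Tr(M_{k+1}^{-1}) - \nu_{k+1}$. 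Combining with $\nu_{k+1} + \lambda_{\min}(\alpha\sum_t\F_t) = \lambda_{\min}(M_{k+1}) > 0$ gives $\sum_t\Tr[\cdots] \leq 2\sqrt d + \lambda_{\min}(\alpha\sum_t\F_t)$, which rearranges to~\cref{eq:regret-l12}.

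The main obstacle is the $\ell_{1/2}$ case: whereas Golden--Thompson handles the non-commutative matrix exponential in the entropy case in one shot, the $\ell_{1/2}$ analysis requires careful bookkeeping of the normalization scalar $\nu_t$ because $M_{t+1}$ differs from $M_t+\alpha\F_t$ by the correction $-\delta_t\b I$ needed to enforce $\Tr(M_{t+1}^{-2}) = 1$. The crux is establishing $\delta_t\geq 0$ and combining the resolvent identity with the trace-one constraint so that the residual terms in the telescope collapse cleanly to the bound $2\sqrt d + \lambda_{\min}(\alpha\sum_t\F_t)$, rather than to a looser width-style expression.
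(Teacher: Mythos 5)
Your proof is correct, but it takes a genuinely different route from the paper's. The paper first shows that FTRL coincides with Mirror Descent (\cref{prop:equi}), then invokes the generic MD regret bound of \cref{thm:md-bound}, namely $R \leq \mathrm{diam}_w(\simplex)/\alpha + \tfrac{1}{\alpha}\sum_{t} D_w(\A_t,\Aint_{t+1})$, bounds the diameter by $\log d$ and $2\sqrt{d}$ respectively (\cref{lm:diameter}), and finally evaluates each Bregman term --- via Golden--Thompson for the entropy case and direct algebra for $\ell_{1/2}$. You bypass the MD equivalence and the cited regret theorem entirely: starting from the closed forms in \cref{prop:expressions}, you run self-contained potential arguments, telescoping $\log\Tr(W_t)$ (the classical matrix multiplicative-weights potential) in the entropy case, and telescoping $\Tr(M_t^{-1})$ together with the monotone normalizers $\nu_t$ via the resolvent identity in the $\ell_{1/2}$ case, converting the comparator term through $\inf_{\U\in\simplex}\langle\U,\sum_t\F_t\rangle = \lambda_{\min}\left(\sum_t\F_t\right)$ (\cref{lm:mtd-eigen-min}). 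The paper's route buys modularity --- any regularizer with a diameter bound and a computable Bregman divergence immediately yields a bound --- while your route buys self-containedness (no external theorem, no MD--FTRL equivalence) and, in the $\ell_{1/2}$ case, explicit control of the slack terms $\Tr(M_{k+1}^{-1})$ and $\lambda_{\min}(M_{k+1})$ that you discard at the end. One step deserves an explicit line of justification: the claim $\Tr\bigl((M_t+\alpha\F_t)^{-2}\bigr) \leq \Tr(M_t^{-2}) = 1$ does not follow from operator monotonicity of $X\mapsto X^{-2}$ (which fails in general); it does follow either from Weyl's monotonicity, $\lambda_i(M_t+\alpha\F_t)\geq\lambda_i(M_t)$ for every $i$ since $\alpha\F_t\succeq \b 0$, or from $(M_t+\alpha\F_t)^{-1}\preceq M_t^{-1}$ combined with the fact that $\b 0 \preceq \b X \preceq \b Y$ implies $\Tr(\b X^2)\leq\Tr(\b X\b Y)\leq\Tr(\b Y^2)$.
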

}

\paragraph{Sample selection}
{\color{black}
As discussed at the beginning of \Cref{sec:rounding}, our goal in sample selection is to maximize 
$\lambda_{\min}\!\left(\XX^\top \b S \XX\right)$. At each step $t$, we restrict the loss matrix to be of the form 
$\F_t \in \{\xx_i \xx_i^\top\}_{i \in [n]}$. Defining $\F := \sum_{t=1}^k \F_t$, we observe that 
$\lambda_{\min}\!\left(\XX^\top \b S \XX\right) = \lambda_{\min}(\F)$.  

By \Cref{lm:mtd-eigen-min}, the regret in \cref{eq:regret} admits an equivalent form for regret:
\begin{align}\label{eq:Regret-def-min}
    R = \sum_{t=1}^k \left\langle \A_t , \F_t \right\rangle - \lambda_{\min}(\F).
\end{align}
Substituting \eqref{eq:Regret-def-min} into the regret bounds in \cref{eq:regret-entropy,eq:regret-l12} yields lower bounds for $\lambda_{\min}(\F)$. Further derivations lead to \Cref{thm:min-eigen}: inequality~\eqref{eq:ent-bound} gives the bound for the entropy regularizer, while inequality~\eqref{eq:l12-bound} gives the bound for the $\ell_{1/2}$ regularizer. Complete proofs are deferred to \Cref{sec:proof-min-eigen}.  

Since the objective now is to maximize $\lambda_{\min}(\F)$, at each step $t$ we select a sample $i_t \in [n]$ that maximizes the corresponding lower bound. At this stage, the action matrices $\{\A_s\}_{s \in [t]}$ and past loss matrices $\{\F_s\}_{s \in [t-1]}$ are already determined. Substituting $\F_t = \xx_{i_t}\xx_{i_t}^\top$ into \eqref{eq:ent-bound} and \eqref{eq:l12-bound} yields the sample selection objectives in \eqref{eq:ent-sample} and \eqref{eq:l12-sample} for the entropy- and $\ell_{1/2}$-regularizer, respectively.
}
\begin{restatable}{theorem}{thmmineigen}{\normalfont(Lower bound of $\lambda_{\min}\left(\sum_{t\in[k]} \F_t\right)$ by FTRL).}
\label{thm:min-eigen}
Suppose that at each round $t$, the loss matrix is rank-one, $\F_t = x x^\top$ for some nonzero $x \in \mathbb{R}^d$, and define $\F = \sum_{t=1}^k \F_t$. Then
    \begin{align}
      & \lambda_{\min}(\F)
      \geq -\frac{\log d}{\alpha}+ \frac{1}{\alpha} \sum_{t=1}^k \left[1-\exp\left(-\alpha \Tr\left(\F_t\right)\right)\right] \frac{\left\langle \A_t, \F_t \right\rangle}{\Tr(\F_t)},\quad (\text{entropy-regularizer})\label{eq:ent-bound}\\
   & \lambda_{\min}(\F)
      \geq -\frac{2 \sqrt{d}}{\alpha} +  \sum_{t=1}^k \frac{\left\langle\A_t, \F_t \right\rangle}{1 + \left\langle \A_t^{1/2}, \alpha \F_t\right\rangle} .\quad (\text{$\ell_{1/2}$-regularizer; \color{black}\cite[Eq.~(11)]{design}})\label{eq:l12-bound}
    \end{align}
\end{restatable}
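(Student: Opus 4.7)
The plan is to combine the FTRL regret bounds in \cref{prop:regret-bound} with the identity $\lambda_{\min}(\F) = \sum_{t=1}^k \langle \A_t, \F_t\rangle - R$ from \cref{eq:Regret-def-min}, and then simplify each per-round term by exploiting the rank-one structure $\F_t = xx^\top$. After rearranging, both bounds reduce to an explicit computation: a matrix exponential in the entropy case and a matrix inverse in the $\ell_{1/2}$ case, each applied to a rank-one matrix.

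For the entropy regularizer, I would start from \cref{eq:regret-entropy}. Substituting the identity $\lambda_{\min}(\F) = \sum_t \langle \A_t, \F_t\rangle - R$ cancels the $\sum_t \langle \A_t, \F_t\rangle$ terms and yields
\begin{align*}
\lambda_{\min}(\F) \geq -\frac{\log d}{\alpha} + \frac{1}{\alpha}\sum_{t=1}^k \bigl(1 - \langle \A_t, \exp(-\alpha \F_t)\rangle\bigr).
\end{align*}
Since $\F_t = xx^\top$ has a single nonzero eigenvalue $\|x\|_2^2 = \Tr(\F_t)$ with eigenvector $x/\|x\|_2$, functional calculus gives $\exp(-\alpha \F_t) = \b I + \tfrac{\exp(-\alpha \Tr(\F_t))-1}{\Tr(\F_t)} \F_t$. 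Taking the inner product with $\A_t$ and using $\Tr(\A_t)=1$ then turns $1 - \langle \A_t, \exp(-\alpha \F_t)\rangle$ into exactly $\tfrac{1-\exp(-\alpha \Tr(\F_t))}{\Tr(\F_t)} \langle \A_t,\F_t\rangle$, matching \cref{eq:ent-bound}.

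For the $\ell_{1/2}$ regularizer, I would similarly substitute into \cref{eq:regret-l12} to obtain
\begin{align*}
\lambda_{\min}(\F) \geq -\frac{2\sqrt{d}}{\alpha} + \frac{1}{\alpha}\sum_{t=1}^k \Tr\!\bigl[\A_t^{1/2} - (\A_t^{-1/2} + \alpha \F_t)^{-1}\bigr].
\end{align*}
Since $\alpha \F_t = \alpha xx^\top$ is rank-one, the Sherman--Morrison formula gives $(\A_t^{-1/2} + \alpha xx^\top)^{-1} = \A_t^{1/2} - \frac{\alpha \A_t^{1/2} xx^\top \A_t^{1/2}}{1 + \alpha x^\top \A_t^{1/2} x}$. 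Subtracting from $\A_t^{1/2}$, taking the trace, and rewriting $x^\top \A_t x = \langle \A_t, \F_t\rangle$ and $\alpha x^\top \A_t^{1/2} x = \langle \A_t^{1/2}, \alpha \F_t\rangle$ collapses the per-step term to $\tfrac{\langle \A_t, \F_t\rangle}{1+\langle \A_t^{1/2}, \alpha \F_t\rangle}$, matching \cref{eq:l12-bound}.

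The main obstacle is less analytical than bookkeeping: one must recognize that rank-one $\F_t$ reduces both the operator exponential and the regularized matrix inverse to closed-form scalar expressions in $\Tr(\F_t)$ and the inner products $\langle \A_t, \F_t\rangle$, $\langle \A_t^{1/2}, \F_t\rangle$. Once this observation is in hand, the simplification is forced by the constraints $\A_t \in \simplex$ (in particular $\Tr(\A_t)=1$) and by Sherman--Morrison, and no inequality is actually needed in the per-step passage---the bounds in \cref{thm:min-eigen} are tight consequences of \cref{prop:regret-bound} for rank-one loss matrices.
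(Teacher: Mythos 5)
Your proposal is correct and follows essentially the same route as the paper's proof: substitute the identity $\lambda_{\min}(\F) = \sum_{t=1}^k \langle \A_t, \F_t\rangle - R$ (from \cref{lm:mtd-eigen-min}) into the FTRL regret bounds of \cref{prop:regret-bound}, then simplify each per-round term using the rank-one structure---an explicit spectral computation of $\exp(-\alpha xx^\top)$ for the entropy case and Sherman--Morrison for the $\ell_{1/2}$ case. The only cosmetic difference is that the paper writes the matrix exponential via an explicit unitary diagonalization rather than your functional-calculus identity, and your closing observation that the per-step simplifications are exact identities (all inequalities residing in \cref{prop:regret-bound}) is likewise accurate.
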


\begin{definition}[Sample selection objective]\label{def:opt-sample}
Given sample size $k\geq d$, regularizer type, and learning rate $\alpha>0$, we choose $\A_t$ by FTRL at each round ( \cref{prop:expressions}), and choose $\F_t = \xx_{i_t}\xx_{i_t}^\top$, where $i_t$ is the sample index determined by the following: 
\begin{align}
        i_t &=\argmax_{i \in [n]} \left[1 - \exp(-\alpha \| \xx_i\|_2^2)\right] \frac{\xxt_i \A_t \xx_i}{\| \xx_i\|_2^2},\quad (\text{entropy-regularizer}) \label{eq:ent-sample} \\
 i_t &=\argmax_{i \in [n]} \frac{\xxt_i \A_t \xx_i}{1 + \alpha \xxt_i \A_t^{1/2} \xx_i}.\qquad(\text{$\ell_{1/2}$-regularizer; \color{black}\cite[Algorithm 1]{design}})  \label{eq:l12-sample}
    \end{align}
\end{definition}


\begin{figure}[t]
    \centering
\footnotesize
\begin{tikzpicture}
\node[inner sep=0pt] (a1) at (0,0) {\includegraphics[width=6cm]{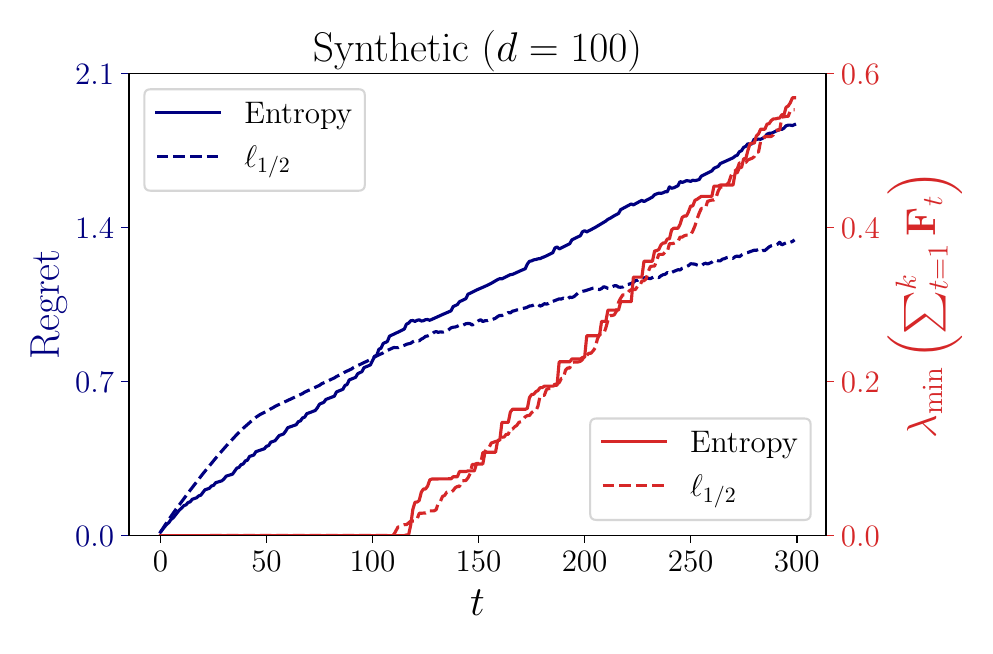}};
\node[inner sep=0pt] (a2) at (6,0) {\includegraphics[width=6cm]{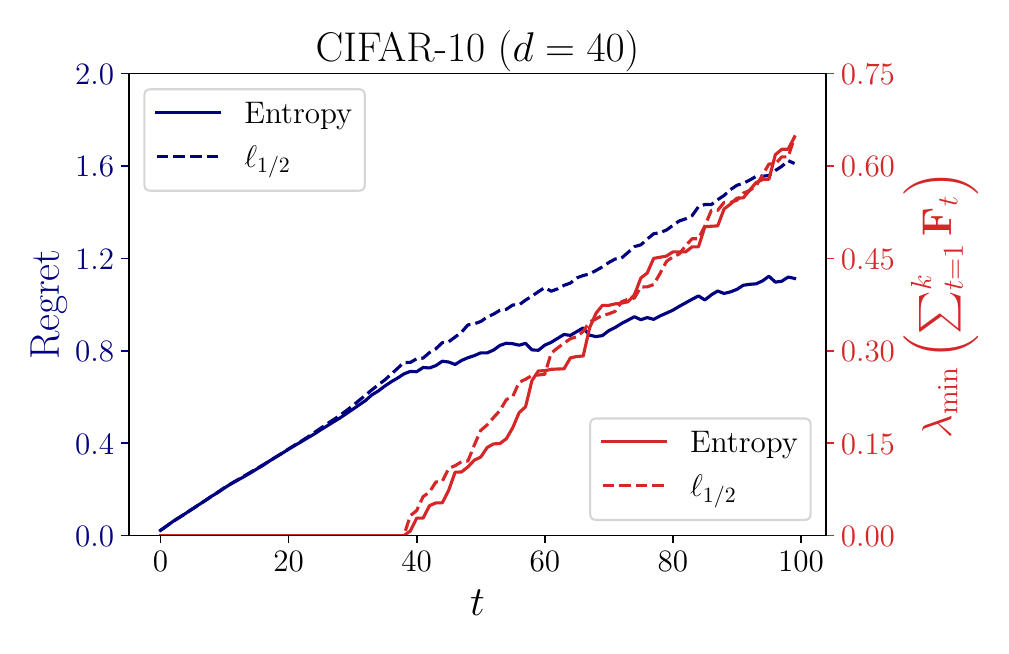}};
\end{tikzpicture}
\caption{\color{black}Regret (blue lines) and $\lambda_{\min}\left(\sum_{t=1}^k \F_t\right)$ (red lines) for \regretent (solid lines) and \regretlhalf (dashed lines). Left: synthetic data with sample selection budget $k=300$; experiment details are provided in \cref{sec:synthetic-experiments}. Right: CIFAR-10 with 40-dimensional PCA-reduced features and sample selection budget $k=100$; experiment details are provided in \cref{sec:realworld-experiments}. Further discussion is provided in \cref{remark:difference}.}
\label{fig:regret}
\end{figure}

{\color{black}
To clarify the relationship between regret minimization and sample selection in Regret-Min, we provide the following remark.

\begin{remark}\label{remark:difference}
We emphasize that sample selection in Regret-Min and regret minimization are fundamentally different tasks. In regret minimization, we do \textit{not} have control over the loss matrix $\F_t$: the loss matrix is determined by the environment and is revealed \textit{after} we choose the action matrix $\A_t$ at each step $t$. The goal is to optimize the selection of $\A_t$ to minimize the regret upper bound across arbitrary loss sequences. 

In contrast, sample selection for Regret-Min operates under different conditions: here we \textit{do} control $\F_t$ through the samples we select (as defined in \cref{def:opt-sample}), and the goal is to maximize $\lambda_{\min}\left(\sum_{t=1}^k \F_t\right)$ by choosing appropriate $\F_t$.

This distinction is illustrated in \cref{fig:regret}, which shows both the regret $R$ and $\lambda_{\min}\left(\sum_{t=1}^k \F_t\right)$ during sample selection for two experiments from \cref{sec:experiments}. Notably, while the regret behavior differs between methods and datasets---\regretent exhibits higher regret than \regretlhalf after $t \geq 100$ on synthetic data but lower regret on CIFAR-10, with differences enlarging over time---both methods achieve very similar values of $\lambda_{\min}\left(\sum_{t=1}^k \F_t\right)$ throughout the selection process.
\end{remark}
}

\subsection{Algorithm and complexity}\label{sec:algorithm}

We summarize the complete algorithm in \cref{algo:rounding}. For each time step of solving the relaxed problem (Lines 1 in \cref{algo:rounding}), computing $\bSigma$ has a complexity of $\mathcal{O}(nd^2)$, its eigendecomposition requires $\mathcal{O}(d^3)$. The complexity of computing the gradient or subgradient depends on the choice of the optimality function $f$. For example, in the case of A-design, the complexity is $\mathcal{O}(nd^2)$ once the eigendecomposition of $\bSigma$ is available. 
 For each step of solving the rounding problem (Lines 5-19 in \cref{algo:rounding}),  choosing sample $i_t$ (line 6 or 8) has a complexity of  $\mathcal{O}(nd^2)$, updating action matrix $\A_t$ requires $\mathcal{O}(d^3)$. Thus the total complexity of the  algorithm is $\mathcal{O}(kd^3 + knd^2).$

\begin{algorithm}[t]
\caption{{\color{black}\texttt{Regret-Min}$+$\texttt{Entropy}/$\ell_{1/2}$}}
\label{algo:rounding}
 \hspace*{\algorithmicindent} \textbf{Input:} \nolinebreak
$\X \in \mathbb{R}^{n\times d}$, budget $k \geq d$,  learning rate $\alpha>0$, design objective $f$, regularizer $w\in\{\mathrm{``entropy"}, ``\ell_{1/2}" \}$ \\
 \hspace*{\algorithmicindent} \textbf{Output:} \nolinebreak
sample indices set $S$
\begin{algorithmic}[1]
\STATE $\bpi^*\gets$ solution of \cref{eq:lp} with certain design objective $f$
\STATE $\XX\gets \X (\X^\top \mathrm{diag}(\bpi^*)\X)^{-1/2}$
\STATE $\A_1 = \frac{1}{d} \b I$
\FOR {$t = 1$ to $k$}
    \IF{$w$ is ``entropy''}
        \STATE $i_t \gets \argmax_{i \in [n]\backslash S} \left[1 - \exp(-\alpha \| \xx_i\|_2^2)\right] \frac{\xxt_i \A_t \xx_i}{\| \xx_i\|_2^2}.$
    \ELSE
        \STATE $ i_t \gets \argmax_{i \in [n]\backslash S} \frac{\xxt_i \A_t \xx_i}{1 + \alpha \xxt_i \A_t^{1/2} \xx_i}.$
    \ENDIF
    \STATE $S \gets S \cup \{i_t\}$
    \STATE eigendecomposition: $\alpha \sum_{i\in S}\xx_i \xxt_i = \b V \b \Lambda \b V^\top$
    \IF{$w$ is ``entropy''}
        \STATE $\nu_t \gets \log [ \sum_{i=1}^d \exp(-\b \lambda_i) ]$\footnotemark
        \STATE $\b \Lambda^\prime \gets \exp(-\b\Lambda -\nu_t \b I)$
    \ELSE
        \STATE update $\nu_t$ such that $\sum_{i=1}^d (\lambda_i + \nu_t)^{-2} = 1$
        \STATE $\b \Lambda^\prime \gets [\b\Lambda +\nu_t \b I]^{-2}$   
    \ENDIF
    \STATE $\A_t \gets \b V \b \Lambda^\prime \b V^\top$
\ENDFOR
\end{algorithmic}
\end{algorithm}
\footnotetext{For stable computation, $\nu_t \gets -\lambda_{\min} + \log \!\left(\sum_{i=1}^d \exp\!\big(-(\lambda_i - \lambda_{\min})\big)\right)$,where $\lambda_{\min} =\min_i \lambda_i$.}

\subsection{Performance guarantee}\label{sec:performance}
We present the performance guarantee of \cref{algo:rounding} in \cref{thm:design-complexity} by providing the sample complexity such that the selected samples can generate near-optimal results. The complete proof is given in \Cref{sec:pf-complexity}.

\begin{restatable}{theorem}{thmcomplexity}\label{thm:design-complexity}
Given sample matrix $\X \in\mathbb{R}^{n\times d}$ and sample selection budget $k\geq d$. We select samples by \cref{algo:rounding} with learning rate $\alpha$ and either entropy or  $ \ell_{1/2}$ regularizer $w$  with objective $f$ satisfying \cref{assume:f}. Let $\epsilon\in(0,1)$, suppose that one of the following condition is satisfied:

\begin{enumerate}[label=(\alph*)]
    \item $w$ is entropy-regularizer, $\alpha = (4\log d)/\epsilon$ and $k\geq 16d\log d/\epsilon^2$.
    \item $w$ is entropy-regularizer, $\alpha =(2 \log d)/\epsilon$ and  $k\geq C_1 (d\log d)/\epsilon$, where
    \begin{align}
        C_1 = 2\left(1- \exp\left(-\alpha\min_{t\in[k]}\|\x_{i_t}\|_{\bSigma_{\diamond}^{-1}}^2\right)\right)^{-1},\label{eq:const-entropy}
    \end{align}
   where $\bSigma_\diamond$ is defined in \cref{eq:transforms} and $\{i_t\}_{t\in[k]}$ are indices of selected samples.
    \item {\color{black}(\cite[Theorem 1.1]{design})} $w$ is $\ell_{1/2}$-regularizer, $\alpha = 8\sqrt{d}/\epsilon$, and $k\geq 32 d/\epsilon^2$.
\end{enumerate}
Then the samples selected by \cref{algo:rounding} achieve near-optimal performance, i.e.
\begin{align}
    f\left(\X_S^\top \X_S\right) \leq (1+\epsilon) f^*,
\end{align}
where $S$ is the sample indices set, $\X_S\in\mathbb{R}^{k\times d}$ is the selected sample matrix, and $f^*$ is the optimal value of design objective in \cref{eq:ip}.
\end{restatable}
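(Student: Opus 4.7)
The plan is first to reduce the statement, via Proposition~\ref{prop:goal}, to showing $\lambda_{\min}(\F) \geq (1+\epsilon)^{-1}$ with $\F := \XX^\top \b S \XX = \sum_{t=1}^k \xx_{i_t}\xx_{i_t}^\top$. Theorem~\ref{thm:min-eigen} then supplies per-regularizer lower bounds on $\lambda_{\min}(\F)$ as sums of per-step contributions, so the remaining task is to lower-bound the summands and tune $(\alpha,k)$ so that all error terms are at most $\epsilon/(1+\epsilon)$.

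The engine in every case will be a greedy-versus-average argument combined with two structural identities. Since $i_t$ in Definition~\ref{def:opt-sample} maximizes the relevant per-sample quantity over $i \in [n]$, and since $\{\pi_i^\diamond/k\}_{i\in[n]}$ is a probability distribution on $[n]$ (because $\|\bpi^\diamond\|_1 = k$), the $i_t$-term is at least the $\bpi^\diamond/k$-weighted average over the pool. The resulting averages collapse via $\sum_i \pi_i^\diamond \xx_i\xx_i^\top = \b I$ (an immediate consequence of $\XX = \X\bSigma_\diamond^{-1/2}$) and $\Tr(\A_t) = 1$ (because $\A_t \in \simplex$), producing a leading term of order $1$ plus a correction term that I will control with case-specific scalar inequalities.

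For case~(c), I would follow \cite{design}: apply $1/(1+u)\geq 1-u$ inside the $\ell_{1/2}$-summand, average, and then substitute $\alpha = 8\sqrt{d}/\epsilon$ and $k \geq 32 d/\epsilon^2$. For case~(a), I would apply the elementary bound $1 - e^{-\alpha u}\geq \alpha u - \tfrac12(\alpha u)^2$ (valid for $\alpha u\geq 0$) inside the entropy-summand of Theorem~\ref{thm:min-eigen}; after the greedy-versus-average step, the $\tfrac12(\alpha u)^2$ piece becomes a controlled second-order correction, and the tuning $\alpha = 4\log d/\epsilon$, $k\geq 16 d\log d/\epsilon^2$ forces the total error below $\epsilon/(1+\epsilon)$. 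For case~(b), I would instead use the monotonicity of $1-e^{-x}$: since $\|\xx_{i_t}\|^2 \geq u_{\min} := \min_{s\in[k]}\|\xx_{i_s}\|^2$, the factor $1-e^{-\alpha \|\xx_{i_t}\|^2} \geq 1-e^{-\alpha u_{\min}}$ can be pulled out of the whole sum, and the residual $\sum_t \xx_{i_t}^\top \A_t \xx_{i_t}/\|\xx_{i_t}\|^2$ is handled by the same greedy-versus-average step. This is precisely how the adaptive constant $C_1 = 2(1-e^{-\alpha u_{\min}})^{-1}$ arises, trading the $1/\epsilon^2$ dependence for the data-adaptive $C_1/\epsilon$.

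The main obstacle is controlling the cross term that arises from the quadratic correction in cases~(a) and~(b): after averaging, a second-moment expression of the form $\sum_t \langle \A_t, \sum_i \pi_i^\diamond \|\xx_i\|^2 \xx_i\xx_i^\top\rangle$ must be bounded. A uniform spectral bound is too weak, because the leverage scores $\|\xx_i\|^2$ are not a priori controlled (only $\pi_i^\diamond \|\xx_i\|^2\leq 1$ is free). The resolution is to exploit the FTRL structure of $\A_t$, whose eigenvalues decay along directions where past losses $\F_{<t}$ have concentrated, so that the cumulative cross term can be absorbed into the diameter term $\log d/\alpha$ via the tuning of $\alpha$ rather than bounded per-step by a uniform constant. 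This is the step where the three cases diverge in their analyses and determines the sample complexity.
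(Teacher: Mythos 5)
Your skeleton matches the paper's: reduce to $\lambda_{\min}(\F)\geq (1+\epsilon)^{-1}$ via \cref{prop:goal}, invoke \cref{thm:min-eigen}, and lower-bound each step's contribution by a greedy-versus-average argument that collapses through $\sum_{i}\pi_i^\diamond \xx_i\xx_i^\top=\b I$ and $\Tr(\A_t)=1$. The genuine gap is in how you linearize the scalar factors in cases (a) and (c). By replacing $1-e^{-\alpha u}$ with $\alpha u-\tfrac12(\alpha u)^2$ (case (a)) and $1/(1+u)$ with $1-u$ (case (c)) \emph{before} averaging, you manufacture the cross term $\tfrac{\alpha}{2k}\bigl\langle \A_t,\ \sum_i \pi_i^\diamond \|\xx_i\|_2^2\,\xx_i\xx_i^\top\bigr\rangle$, which you then correctly flag as the main obstacle: it is controlled only by $\max_i\|\xx_i\|_2^2$, which is a priori unbounded. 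Your proposed resolution—absorbing this into the $\log d/\alpha$ diameter term by exploiting eigenvalue decay of the FTRL iterates—is not substantiated, and I do not see how it could be: the matrix $\sum_i\pi_i^\diamond\|\xx_i\|_2^2\,\xx_i\xx_i^\top$ is built from the entire pool, not from past losses, so nothing in the FTRL dynamics forces $\A_t$ to be small along its dominant directions. As written, cases (a) and (c) do not close.

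The paper avoids this obstacle entirely by keeping the correction in the \emph{denominator} rather than subtracting it. Its key scalar inequality is $\frac{x}{1-e^{-x}}\leq 1+x$ (\cref{lm:inequ-0}), so the entropy summand is at least $\frac{\x_i^\top\A_t\x_i}{1+\alpha\|\x_i\|_2^2}$; then the ratio inequality $\max_i \frac{a_i}{b_i}\geq \frac{\sum_i\pi_i a_i}{\sum_i \pi_i b_i}$ (\cref{lm:inequ-1}) is applied, and the denominator collapses exactly, $\sum_i\pi_i^\diamond\bigl(1+\alpha\|\xx_i\|_2^2\bigr)=k+\alpha d$, giving the per-step bound $\frac{1}{k+\alpha d}$ with no second-moment term (see \cref{lm:lowerbound-max}). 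The identical device handles $\ell_{1/2}$: the denominator sums to $k+\alpha\Tr(\A_t^{1/2})\leq k+\alpha\sqrt{d}$, giving $\frac{1}{k+\alpha\sqrt d}$ per step; the stated tunings of $(\alpha,k)$ then yield $\lambda_{\min}(\F)\geq 1-\epsilon/2\geq(1+\epsilon)^{-1}$ in all cases. Your case (b) is essentially the paper's argument (pull out $1-e^{-\alpha u_{\min}}$, then average the bare ratio), and it inherits the same subtlety the paper has: after the pull-out, the greedy-versus-average step needs the selected index to maximize $\xx_i^\top\A_t\xx_i/\|\xx_i\|_2^2$, which is not literally what \cref{def:opt-sample} maximizes; the paper's \cref{lm:lowerbound-max}(b) is only proved with the minimum taken over the whole pool, not over the selected samples appearing in \cref{eq:const-entropy}.
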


{\color{black}
\begin{remark}
For \regretent, \cref{thm:design-complexity} provides two alternative sample complexity bounds. Condition (a) gives a deterministic bound of $k \geq 16 d \log d / \epsilon^2$, while condition (b) achieves the tighter bound $k \geq C_1 d \log d / \epsilon$ at the cost of introducing the data-dependent constant $C_1$ defined in \cref{eq:const-entropy}.

To evaluate the practical implications of condition (b), we analyze $C_1$ values across 38 experiments from \Cref{sec:experiments}, covering both synthetic and real-world classification datasets. As shown in \cref{fig:C1-discuss}, we observe that $C_1$ typically falls within the range $2.5 \leq C_1 \leq 3.5$.

The selection behavior of \regretent provides insight into why these $C_1$ values arise. According to the sample selection objective in \cref{eq:ent-sample}, among samples with similar values of $\xx^\top \A_t \xx/\|\xx\|^2$, \regretent prefers the sample with the larger $\|\xx\|^2$, where $\|\xx\|^2 = \|x\|_{\bSigma_{\diamond}^{-1}}^2$ by the definition in \cref{eq:transforms}. This preference is validated experimentally: the lower two plots in \cref{fig:C1-discuss} demonstrate that the $\|\xx\|^2$ values of samples selected by \regretent consistently rank within the top 20\% of the entire dataset, which explains why $C_1$ remains relatively small in practice.

For comparison, \regretlhalf achieves the complexity bound $k \geq 32 d / \epsilon^2$ under condition (c). Condition (b) offers two advantages over condition (c). First, the constant $C_1$ is substantially smaller than 32---for instance, with $C_1 \leq 3.2$, condition (b) yields a strictly smaller bound than condition (c) for any $\epsilon \in (0,1)$ when $d < 20,000$. Second, the dependence on precision is linear in $1/\epsilon$ rather than quadratic in $1/\epsilon^2$. This linear dependence makes condition (b) particularly favorable in high-precision regimes where $\epsilon$ is small. 
\end{remark}
}

\begin{figure}[t]
    \centering
\footnotesize
\begin{tikzpicture}
\node[inner sep=0pt] (a1) at (0,0) {\includegraphics[width=5.2cm]{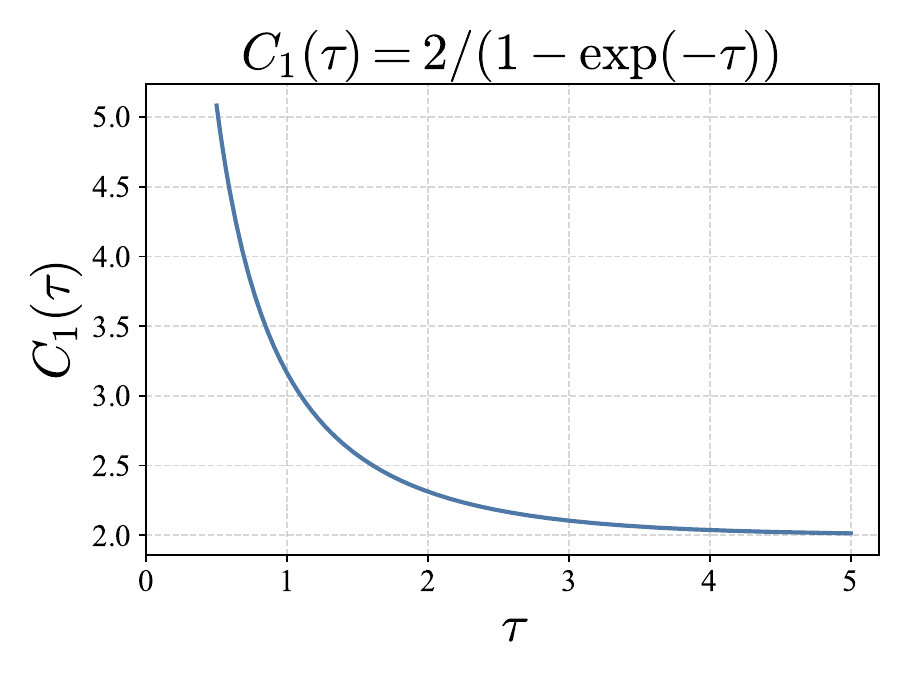}};
\node[inner sep=0pt] (a2) at (5.5,0) {\includegraphics[width=6cm]{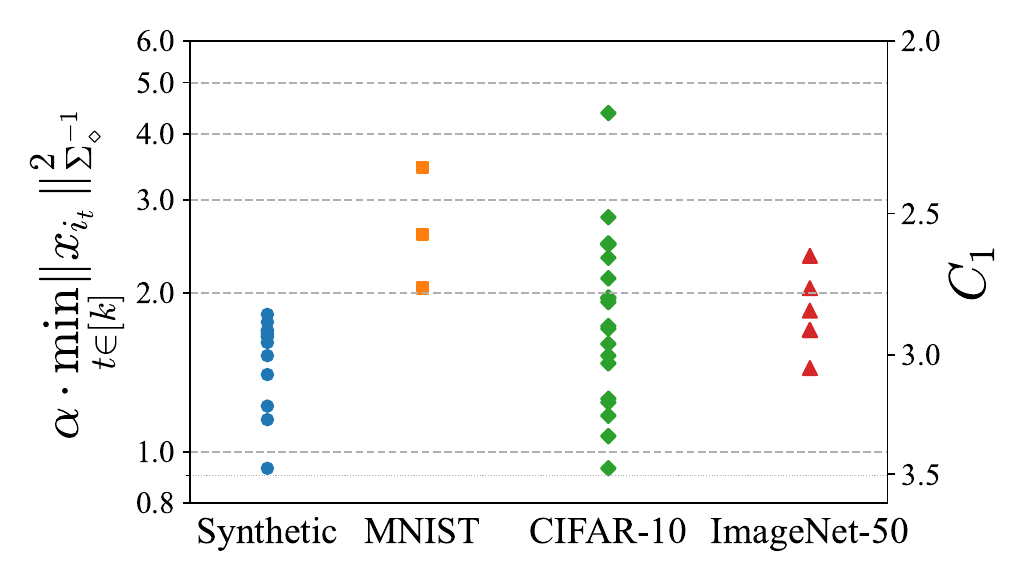}};
\node[inner sep=0pt] (a3) at (0,-3.55) {\includegraphics[width=5.5cm]{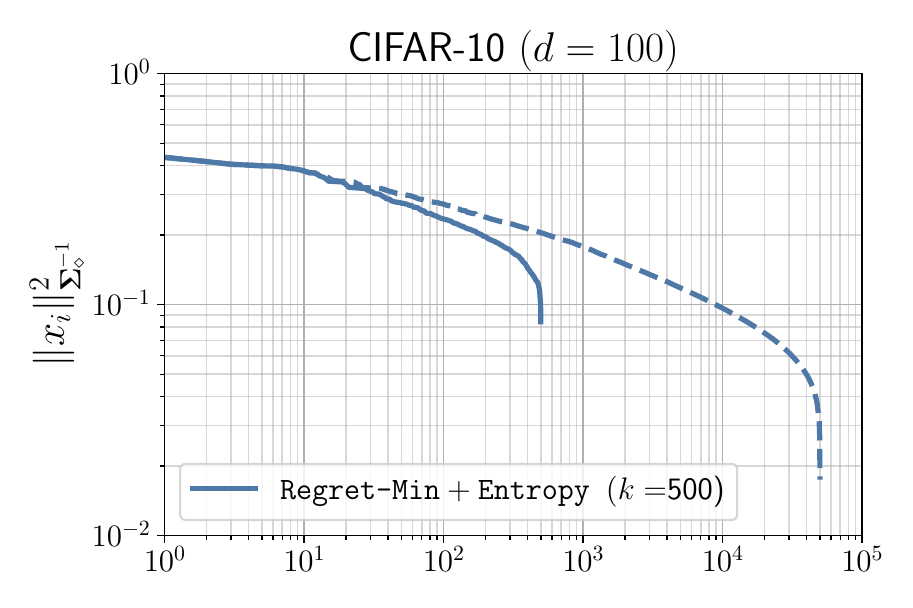}};
\node[inner sep=0pt] (a4) at (5.3,-3.55) {\includegraphics[width=5.5cm]{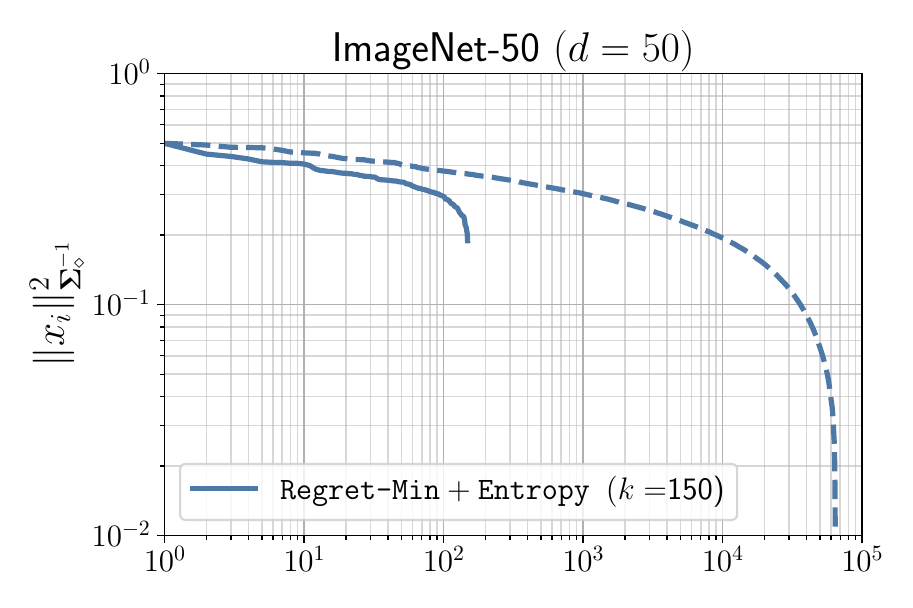}};
\end{tikzpicture}
\caption{\color{black}\regretent: Upper left: Plot of the function $C_1(\tau) = 2/(1-\exp(-\tau))$. Upper right: Values of $\alpha\min_{t\in[k]}\|x_{i_t}\|_{\bSigma_\diamond^{-1}}^2$ and the corresponding $C_1$ values (as defined in~\cref{eq:const-entropy}) for the experiments in \Cref{sec:experiments} using \regretent. Points correspond to the synthetic dataset (upper row of \cref{fig:design-synthetic}) and to the MNIST, CIFAR-10, and ImageNet-50 experiments in \cref{table:mnist,table:cifar10-logistic,table:image50-100}, respectively. Lower left (right): Values of $\|x\|_{\bSigma_{\diamond}^{-1}}^2$ for CIFAR-10 (ImageNet-50). Dashed lines represent the full dataset, while solid lines indicate samples selected by \regretent\ with $k=500$ ($k=150$).}
\label{fig:C1-discuss}
\end{figure}

\section{Regularized optimal experimental design} \label{sec:reg}

In many instances where regularized least squares regression (i.e., ridge regression) is used, the associated optimal experimental design problem takes the following form:
\begin{align}\label{eq:reg-ip}
    {\b s}^* = \argmin_{{\b s} = (s_1, \dots, s_n)} f\left(\sum_{i=1}^n s_i \x_i \xt_i + \lambda \b I \right), \qquad s.t.\quad \|\b s\|_1= k, \quad\b{s} \in \{0,1,\dots, k\}^n.
\end{align}
We plan to address this problem using a two-step approach similar to how we solve the unregularized case. First, we solve the relaxed optimization problem:
\begin{align}\label{eq:reg-lp}
     \bpi^\diamond = \argmin_{\bpi = (\pi_1, \pi_2, \dots, \pi_n)} f\left(\sum_{i=1}^n \pi_i x_i x_i^\top + \lambda \b I\right),\qquad s.t.\quad \| \bpi \|_1= k, \quad\bpi \geq \b 0.
\end{align}
After solving the relaxed problem, we aim to convert the relaxed solution into an integer one using the regret minimization approach introduced in~\Cref{sec:rounding}, with some necessary modifications.

\paragraph{Rounding objective}
We denote $\bSigma_\diamond$ as the regularized weighted covariance, using the weights $\pi^\diamond$ obtained from the solution of the relaxed problem. Additionally, we denote $\xx_i$ as the denoised feature for point $i$ and $\XX$ as the corresponding denoised matrix of all points. 
\begin{align}\label{eq:reg-transforms}
    \bSigma_\diamond\triangleq \sum_{i\in[n]} \pi_i^\diamond \x_i \x_i^\top + \lambda \b I , \qquad \xx_i \triangleq \bSigma_{\diamond}^{-1/2} \x_i,\qquad \XX  \triangleq \X \bSigma_{\diamond}^{-1/2}.
\end{align}

We can establish a sufficient condition for a solution with near-optimal performance, as stated in \cref{prop:reg-goal}. Therefore, our rounding objective is to select samples to maximize $\lambda_{\min}\left(\XX^\top \b S \XX + \lambda \b \bSigma_{\diamond}^{-1}\right) $. We continue to use the FTRL algorithm from \cref{prop:expressions} to determine the action matrices $\A_t$ at each step $t$. However, in the regularized case, the selection of loss matrices must be modified.

\begin{proposition}\label{prop:reg-goal}
Given $\tau \in (0,1)$, if $ \lambda_{\min}\left(\XX^\top \b S \XX + \lambda \b \bSigma_{\diamond}^{-1}\right) \geq \tau$, then
    \begin{align}
        f\left(\X^\top \b S \X + \lambda\b I\right) \leq \tau^{-1} f^*.
    \end{align}
\end{proposition}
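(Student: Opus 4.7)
The plan is to mirror the argument used for the unregularized case in \cref{prop:goal}, with the key adjustment being the handling of the extra $\lambda \b I$ inside $f$. The main algebraic observation is that, by the definition of $\XX$ and $\bSigma_\diamond$ in \cref{eq:reg-transforms},
\begin{align}
    \XX^\top \b S \XX + \lambda \bSigma_\diamond^{-1}
    = \bSigma_\diamond^{-1/2}\bigl(\X^\top \b S \X + \lambda \b I\bigr)\bSigma_\diamond^{-1/2}.
\end{align}
This identity is what allows us to absorb the regularization term on both sides of the transformation simultaneously, and it is the reason we chose $\lambda \bSigma_\diamond^{-1}$ (rather than, say, $\lambda \b I$) as the correction term in the rounding objective.

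First, I would rewrite the hypothesis $\lambda_{\min}(\XX^\top \b S \XX + \lambda \bSigma_\diamond^{-1}) \geq \tau$ in PSD form as $\XX^\top \b S \XX + \lambda \bSigma_\diamond^{-1} \succeq \tau \b I$. Conjugating by $\bSigma_\diamond^{1/2}$ on both sides and using the identity above yields $\X^\top \b S \X + \lambda \b I \succeq \tau \bSigma_\diamond$. Here $\bSigma_\diamond \in \mathbb{S}_{++}^d$ (since $\lambda > 0$), so the square root and its inverse are well-defined and the conjugation preserves the Loewner order.

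Next, I would chain the three properties of $f$ from \cref{assume:f}. By monotonicity, $f(\X^\top \b S \X + \lambda \b I) \leq f(\tau \bSigma_\diamond)$. By reciprocal sub-linearity with $t = \tau \in (0,1)$, $f(\tau \bSigma_\diamond) \leq \tau^{-1} f(\bSigma_\diamond) = \tau^{-1} f^\diamond$, where $f^\diamond$ denotes the optimal value of the relaxed regularized problem \cref{eq:reg-lp}.

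The final step requires the regularized analogue of \cref{lm:ip-lp-relation}, namely $f^\diamond \leq f^*$. This is immediate because the feasible set of \cref{eq:reg-ip} is contained in that of \cref{eq:reg-lp} (integer allocations are a special case of nonnegative real allocations summing to $k$), and both problems share the same objective $f(\sum_i s_i \x_i \x_i^\top + \lambda \b I)$. Putting the chain together gives $f(\X^\top \b S \X + \lambda \b I) \leq \tau^{-1} f^*$. No step here is hard; the only thing to be careful about is the simultaneous absorption of the $\lambda$ term via $\lambda \bSigma_\diamond^{-1}$, which is exactly what motivates the chosen rounding objective and why the same machinery used for \cref{prop:goal} extends cleanly.
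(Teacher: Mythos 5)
Your proof is correct and follows exactly the argument the paper intends: it is the direct analogue of the paper's proof of \cref{prop:goal}, with the identity $\XX^\top \b S \XX + \lambda \bSigma_\diamond^{-1} = \bSigma_\diamond^{-1/2}(\X^\top \b S \X + \lambda \b I)\bSigma_\diamond^{-1/2}$ replacing the unregularized conjugation, followed by the same chain of monotonicity, reciprocal sub-linearity, and the relaxation inequality $f^\diamond \leq f^*$ (the regularized version of \cref{lm:ip-lp-relation}). The paper in fact leaves this proof implicit, and your write-up supplies precisely the intended details, including the correct justification for why the correction term $\lambda\bSigma_\diamond^{-1}$ is the right choice.
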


\paragraph{Sample selection}
In the unregularized case, we select $\F_t$ as rank-1 matrices. Now, we define $\F_t = \xx_{i_t} \xx_{i_t}^\top + \frac{\lambda}{k} \bSigma_{\diamond}^{-1}$ for some point index $i_t$ to ensure that after $k$ steps, the sum  satisfies $\sum_{t\in[k]} \F_t = \XX^\top \b S \XX + \lambda \bSigma_{\diamond}^{-1}$, where $\b S$ is the diagonal matrix with each diagonal indicating if the sample is selected or not. Similar to \cref{thm:min-eigen}, we can get lower bound for $\lambda_{\min}\left(\sum_{t\in[k]}\F_t\right)$ for each regularizer in \cref{thm:reg-min-eigen}. The proof is detailed in \Cref{sec:pf-reg-min-eigen}.

\begin{restatable}{theorem}{thmlowerboundreg}{\normalfont{(Lower bound of $\lambda_{\min}\left(\sum_{t\in[k]} \F_t\right)$ by FTRL for regularized case)}}.\label{thm:reg-min-eigen}
    Suppose that the loss matrix $\F_t  \in \{\xx_{i} \xx_{i}^\top + \frac{\lambda}{k} \bSigma_{\diamond}^{-1}\}_{i\in[n]}$. Denote $\F = \sum_{k\in[t]} \F_t$, then
\begin{itemize}[leftmargin=*]
    \item for the entropy-regularizer:
    \begin{align}
      & \lambda_{\min}(\F)
      \geq -\frac{\log d}{\alpha} + \frac{1}{\alpha} \sum_{t=1}^k \left\{\Tr\left(\A_t-\B_t\right) + \left[1-\exp\left(-\alpha \|\xx_{i_t}\|_2^2\right)\right] \frac{\xx_{i_t}^\top\B_t \xx_{i_t}}{\| \xx_{i_t}\|_2^2}\right\},\label{eq:reg-ent-bound}\\
         &\log \B_t = \log \A_t- \frac{\alpha\lambda}{k} \bSigma_{\diamond}^{-1} \label{eq:bt-ent}.
    \end{align}
  \item for the $\ell_{1/2}$-regularizer:
  \begin{align}
     & \lambda_{\min}(\F)
      \geq -\frac{2 \sqrt{d}}{\alpha} + \frac{1}{\alpha}\sum_{t=1}^k \left\{ \Tr\left(\A_t^{1/2}-\B_t^{1/2}\right) +  \frac{\alpha\xx_{i_t}^\top \B_t\xx_{i_t}}{1 +\alpha\xx_{i_t}^\top\B_t^{1/2}\xx_{i_t}} \right\},\label{eq:reg-l12-bound}\\
     &  \B_t^{-1/2} = \A_t^{-1/2} + \frac{\alpha \lambda}{k} \bSigma_{\diamond}^{-1} \label{eq:bt-l12}.
  \end{align}
\end{itemize}
\end{restatable}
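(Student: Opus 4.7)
The plan is to follow the same architecture as the proof of \cref{thm:min-eigen}: start from the FTRL regret upper bounds in \cref{prop:regret-bound}, combine with the identity $R=\sum_t\langle\A_t,\F_t\rangle-\lambda_{\min}(\F)$ from \cref{eq:Regret-def-min}, and rearrange to obtain a lower bound on $\lambda_{\min}(\F)$. The new difficulty is that each loss matrix $\F_t=\xx_{i_t}\xx_{i_t}^\top+\tfrac{\lambda}{k}\bSigma_{\diamond}^{-1}$ is no longer rank one, so the closed-form rank-one evaluations used to obtain \cref{eq:ent-bound,eq:l12-bound} do not apply directly. The matrices $\B_t$ in \cref{eq:bt-ent,eq:bt-l12} are engineered precisely to absorb the $\tfrac{\lambda}{k}\bSigma_{\diamond}^{-1}$ part of $\F_t$ into the action side of the inner product, after which the rank-one trick can be applied to what remains.

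For the entropy regularizer, \cref{eq:regret-entropy} and \cref{eq:Regret-def-min} together give
\[
\lambda_{\min}(\F)\ \geq\ -\frac{\log d}{\alpha}+\frac{1}{\alpha}\sum_{t=1}^{k}\bigl(1-\langle\A_t,\exp(-\alpha\F_t)\rangle\bigr),
\]
so it suffices to prove, for each $t$, the per-step estimate
\[
1-\langle\A_t,\exp(-\alpha\F_t)\rangle\ \geq\ \Tr(\A_t-\B_t)+\bigl[1-\exp(-\alpha\|\xx_{i_t}\|_2^2)\bigr]\frac{\xx_{i_t}^\top\B_t\xx_{i_t}}{\|\xx_{i_t}\|_2^2}.
\]
Using $\Tr(\A_t)=1$ and the rank-one identity $\exp(-\alpha\xx\xx^\top)=\b I+\tfrac{\exp(-\alpha\|\xx\|_2^2)-1}{\|\xx\|_2^2}\xx\xx^\top$, the right-hand side equals $1-\langle\B_t,\exp(-\alpha\xx_{i_t}\xx_{i_t}^\top)\rangle$, so the per-step inequality collapses to the trace-exchange estimate $\Tr(\A_t\exp(-\alpha\F_t))\leq\Tr(\B_t\exp(-\alpha\xx_{i_t}\xx_{i_t}^\top))$. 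Writing $P=\log\A_t$, $Q=-\alpha\xx_{i_t}\xx_{i_t}^\top$, $R=-\tfrac{\alpha\lambda}{k}\bSigma_{\diamond}^{-1}$, so $\A_t=e^{P}$ and $\B_t=e^{P+R}$, this is the Golden--Thompson-style statement $\Tr(e^{P}e^{Q+R})\leq\Tr(e^{P+R}e^{Q})$. For the $\ell_{1/2}$ regularizer the identical program is run starting from \cref{eq:regret-l12}; the resolvent $(\A_t^{-1/2}+\alpha\F_t)^{-1}$ plays the role of $\exp(-\alpha\F_t)$, the substitution $\B_t^{-1/2}=\A_t^{-1/2}+\tfrac{\alpha\lambda}{k}\bSigma_{\diamond}^{-1}$ plays the role of the entropy $\B_t$, and the Sherman--Morrison formula applied to $(\B_t^{-1/2}+\alpha\xx_{i_t}\xx_{i_t}^\top)^{-1}$ produces the fraction $\tfrac{\alpha\xx_{i_t}^\top\B_t\xx_{i_t}}{1+\alpha\xx_{i_t}^\top\B_t^{1/2}\xx_{i_t}}$ appearing in \cref{eq:reg-l12-bound}.

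The main obstacle is the trace-exchange inequality $\Tr(e^{P}e^{Q+R})\leq\Tr(e^{P+R}e^{Q})$. Plain Golden--Thompson covers only the $P=0$ special case, and in general $\A_t$, $\xx_{i_t}\xx_{i_t}^\top$ and $\bSigma_{\diamond}^{-1}$ fail to pairwise commute. I would attempt it via an integral interpolation: define $\phi(s)=\Tr(e^{P+sR}e^{Q+(1-s)R})$ for $s\in[0,1]$, differentiate using the Fr\'echet derivative of the matrix exponential, and argue $\phi'(s)\geq 0$ by exploiting $Q\preceq 0$ and $R\preceq 0$. An alternative is to invoke Lieb's concavity of $\A\mapsto\Tr\exp(Q+\log\A)$ and compare at the two base points $\A_t$ and $\B_t$. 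The $\ell_{1/2}$ analogue should be easier, since operator monotonicity of $\A\mapsto-\A^{-1}$ and Woodbury-type identities provide direct control over $(\A_t^{-1/2}+\alpha\F_t)^{-1}$ versus $(\B_t^{-1/2}+\alpha\xx_{i_t}\xx_{i_t}^\top)^{-1}$. Once the trace-exchange step is established, summing the per-step bound over $t$ and combining with the FTRL lower bound yields \cref{eq:reg-ent-bound,eq:reg-l12-bound}; the remaining manipulations are routine algebra.
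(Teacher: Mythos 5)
Your treatment of the $\ell_{1/2}$ case is sound and coincides with the paper's own argument: since $\A_t^{-1/2}+\alpha\F_t = \B_t^{-1/2}+\alpha\xx_{i_t}\xx_{i_t}^\top$ holds as an \emph{exact identity}, starting from \cref{eq:regret-l12} and applying Sherman--Morrison to $\left(\B_t^{-1/2}+\alpha\xx_{i_t}\xx_{i_t}^\top\right)^{-1}$ requires no new inequality and yields \cref{eq:reg-l12-bound} directly.

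The entropy case, however, contains a genuine gap, and it is precisely the step you flag as the main obstacle. The trace-exchange inequality your reduction hinges on, $\Tr(e^{P}e^{Q+R})\leq\Tr(e^{P+R}e^{Q})$, is false in general. Set $Q=\b 0$: the claim becomes $\Tr(e^{P}e^{R})\leq\Tr(e^{P+R})$, which is the \emph{reverse} of Golden--Thompson and is strict in the wrong direction whenever $P$ and $R$ do not commute. By continuity, the inequality therefore also fails for nonzero rank-one $Q=-\alpha\xx\xx^\top$ of sufficiently small norm, and nothing in the problem structure excludes this regime: $\log\A_t$ is built from past rank-one updates, so it does not commute with $\bSigma_\diamond^{-1}$ in general, and the norms $\alpha\|\xx_i\|_2^2$ can be small. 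Consequently neither the interpolation argument nor Lieb's concavity can close this step---the target statement itself is wrong. The root cause is that \cref{eq:regret-entropy} has already ``spent'' Golden--Thompson with the split $\log\A_t$ versus $-\alpha\F_t$; once the loss sits inside $\exp(-\alpha\F_t)$, its $\frac{\lambda}{k}\bSigma_\diamond^{-1}$ part cannot be migrated back onto the action side. The paper avoids this by returning to the raw mirror-descent regret bound \cref{eq:regret-md}, in which the relevant term is the Bregman divergence $D_w(\A_t,\Aint_{t+1}) = \langle\A_t,\alpha\F_t\rangle + \Tr\left[\exp(\log\A_t - \alpha\F_t)\right] - 1$, and applying Golden--Thompson \emph{once} with the shifted split $\left(\log\A_t - \frac{\alpha\lambda}{k}\bSigma_\diamond^{-1}\right)$ versus $\left(-\alpha\xx_{i_t}\xx_{i_t}^\top\right)$. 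This produces $\langle\B_t,\exp(-\alpha\xx_{i_t}\xx_{i_t}^\top)\rangle$ in one stroke, after which your rank-one evaluation gives \cref{eq:reg-ent-bound}. Your entropy argument can be repaired by adopting this single change of starting point; the rest of your outline then goes through.
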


Based on \cref{thm:reg-min-eigen}, we can select point at each step $t$ to maximize the lower bound of $\lambda_{\min}\left(\sum_{t\in[k]} \F_t\right) $ in \cref{eq:reg-ent-bound} for the entropy-regularizer and \cref{eq:reg-l12-bound} for the $\ell_{1/2}$-regularizer. We conclude the selection strategy in \cref{def:reg-opt-sample}. Our optimal design algorithm for regularized case only needs to replace the lines 5-8 of \cref{algo:rounding} into the formula defined in \cref{def:reg-opt-sample}. Note that we only need to update $\B_t$ once at the beginning of each step $t$. The algorithm is summarized in \cref{algo:rounding-regularize}.  {\color{black}Similar to \cref{algo:rounding},  we refer to Regularized-Regret-Min with the entropy-regularizer as \regregretent, and Regularized-Regret-Min with the $\ell_{1/2}$-regularizer as \regregretlhalf.}

 \begin{definition} [Sample selection for the regularized case]\label{def:reg-opt-sample}
Given sample size $k$, regularizer type, and learning rate $\alpha>0$, we choose $\A_t$ by FTRL at each round ( \cref{prop:expressions}), and choose $\F_t = \xx_{i_t}\xx_{i_t}^\top+\frac{\lambda}{k}\bSigma_\diamond^{-1}$, where $i_t$ is the sample index determined by the following rule: 
\begin{align}
        i_t &=\argmax_{i \in [n]} \left[1 - \exp\left(-\alpha \| \xx_i\|_2^2\right)\right] \frac{\xxt_i \B_t \xx_i}{\| \xx_i\|_2^2},\qquad (\text{entropy-regularizer}) \label{eq:reg-ent-sample} \\
 i_t &=\argmax_{i \in [n]} \frac{\xxt_i \B_t \xx_i}{1 + \alpha \xxt_i \B_t^{1/2} \xx_i},\qquad\qquad\qquad \qquad (\text{$\ell_{1/2}$-regularizer})  \label{eq:reg-l12-sample}
    \end{align}
where $\B_t$ is determined by \cref{eq:bt-ent} for entropy-regularizer and by \cref{eq:bt-l12} for $\ell_{1/2}$-regularizer.
\end{definition}

\paragraph{Performance guarantee}
Compared to the unregularized optimal design, establishing a performance guarantee for the regularized case is significantly more challenging. We begin by deriving bounds for the lower bounds in \cref{eq:reg-ent-bound,eq:reg-l12-bound} at each step $t\in[k]$, as presented in \cref{prop:guarantee-entropy-reg,prop:guarantee-l12-reg}. Finally, we summarize the sample complexity bound to achieve a near-optimal performance guarantee in \cref{thm:reg-design-complexity}. The proofs of \cref{prop:guarantee-entropy-reg,prop:guarantee-l12-reg,thm:reg-design-complexity} are presented in \Cref{sec:appendix-pf-regdesign}.


\begin{proposition}\label{prop:guarantee-entropy-reg}
    For the entropy-regularizer, if $k \geq \alpha d$, at each round $t\in[k]$, the following holds: 
\begin{align}\label{eq:guarantee-entropy-reg}
 \max_{i\in[n]} \frac{1}{\alpha}  \left\{\Tr\left(\A_t-\B_t\right) + \left[1-\exp\left(-\alpha \|\xx_{i}\|_2^2\right)\right] \frac{\xx_{i}^\top\B_t \xx_{i}}{\| \xx_{i}\|_2^2}\right\} \geq \frac{1}{k +\alpha d}.
\end{align}
\end{proposition}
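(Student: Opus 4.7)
The plan is to lower-bound the maximum over $i$ with a carefully weighted average over $i\in[n]$. First I linearize the exponential factor using the scalar inequality $1-e^{-y}\geq y/(1+y)$ (valid for $y\geq 0$), applied with $y=\alpha\|\xx_i\|_2^2$, to reduce the $i$-dependence of the summand to $\frac{\xx_i^\top \B_t \xx_i}{1+\alpha\|\xx_i\|_2^2}$. I then use the averaging trick $\max_i F(i)\geq (\sum_i w_i F(i))/(\sum_i w_i)$ with the choice $w_i := \pi_i^\diamond(1+\alpha\|\xx_i\|_2^2)$, designed precisely so that the denominator $1+\alpha\|\xx_i\|_2^2$ cancels inside the sum, leaving quantities that can be evaluated in closed form as traces against $\B_t$.

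The evaluation of the averaged quantity hinges on the identity $\sum_i \pi_i^\diamond \xx_i\xx_i^\top = \b I - \lambda\bSigma_\diamond^{-1}$, which follows immediately from the definitions in \cref{eq:reg-transforms}. Taking the trace against $\B_t$ and against $\b I$ yields $\sum_i \pi_i^\diamond \xx_i^\top \B_t\xx_i = \Tr(\B_t)-\lambda\Tr(\B_t\bSigma_\diamond^{-1})$ and $\sum_i w_i = k + \alpha d - \alpha\lambda\Tr(\bSigma_\diamond^{-1})$. Two positivity facts drop out of $\bSigma_\diamond\succeq\lambda\b I$ (which gives $\b I-\lambda\bSigma_\diamond^{-1}\succeq 0$): the weighted-average denominator is strictly positive, since $\alpha\lambda\Tr(\bSigma_\diamond^{-1})\leq\alpha d<k+\alpha d$; and $\Tr(\B_t)-\lambda\Tr(\B_t\bSigma_\diamond^{-1}) = \Tr\bigl(\B_t(\b I-\lambda\bSigma_\diamond^{-1})\bigr)\geq 0$ as the trace of a product of two PSD matrices.

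The main obstacle is obtaining a quantitatively sharp lower bound on the trace-gap $\Tr(\A_t-\B_t)$; the naive bound $\Tr(\A_t-\B_t)\geq 0$, which follows from monotonicity of the spectral trace of the matrix exponential, is too weak. I would instead exploit the fact that $h(s):=\Tr\bigl(\exp(\log \A_t - s\tfrac{\alpha\lambda}{k}\bSigma_\diamond^{-1})\bigr)$ is convex in $s$ (a restriction of the convex log-partition function to a line, exponentiated). Applying the tangent-line bound at $s=1$, together with the standard formula $h'(1) = -\tfrac{\alpha\lambda}{k}\Tr(\B_t\bSigma_\diamond^{-1})$, yields
\begin{equation*}
\Tr(\A_t-\B_t)\;\geq\;\tfrac{\alpha\lambda}{k}\Tr(\B_t\bSigma_\diamond^{-1}),
\end{equation*}
which is exactly sharp enough to compensate the ``mass lost'' when $\Tr(\B_t)-\lambda\Tr(\B_t\bSigma_\diamond^{-1})$ replaces $\Tr(\A_t)=1$ in the averaged numerator. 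A short algebraic manipulation then reduces the proposition to $T(k+\alpha d-\alpha)/\alpha\geq V$, where $T:=\Tr(\A_t-\B_t)$ and $V:=\lambda\Tr(\B_t\bSigma_\diamond^{-1})$; this follows from the trace-gap bound above combined with $d\geq 1$, completing the proof.
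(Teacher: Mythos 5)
Your proposal is correct, and its first half coincides with the paper's own proof: both arguments apply the scalar bound $1-e^{-y}\ge y/(1+y)$ (\cref{lm:inequ-0}), the weighted-average inequality (\cref{lm:inequ-1}) with weights built from $\pi_i^\diamond(1+\alpha\|\xx_i\|_2^2)$, and the identity $\sum_i\pi_i^\diamond\xx_i\xx_i^\top=\b I-\lambda\optsigma^{-1}$, reducing everything to controlling $T=\Tr(\A_t-\B_t)$ against $V=\lambda\Tr(\B_t\optsigma^{-1})$. Where you genuinely diverge is the second half. The paper keeps the entire lower bound as a matrix function $h(\tD)$ of $\tD=\frac{\lambda}{k}\optsigma^{-1}$ (with $\B_t=\exp(\log\A_t-\alpha\tD)$ varying with $\tD$), proves $h$ is convex on $\{\b 0\preceq\tD\preceq\frac{1}{k}\b I\}$---and this is precisely where the hypothesis $k\ge\alpha d$ is consumed, in the second-derivative bound $g''(t)\ge\frac{\alpha(k-\alpha d)}{k+\alpha d}\Tr(\V^2\B_t)\ge 0$---then shows $\nabla h\succeq\b 0$ and evaluates the infimum at $\tD=\b 0$ to get $1/(k+\alpha d)$. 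You instead isolate the quantitative trace-gap bound $\Tr(\A_t-\B_t)\ge\frac{\alpha\lambda}{k}\Tr(\B_t\optsigma^{-1})$ via a one-dimensional tangent-line argument on the convex map $s\mapsto\Tr\exp\bigl(\log\A_t-s\tfrac{\alpha\lambda}{k}\optsigma^{-1}\bigr)$, and finish with elementary algebra ($T(k+\alpha d-\alpha)\ge\alpha V$ follows from $V\le kT/\alpha$ and $d\ge 1$). This buys two real advantages: your argument never invokes $k\ge\alpha d$, so it proves the proposition without that hypothesis (a strict strengthening of the stated result); and it rests only on robust one-variable facts---convexity of $\Tr\exp$ along affine lines and the exact first-derivative formula $\frac{d}{ds}\Tr e^{A(s)}=\Tr(\dot A\, e^{A(s)})$, both insensitive to non-commutativity---whereas the paper's matrix-valued derivative computations for $g''$ are more delicate on exactly that point. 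What the paper's formulation buys in exchange is a single functional template (convexity plus monotone gradient over the PSD domain) that it reuses structurally for the $\ell_{1/2}$ case in \cref{prop:guarantee-l12-reg}, where your scalar shortcut would not transfer as directly.
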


\begin{proposition}\label{prop:guarantee-l12-reg}
For the $\ell_{1/2}$-regularizer, at each round $t\in[k]$, the following holds:
    \begin{align}\label{eq:guarantee-l12-reg}
        \max_{i\in[n]} \frac{1}{\alpha} \left\{\Tr\left(\A_t^{1/2}-\B_t^{1/2}\right) +  \frac{\alpha\xx_{i}^\top \B_t\xx_{i}}{1 +\alpha\xx_{i}^\top\B_t^{1/2}\xx_{i}}\right\} \geq \frac{1 - \frac{\alpha}{2k}}{k + \alpha\sqrt{d}}.
    \end{align}
\end{proposition}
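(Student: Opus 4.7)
My plan is to lower-bound the quantity inside the max (call it $r(i)$ for brevity, so the claim is $\max_i r(i) \geq \frac{1-\alpha/(2k)}{k+\alpha\sqrt{d}}$) via the classical ``maximum $\geq$ weighted average'' principle using the relaxed solution $\bpi^\diamond$ as weights. Since the trace correction $\frac{1}{\alpha}\Tr(\A_t^{1/2} - \B_t^{1/2})$ is independent of $i$, it factors out, and we reduce to lower-bounding $\max_i \frac{\xx_i^\top \B_t \xx_i}{1 + \alpha \xx_i^\top \B_t^{1/2} \xx_i}$ via the ratio bound $\max_i a_i/b_i \geq (\sum_i \pi_i a_i)/(\sum_i \pi_i b_i)$ (valid for positive weights, numerators, and denominators).

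The key computational ingredient is the identity $\sum_i \pi_i^\diamond \xx_i \xx_i^\top = \b I - \lambda \bSigma_\diamond^{-1}$, which follows from $\xx_i = \bSigma_\diamond^{-1/2}\x_i$, the definition of $\bSigma_\diamond$ in \cref{eq:reg-transforms}, and the constraint $\|\bpi^\diamond\|_1 = k$. Plugging this in, the weighted numerator becomes $\Tr(\B_t) - \lambda \Tr(\B_t \bSigma_\diamond^{-1})$ and the weighted denominator becomes $k + \alpha[\Tr(\B_t^{1/2}) - \lambda \Tr(\B_t^{1/2}\bSigma_\diamond^{-1})]$. I upper-bound the latter by $k + \alpha\sqrt{d}$ using Cauchy--Schwarz ($\Tr(\B_t^{1/2}) \leq \sqrt{d\,\Tr(\B_t)}$) and $\Tr(\B_t) \leq \Tr(\A_t) = 1$, which holds because $\B_t \preceq \A_t$ (a consequence of $\B_t^{-1/2} = \A_t^{-1/2} + \bLambda \succeq \A_t^{-1/2}$, where $\bLambda := \frac{\alpha\lambda}{k}\bSigma_\diamond^{-1}$).

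For the trace-correction term I rely on the exact algebraic identity $\A_t^{1/2} - \B_t^{1/2} = \B_t^{1/2}\bLambda \A_t^{1/2}$, obtained by writing $\B_t^{1/2}(\A_t^{-1/2} + \bLambda) = \b I$ and right-multiplying by $\A_t^{1/2}$. This directly links the correction term to the regularization quantity $\lambda \Tr(\B_t\bSigma_\diamond^{-1})$ appearing in the numerator. As a complementary tool, the factorization $\A_t - \B_t = (\A_t^{1/2}-\B_t^{1/2})\A_t^{1/2} + \B_t^{1/2}(\A_t^{1/2}-\B_t^{1/2})$, combined with the trace-norm H\"older inequality and $\|\A_t\|,\|\B_t\| \leq 1$, gives $\Tr(\A_t^{1/2}-\B_t^{1/2}) \geq \tfrac{1}{2}\Tr(\A_t - \B_t)$. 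Using $\Tr(\B_t) = 1 - \Tr(\A_t - \B_t)$, the target reduces to a scalar inequality in the two nonnegative quantities $\Tr(\A_t - \B_t)$ and $\lambda \Tr(\B_t\bSigma_\diamond^{-1})$, both of which lie in $[0,1]$ since $\lambda\bSigma_\diamond^{-1} \preceq \b I$.

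The main technical obstacle is recovering the exact $(1 - \alpha/(2k))$ correction factor. The crude bound $\Tr(\A_t^{1/2}-\B_t^{1/2}) \geq \tfrac{1}{2}\Tr(\A_t - \B_t)$ seems insufficient when $\Tr(\A_t - \B_t)$ is small while $\lambda\Tr(\B_t\bSigma_\diamond^{-1})$ is close to $1$. I expect to close this gap by exploiting the exact identity $\A_t^{1/2} - \B_t^{1/2} = \B_t^{1/2}\bLambda \A_t^{1/2}$ more carefully---taking the trace to obtain $\Tr(\A_t^{1/2}-\B_t^{1/2}) = \frac{\alpha\lambda}{k}\Tr(\bSigma_\diamond^{-1}\A_t^{1/2}\B_t^{1/2})$ and relating this directly to $\lambda\Tr(\B_t\bSigma_\diamond^{-1}) = \frac{k}{\alpha}\Tr(\B_t\bLambda)$ through a trace inequality of Cauchy--Schwarz or operator-concavity type. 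The algebraic tracking of this balance, rather than any deeper structural argument, is the crux of the proof.
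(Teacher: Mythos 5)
Your preparatory steps are sound and overlap with the paper's proof: the weighted-average principle with weights $\bpi^\diamond$ (the paper's \cref{lm:inequ-1}), the identity $\sum_{i\in[n]}\pi_i^\diamond\xx_i\xx_i^\top=\b I-\lambda\optsigma^{-1}$, the bound $\Tr(\B_t^{1/2})\le\sqrt{d}$, and the identity $\A_t^{1/2}-\B_t^{1/2}=\B_t^{1/2}\bLambda\A_t^{1/2}$ are all correct. But the proof is missing exactly at its center, and you say so yourself. Because you pull the correction term $\frac{1}{\alpha}\Tr(\A_t^{1/2}-\B_t^{1/2})$ out of the ratio additively, your numerator after averaging retains only $\langle\B_t,\b I-\lambda\optsigma^{-1}\rangle$, and what remains to be shown is
\begin{align*}
\frac{k+\alpha\sqrt{d}}{\alpha}\Tr\left(\A_t^{1/2}-\B_t^{1/2}\right)+\left\langle\B_t,\ \b I-\lambda\optsigma^{-1}\right\rangle\ \ge\ 1-\frac{\alpha}{2k},
\end{align*}
which is precisely the step you defer to ``a trace inequality of Cauchy--Schwarz or operator-concavity type.'' The paper avoids ever facing this inequality: using Woodbury (\cref{lm:woodbury}) and \cref{lm:inequ-3} it first proves $\frac{1}{\alpha}\Tr(\A_t^{1/2}-\B_t^{1/2})\ge\frac{\langle\A_t,\tD\rangle}{1+\alpha\langle\A_t^{1/2},\tD\rangle}$ with $\tD=\frac{\lambda}{k}\optsigma^{-1}$, i.e.\ it converts the correction term into a fraction of the \emph{same shape} as the sample term; merging the two fractions via \cref{lm:inequ-2} before taking the max keeps $\langle\A_t,k\tD\rangle$ in the numerator, so the deficit from $1$ becomes exactly $\langle\A_t-\B_t,\b I-k\tD\rangle$.

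The missing idea is where $\frac{\alpha}{2k}$ actually comes from. In the paper it is the value of a constrained matrix optimization: writing $\B_t=\A_t-\b G$ via Woodbury, bounding $\b G\preceq\alpha(\A_t\b E\A_t^{1/2}+\A_t^{1/2}\b E\A_t)$ with $\b E\preceq\tD$, and then showing that $h(\tD)=2\alpha\Tr(\A_t^{3/2}\tD)-2\alpha k\Tr(\A_t^{1/2}\tD\A_t\tD)$ is concave on $\{\tD:\b 0\preceq\tD\preceq\tfrac{1}{k}\b I\}$, with maximum $\frac{\alpha}{2k}\Tr(\A_t^{3/2})\le\frac{\alpha}{2k}$ attained at the interior stationary point $\tD=\frac{1}{2k}\b I$. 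That extremal analysis --- not a Cauchy--Schwarz manipulation --- is what produces the constant. Moreover, your target inequality is a genuinely different statement that is not implied by the paper's: your leading term equals $\frac{k+\alpha\sqrt{d}}{k}\,\lambda\Tr\left(\optsigma^{-1}\A_t^{1/2}\B_t^{1/2}\right)$, whereas the paper's retained numerator term is $\lambda\Tr(\optsigma^{-1}\A_t)$; since $\B_t\preceq\A_t$ and $\A_t$ need not commute with $\optsigma^{-1}$, the two are not comparable by any eigenvalue-wise argument, so even importing the paper's concavity step wholesale would not close your gap. (I could verify your inequality in the simultaneously-diagonalizable case, so your route is not provably dead, but it trades the paper's one hard step for a hard step of your own for which you offer no proof.) As written, the proposal is incomplete at the decisive point.
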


\begin{algorithm}[t]
\caption{{\color{black}\texttt{Regularized-Regret-Min}$+$\texttt{Entropy}/$\ell_{1/2}$}}
\label{algo:rounding-regularize}
 \hspace*{\algorithmicindent} \textbf{Input:} \nolinebreak
$\X \in \mathbb{R}^{n\times d}$, budget $k \geq d$,  learning rate $\alpha>0$, regularization parameter $\lambda>0$, design objective $f$, regularizer $w\in\{\mathrm{``entropy"}, ``\ell_{1/2}" \}$ \\
 \hspace*{\algorithmicindent} \textbf{Output:} \nolinebreak
sample indices set $S$
\begin{algorithmic}[1]
\STATE $\bpi^*\gets$ solution of \cref{eq:reg-lp} with certain design objective $f$
\STATE $\bSigma_{\diamond} \gets \X^\top \mathrm{diag}(\bpi^*)\X + \lambda \b I$, $\XX\gets \X \bSigma_{\diamond}^{-1/2}$
\STATE $\A_1 = \frac{1}{d} \b I$
\FOR {$t = 1$ to $k$}
    \IF{$w$ is ``entropy''}
        \STATE $\B_t \gets \exp\left[\log \A_t- \frac{\alpha\lambda}{k} \bSigma_{\diamond}^{-1}\right]$
        \STATE $i_t \gets \argmax_{i \in [n]\backslash S} \left[1 - \exp(-\alpha \| \xx_i\|_2^2)\right] \frac{\xxt_i \B_t \xx_i}{\| \xx_i\|_2^2}$
    \ELSE
        \STATE $\B_t \gets \left[\A_t^{-1/2} + \frac{\alpha \lambda}{k} \bSigma_{\diamond}^{-1}\right]^{-2}$
        \STATE $ i_t \gets \argmax_{i \in [n]\backslash S} \frac{\xxt_i \B_t \xx_i}{1 + \alpha \xxt_i \B_t^{1/2} \xx_i}$
    \ENDIF
    \STATE Repeat Lines 10-19 of \cref{algo:rounding}.
    \STATE $\A_t \gets \b V \b \Lambda^\prime \b V^\top$
\ENDFOR
\end{algorithmic}
\end{algorithm}

\begin{restatable}{theorem}{thmcomplexityreg}\label{thm:reg-design-complexity}
    Given sample matrix $\X \in\mathbb{R}^{n\times d}$ and sample selection budget $k\geq d$. We select samples by \cref{algo:rounding-regularize} with learning rate $\alpha$ and entropy or $\ell_{1/2}$ regularizer $w$ to solve the \textbf{regularized} experimental optimal design problem~\cref{eq:reg-ip} with objective $f$ satisfying \cref{assume:f}. Let $\epsilon\in(0,1)$, suppose that one of the following condition is satisfied:

\begin{enumerate}[label=(\alph*)]
    \item $w$ is entropy-regularizer, $\alpha = 4\log d/\epsilon$, and $k\geq 16d\log d/\epsilon^2$.
    \item $w$ is $\ell_{1/2}$-regularizer, $\alpha = 8\sqrt{d}/\epsilon$, and $k\geq 32 d/\epsilon^2 + 16\sqrt{d}/\epsilon^2$.
\end{enumerate}
Then the samples selected by \cref{algo:rounding-regularize} achieve near-optimal performance, i.e.
\begin{align}
    f\left(\X_S^\top \X_S+\lambda \b I\right) \leq (1+\epsilon) f^*,
\end{align}
where $S$ is the sample indices set, $\X_S\in\mathbb{R}^{k\times d}$ is the selected sample matrix, and $f^*$ is the optimal value of design objective in \cref{eq:reg-ip}.
\end{restatable}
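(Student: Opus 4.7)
The overall strategy mirrors the unregularized case but leans on the per-step bounds in \cref{prop:guarantee-entropy-reg,prop:guarantee-l12-reg}. The first step is to reduce the claim $f(\X_S^\top\X_S+\lambda\b I)\le(1+\epsilon)f^*$ to a minimum-eigenvalue statement. By \cref{prop:reg-goal}, it suffices to show that, for $\tau=(1+\epsilon)^{-1}$,
\begin{align*}
\lambda_{\min}\!\left(\XX^\top \b S\,\XX + \lambda\,\bSigma_\diamond^{-1}\right)\ \ge\ \tau,
\end{align*}
because then $\tau^{-1}\le 1+\epsilon$. By the construction $\F_t=\xx_{i_t}\xx_{i_t}^\top+\tfrac{\lambda}{k}\bSigma_\diamond^{-1}$ in \cref{algo:rounding-regularize}, we have $\sum_{t=1}^k\F_t=\XX^\top\b S\,\XX+\lambda\bSigma_\diamond^{-1}$, so the target reduces to lower-bounding $\lambda_{\min}\!\bigl(\sum_{t=1}^k\F_t\bigr)$.

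Next I would invoke \cref{thm:reg-min-eigen} to express the lower bound on $\lambda_{\min}\!\bigl(\sum_t\F_t\bigr)$ as a sum over rounds. The crucial observation is that the sample selection rule in \cref{def:reg-opt-sample} picks $i_t$ as the argmax of exactly the per-step quantity appearing inside the sum in \eqref{eq:reg-ent-bound} (respectively \eqref{eq:reg-l12-bound}). Hence the $t$-th summand is at least $\max_{i\in[n]}$ of that quantity, and \cref{prop:guarantee-entropy-reg,prop:guarantee-l12-reg} supply uniform lower bounds
\begin{align*}
\frac{1}{k+\alpha d}\quad\text{(entropy)}\qquad\text{and}\qquad\frac{1-\alpha/(2k)}{k+\alpha\sqrt{d}}\quad(\ell_{1/2}).
\end{align*}
Multiplying by the number of rounds $k$ yields
\begin{align*}
\lambda_{\min}\!\Bigl(\sum_{t=1}^k\F_t\Bigr)\ \ge\ -\frac{\log d}{\alpha}+\frac{k}{k+\alpha d}\quad\text{(entropy)},
\qquad
\lambda_{\min}\!\Bigl(\sum_{t=1}^k\F_t\Bigr)\ \ge\ -\frac{2\sqrt d}{\alpha}+\frac{k-\alpha/2}{k+\alpha\sqrt d}\quad(\ell_{1/2}).
\end{align*}

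The last step is a direct arithmetic verification that under each parameter choice the right-hand side is at least $1-\epsilon/2\ge(1+\epsilon)^{-1}$. In case (a), with $\alpha=4\log d/\epsilon$ and $k\ge 16d\log d/\epsilon^2$, the negative term contributes $-\epsilon/4$, and $\alpha d/(k+\alpha d)\le \alpha d/k\le \epsilon/4$, giving $\lambda_{\min}\ge 1-\epsilon/2$. In case (b), with $\alpha=8\sqrt d/\epsilon$ and $k\ge 32d/\epsilon^2+16\sqrt d/\epsilon^2$, the leading negative term is $-\epsilon/4$, and $(\alpha\sqrt d+\alpha/2)/(k+\alpha\sqrt d)\le \epsilon/4$ after using the assumption on $k$; hence again $\lambda_{\min}\ge 1-\epsilon/2$. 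Combining with the elementary inequality $(1-\epsilon/2)^{-1}\le 1+\epsilon$ for $\epsilon\in(0,1)$ completes the argument via \cref{prop:reg-goal}.

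The main obstacle is not this concluding proof but rather the two ingredient lemmas \cref{prop:guarantee-entropy-reg,prop:guarantee-l12-reg}, whose proofs must handle the regularization term $\tfrac{\alpha\lambda}{k}\bSigma_\diamond^{-1}$ that couples $\A_t$ and $\B_t$ through matrix exponentials or inverse square roots; once those uniform per-step lower bounds are in hand, the theorem follows by the clean telescoping argument sketched above.
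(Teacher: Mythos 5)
Your proposal is correct and follows essentially the same route as the paper's own proof: reduce to a minimum-eigenvalue bound via \cref{prop:reg-goal}, lower-bound $\lambda_{\min}\bigl(\sum_{t=1}^k \F_t\bigr)$ through \cref{thm:reg-min-eigen} combined with the per-step guarantees of \cref{prop:guarantee-entropy-reg,prop:guarantee-l12-reg}, and finish with the same arithmetic under each parameter choice (including the $1-\epsilon/2 \ge (1+\epsilon)^{-1}$ step). The only cosmetic difference is that you state the telescoped bounds explicitly before plugging in parameters, whereas the paper substitutes the parameter values directly into the summed inequalities; the content is identical.
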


\section{Numerical experiments}\label{sec:experiments}
\definecolor{Gray}{gray}{0.9}
\definecolor{LightCyan}{rgb}{0.88,1,1}

\begin{algorithm}[t]
\caption{\texttt{MMD-critic}}
\label{algo:mmd}
 \hspace*{\algorithmicindent} \textbf{Input:} \nolinebreak
sample indices set $S=\emptyset$, budget $k$
\begin{algorithmic}[1]
\WHILE {$|S| < k$}
    \FOR{$i\in[n]\backslash S$}
       \STATE $f_i = J(S \cup \{i\}) - J(S)$ 
       \ENDFOR
    \STATE $S \gets S \cup \{\argmax_{i\in [n] \backslash S} f_i\} $
\ENDWHILE
\end{algorithmic}
\end{algorithm}

\begin{algorithm}[t]
\caption{Greedy A-optimal design}
\label{algo:greedy-A}
 \hspace*{\algorithmicindent} \textbf{Input:} \nolinebreak $\X\in\mathbb{R}^{n\times d}$, budget $k$, initial subset $S_0\subseteq [n]$ ($|S_0| > k$)
\begin{algorithmic}[1]
\STATE $S \gets S_0$
\WHILE {$|S| > k$}
   \STATE $i\gets \argmin_{i \in S}\Tr\big[ \big( \X_S^\top \X_S - \x_i \x_i^
\top\big)^{-1}\big]$
    \STATE $S \gets S \setminus\{ i\}$
\ENDWHILE
\end{algorithmic}
\end{algorithm}

In our experiments, we aim to address the following questions: (i) How does the performance of \regretent compare to the original \regretlhalf for solving the optimal design problem? (ii) How effective is our extension of the regret minimization framework in tackling the optimal design problem under ridge regression? (iii) How well does the regret minimization approach select representative samples from unlabeled data for classification tasks, compared to baseline methods?

\subsection{\color{black}Synthetic Dataset}\label{sec:synthetic-experiments}
{\color{black}
We consider a pool of $2,000$ data points represented by the matrix $\X \in \mathbb{R}^{2,000 \times 100}$, following the experimental setup of \cite{allen2021near}. Specifically, $\X$ is structured as
\begin{align}
\X = \begin{bmatrix}
\X_1 & \mathbf{0}\\
\mathbf{0} & \X_2
\end{bmatrix},
\end{align}
where $\X_1 \in \mathbb{R}^{1,000 \times 50}$ is a Gaussian random matrix rescaled so that the eigenvalues of $\X_1^\top \X_1$ decay quadratically, i.e., $\lambda_i(\X_1^\top \X_1) \propto i^{-2}$, and $\X_2 \in \mathbb{R}^{1,000 \times 50}$ is a Gaussian random matrix rescaled so that the eigenvalues of $\X_2^\top \X_2$ decay linearly, i.e., $\lambda_i(\X_2^\top \X_2) \propto i^{-1}$.  

We evaluate both the unregularized design problem \cref{eq:ip} and the regularized design problem \cref{eq:reg-ip} under multiple optimality criteria (A-, D-, E-, and V-optimality). The selection sizes $k$ considered are:
\begin{itemize}
    \item \textbf{Unregularized problem:} $k \in \{120, 200, 300\}$.
    \item \textbf{Regularized problem:} $k \in \{60, 80, 100, 120, 200, 300\}$ for A-, D-, and V-optimality, and $k \in \{120, 200, 300\}$ for E-optimality. For E-optimality, if $k < d$, then $\|(\X_S^\top \X_S + \lambda \b I)^{-1}\|_2 = 1/\lambda$, so we restrict to $k > d$.
\end{itemize}

In addition to Regret-Min, we include two baselines: (i) uniform sampling, and (ii) the swapping algorithm (Algorithm 1 in \cite{allen2021near}). The swapping method is also based on regret minimization, achieving a $(1+\epsilon)$-near-optimal performance guarantee with sample complexity $\mathcal{O}(d/\epsilon^2)$. Similar to Regret-Min, it first solves the relaxed problem; however, in the rounding phase, it starts from an arbitrary selection of $k$ points and then iteratively refines the design by swapping a chosen point with an unselected one based on regret-driven objectives.

\cref{fig:design-synthetic} summarizes the experimental results on the synthetic dataset, comparing Uniform, Swapping, and (Regularized-)Regret-Min under various optimality criteria. The plots report relative objective values with respect to the relaxed optimum $f^\diamond$. We observe that the relative objectives consistently decrease as $k/d$ increases, indicating that larger subsets yield designs closer to the optimum. (Regularized-)Regret-Min consistently achieves the smallest relative gaps across all criteria, significantly outperforming Uniform and Swapping. While Swapping improves over Uniform and narrows the gap at larger $k/d$, it is less competitive for smaller sample sizes. Comparing the upper and lower rows of \cref{fig:design-synthetic}, we see that regularization further reduces the relative gap, especially for E- and V-optimality. Finally, \texttt{(Regularized-)Regret-Min}$+$\texttt{Entropy} performs almost identically to \texttt{(Regularized-)Regret-Min}$+\ell_{1/2}$.
}

\begin{figure}[t]
    \centering
\footnotesize
\begin{tikzpicture}
\node[inner sep=0pt] (a1) at (0,0) {\includegraphics[width=12cm]{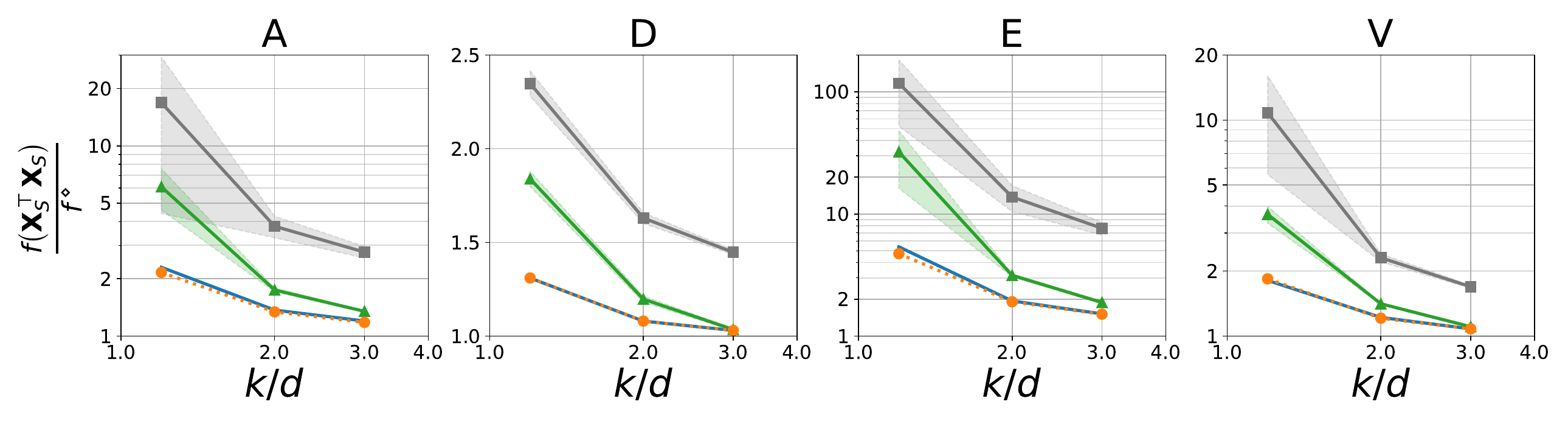}};
\node[inner sep=0pt] (a2) at (0,-3.5) {\includegraphics[width=12cm]{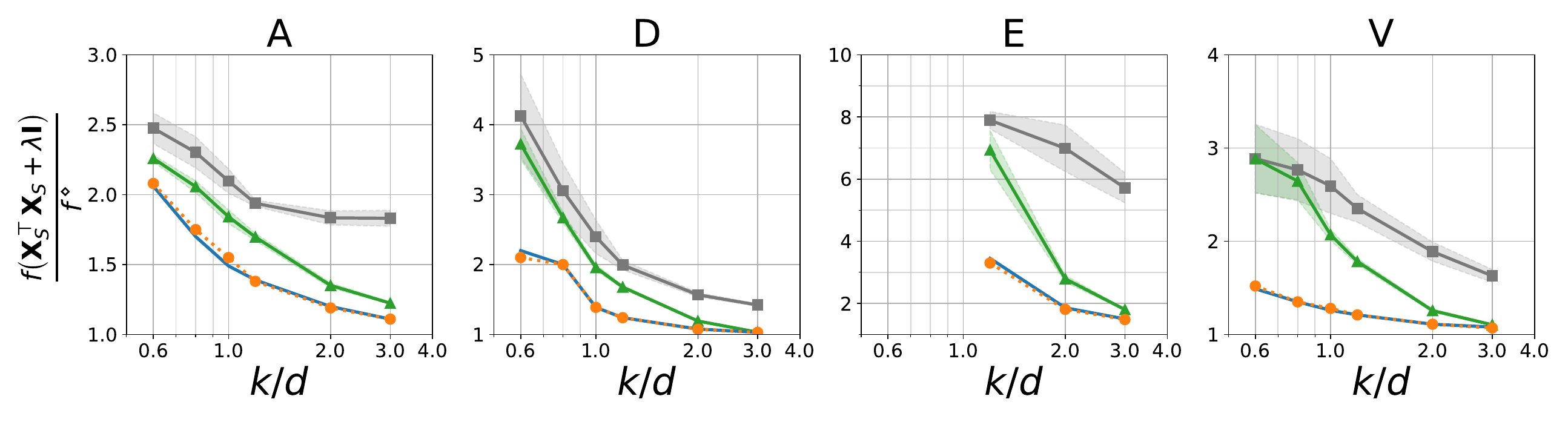}};
\node[inner sep=0pt] (a3) at (0.2,-1.75) {\includegraphics[width=11.5cm]{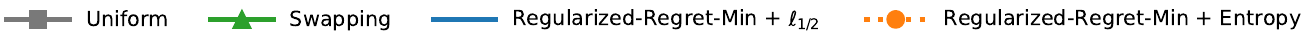}};
\node[inner sep=0pt] (a0) at (0,1.75) {\includegraphics[width=9cm]{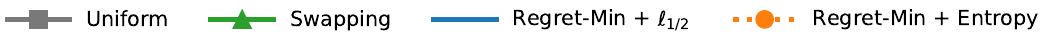}};
\draw[] (-6.6, -1.5) -- (6,-1.5);
\draw[] (-6., -5) -- (-6.,1.95);
\draw (-6.6,-5) rectangle (6,1.95);
\node[rotate=90,anchor=north] at (-6.6, 0) {\textbf{Problem~\cref{eq:ip}}};
\node[rotate=90,anchor=north] at (-6.6, -3.3) {\textbf{Problem~\cref{eq:reg-ip}}};
\end{tikzpicture}
\caption{\color{black}Results on a synthetic dataset ($n=2,000$, $d=100$). For ``Uniform" and ``Swapping", we report the mean and standard deviation over 10 trials. The top row shows results for the optimal design problem \cref{eq:ip}, and the bottom row shows results for the regularized optimal design problem \cref{eq:reg-ip}. Here, $f_\diamond$ denotes the objective value obtained from the {\color{black}{relaxed}} problem. }
\label{fig:design-synthetic}
\end{figure}

\color{black}

\subsection{Real-world classification datasets}\label{sec:realworld-experiments}
\subsubsection{Experimental setting}
\paragraph{Datasets} We conduct experiments on three image classification datasets: MNIST~\cite{deng-2012mnist}, CIFAR-10~\cite{cifar10}, and ImageNet~\cite{deng2009imagenet}. To enable effective sample selection and evaluation, we apply unsupervised preprocessing to obtain suitable feature representations. Importantly, no label information is used during preprocessing. Details for each dataset are as follows:

\begin{itemize}
    \item \textbf{MNIST:} The original dataset contains 60,000 samples, each represented as a 784-dimensional vector. We reduce the dimensionality to 20 using spectral embedding via the normalized graph Laplacian (see \cref{algo:laplacian}), with 256 nearest neighbors used in the construction of the graph. This results in a feature matrix $\X \in \mathbb{R}^{60,000 \times 20}$.
    \item \textbf{CIFAR-10:} We preprocess the 50,000 training samples using SimCLR~\cite{simclr}, a contrastive learning framework that learns visual representations by maximizing agreement between augmented views of the same image in a latent space. The resulting feature vectors have a dimension of 512. We then apply either principal component analysis (PCA) or spectral embedding (via normalized graph Laplacian, \cref{algo:laplacian}) to further reduce the dimension of features  obtained by SimCLR.
\item \textbf{ImageNet-50:} We use DINOv2~\cite{oquab2023dinov2}, a state-of-the-art self-supervised model, to extract 2,048-dimensional feature embeddings from the ImageNet dataset. These embeddings are then reduced to 50 dimensions using PCA. From the full ImageNet dataset, we randomly select 50 classes to construct the ImageNet-50 subset, which contains 64,273 samples. This ImageNet-50 dataset is \textit{class-balanced}. Additionally, we create a \textit{class-imbalanced} version of ImageNet-50 by sampling a varying number of examples per class, with total number of 5,000 samples.
\end{itemize}

\paragraph{Evaluation metric}
{\color{black}To evaluate the representativeness of the selected samples, we use logistic regression from scikit-learn~\cite{scikit-learn} as the classifier, with the detailed settings provided in \Cref{sec:new-experiments}. }
Specifically, we train the logistic regression model on the selected samples with their labels, apply the trained model to predict labels for the entire unlabeled training set, and report the average prediction accuracy.

\paragraph{Methods} We compare the performance of the following  sampling methods:

\begin{enumerate}
    \item Uniform: uniformly selects random samples from the entire pool of  data.
    \item K-Means:  this method uses K-Means clustering method to group all samples into $k$ clusters. Then, it selects the sample closest to the centroid of each group.
    \item RRQR: the rank-revealing QR algorithm is implemented in \texttt{Scipy.linalg.qr} with pivoting. 
    \item MMD: MMD-critic is a method proposed in \cite{kim2016examples}, which can be used to select prototypes by the maximum mean discrepancy (MMD). We use the \texttt{MMD-critic} implementation from \url{https://github.com/BeenKim/MMD-critic}. The method employs the radial basis function (RBF) kernel defined as $k(x_i, x_j) = \exp\big(- \frac{1}{2}(\|x_i - x_j\|_2/\gamma)^2\big)$. It constructs the dense kernel matrix $\mathbf{K}\in\mathbb{R}^{n\times n}$ with pairwise kernels, where $\mathbf{K}_{i,j} = k(x_i, x_j)$. The value of $\gamma$ is chosen as the median of all pairwise Euclidean distances (i.e. the ``median trick"). The algorithm is presented in~\cref{algo:mmd}, where
    \[{\color{black}J(S)}=\frac{2}{n |S|} \sum_{i\in[n], j\in S}k(x_i, x_j) - \frac{1}{|S|^2} \sum_{i, j \in S} k(x_i, x_j).\]
    \item Max-weights: select the  largest $k$ samples according to $\bpi^\diamond$, where $\bpi^\diamond$ is the solution of relaxed problem~\Cref{eq:lp}.
    \item Weighted-sampling: randomly select $k$ samples according to distribution $\bpi^\diamond/k$, where $\bpi^\diamond$ is the solution of relaxed problem~\Cref{eq:lp}.
    \item Greedy: a {\color{black}greedy} removal method for A-optimal design is proposed in~\cite{Avron13}. The algorithm is listed in \cref{algo:greedy-A}. We apply this method with initial set $S_0\subseteq [n]$ selected randomly with size $|S_0| = 10 d$.
    \item Regret-Min: without notice, we report the results obtained by applying Algorithm \ref{algo:rounding} with the \textbf{entropy-regularizer} to solve \textbf{A-optimal} design problem. It performs a grid search for the learning rate $\alpha$ that \textit{minimizes} the design objective function, i.e. $f(\X_S^\top \X_S)$, where matrix $\X_S \in\mathbb{R}^{k\times d}$ represents the selected samples.
\end{enumerate}
For randomized methods such as Uniform, K-Means, Weighted Sampling, and Greedy, we report the mean and standard deviation over 20 independent trials for each experiment. For the relaxed solver used in Max-Weights, Weighted Sampling, and Regret-Min across all optimal design problems, mirror descent iterations are terminated once the relative change in the objective falls below $1.0 \times 10^{-5}$.

\subsubsection{Results}
\paragraph{Entropy-regularizer vs. $\ell_{1/2}$-regularizer}
{\color{black}
We evaluate the performance of entropy and $\ell_{1/2}$ regularizers in our Regret-Min algorithms across a variety of datasets and settings. Specifically, we compare both the standard Regret-Min (\cref{algo:rounding}) and the Regularized-Regret-Min (\cref{algo:rounding-regularize}) on CIFAR-10, ImageNet-50, and their randomly sampled subsets. Experiments are conducted under different sample budgets $k$, with learning rates $\alpha$ ranging from 1 to 100. For each configuration, we record both the relative objective value $f(\X_S^\top \X_S)/f^{\diamond}$—where $\X_S$ denotes the selected samples and $f^{\diamond}$ the optimum of the relaxed problem—and the prediction accuracy of logistic regression.  

Results for the full CIFAR-10 and ImageNet-50 datasets are shown in \cref{fig:reg-compare,fig:reg-compare-regularized}, while those for the subset experiments appear in \cref{fig:subset-cifar10-compare,fig:subset-imagenet-compare} in \Cref{sec:new-experiments}. For ease of discussion, we denote by
\begin{itemize}
\item $\alpha_{\text{obj}}^*$: the learning rate that minimizes the \textit{design objective} (marked by red dots in the plots).
\item $\alpha_{\text{acc}}^*$: the learning rate that maximizes \textit{logistic regression prediction accuracy} (marked by blue dots in the plots).
\end{itemize}

For the objective value $f$ (red curves in the plots), we find that $\alpha_{\text{obj}}^*$ typically lies in $(1,10)$ for the entropy-regularizer, while for the $\ell_{1/2}$-regularizer it usually falls in $(10,100)$, except in the regularized CIFAR-10 experiments shown in \cref{fig:reg-compare-regularized}, where $\alpha_{\text{obj}}^* \in (1,10)$. Despite these differences in optimal learning rates, both regularizers achieve nearly identical objective function values at their respective optima. The regularizers exhibit markedly different sensitivity profiles with respect to learning rate: the entropy regularizer demonstrates sharp, well-defined optimization landscapes with clear minima—objective values decrease sharply as $\alpha$ approaches $\alphaobj$. In contrast, the $\ell_{1/2}$-regularizer produces characteristically flatter response surfaces with broader basins of near-optimal performance, allowing multiple learning rate values to achieve comparable objective function performance. This effect is more pronounced in the ImageNet-50 subset experiments (see \cref{fig:subset-imagenet-compare}).

As for the profile of accuracy (blue curves in the plots), both regularizers achieve comparable maximum accuracy levels, with differences typically under 0.5\% across the experiments shown in \cref{fig:reg-compare,fig:reg-compare-regularized,fig:subset-cifar10-compare}. In the ImageNet-50 subset experiments (\cref{fig:subset-imagenet-compare}), the entropy-regularizer attains accuracy values that are at most ~1\% higher in a few subsets. These differences are small and not statistically significant; they primarily reflect variance in the downstream logistic-regression classifier rather than inherent advantages of either regularizer.

In general, accuracy exhibits high sensitivity to learning rate selection for both regularizers. For the entropy-regularizer, the optimal accuracy learning rate $\alphaacc$  falls within the narrow range $(1,10)$ for almost all experimental conditions. In contrast, the $\ell_{1/2}$-regularizer demonstrates considerably more variability, with $\alphaacc$ values spanning the broader range $(1,100)$. This variability is particularly pronounced for the $\ell_{1/2}$-regularizer when comparing different subsets of  CIFAR-10 in \cref{fig:subset-cifar10-compare}: $\alphaacc$ ranges from $(3,5)$ for subsets \rom{2}--\rom{4}, while $\alphaacc$ ranges from $(20,40)$ for subsets \rom{1} and \rom{5}.

Regarding the consistency between the optimal learning rate for the objective function $\alphaobj$ and the optimal learning rate for classification accuracy $\alphaacc$, the entropy-regularizer demonstrates superiority over the $\ell_{1/2}$-regularizer. This conclusion is supported by the following key observations:
\begin{itemize}
    \item \textbf{Entropy-regularizer}: The values of $\alphaobj$ and $\alphaacc$ coincide in most experiments. Even when they differ, the accuracy gap between $\alpha_{\text{obj}}^*$ and $\alpha_{\text{acc}}^*$ is minimal—for example, less than $0.2\%$ in subsets \rom{1} and \rom{5} of \cref{fig:subset-cifar10-compare} and in subset \rom{1} of \cref{fig:subset-imagenet-compare}.
    \item \textbf{$\ell_{1/2}$-regularizer:} Except for the CIFAR-10 experiments in \cref{fig:reg-compare-regularized}, the optimal learning rates $\alpha_{\text{obj}}^*$ and $\alpha_{\text{acc}}^*$ \textbf{never coincide} across all remaining experiments. More importantly, the resulting accuracy gaps between $\alpha_{\text{obj}}^*$ and $\alpha_{\text{acc}}^*$  are often significant—exceeding 1\% in all cases shown in \cref{fig:subset-cifar10-compare,fig:subset-imagenet-compare} as well as in the ImageNet-50 results of \cref{fig:reg-compare}.
\end{itemize} 

Overall, our experiments show that the entropy and $\ell_{1/2}$ regularizers achieve comparable classification accuracy across all datasets. The main difference lies not in accuracy levels but in stability: the entropy-regularizer consistently yields tighter alignment between the optimal learning rate for the objective function and that for downstream accuracy, whereas the $\ell_{1/2}$-regularizer exhibits higher sensitivity to learning-rate choice. This stability makes the entropy regularizer a more reliable option in practice, particularly in scenarios where the goal is to select a small number of representative samples from large pools of unlabeled data.
}

\begin{figure}[tbp]
\centering
  \footnotesize
\begin{tikzpicture}
\node[inner sep=0pt] (a1) at (0,0) {\includegraphics[width=5.5cm]{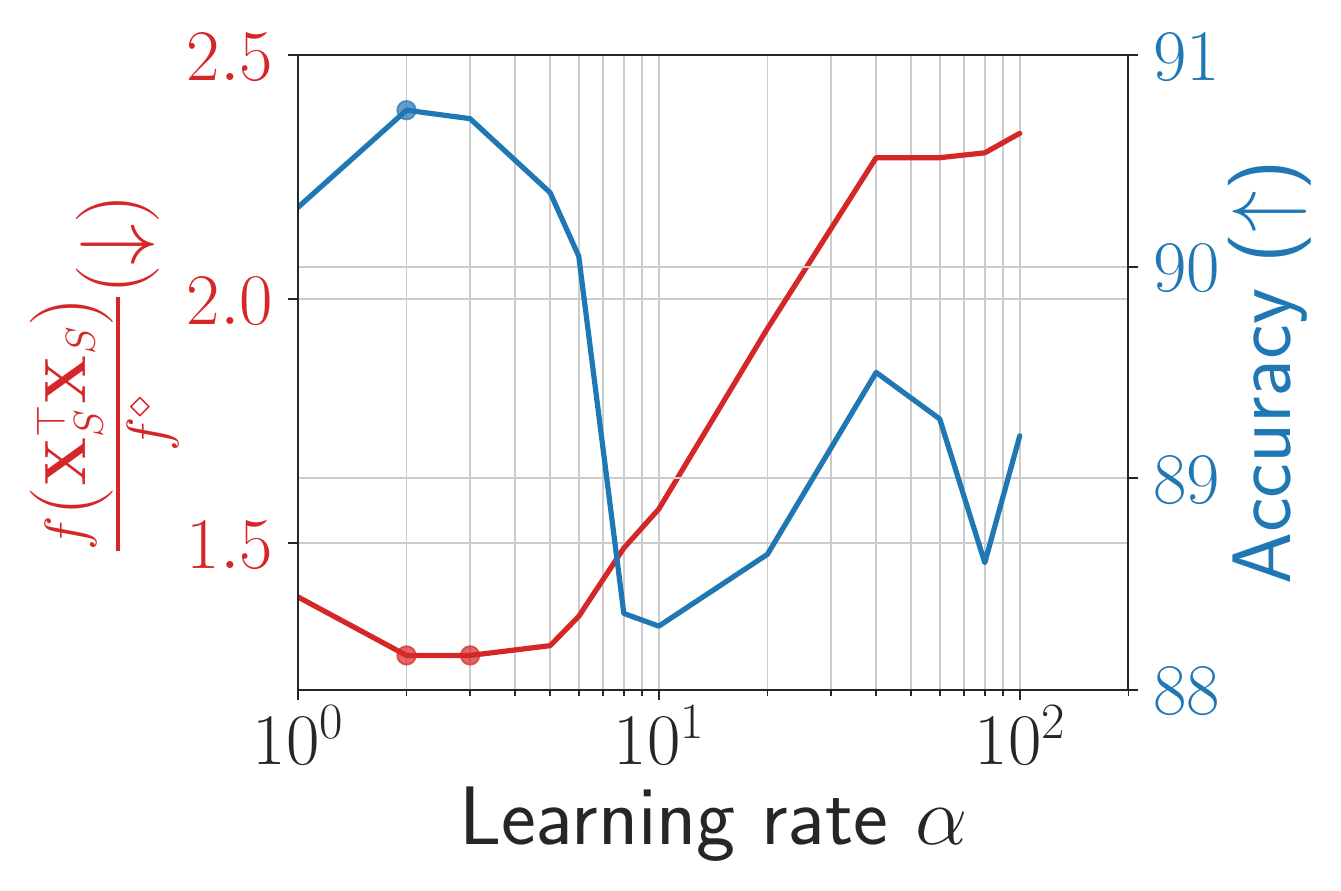}};
\node[inner sep=0pt] (b1) at (6,0) {\includegraphics[width=5.5cm]{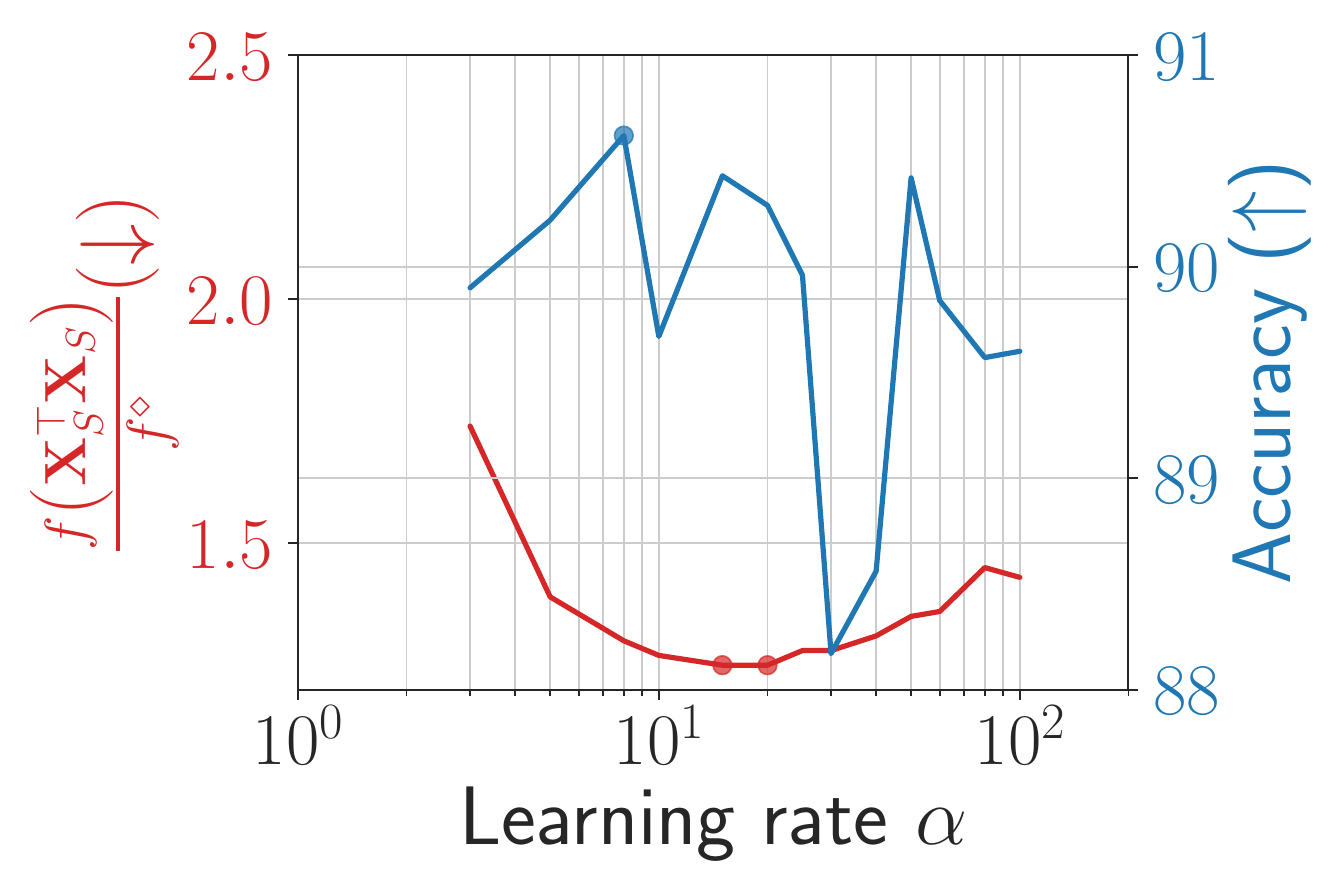}};
\node[inner sep=0pt] (a2) at (0,-3.6) {\includegraphics[width=5.5cm]{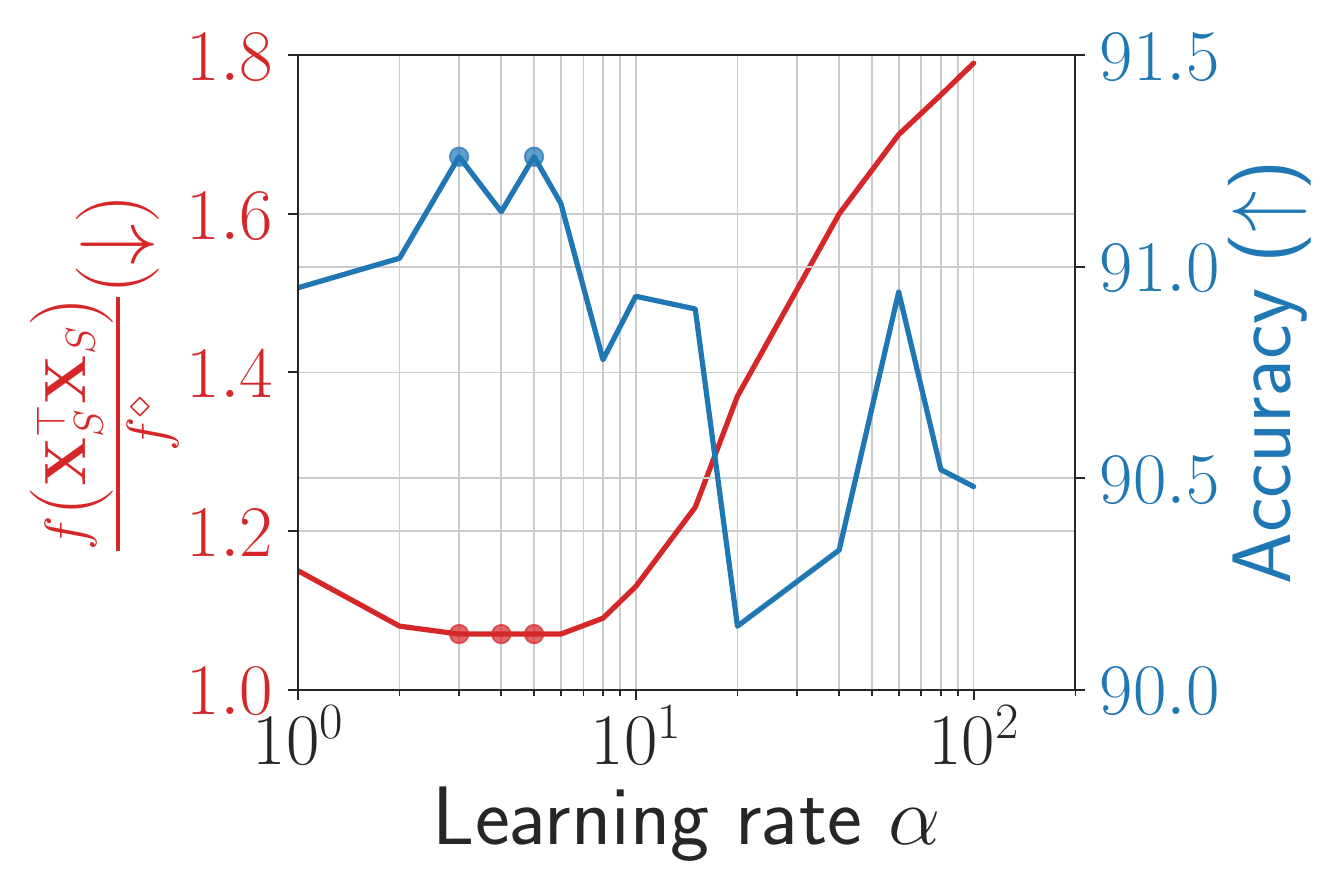}};
\node[inner sep=0pt] (b2) at (6,-3.6) {\includegraphics[width=5.5cm]{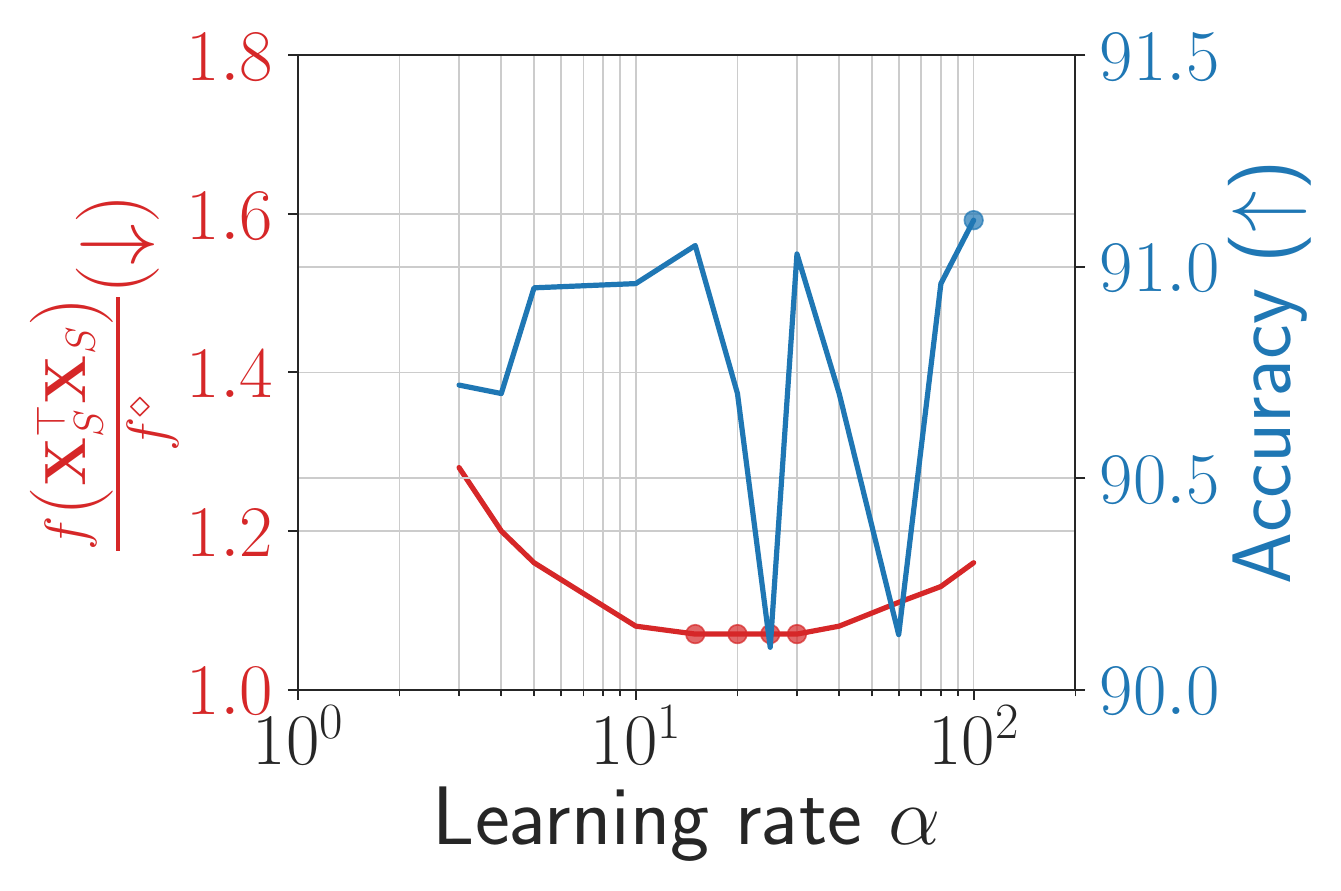}};
\node[] at (0.4,2.1) {\color{black}\regretent};
\node[] at (6.2,2.1) {\color{black}\regretlhalf};
\node[rotate=90,anchor=north] at (-3.4, 0.2) {\textbf{$k=150$}};
\node[rotate=90,anchor=north] at (-3.4, -3.4) {\textbf{$k=300$}};
\node[rotate=90,anchor=north] at (-4, -1.8) {\textbf{CIFAR-10}};
\node[inner sep=0pt] (a3) at (0,-7.4) {\includegraphics[width=5.5cm]{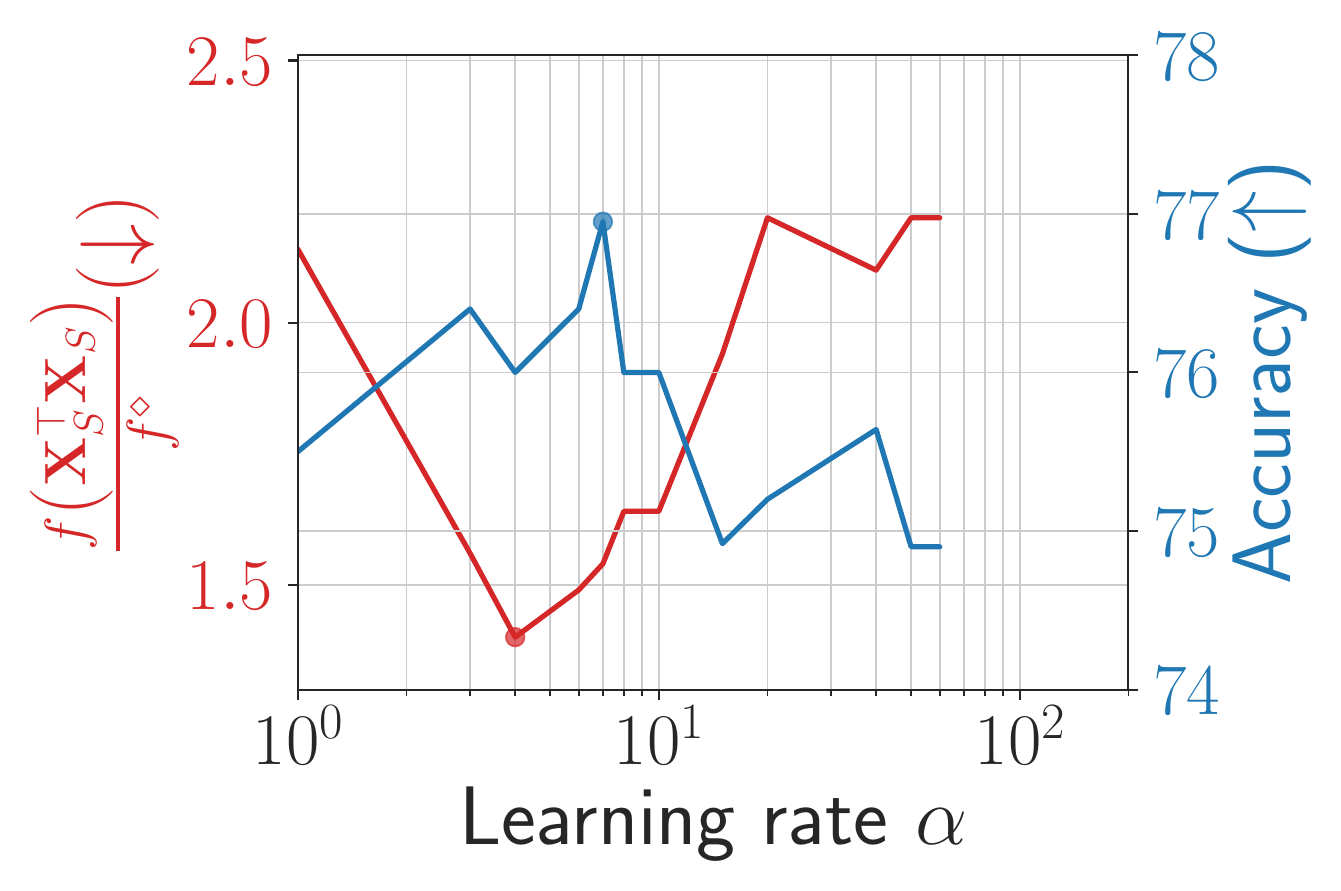}};
\node[inner sep=0pt] (b3) at (6,-7.4) {\includegraphics[width=5.5cm]{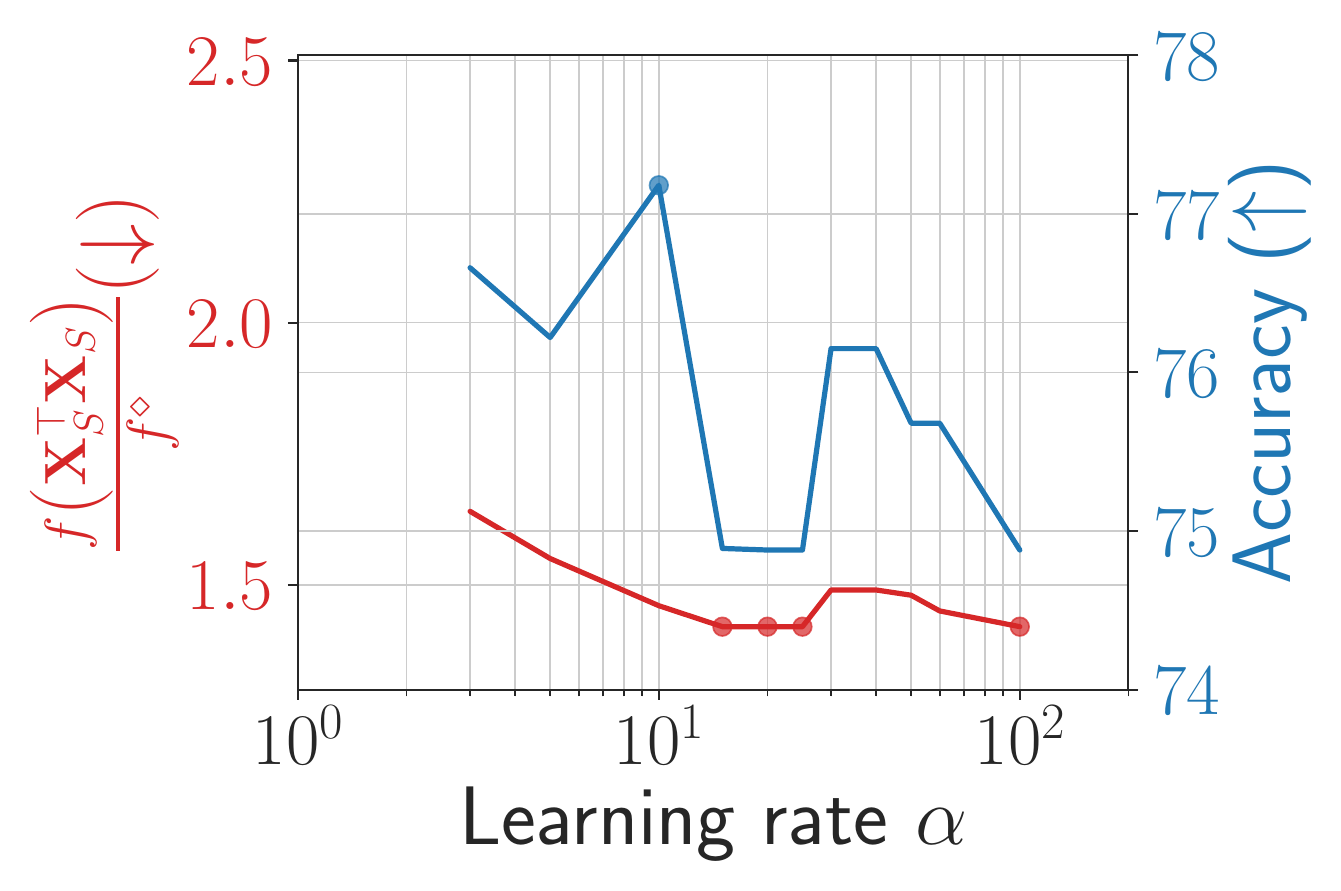}};
\node[inner sep=0pt] (a4) at (0,-11) {\includegraphics[width=5.5cm]{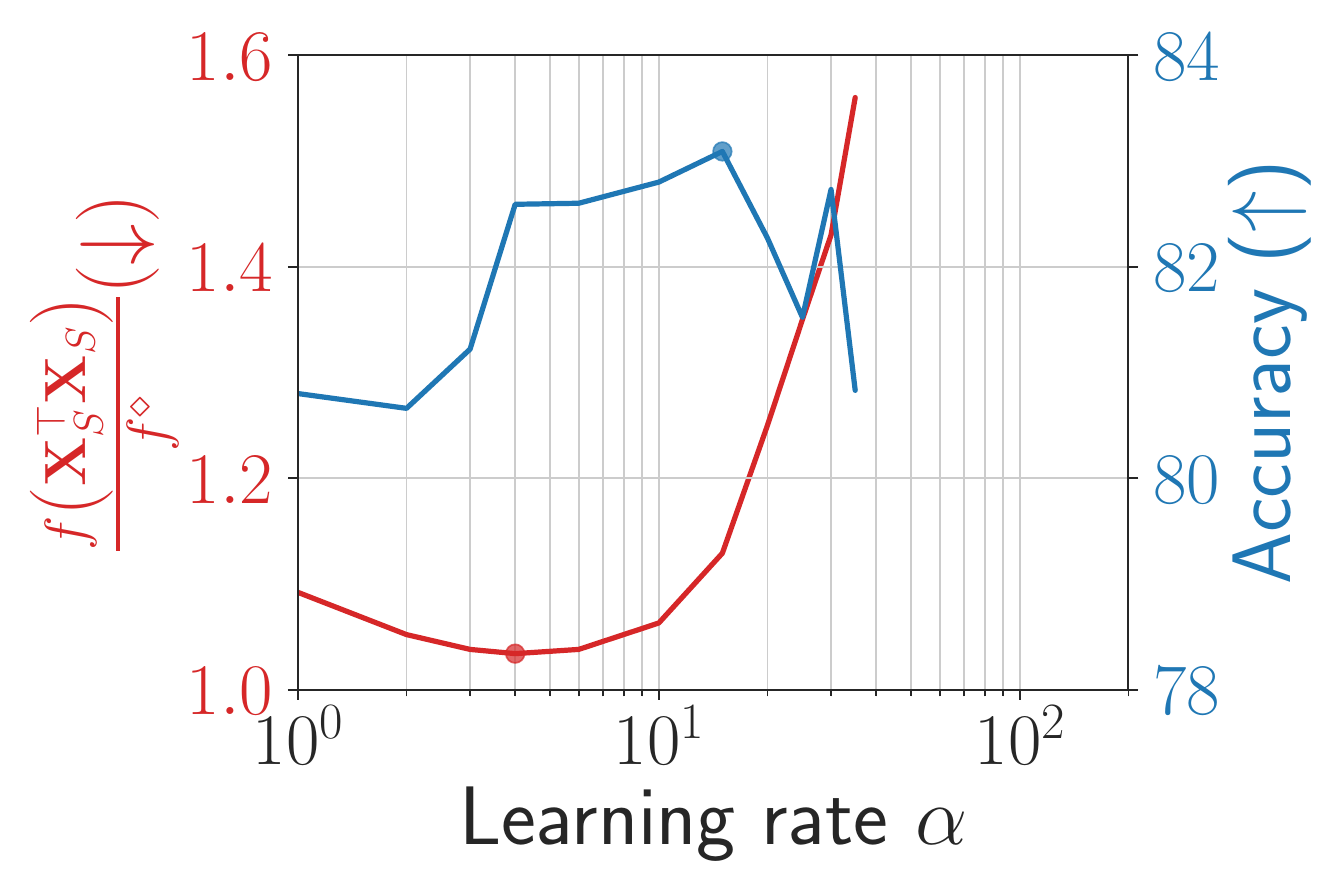}};
\node[inner sep=0pt] (b4) at (6,-11) {\includegraphics[width=5.5cm]{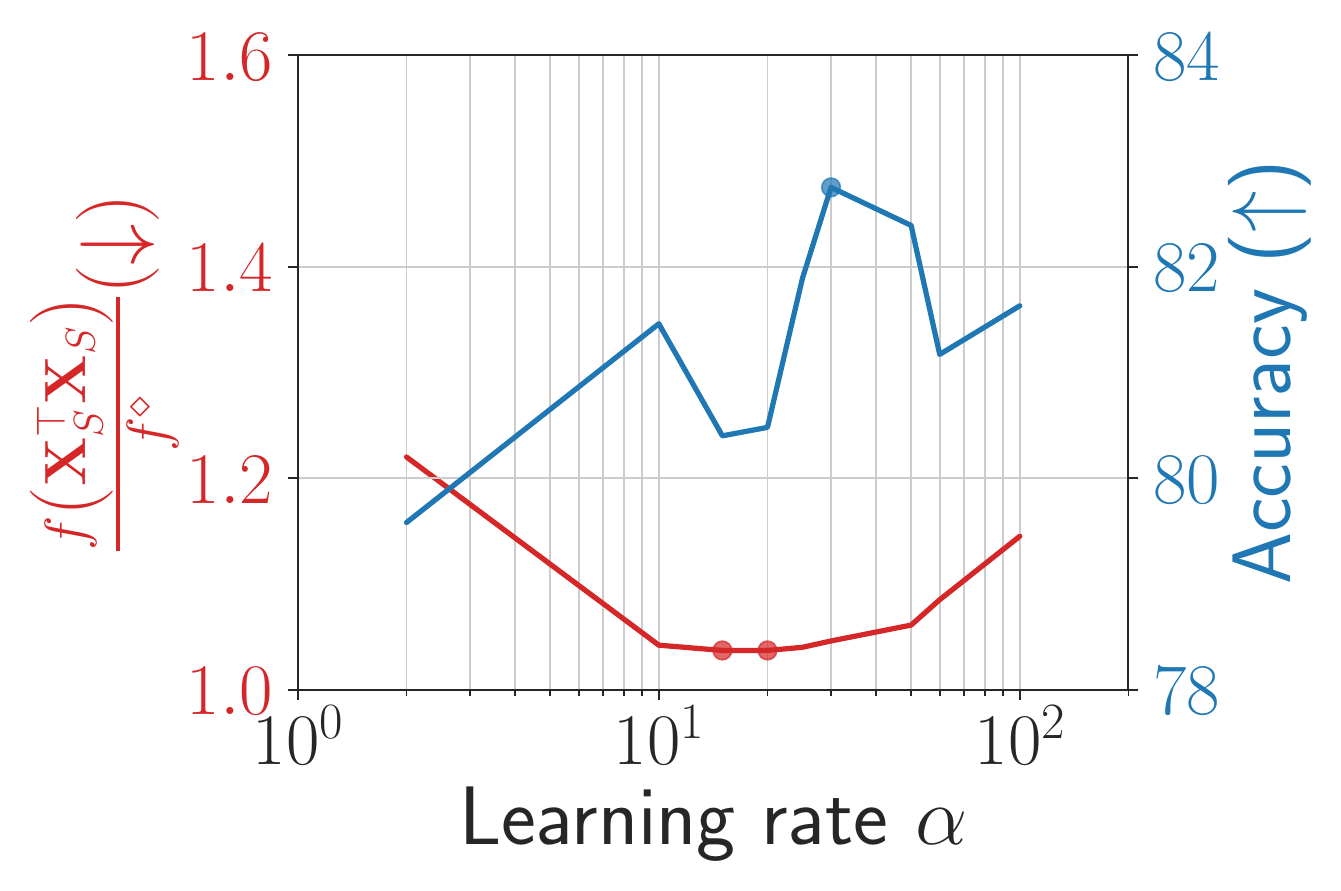}};
\node[rotate=90,anchor=north] at (-4, -8.5) {\textbf{ImageNet-50}};
\node[rotate=90,anchor=north] at (-3.4, -7.2) {\textbf{$k=50$}};
\node[rotate=90,anchor=north] at (-3.4, -10.5) {\textbf{$k=200$}};
\draw[] (-4, -5.4) -- (8.7,-5.4);
\draw[] (3,2.4) -- (3,-12.8);
\draw[] (-2.9,2.4) -- (-2.9,-12.8);
\draw (-4,-12.8) rectangle (8.7,1.8);
\draw (-4,1.8) rectangle (8.7,2.4);
\end{tikzpicture}
\caption{Comparison of \regretent and \regretlhalf (\cref{algo:rounding}) for A-design on CIFAR-10 (upper) and ImageNet-50 (lower). For CIFAR-10, the PCA-reduced features with dimension of 100 are used, for ImageNet-50, the PCA-reduced features with dimension of 50 are used. The red lines in the plot represent the relative value of the objective function $\frac{f(\X_S^\top \X_S)}{f^\diamond}$, where $\X_S$ is the selected samples and $f^\diamond$ is the optimal value of the relaxed problem~\cref{eq:lp}. The blue lines represent the logistic regression prediction accuracy. The dots on each line represent the optimal points.}
\label{fig:reg-compare}
\end{figure}


\begin{figure}[tbp]
\centering
  \footnotesize
\begin{tikzpicture}
\node[inner sep=0pt] (a1) at (0,0) {\includegraphics[width=5.5cm]{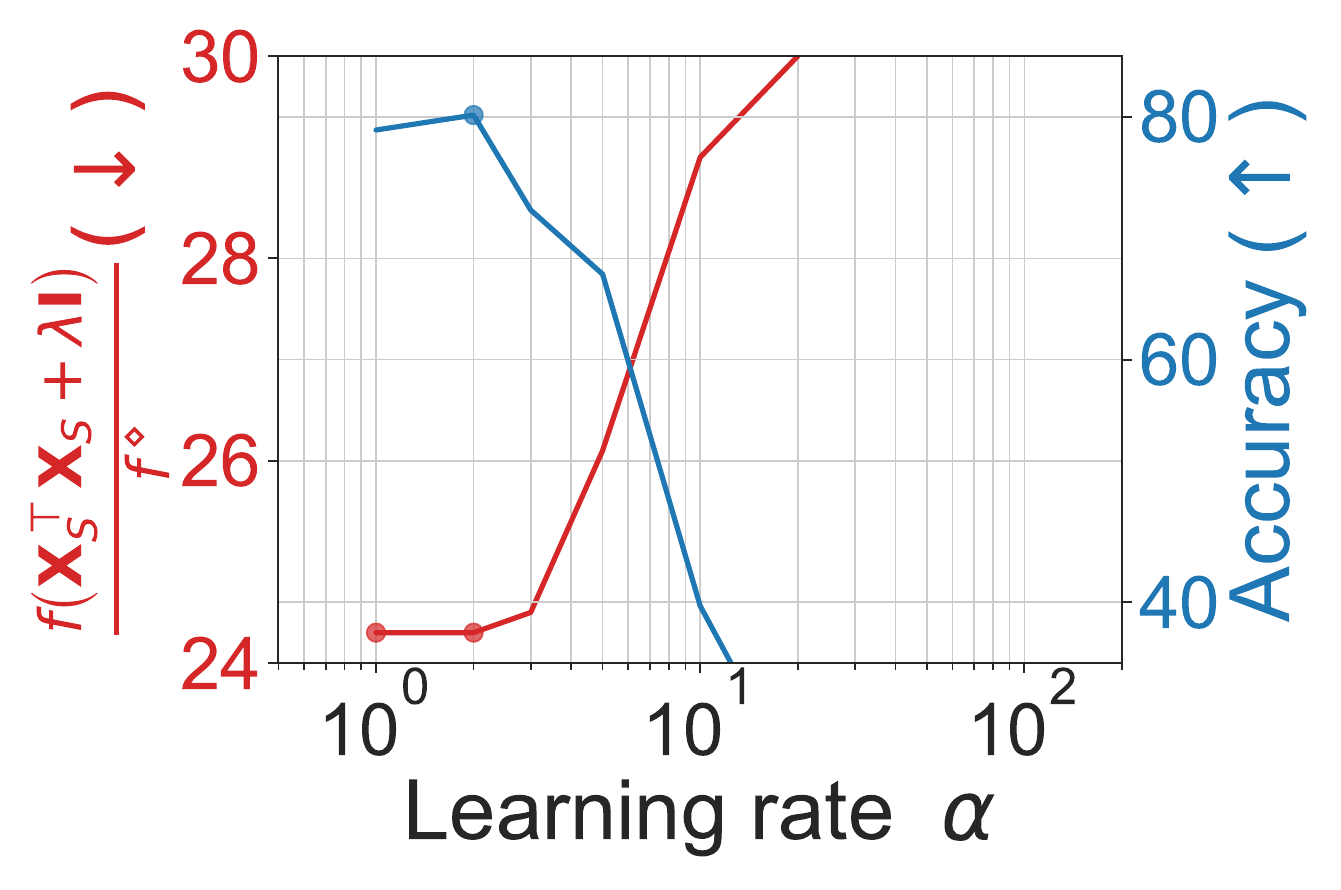}};
\node[inner sep=0pt] (b1) at (6,0) {\includegraphics[width=5.5cm]{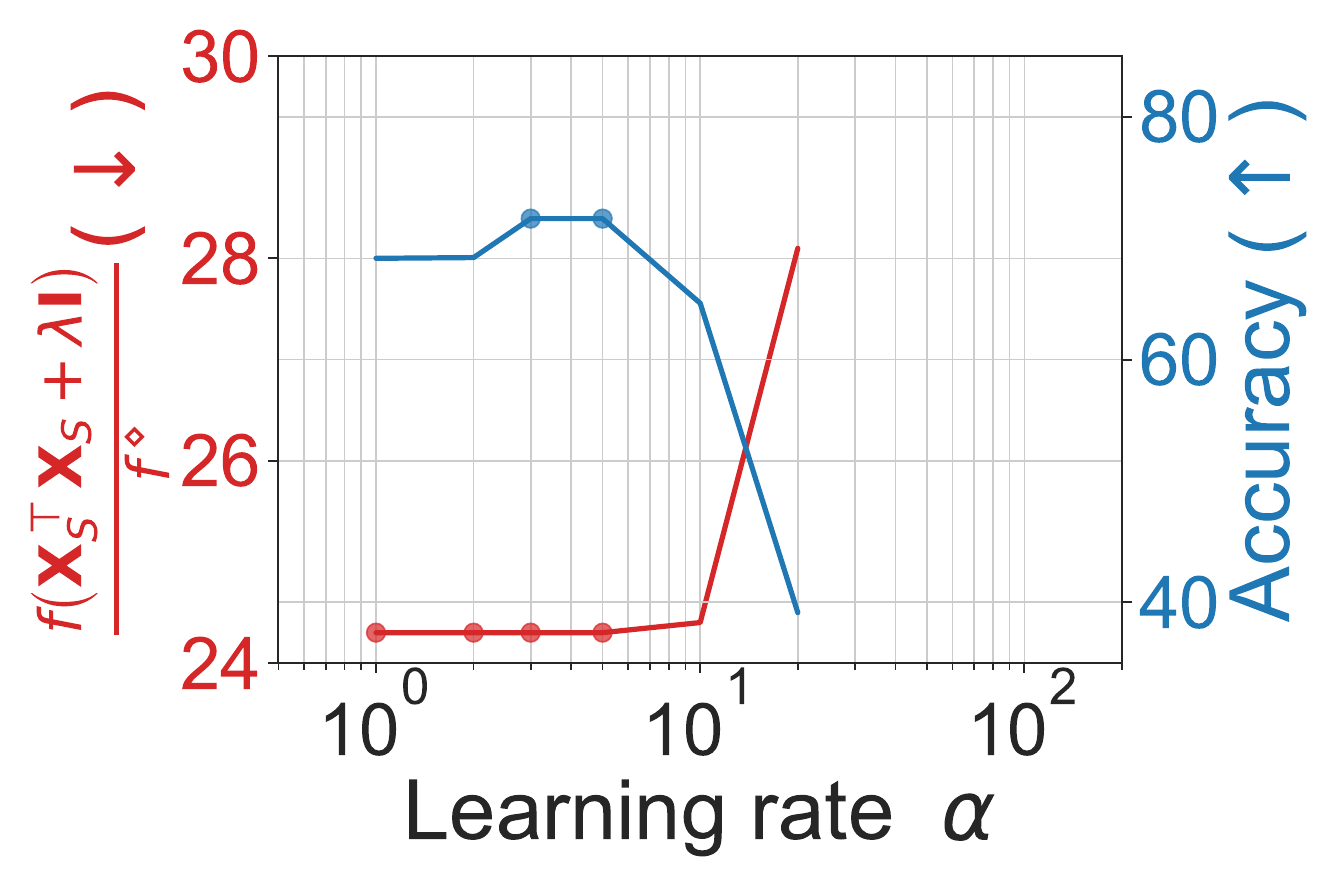}};
\node[inner sep=0pt] (a2) at (0,-3.6) {\includegraphics[width=5.5cm]{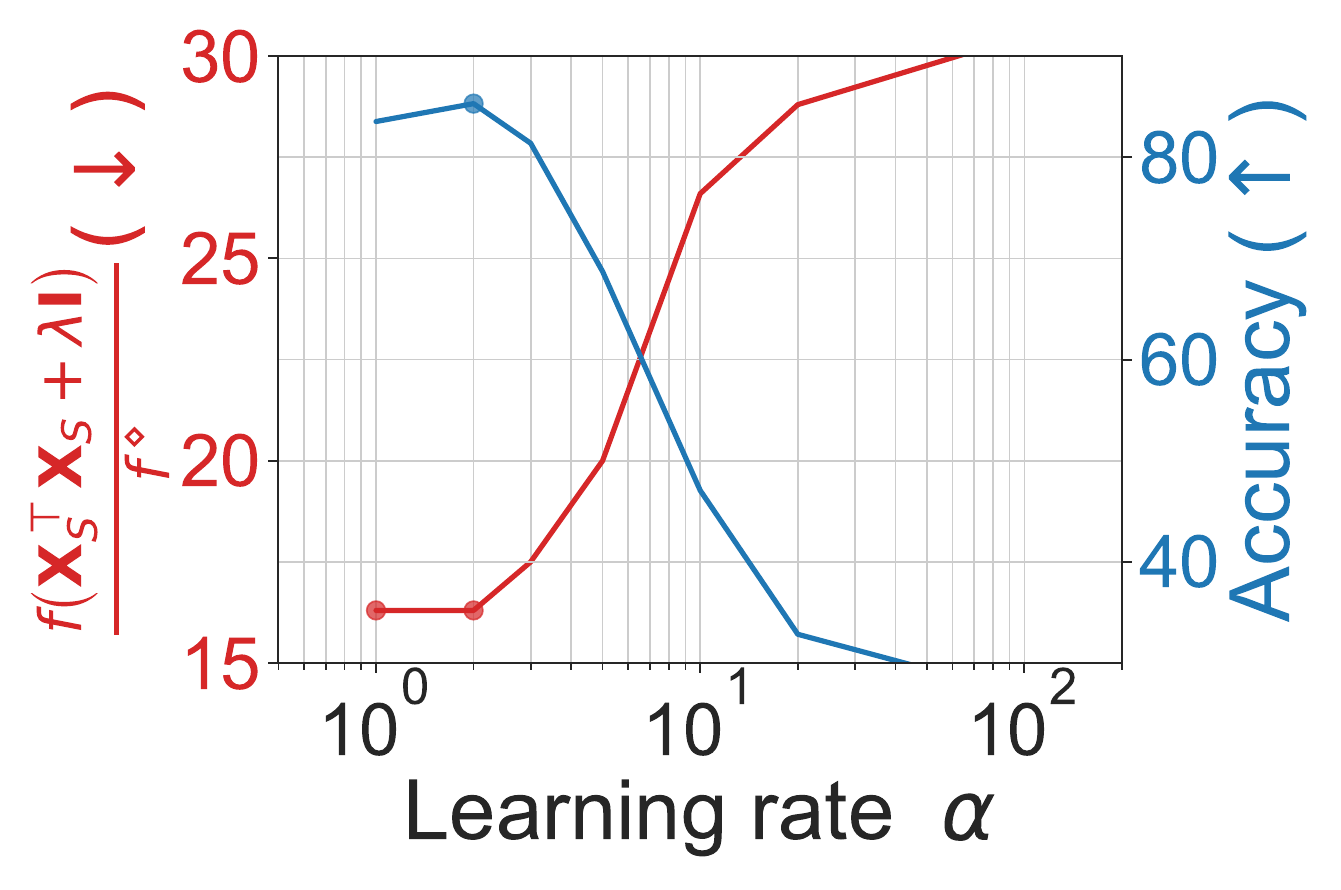}};
\node[inner sep=0pt] (b2) at (6,-3.6) {\includegraphics[width=5.5cm]{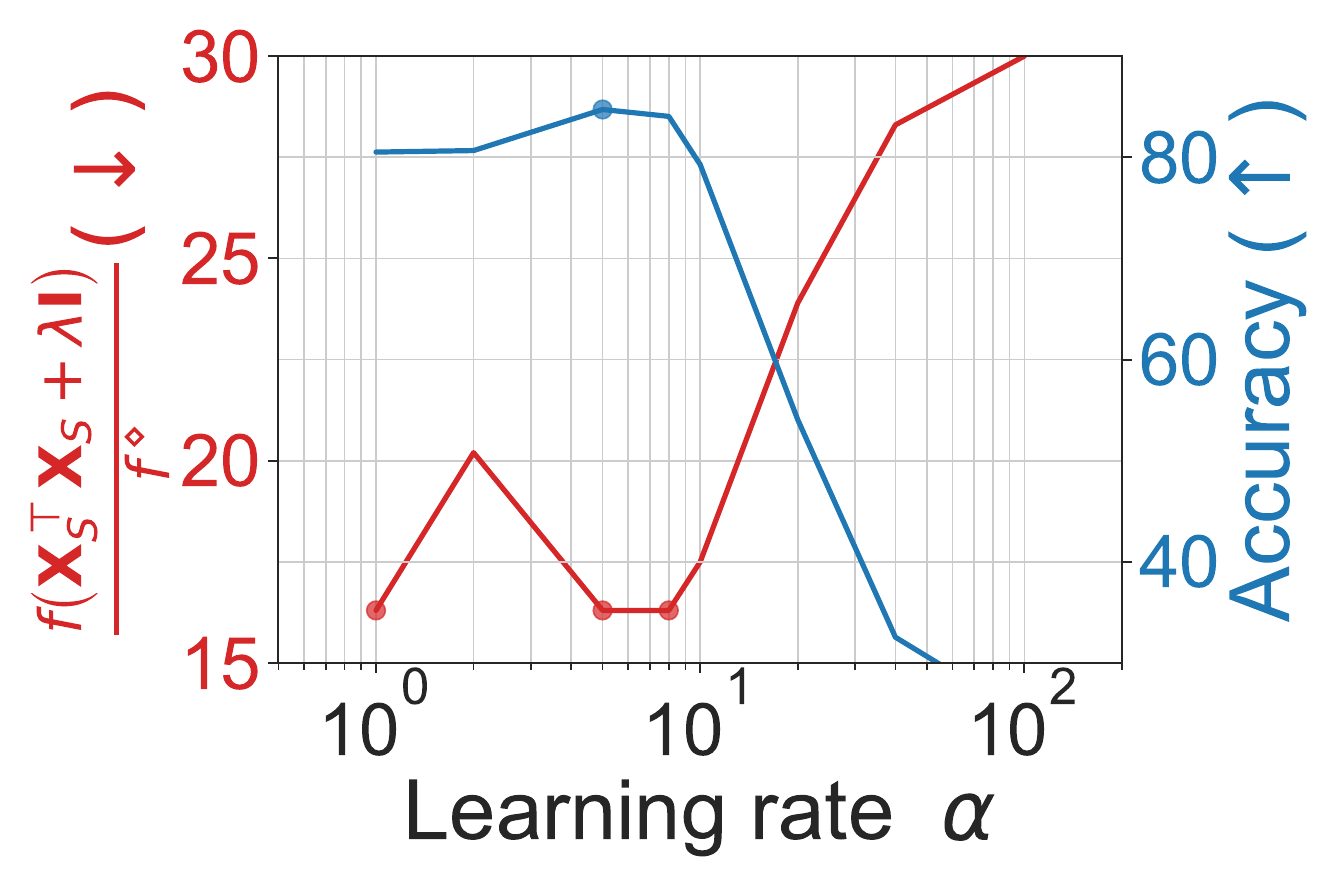}};
\node[] at (0.2,2.1) {\color{black}\regregretent};
\node[] at (6.2,2.1) {\color{black}\regregretlhalf};
\node[rotate=90,anchor=north] at (-3.4, 0.2) {\color{black}\textbf{$k=60$}};
\node[rotate=90,anchor=north] at (-3.4, -3.4) {\color{black}\textbf{$k=80$}};
\node[rotate=90,anchor=north] at (-4, -1.8) {\color{black}\textbf{CIFAR-10}};
\node[inner sep=0pt] (a3) at (0,-7.4) {\includegraphics[width=5.5cm]{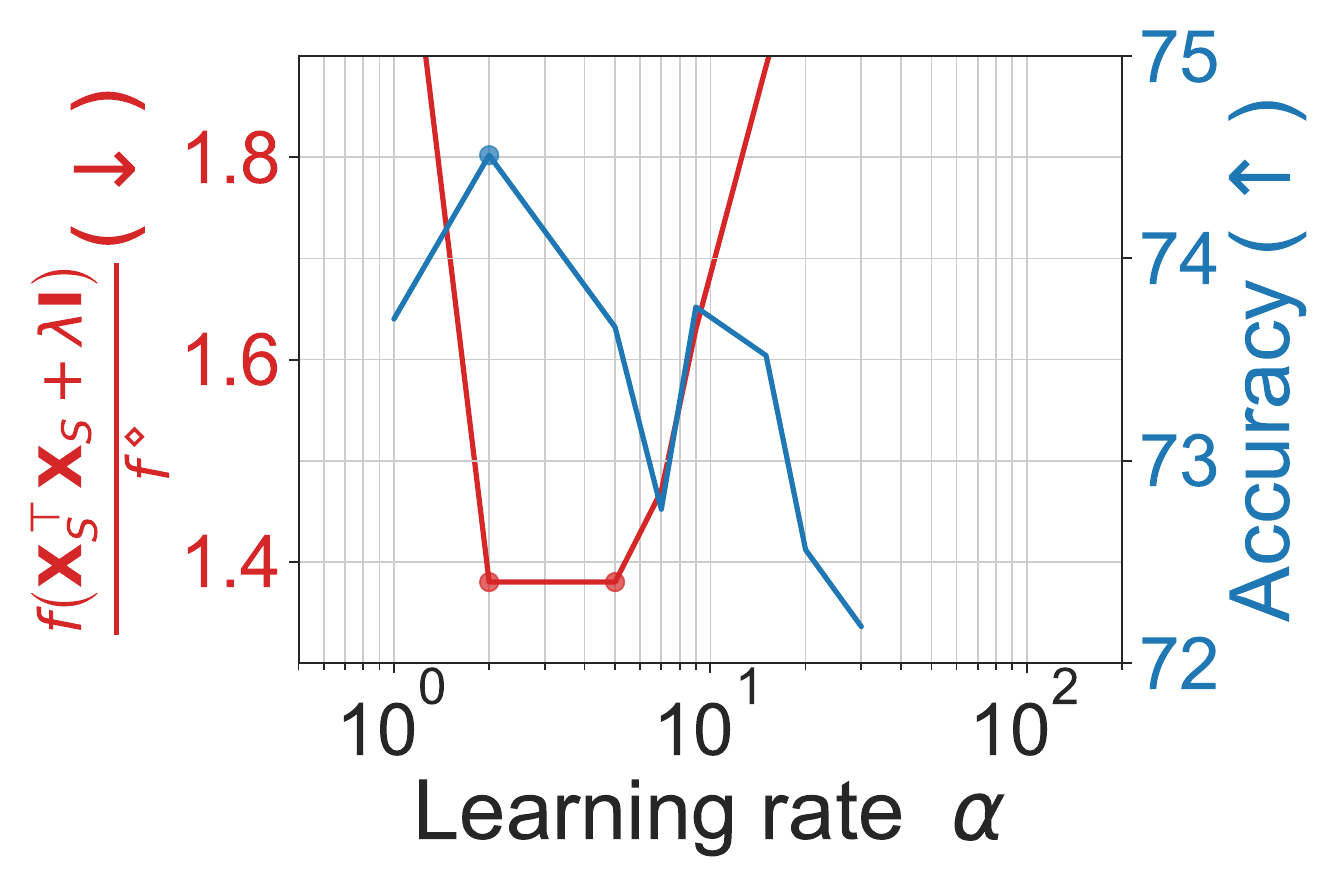}};
\node[inner sep=0pt] (b3) at (6,-7.4) {\includegraphics[width=5.5cm]{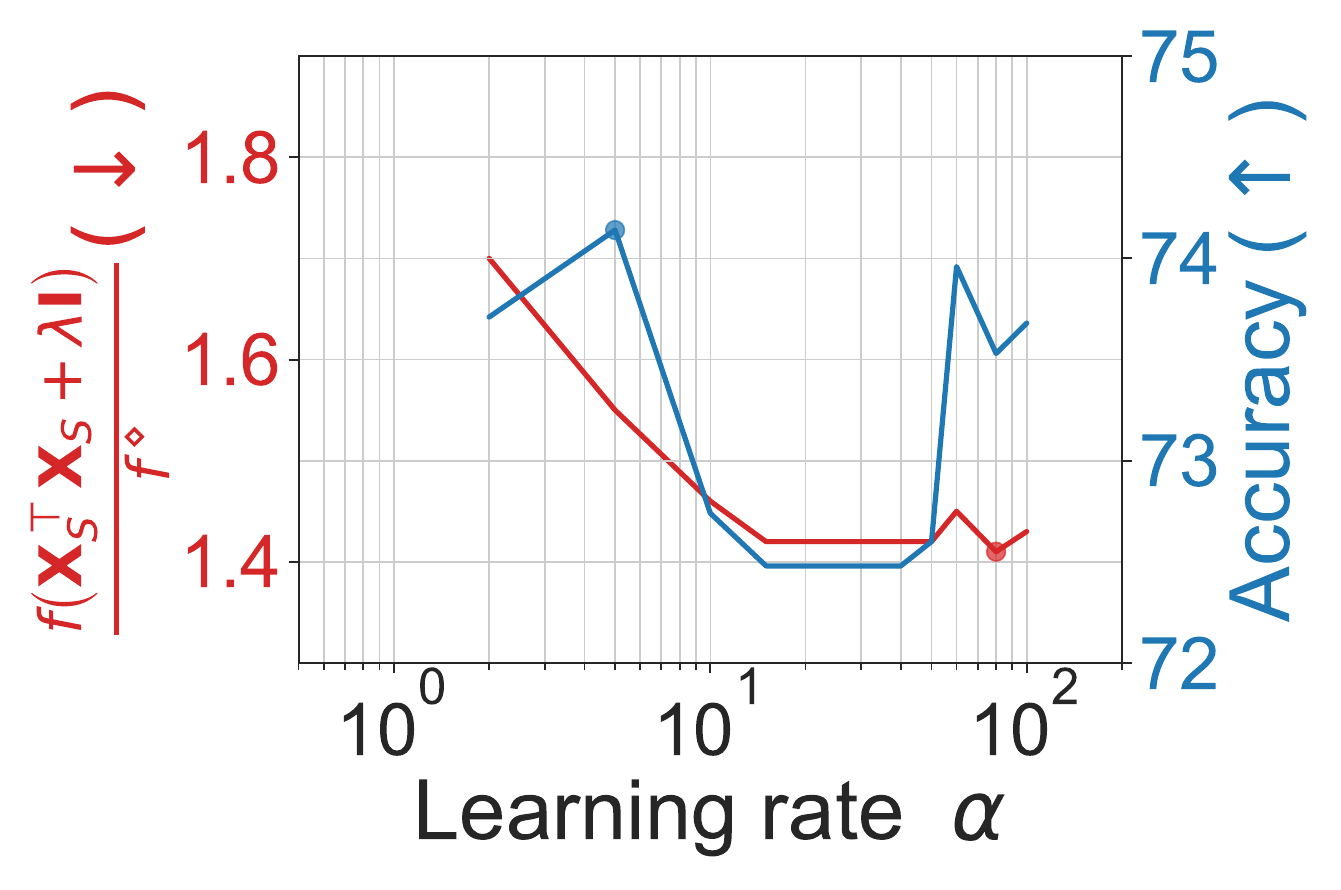}};
\node[inner sep=0pt] (a4) at (0,-11) {\includegraphics[width=5.5cm]{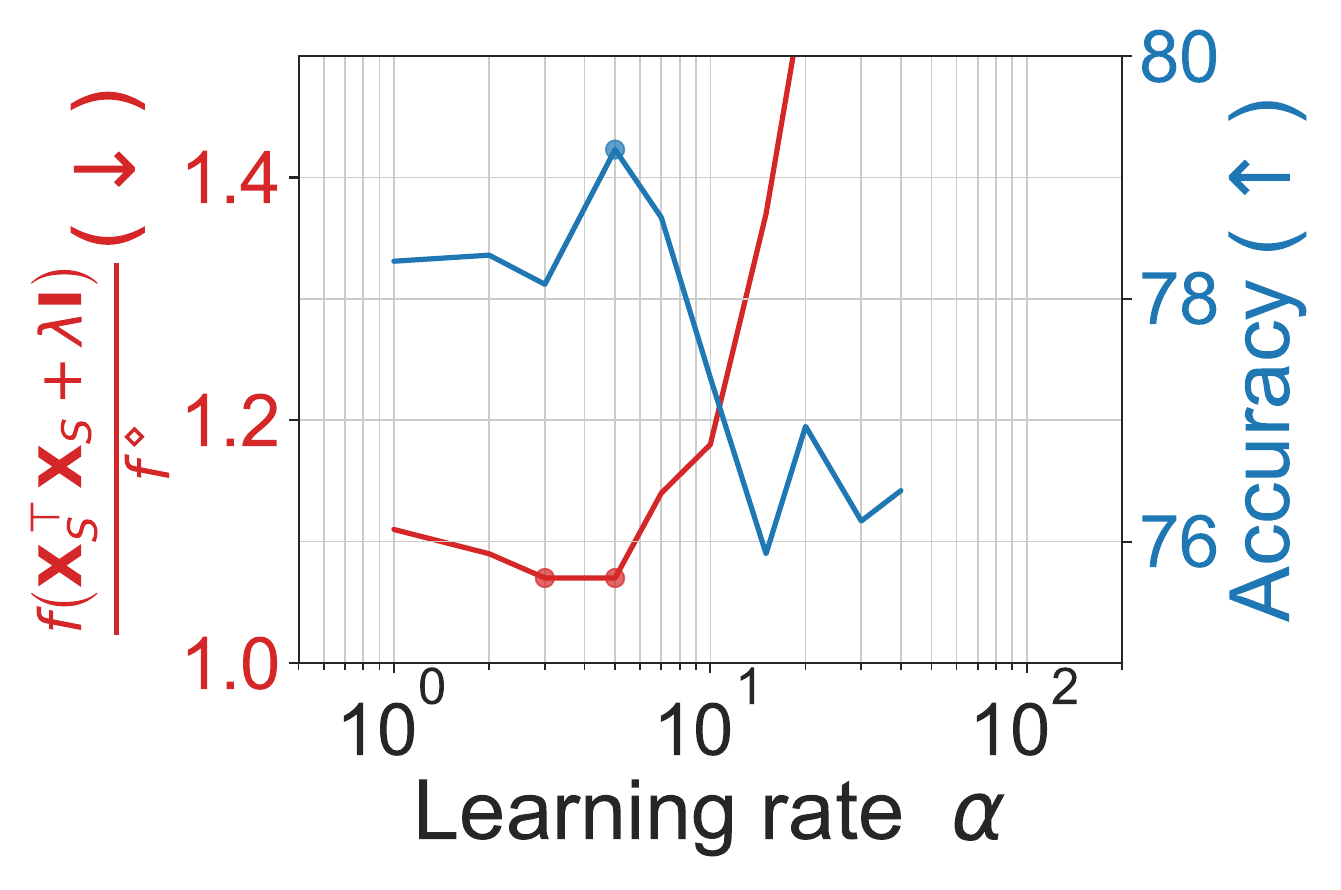}};
\node[inner sep=0pt] (b4) at (6,-11) {\includegraphics[width=5.5cm]{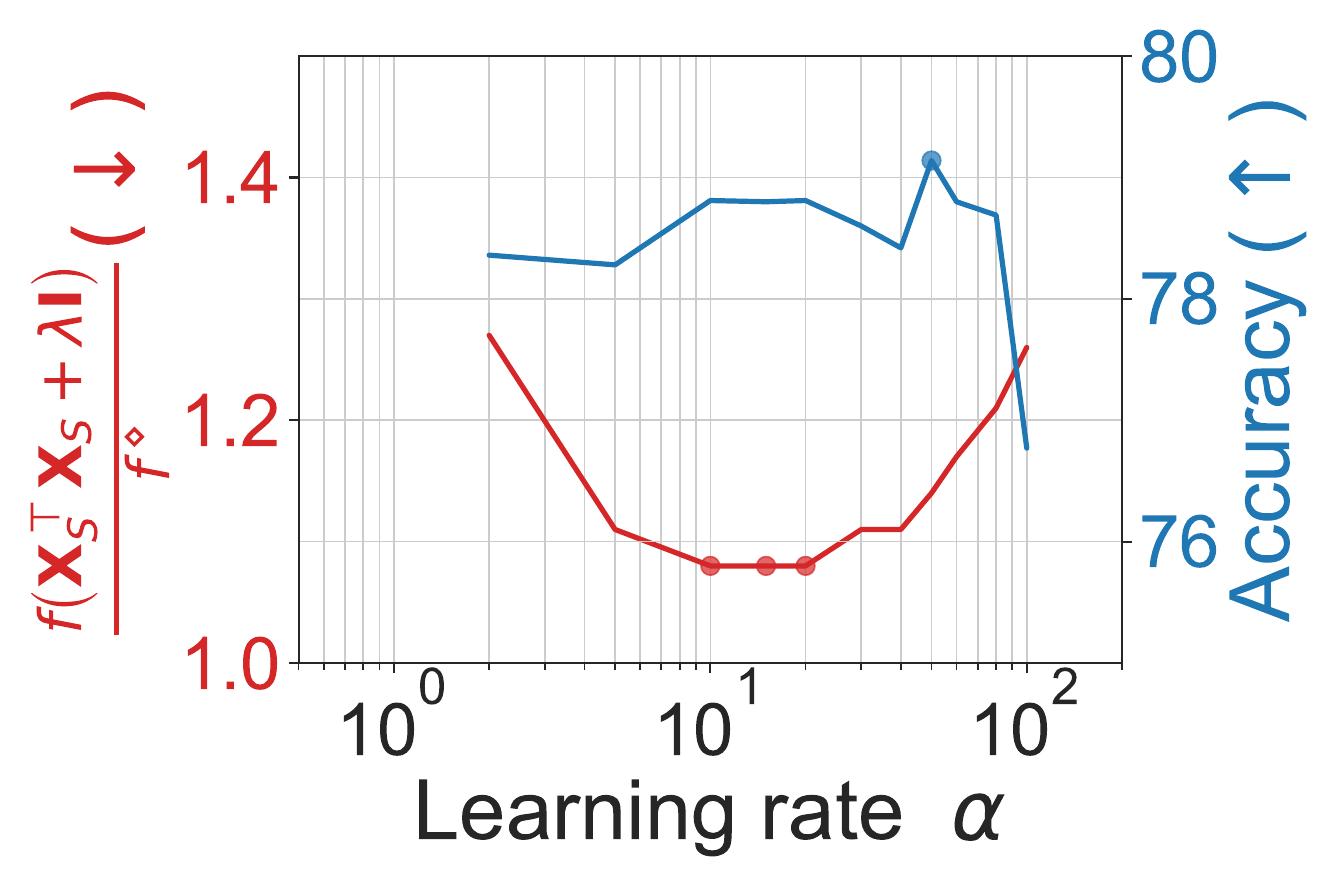}};
\node[rotate=90,anchor=north] at (-4, -8.5) {\color{black}\textbf{ImageNet-50}};
\node[rotate=90,anchor=north] at (-3.4, -7.2) {\color{black}\textbf{$k=50$}};
\node[rotate=90,anchor=north] at (-3.4, -10.5) {\color{black}\textbf{$k=100$}};
\draw[] (-4, -5.4) -- (8.7,-5.4);
\draw[] (3,2.4) -- (3,-12.8);
\draw[] (-2.9,2.4) -- (-2.9,-12.8);
\draw (-4,-12.8) rectangle (8.7,1.8);
\draw (-4,1.8) rectangle (8.7,2.4);
\end{tikzpicture}
\caption{\color{black}Comparison of \regregretent and \regregretlhalf (\cref{algo:rounding-regularize}) for A-design on CIFAR-10 (upper) and ImageNet-50 (lower). For CIFAR-10, the PCA-reduced features with dimension of 100 are used, for ImageNet-50, the PCA-reduced features with dimension of 50 are used. The red lines in the plot represent the relative value of the objective function $\frac{f(\X_S^\top \X_S+\lambda \b I)}{f^\diamond}$, where $\X_S$ is the selected samples and $f^\diamond$ is the optimal value of the relaxed problem~\cref{eq:reg-lp}. The blue lines represent the logistic regression prediction accuracy. The dots on each line represent the optimal points.}
\label{fig:reg-compare-regularized}
\end{figure}

\paragraph{Different optimal design criterion} 
We use CIFAR-10 as an example and compare the performance of Regret-Min when using different objective functions for optimal design. Specifically, we compare (A-, D-, E-, V-, G-) designs and present the results in \cref{fig:cifar10-compare-obj}. The plots display the logistic regression prediction accuracy using Laplacian eigenvectors and PCA features with a dimension of 40. The discrepancies between various design objectives are relatively small, with the largest accuracy difference being less than 2\%. When Laplacian eigenvectors are used, A-design and V-design demonstrate similar performance. While A-design and E-design exhibit similar performance  when PCA features are used. None of the design objectives outperforms the others in all tests, indicating that there is no single objective that consistently yields superior results. An interesting observation is that E-design shows the poorest performance when $k=40$, but it demonstrates the best performance when $k=100$ when Laplacian eigenvectors are utilized. Similarly, a similar pattern is observed for V-design when PCA features are used.

\begin{figure}[tbp]
\centering
  \footnotesize
\begin{tikzpicture}
\node[inner sep=0pt] (a1) at (0,0) {\includegraphics[width=5.7cm,height = 4cm]{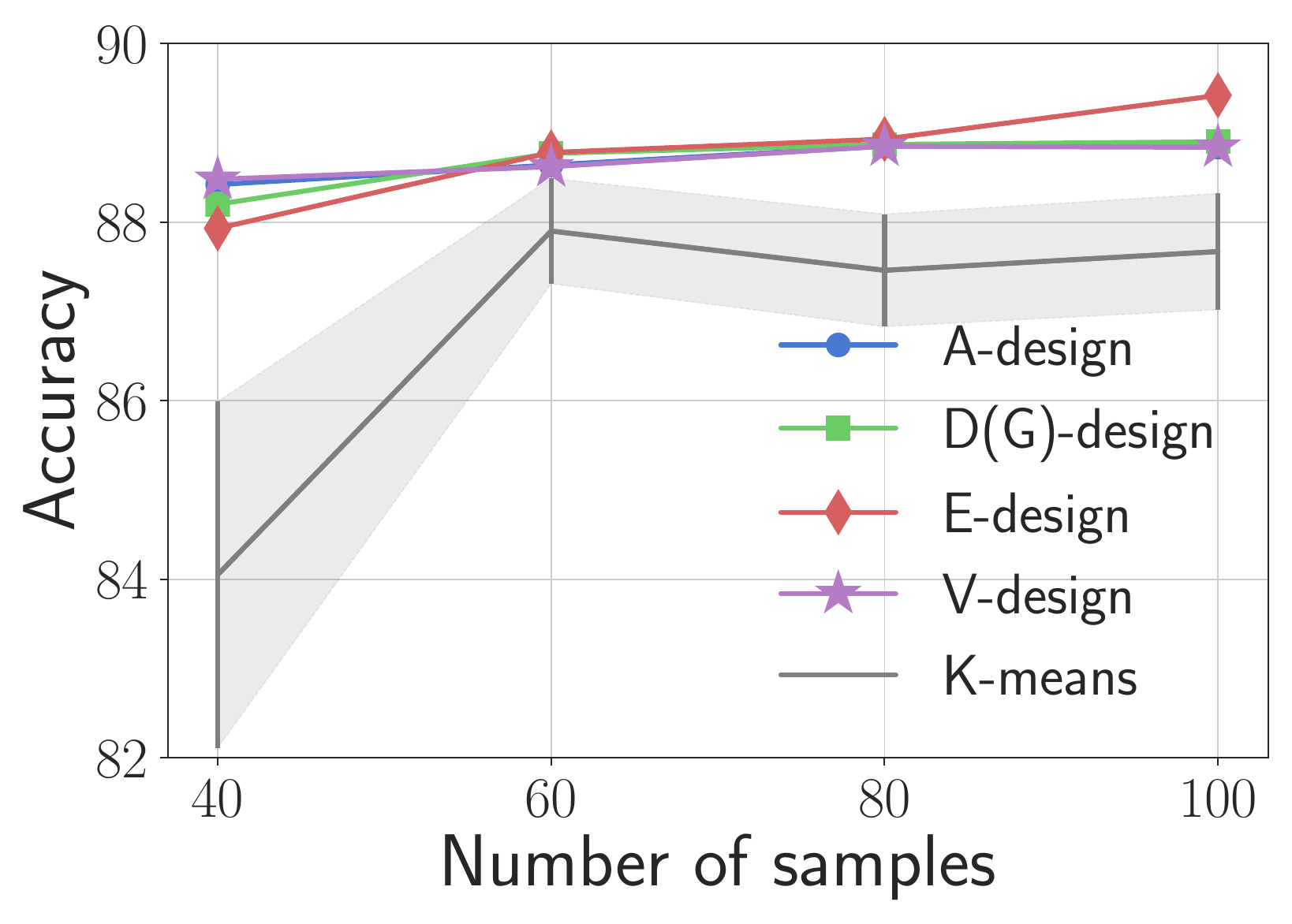}};
\node[inner sep=0pt] (a2) at (7,-0.07) {\includegraphics[width=5.7cm,height = 3.95cm]{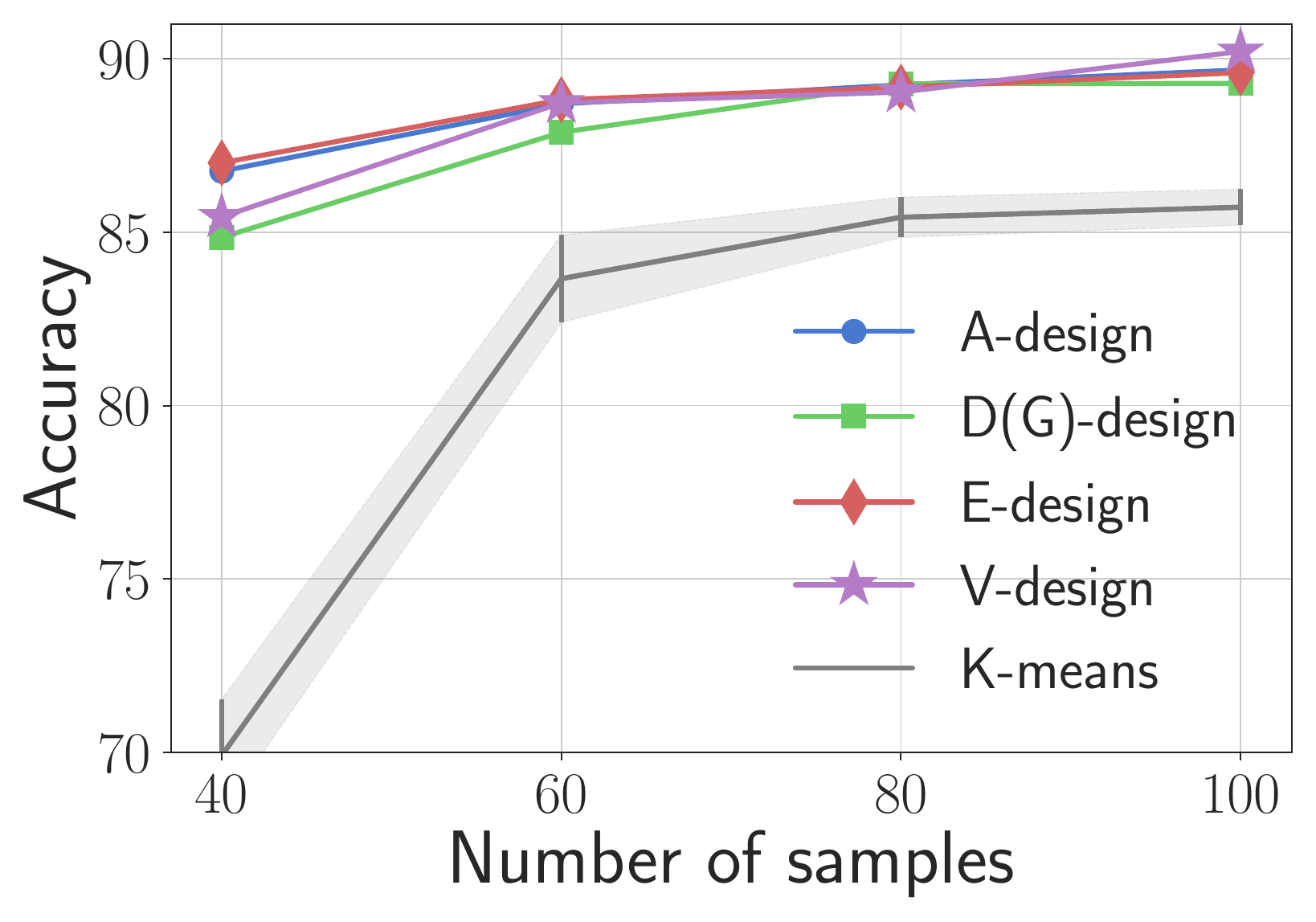}};
\node[] at (0.4,2.2) {\textbf{Laplacian eigenvectors ($d=40$)}};
\node[] at (7.1,2.2) {\textbf{PCA features ($d=40$)}};
\end{tikzpicture}
\caption{Comparison of using different experimental optimal design objectives on CIFAR-10. Left (Right) figure plots the accuracy of logistic regression prediction on unlabeled training data using Laplacian eigenvectors (PCA features) with dimension of 40.}
\label{fig:cifar10-compare-obj}
\end{figure}

\begin{figure}[!t]
\centering
  \footnotesize
\begin{tikzpicture}
\node[inner sep=0pt] (a1) at (0,0) {\includegraphics[width=11cm,height = 1.0cm]{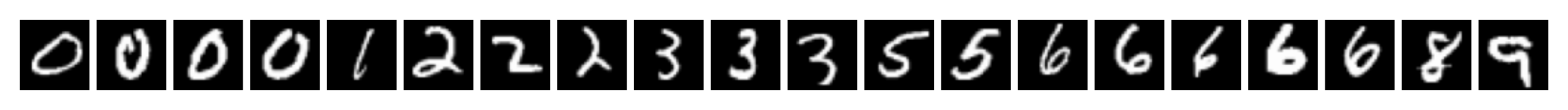}};
\node[inner sep=0pt] (a2) at (0,-.9) {\includegraphics[width=11cm,height = 1.0cm]{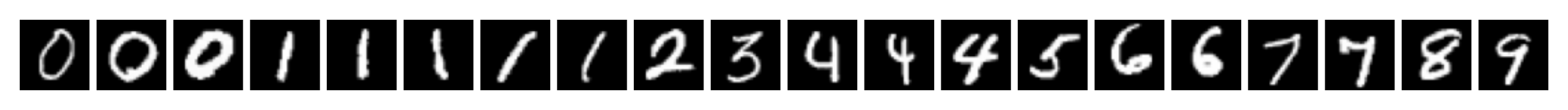}};
\node[inner sep=0pt] (a3) at (0,-1.8) {\includegraphics[width=11cm,height = 1.0cm]{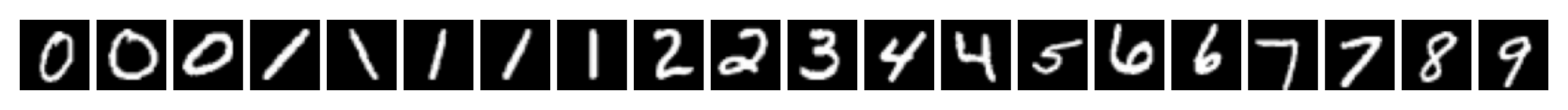}};
\node[inner sep=0pt] (a4) at (0,-2.7) {\includegraphics[width=11cm,height = 1.0cm]{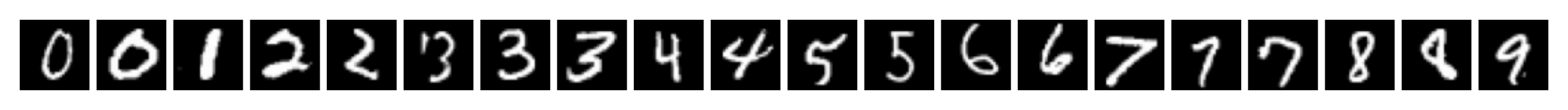}};
\node[inner sep=0pt] (a5) at (0,-3.6) {\includegraphics[width=11cm,height = 1.0cm]{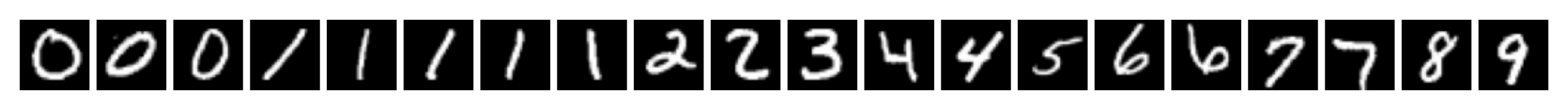}};
\node[]  at (-6.5,0) {Uniform};
\node[]  at (-6.5,-0.9) {K-Means};
\node[]  at (-6.5,-1.8) {RRQR};
\node[]  at (-6.5,-2.7) {MMD};
\node[]  at (-6.5,-3.6) {Regret-Min};
\end{tikzpicture}
\caption{20 samples selected by different sampling methods on MNIST ($d=20$).}
\label{fig:mnist}
\end{figure}

\begin{table}[!t]
\setlength{\tabcolsep}{4pt}
\scriptsize
\centering
      \caption{Logistic regression accuracy of different sampling methods on MNIST ($d=20$).}
  \label{table:mnist}
  \centering
  \begin{tabular}{cccccDEcc}
    \toprule
\# &Uniform&K-Means & RRQR  &MMD &Max-weights &Weighted-sampling & Greedy &Regret-Min \\\midrule
20 & 64.65$\pm$6.71 &88.93$\pm$2.48  &91.76 &85.37 &90.50 &85.71$\pm$3.84 &90.45$\pm$ 1.28 &\textbf{92.02} \\
40    &84.14$\pm$3.75 &91.14$\pm$0.99   &\textbf{92.53}  &88.91 &90.33 &90.34$\pm$0.03 &90.64$\pm$1.61 &\textbf{92.35}\\
60    &88.77$\pm$2.65   &\textbf{93.17}$\pm$0.75 &\textbf{93.02}  &89.37&88.93 &88.95$\pm$0.06 &91.42$\pm$1.21 &\textbf{93.12}\\
    \bottomrule
  \end{tabular}
\end{table}

\begin{table}[tbp]
\setlength{\tabcolsep}{4pt}
\scriptsize
\centering
      \caption{Logistic regression accuracy of different sampling methods on CIFAR-10.}
  \label{table:cifar10-logistic}
  \centering
  \begin{tabular}{cccccDEcc}
    \toprule
 \# &Uniform &K-Means & RRQR  &MMD &Max-weights &Weighted-sampling & Greedy &Regret-Min \\\midrule
\multicolumn{9}{c}{Laplacian eigenvectors (dimension: 10)} \\
\rowcolor{Gray}10 &50.72 $\pm$ 8.91     &71.37$\pm$ 1.25  &72.91  &42.24   &78.40 &72.54$\pm$5.16 &74.20$\pm$4.78 &\textbf{79.80} \\ 
\rowcolor{Gray}20 &65.38 $\pm$ 6.54     &80.18$\pm$ 0.60  &77.31  &66.32  &79.27 &79.44$\pm$0.21 & 77.80$\pm$2.60&\textbf{81.34} \\
\rowcolor{Gray}40 &76.43 $\pm$ 4.27     &80.96$\pm$ 1.11  &77.32 &\textbf{82.95} &77.00 & 76.66$\pm$1.02&80.95$\pm$1.13 &\textbf{82.48} \\
 \multicolumn{9}{c}{PCA features (dimension: 10)} \\
\rowcolor{LightCyan}10 &38.06 $\pm$ 6.27    &54.60$\pm$ 2.61  &51.58 &48.01  &48.39 &40.12$\pm$6.20 &48.06$\pm$6.05 &\textbf{55.60} \\
\rowcolor{LightCyan}20 &48.80 $\pm$ 4.79   &60.76$\pm$ 3.07  &49.78 &59.09  &57.66 &57.27$\pm$2.04 &58.45$\pm$2.90 &\textbf{62.30}\\   
\rowcolor{LightCyan}40 &62.17 $\pm$ 2.51  &\textbf{69.52}$\pm$ 1.89  &57.86 &66.53 &61.12 &61.10$\pm$0.15 &65.22$\pm$1.42 &67.09\\ \midrule
 \multicolumn{9}{c}{Laplacian eigenvectors (dimension: 40)} \\
\rowcolor{Gray}40 &65.01 $\pm$ 4.22    &84.05$\pm$1.94  &87.19 &72.92   &74.35 &68.52$\pm$5.45 &86.05$\pm$2.13 &\textbf{88.42} \\ 
\rowcolor{Gray}60 &73.38 $\pm$ 3.18   &87.90$\pm$0.59  &86.56 &82.81  &80.03 &75.39$\pm$3.15 &86.07$\pm$2.16 &\textbf{88.64} \\ 
\rowcolor{Gray}80 &78.54$\pm$ 2.78   &87.46$\pm$0.63 &88.04 &84.31 &83.09 &82.15$\pm$2.28 &86.96$\pm$1.27 &\textbf{88.85} \\
\rowcolor{Gray}100 &80.96 $\pm$ 3.00   &87.67$\pm$0.65 &\textbf{88.45} &86.86 &85.60 &85.52$\pm$2.01 &87.27$\pm$1.35 &\textbf{88.84} \\
\multicolumn{9}{c}{PCA features (dimension: 40)} \\
\rowcolor{LightCyan}40 &66.08 $\pm$ 4.07   &69.91$\pm$ 1.62  &82.97 &82.38 &78.57 &71.70$\pm$3.87 &79.94$\pm$3.46 &\textbf{86.76} \\
\rowcolor{LightCyan}60 &74.65 $\pm$ 2.49   &83.66$\pm$ 1.26  &81.49 &83.84 &86.66 &82.65$\pm$2.18 &85.93$\pm$0.81 &\textbf{88.70} \\
\rowcolor{LightCyan}80 &79.60 $\pm$ 2.27  &85.43$\pm$ 0.58  &85.59 &87.24 &88.33 &87.34$\pm$0.86 &87.20$\pm$0.40 &\textbf{89.25} \\
\rowcolor{LightCyan}100 &82.19 $\pm$ 2.10  &85.72$\pm$ 0.52 &86.27 &87.67 &88.96 &88.57$\pm$0.84 &87.72 $\pm$0.44 &\textbf{89.68} \\\midrule
\multicolumn{9}{c}{Laplacian eigenvectors (dimension: 100)} \\
\rowcolor{Gray} 100 &68.88 $\pm$ 2.96    &83.12$\pm$1.39  &87.05 &80.21  &83.52 &77.52$\pm$1.32 &86.78$\pm$0.81 &\textbf{87.85} \\ 
\rowcolor{Gray}200 &80.16 $\pm$ 1.98   &85.50$\pm$0.79  &87.84&85.77 &86.67 &86.69$\pm$0.09 &87.43$\pm$0.91 &\textbf{88.64} \\ 
\rowcolor{Gray}500 &87.07 $\pm$ 0.97  &88.75$\pm$0.17 &\textbf{89.18} &88.91  &88.43 &88.34$\pm$0.14 &88.70$\pm$0.66 &\textbf{89.71} \\
\multicolumn{9}{c}{PCA features (dimension: 100)} \\
\rowcolor{LightCyan}100 &71.98 $\pm$ 2.12    &84.77$\pm$0.77 &72.83 &81.50  &82.72 &76.42$\pm$2.29 &84.39$\pm$1.92 &\textbf{88.51}\\
\rowcolor{LightCyan}200 &82.52 $\pm$ 1.78   &87.04$\pm$0.54  &83.15 &87.58  &89.26&88.28$\pm$0.81 &88.81$\pm$0.59 &\textbf{90.11}\\
\rowcolor{LightCyan}500 &88.27 $\pm$ 0.61    &89.12$\pm$0.44 &85.84 &89.21 &\textbf{90.58} &\textbf{90.52}$\pm$0.12 &\textbf{90.83}$\pm$0.52 &\textbf{90.98}\\
 \bottomrule
  \end{tabular}
\end{table}

\begin{table}[tbp]
\scriptsize
\setlength{\tabcolsep}{4pt}
\centering
      \caption{Logistic regression accuracy of different sampling methods on CIFAR-10 (100 features using PCA). For Regularized-Regret-Min, we use regularization parameter $\lambda = 10^{-5}\times  k$ (where $k$ is the number of samples to be selected). }
  \label{table:cifar10-reg}
  \centering
  \begin{tabular}{cccccDEcG}
    \toprule
\#&Uniform &K-Means & RRQR  &MMD &Max-weights &Weighted-sampling & Greedy &Regularized-Regret-Min \\\midrule
20 &32.14$\pm$ 3.24 &46.77$\pm$2.92 &22.46 &\textbf{49.83} &33.63 &33.64$\pm$3.18 &43.88  & 45.37\\
40 &50.06 $\pm$2.83 &\textbf{69.97}$\pm$ 1.69 &47.58 &66.59  &50.86 &48.66$\pm$3.74 &62.78  &67.02\\
60 &60.34$\pm$2.56 &78.69$\pm$1.55 &64.40 &77.19 &67.47 &65.02$\pm$2.73 &73.03  &\textbf{81.14}\\
80 &67.11$\pm$2.39 &83.09$\pm$0.99 &81.66 &80.826 &77.97 &70.91$\pm$2.51 &76.87  &\textbf{86.30}\\
 \bottomrule
  \end{tabular}
\end{table}

\begin{table}[tbp]
\scriptsize
\setlength{\tabcolsep}{4pt}
\centering
      \caption{FixMatch prediction accuracy on unlabeled training data of CIFAR-10.}
  \label{table:cifar10-fixmatch}
  \centering
  \begin{tabular}{cccccc}
    \toprule
\#samples &Uniform &K-Means & RRQR  &MMD &Regret-Min \\\midrule
 \multicolumn{6}{c}{Laplacian eigenvectors} \\
\rowcolor{Gray}10 &46.50 $\pm$ 13.48    &52.85 $\pm$ 4.20  &\textbf{74.60} &45.30 &73.50 \\ 
\rowcolor{Gray}40 &75.44 $\pm$ 8.95   &77.30 $\pm$ 2.33  &\textbf{93.18} &90.08  &88.41 \\ 
\rowcolor{Gray}100 &90.69$\pm$2.86 &88.29$\pm$0.47 &87.55 &\textbf{93.11}  &\textbf{93.28} \\
\multicolumn{6}{c}{PCA features} \\
\rowcolor{LightCyan}10 &46.50 $\pm$ 13.48    &57.62 $\pm$ 1.33  &28.57 &58.10 &\textbf{64.99} \\ 
\rowcolor{LightCyan}40 &75.44 $\pm$ 8.95   &\textbf{91.20} $\pm$ 1.21  &74.44 &79.80 &\textbf{91.56} \\ 
\rowcolor{LightCyan}100 &90.69$\pm$2.86 &92.43$\pm$0.84 &\textbf{93.52}  &91.64  &\textbf{93.47} \\
 \bottomrule
  \end{tabular}
\end{table}

\begin{figure}[tbp]
\centering
  \footnotesize
\begin{tikzpicture}
\node[inner sep=0pt] (a1) at (0,-.3) {\includegraphics[width=6cm,height=5cm]{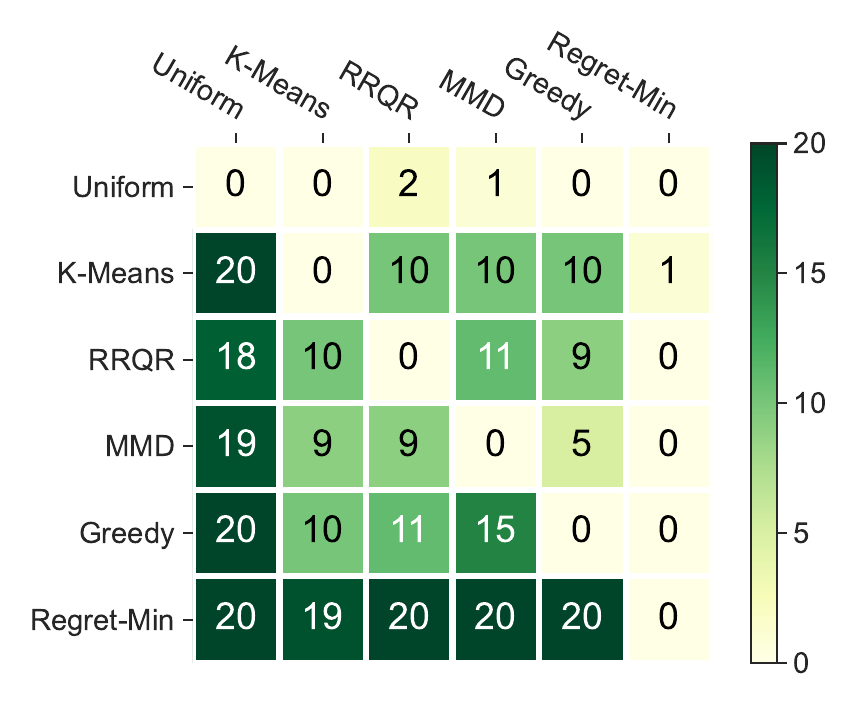}};
\end{tikzpicture}
\caption{\color{black}Pairwise comparison of different sampling methods in  tests of CIFAR-10 in \cref{table:cifar10-logistic}. For element in the $i$-th row and $j$-th column, the value indicates how many times method $i$ outperforms method $j$ out of 20 total tests.}
\label{fig:cifar10-compare}
\end{figure}

\begin{table}[tbp]
\scriptsize
\setlength{\tabcolsep}{4pt}
\centering
      \caption{Logistic regression accuracy of different sampling methods on class-balanced and class-imbalanced ImageNet-50 ($d=50$).}
  \label{table:image50-100}
  \centering
  \begin{tabular}{cccccDEcc}
    \toprule
\# &Uniform &K-Means & RRQR  &MMD &Max-weights &Weighted-sampling & Greedy  &Regret-Min \\
\midrule
 \multicolumn{9}{c}{Balanced} \\
\rowcolor{Gray}50 &48.04$\pm$ 3.73 &71.48$\pm$1.89 &73.84  &58.66 &63.71 &53.35$\pm$4.57 &73.53$\pm$1.03  &\textbf{77.18}  \\
\rowcolor{Gray}100    &63.84$\pm$3.47 &81.17$\pm$0.58    &{\color{black}76.30}  & 75.67&78.16 &75.38$\pm$1.09 &78.14$\pm$1.40   &\textbf{81.87} \\
\rowcolor{Gray}  150  &72.10$\pm$2.62     &\textbf{83.80}$\pm$0.61  &{\color{black}78.74}    &80.90  &80.65 &79.51$\pm$1.09  &80.08$\pm$1.28   &82.93  \\
 \multicolumn{9}{c}{Imbalanced} \\
\rowcolor{LightCyan}50 &46.86$\pm$ 3.33 &69.69$\pm$1.86 &73.27 &54.00&67.27 &56.92$\pm$3.04 &74.10$\pm$0.86   &\textbf{77.18} \\
\rowcolor{LightCyan}100    &62.28$\pm$3.17 &80.29$\pm$1.18     &{\color{black}74.74}    &69.96 &78.98  &77.32$\pm$1.80  &77.89$\pm$1.47 &\textbf{81.88} \\
\rowcolor{LightCyan}   150  &70.13$\pm$3.20    &81.80$\pm$0.63  &{\color{black}75.84} &76.53 &80.13  &79.90$\pm$0.27 &79.90$\pm$1.32    &\textbf{82.81} \\
    \bottomrule
  \end{tabular}
\end{table}

\begin{table}[tbp]
\scriptsize
\setlength{\tabcolsep}{4pt}
\centering
      \caption{Number of classes covered by the 50 samples selected by different methods on ImageNet-50 (50 classes in total).}
  \label{table:image50-100-classnumber}
  \centering
  \begin{tabular}{cccccDEcc}
    \toprule
\# &Uniform &K-Means & RRQR  &MMD &Max-weights &Weighted-sampling & Greedy  &Regret-Min \\
\midrule
 \multicolumn{9}{c}{Balanced} \\
\rowcolor{Gray} 50 &31.6$\pm$ 2.7 &41.7 $\pm$ 0.9  &44 &41 &40 &32.6$\pm$1.5 & 44.1$\pm$0.5  &\textbf{45}\\
 \multicolumn{9}{c}{Imbalanced} \\
\rowcolor{LightCyan} 50&29.8$\pm$ 2.9 &39.9$\pm$1.1  &41  &32 &39 &34.3$\pm$2.3 &42.8$\pm$0.75 &\textbf{45}\\
    \bottomrule
  \end{tabular}
\end{table}
\paragraph{Regret-Min vs. other methods} Regret-Min outperforms other methods in most of our experiments. The details for the comparison on each dataset are as follows.

\begin{itemize}[leftmargin=*]
\item MNIST: We compare various sampling methods for selecting $k \in \{20, 40, 60\}$ samples to label. The logistic regression accuracy on the whole unlabeled training set is reported in \cref{table:mnist}. Regret-Min outperforms the other methods when the sample size is 20 (i.e., 2 labeled samples per class). For sample sizes of 40 and 60, K-Means, {\color{black}RRQR}, and Regret-Min achieve comparable performance. The selected sets of 20 samples from each method are shown in \cref{fig:mnist}. Except for Uniform sampling, all methods successfully select samples that cover all data classes.

\item CIFAR-10: In \cref{table:cifar10-logistic}, we present the results of logistic regression prediction accuracy on unlabeled training data using various sampling methods. We evaluate the performance for $d\in\{10, 40, 100\}$. Regret-Min consistently outperforms the other methods in most of the tests. Additionally, when $d=40$ or 100 and the data utilizes eigenvectors of normalized graph Laplacian, RRQR shows similar performance to Regret-Min. Furthermore, in \cref{fig:cifar10-images}, we display the images of the 40 CIFAR-10 samples selected  by different methods.

We further evaluate the regularized version of our algorithm \cref{algo:rounding-regularize}, which is tailored for the optimal design for ridge regression setting.  We consider the PCA features of CIFAR-10 with $d=100$, denoted as $\X \in\mathbb{R}^{50000\times 100}$. Our goal is to compare various methods for selecting a subset of samples with sizes $k\in\{20, 40, 60, 80\}$, where $k < d$. In such regime, $\X_S^\top \X_S$ becomes rank-deficient, rendering the original optimal design problem ill-posed. Introducing regularization resolves this issue by incorporating a penalty term. The logistic regression results are summarized in \cref{table:cifar10-reg}. For $k=20$, MMD achieves the best performance. When $k=40$, K-Means outperforms other methods, with Regret-Min ranking second. Notably, Regret-Min achieves the best performance for $k=60$ and 80. Combining these findings with our previous CIFAR-10 experiments, we observe that Regret-Min consistently performs best when $k$ approaches or exceed $d$.

To further evaluate the representativeness of the selected samples, we employ FixMatch~\cite{sohn2020fixmatch}, a state-of-the-art semi-supervised learning method. The prediction accuracy on the unlabeled training data is reported in \cref{table:cifar10-fixmatch}. When sample selection is based on PCA features, Regret-Min achieves the best performance. In contrast, when using Laplacian eigenvectors for sample selection, RRQR performs best for $k=10$ and 40. However, it is worth noting that RRQR yields the worst performance when PCA features with dimensions 10 or 40 are used for selection.

In order to conduct a comprehensive comparison of the sampling methods, we aggregate the results of all the 30 tests on CIFAR-10  in \cref{table:cifar10-logistic,table:cifar10-reg,table:cifar10-fixmatch}. For each test, we rank the five methods and generate a pairwise comparison matrix, as depicted in \cref{fig:cifar10-compare}. It is evident from the matrix that Regret-Min consistently outperforms the other methods overall.


\item ImageNet-50:  \cref{table:image50-100} reports logistic regression accuracy on both class-balanced and class-imbalanced versions of the ImageNet-50 dataset, for varying numbers of selected samples $k \in \{50, 100, 150\}$.

On the class-balanced dataset, Regret-Min outperforms all other methods when $k = 50$ and $100$. At $k = 150$, K-Means achieves the highest accuracy, with Regret-Min ranking second. On the class-imbalanced dataset, Regret-Min consistently outperforms all baselines across all values of $k$. Interestingly, the performance of Regret-Min, Greedy, and Weighted Sampling remains stable when transitioning from the balanced to the imbalanced dataset. In contrast, Uniform, K-Means, and MMD exhibit a noticeable drop in performance under class imbalance.

For the $k=50$ setting, \cref{table:image50-100-classnumber} shows the number of classes represented in the selected samples for each method. Regret-Min consistently selects samples covering the largest number of classes in both the balanced and imbalanced datasets compared to other methods.

 We provide the full list of ImageNet-50 class names in \cref{table:image50-classname}. Additionally, \cref{fig:imagenet-fig-1,fig:imagenet-fig-2,fig:imagenet-fig-3} visualize the 100 samples selected by different methods. All images are center-cropped to a size of (224, 224, 3) from the original ImageNet dataset.

\end{itemize}

\section{Conclusions}
In this work, we explored extensions and applications of the regret minimization-based approach introduced in~\cite{design} for solving the optimal design problem (referred to as "Regret-min"). First, we extended Regret-min to incorporate the entropy regularizer by deriving the corresponding sparsification objective and establishing sample complexity bounds required to achieve near-optimal performance. Our analysis shows that this complexity matches that of the original $\ell_{1/2}$-regularized approach proposed in~\cite{design}. Experimental results on synthetic datasets confirmed that both regularizers perform similarly. However, on real-world classification tasks, the entropy regularizer demonstrated better performance in tuning the learning rate compared to the $\ell_{1/2}$ regularizer.

We further extended the Regret-min framework to address the regularized optimal design problem in the context of ridge regression. For both the entropy and $\ell_{1/2}$ regularizers, we derived the appropriate sparsification objectives and established sample complexity guarantees for achieving near-optimal performance.

Finally, we applied the Regret-min method to select representative samples from unlabeled datasets for classification tasks. Through experiments on real-world image datasets, we compared Regret-min against other representative selection methods. The results showed that Regret-min outperforms other methods in most of the tests.

\section*{Acknowledgments}
This material is based upon work supported by NSF award OAC 22042261; and  Cooperative Agreement 2421782 and the Simons Foundation award MPS-AI-00010515 (NSF-Simons AI Institute for Cosmic Origins---CosmicAI, https://www.cosmicai.org/); by the U.S. Department of Energy, Office of Science, Office of Advanced Scientific Computing Research, Applied Mathematics program, Mathematical Multifaceted Integrated Capability Centers (MMICCS) program, under award number DE-SC0023171; by the U.S. Department of Energy, National Nuclear Security Administration Award Number DE-NA0003969; and by the U.S. National Institute on Aging under award number  R21AG074276-01. Any opinions, findings, and conclusions or recommendations expressed herein are those of the authors and do not necessarily reflect the views of the DOE, NIH, and NSF. Computing time on the Texas Advanced Computing Centers Stampede system was provided by an allocation from TACC and the NSF.

\appendix

\section{Background of optimal experimental design with linear regression}\label{sec:design-meaning}

We consider the linear regression model introduced in \Cref{sec:linear-regression}. Define the error of weights by $\Delta\what\triangleq \what - \w_*$. By~\cref{eq:ls-what}, we have
\begin{align}
    \Delta\what = (\X_S^\top \X_S)^{-1} \X_S^\top \bepsilon_S, \quad \Delta\what \Delta\what^\top = (\X_S^\top \X_S)^{-1} \X_S^\top \bepsilon_S\bepsilon_S^\top \X_S (\X_S^\top \X_S)^{-1}.\nonumber
\end{align}
Thus the expectation of the error on weights is $\E[\Delta\what] = \b 0$. Since we assume the noise for  observation is independent, we have $\E[\bepsilon_S \bepsilon_S^\top] = \sigma^2 \b I$.
Thus the covariance matrix of $\Delta\what$ is
\begin{align}\label{eq:ls-cov-weights}
    \E_{\bepsilon_S}[\Delta\what \Delta\what^\top]& = (\X_S^\top \X_S)^{-1} \X_S^\top \E[\bepsilon_S\bepsilon_S^\top] \X_S (\X_S^\top \X_S)^{-1}\nonumber\\
    & = \sigma^2 (\X_S^\top \X_S)^{-1}\X_S^\top \X_S  (\X_S^\top \X_S)^{-1} = \sigma^2  (\X_S^\top \X_S)^{-1}.
\end{align}

(A-,D-,E-) optimal design objectives aim to select $S$ to minimize different scalar functions on the covariance matrix of the weights error, i.e. \cref{eq:ls-cov-weights}. In particular,
\begin{itemize}
    \item A-optimal design minimizes the trace of the covariance matrix, i.e. $\Tr\big((\X_S^\top \X_S)^{-1}\big)$. Note that this is equivalent to minimizes the $\ell_2$-norm of the error on weights since $\E_{\bepsilon_S}[\|\Delta\w \|_2^2] = \sigma^2\Tr\big((\X_S^\top \X_S)^{-1}\big)$.
    \item D-optimal design minimizes the log-determinant of  the covariance matrix, i.e. $\log \det (\X_S^\top \X_S)^{-1} = -\log \det (\X_S^\top \X_S)$.
    \item E-optimal design minimizes the spectral norm of covariance matrix, i.e. $ \| (\X_S^\top \X_S)^{-1}\|_2$.
\end{itemize}

Statistical meaning of (G-,V -) optimal design is related to the error on the predictions of the observations for all  points in the design matrix $\X$. For any point $\x_i$ ($i\in[n]$), the prediction of observation is $\yhat_i = \what^\top \x_i$, the error on predicted observation is $\Delta \yhat_i =\yhat_i - \w_*^\top \x_i = \Delta \what^\top \x_i $. Then the expectation of the square error of prediction for point $\x_i$ is 
\begin{align}\label{eq:ls-yerror2}
    \E_{\bepsilon_S}[\Delta \yhat_i^2]& = \E_{\bepsilon_S}[\x_i^\top \Delta\what \Delta \what^\top \x_i] = \x_i^\top \E_{\bepsilon_S}[ \Delta\what \Delta \what^\top] \x_i\nonumber\\
    &=  \sigma^2 \x_i^\top(\X_S^\top \X_S)^{-1} \x_i
\end{align}
where the last equality follows by~\cref{eq:ls-cov-weights}. 

\begin{itemize}
    \item V-optimal aims to minimize the average square error over all points in $\X$: 
\begin{align}\label{eq:ls-avg-yerror}
    \E_{\bepsilon_S}\bigg[\frac{1}{n}\sum_{i \in [n]}\Delta \yhat_i^2\bigg] = \frac{\sigma^2}{n}\big\langle(\X_S^\top \X_S)^{-1}, \X^\top \X \big\rangle.
\end{align}
    \item G-optimal design selects $S$ such that the maximum square error over all points in $\X$ is minimized, i.e. $\max_{i\in[n]} \big\langle(\X_S^\top \X_S)^{-1}, \x_i^\top \x_i \big\rangle$.
\end{itemize}


\section{Supporting Lemmas}\label{sec:support-lemma}
\begin{lemma}[Theorem 5.39 in \cite{vershynin2011introduction}]\label{lm:ls-subgaussian}
    Let $\A$ be the $n\times d$ matrix whose rows $\A_i$ are independent sub-Guassian isotropic random vectors in $\mathbb{R}^d$. Then for any $t\geq 0$, with probability at least $1-2\exp(-c t^2)$, we have
    \begin{align}
    \|\frac{1}{n}\A^\top \A - \b I\|_2 \leq \max(\gamma, \gamma^2), \quad {\color{black}\text{where }} \gamma = C\sqrt{\frac{d}{n}} + \frac{t}{\sqrt{n}}.
    \end{align} 
    Here $c = c_K$ and $C = C_K$ depend only on the sub-Gaussian norm $K = \max_i \|\A_i\|_{\psi_2}$.
 \end{lemma}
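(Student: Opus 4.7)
The plan is to reduce the operator-norm bound on $M := \tfrac{1}{n}\A^\top\A - \b I$ to a pointwise quadratic-form estimate, handled via sub-exponential concentration, and then to promote this pointwise bound to a uniform bound over the unit sphere via a covering-net argument.

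First, I would exploit the symmetry of $M$ to write $\|M\|_2 = \sup_{x \in S^{d-1}} |\langle M x, x\rangle|$. Using that the rows $\A_i$ are isotropic, $\E[\langle \A_i, x\rangle^2] = 1$, so $\langle M x, x\rangle = \tfrac{1}{n}\sum_{i=1}^n (\langle \A_i, x\rangle^2 - 1)$. For each fixed $x \in S^{d-1}$, $\langle \A_i, x\rangle$ is centered sub-Gaussian with $\|\langle \A_i, x\rangle\|_{\psi_2} \leq K$, hence $\xi_i(x) := \langle \A_i, x\rangle^2 - 1$ is a centered sub-exponential with $\|\xi_i(x)\|_{\psi_1} \lesssim K^2$. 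Bernstein's inequality for sums of i.i.d.\ sub-exponentials then yields, for any $u \geq 0$,
\[
P\!\left(\left|\tfrac{1}{n}\sum_{i=1}^n \xi_i(x)\right| \geq u\right) \leq 2\exp\!\bigl(-c_K\, n \min(u^2, u)\bigr).
\]
The two regimes in the exponent---quadratic for small $u$ and linear for large $u$---are precisely the source of the $\max(\gamma, \gamma^2)$ form in the conclusion.

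Next, I would upgrade the pointwise bound to a uniform one via a standard $(1/4)$-net $\mathcal{N} \subset S^{d-1}$ with $|\mathcal{N}| \leq 9^d$, using the elementary comparison $\|M\|_2 \leq 2\sup_{x \in \mathcal{N}} |\langle M x, x\rangle|$. Union-bounding over $\mathcal{N}$ multiplies the failure probability by $9^d = e^{d\log 9}$. Choosing $u = \max(\gamma, \gamma^2)$ with $\gamma = C_K\sqrt{d/n} + t/\sqrt{n}$, the inequality $n \min(u^2, u) \gtrsim (C_K\sqrt{d} + t)^2$ shows that, for $C_K$ large enough in terms of $c_K$ and $\log 9$, the $d\log 9$ term from the net is absorbed into the $C_K\sqrt{d/n}$ piece of $\gamma$, leaving a residual tail of $2\exp(-c_K t^2)$.

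The main obstacle is the calibration of constants so that the deviation emerges in the exact form $\max(\gamma, \gamma^2)$ with $\gamma = C\sqrt{d/n} + t/\sqrt{n}$. Concretely, one has to verify that both branches of the Bernstein exponent invert to the same $\gamma$, that the net cardinality $\log|\mathcal{N}| = O(d)$ is absorbed without polluting the $t/\sqrt{n}$ piece, and that the dependence on the sub-Gaussian parameter $K$ is collected into the single constants $c_K, C_K$. None of this is conceptually deep, but it is the only non-routine portion of the argument.
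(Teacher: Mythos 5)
The paper does not prove this lemma at all—it imports it verbatim, with citation, as Theorem 5.39 of Vershynin's notes—and your proposal is a faithful reconstruction of exactly the proof given in that source: reduction of $\|\tfrac{1}{n}\A^\top\A - \b I\|_2$ to quadratic forms, Bernstein's inequality for the centered sub-exponential variables $\langle \A_i,x\rangle^2-1$, a $(1/4)$-net of cardinality at most $9^d$ with the factor-$2$ comparison lemma, and the calibration $\min(u,u^2)=\gamma^2$ for $u=\max(\gamma,\gamma^2)$ that absorbs the $d\log 9$ union-bound cost into the $C_K\sqrt{d/n}$ term. Your argument is correct and coincides with the standard proof of the cited result; nothing is missing.
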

 
\begin{lemma}[Woodbury matrix identity]\label{lm:woodbury} 
Let $\A$ and $\b C$ be invertible matrices of size $n_A\times n_A$ and $n_C\times n_C$, $\b \U$ and $\b V$ are matrices of size $n_A\times n_C$ and $n_C \times n_A$, the following identity holds:
\begin{align}
    (\A + \b U \b C \b V)^{-1} = \A^{-1} - \A^{-1}\b U\big(\b C^{-1} + \b V \A^{-1} \b U\big)^{-1} \b V \A^{-1}.
\end{align}
    
\end{lemma}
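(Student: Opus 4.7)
The plan is to establish the Woodbury matrix identity by direct verification: multiply the claimed inverse on the right of $(\A + \b U \b C \b V)$ and check that the product simplifies to the identity matrix. This is the cleanest route because it avoids any limiting or perturbation arguments and uses only the two invertibility hypotheses (on $\A$ and on $\b C$) together with the implicit assumption that the capacitance matrix $\b C^{-1} + \b V \A^{-1} \b U$ is invertible (which is needed for the right-hand side to make sense at all).

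Concretely, I would expand
\begin{align*}
(\A + \b U \b C \b V)\bigl[\A^{-1} - \A^{-1}\b U(\b C^{-1} + \b V \A^{-1} \b U)^{-1} \b V \A^{-1}\bigr]
\end{align*}
into four terms and group them as $\b I + \b U \b C \b V \A^{-1} - \b U \b M - \b U \b C \b V \A^{-1} \b U \b M$, where I abbreviate $\b M := (\b C^{-1} + \b V \A^{-1}\b U)^{-1}\b V \A^{-1}$. The key algebraic step is to factor $\b U$ from the last three terms and then insert $\b C(\b C^{-1} + \b V \A^{-1}\b U)(\b C^{-1} + \b V \A^{-1}\b U)^{-1}$ in the middle term so that everything collapses. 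After this manipulation the three non-identity terms telescope to $\b 0$, leaving $\b I$, which proves the identity.

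The only mildly delicate point, and the one I would flag as the main obstacle, is the bookkeeping around non-commutativity: $\A$, $\b C$, $\b U$, $\b V$ cannot be reordered, and some of them are rectangular, so the factorization has to be done carefully on the correct side. I would write out the factor $\b U\bigl[\b C \b V \A^{-1} - \b M - \b C \b V \A^{-1}\b U \b M\bigr] = \b U\bigl[\b C(\b C^{-1} + \b V\A^{-1}\b U) - \b I\bigr]\b M \cdot (\text{something})$ explicitly and verify dimensions match at each step. Once the right multiplication gives $\b I$, a symmetric argument (or appeal to the fact that a one-sided inverse of a square matrix is automatically two-sided) completes the proof.

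If a more conceptual presentation were desired, I would alternatively derive the identity by inverting the $2\times 2$ block matrix $\begin{pmatrix}\A & -\b U \\ \b V & \b C^{-1}\end{pmatrix}$ in two different ways (eliminating the $(1,2)$ block first versus the $(2,1)$ block first) and equating the $(1,1)$-block expressions of the inverse; this yields the Woodbury formula as a Schur-complement identity. I would keep this as a backup because the direct-multiplication proof is shorter and self-contained.
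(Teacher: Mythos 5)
The paper offers no proof of this lemma to compare against: it is quoted as a classical identity (used later in the proof of \cref{prop:guarantee-l12-reg}) and stated without argument. Your direct-verification proof is correct and standard. Writing $\b M := (\b C^{-1}+\b V\A^{-1}\b U)^{-1}\b V\A^{-1}$, the expansion gives $\b I + \b U\bigl[\b C\b V\A^{-1} - \b M - \b C\b V\A^{-1}\b U\b M\bigr]$, and the bracket vanishes because $\b M + \b C\b V\A^{-1}\b U\b M = (\b I+\b C\b V\A^{-1}\b U)\b M = \b C(\b C^{-1}+\b V\A^{-1}\b U)\b M = \b C\b V\A^{-1}$; your appeal to the fact that a one-sided inverse of a square matrix is two-sided then finishes it. Your flag that the statement tacitly assumes invertibility of the capacitance matrix $\b C^{-1}+\b V\A^{-1}\b U$ is a genuine and worthwhile point that the paper's formulation glosses over; one can alternatively derive it from invertibility of $\A+\b U\b C\b V$ via $\det(\A+\b U\b C\b V)=\det(\A)\det(\b C)\det(\b C^{-1}+\b V\A^{-1}\b U)$. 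Your Schur-complement backup (equating the two expressions for the $(1,1)$ block of the inverse of the bordered matrix) is likewise a valid standard derivation. The only blemish is cosmetic: the dangling ``$\cdot(\text{something})$'' in your factorization sketch should be replaced by the explicit telescoping identity above, which your own middle-term insertion already supplies.
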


\begin{lemma}[Sherman-Morrison matrix identity]\label{lm:sherman-morrison}
$\A \in\mathbb{R}^n$ is invertible matrix, $\b u,\b v\in\mathbb{R}^n$ are column vectors, then $\A + \b u \b v^\to$ is invertible iff $1 + \b v^\top \A^{-1} \b u \neq 0$. In this case,
\begin{align}
    (\A + \b u \b v^\top)^{-1} = \A^{-1} - \frac{\A^{-1} \b u\b v^\top \A^{-1}}{1 + \b v^\top \A^{-1} \b u}
\end{align}
    
\end{lemma}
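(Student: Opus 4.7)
The plan is to assemble three ingredients that have already been put in place: the sufficient condition \cref{prop:reg-goal}, the FTRL-based per-round lower bound on $\lambda_{\min}(\F)$ in \cref{thm:reg-min-eigen}, and the per-step incremental guarantees \cref{prop:guarantee-entropy-reg,prop:guarantee-l12-reg}. By \cref{prop:reg-goal}, it suffices to show $\tau := \lambda_{\min}\!\bigl(\XX^\top \b S\, \XX + \lambda\bSigma_{\diamond}^{-1}\bigr) \geq 1/(1+\epsilon)$, since this implies $f(\X^\top\b S\X+\lambda\b I)\leq \tau^{-1}f^*\leq (1+\epsilon)f^*$. Because \cref{algo:rounding-regularize} sets $\F_t = \xx_{i_t}\xx_{i_t}^\top + (\lambda/k)\bSigma_{\diamond}^{-1}$, telescoping gives $\sum_{t=1}^k \F_t = \XX^\top\b S\XX + \lambda \bSigma_{\diamond}^{-1}$, so the target reduces to lower-bounding $\lambda_{\min}\!\bigl(\sum_{t\in[k]}\F_t\bigr)$.

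The first main step is to argue that the sample-selection rule in \cref{def:reg-opt-sample} is exactly the argmax (over $i\in[n]$) of the per-step gain that appears inside the bounds \cref{eq:reg-ent-bound} and \cref{eq:reg-l12-bound}. Therefore the selected gain at round $t$ is at least the maximum over $i$ of that quantity, which by \cref{prop:guarantee-entropy-reg} is $\geq 1/(k+\alpha d)$ in the entropy case, and by \cref{prop:guarantee-l12-reg} is $\geq (1-\alpha/(2k))/(k+\alpha\sqrt{d})$ in the $\ell_{1/2}$ case. Summing over the $k$ rounds and invoking \cref{thm:reg-min-eigen} gives the clean telescoped bounds
\[
  \lambda_{\min}(\F)\,\geq\, -\tfrac{\log d}{\alpha} + \tfrac{k}{k+\alpha d}\quad\text{(entropy)},\qquad
  \lambda_{\min}(\F)\,\geq\, -\tfrac{2\sqrt{d}}{\alpha} + \tfrac{k - \alpha/2}{k+\alpha\sqrt{d}}\quad(\ell_{1/2}).
\]

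The second main step is to plug in the prescribed parameters. For case (a), $\alpha=4\log d/\epsilon$ makes $\log d/\alpha = \epsilon/4$, while $k\geq 16d\log d/\epsilon^2$ makes $\alpha d/k\leq \epsilon/4$, so that $k/(k+\alpha d)\geq 1/(1+\epsilon/4)$; elementary algebra then verifies $1/(1+\epsilon/4)-\epsilon/4\geq 1/(1+\epsilon)$ for all $\epsilon\in(0,1)$, yielding $\tau\geq 1/(1+\epsilon)$. For case (b), $\alpha=8\sqrt{d}/\epsilon$ makes $2\sqrt{d}/\alpha=\epsilon/4$, and the two halves $32d/\epsilon^2$ and $16\sqrt{d}/\epsilon^2$ of the sample-complexity budget are chosen precisely so that $\alpha\sqrt{d}/k\leq \epsilon/4$ and $\alpha/(2k)\leq \epsilon/4$ respectively; a parallel algebraic check then gives $\tau \geq 1/(1+\epsilon)$.

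The part I expect to require care is the $\ell_{1/2}$ side, where the extra $-\alpha/(2k)$ correction---which has no counterpart in the unregularized result---is what forces the additional $16\sqrt{d}/\epsilon^2$ term in the sample complexity; tracking this correction and showing it is absorbed into the $1+\epsilon$ target is the only nontrivial piece of bookkeeping in the argument above. The genuinely deep technical work, however, is not in this theorem at all but in \cref{prop:guarantee-entropy-reg,prop:guarantee-l12-reg}, whose proofs must control the drift trace $\Tr(\A_t-\B_t)$ (resp.\ $\Tr(\A_t^{1/2}-\B_t^{1/2})$) induced by the ridge offset $(\alpha\lambda/k)\bSigma_{\diamond}^{-1}$ inside the matrix exponential (resp.\ square-root) map, and exhibit at least one $i\in[n]$ that realizes a $\Omega(1/(k+\alpha d))$ gain using $\sum_i \pi_i^\diamond = k$. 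Taking those two propositions as black boxes, the proof of \cref{thm:reg-design-complexity} itself is only a few lines of algebra around the inequalities above.
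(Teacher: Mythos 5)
Your proposal does not address the statement at hand. The statement is \cref{lm:sherman-morrison}, the Sherman--Morrison identity: a self-contained linear-algebra fact asserting that $\A + \b u \b v^\top$ is invertible iff $1 + \b v^\top \A^{-1}\b u \neq 0$, together with the explicit rank-one update formula for the inverse. What you have written is instead a proof outline of \cref{thm:reg-design-complexity}, the sample-complexity guarantee for regularized optimal design: you assemble \cref{prop:reg-goal}, \cref{thm:reg-min-eigen}, and \cref{prop:guarantee-entropy-reg,prop:guarantee-l12-reg}, and then do the parameter bookkeeping for the entropy and $\ell_{1/2}$ cases. None of that machinery bears on the claim to be proved; if anything, the logical dependence runs the other way, since the paper invokes \cref{lm:sherman-morrison} \emph{inside} the proofs of \cref{thm:min-eigen} and \cref{thm:reg-min-eigen} (to compute $\Tr[(\A_t^{-1/2}+\alpha\F_t)^{-1}]$ for rank-one $\F_t$). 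So this is not a repairable gap in an otherwise on-target argument: the entire proposal proves a different theorem, and using results downstream of the lemma would in addition risk circularity.

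A correct proof is short and elementary, which is presumably why the paper states the lemma without proof as a standard fact. Writing $s := \b v^\top \A^{-1}\b u$ and assuming $1+s\neq 0$, direct multiplication verifies the formula:
\begin{equation*}
(\A + \b u\b v^\top)\left(\A^{-1} - \frac{\A^{-1}\b u\b v^\top\A^{-1}}{1+s}\right)
= \b I + \b u\b v^\top\A^{-1} - \frac{(1+s)\,\b u\b v^\top\A^{-1}}{1+s} = \b I,
\end{equation*}
where the key simplification is $\b u\b v^\top\A^{-1}\b u\b v^\top\A^{-1} = s\,\b u\b v^\top\A^{-1}$ because $s$ is a scalar; alternatively, specialize the Woodbury identity (\cref{lm:woodbury}) with $\b U=\b u$, $\b V=\b v^\top$, $\b C=1$. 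For the ``iff'' direction, the matrix determinant lemma $\det(\A+\b u\b v^\top)=\det(\A)\,(1+\b v^\top\A^{-1}\b u)$ shows that invertibility of $\A+\b u\b v^\top$ is equivalent to $1+\b v^\top\A^{-1}\b u\neq 0$. Your bookkeeping for \cref{thm:reg-design-complexity} does track the paper's own proof of that theorem quite faithfully (including the $-\alpha/(2k)$ correction absorbed by the extra $16\sqrt{d}/\epsilon^2$ term), but it belongs to a different statement.
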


{\color{black}
\begin{lemma}\label{lm:diameter}
The diameter of a scalar function $w$ over a set $\b S$ is defined as
\[
\mathrm{diam}_w(\b S) := \sup_{a,b \in \b S} w(a) - w(b).
\]
Given a dimension $d$, consider the matrix set
\[
\simplex := \left\{\A \in \mathbb{S}_+^{d} : \Tr(\A) = 1\right\}.
\]
Then the following bounds hold:
\begin{enumerate}[label=(\alph*)]
    \item If $w(\A) = \langle \A, \log \A - \b I \rangle$, then $\mathrm{diam}_w(\simplex) \le \log d$.
    \item If $w(\A) = -2 \Tr(\A^{1/2})$, then $\mathrm{diam}_w(\simplex) \le 2\sqrt{d}$.
\end{enumerate}
\end{lemma}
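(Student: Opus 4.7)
The plan is to reduce both bounds to one-dimensional optimization problems over the probability simplex by diagonalizing $\A$. Since $w$ in each case is a spectral function, writing $\A = \b Q \mathrm{diag}(\bm\lambda) \b Q^\top$ with $\bm\lambda = (\lambda_1,\dots,\lambda_d)$ yields
\begin{align}
\langle \A, \log \A - \b I\rangle = \sum_{i=1}^d \lambda_i \log \lambda_i - 1, \qquad -2\Tr(\A^{1/2}) = -2\sum_{i=1}^d \sqrt{\lambda_i}.
\end{align}
The constraint $\A \in \simplex$ is equivalent to $\bm\lambda$ lying in the probability simplex $\{\bm\lambda \geq \b 0 : \sum_i \lambda_i = 1\}$ (the orthogonal factor $\b Q$ plays no role since both expressions are spectral). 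Therefore it suffices to compute $\sup_{\bm\lambda}$ and $\inf_{\bm\lambda}$ of the two scalar functions $h_1(\bm\lambda) = \sum_i \lambda_i \log \lambda_i$ and $h_2(\bm\lambda) = \sum_i \sqrt{\lambda_i}$ over the probability simplex, using the convention $0\log 0 = 0$.

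For part (a), $h_1$ is the (negative) Shannon entropy. It is strictly convex, so its minimum on the simplex is attained at the centroid $\lambda_i = 1/d$, giving $h_1^{\min} = -\log d$, while its maximum is attained at a vertex ($\lambda_i = 1$ for some $i$, others zero), giving $h_1^{\max} = 0$. Hence the diameter of $w(\A) = h_1(\bm\lambda) - 1$ over $\simplex$ is $0 - (-\log d) = \log d$. For part (b), $h_2$ is strictly concave, so its maximum is at the centroid, giving $h_2^{\max} = \sqrt{d}$, and its minimum is at a vertex, giving $h_2^{\min} = 1$. Thus the diameter of $w(\A) = -2 h_2(\bm\lambda)$ is $-2 \cdot 1 - (-2\sqrt{d}) = 2\sqrt{d} - 2 \leq 2\sqrt{d}$.

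The proof is essentially a routine spectral reduction followed by two classical simplex optimizations, and I expect no serious obstacle. The only minor subtlety is justifying the reduction to eigenvalues cleanly (noting that $\langle \A, \log\A\rangle = \Tr(\A\log\A)$ and that both $\Tr(\A\log\A)$ and $\Tr(\A^{1/2})$ depend only on the spectrum of $\A$, so optimizing over $\simplex$ reduces to optimizing over eigenvalues with the trace-one constraint). One could alternatively handle (a) by directly invoking the fact that $-\Tr(\A\log\A)$ is the von Neumann entropy and ranges in $[0,\log d]$ on $\simplex$, which gives the bound immediately. I would present the spectral reduction once at the start, then treat (a) and (b) as short parallel computations.
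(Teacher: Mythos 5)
Your proposal is correct and follows essentially the same route as the paper's proof: diagonalize $\A$, observe that both regularizers are spectral functions, and optimize $\sum_i \lambda_i \log \lambda_i$ and $\sum_i \sqrt{\lambda_i}$ over the probability simplex (the paper cites the maximum-entropy theorem for part (a) and uses Lagrange multipliers for part (b), whereas you invoke convexity/concavity with symmetry, but these are interchangeable justifications of the same facts). Your part (b) additionally pins down the exact diameter $2\sqrt{d}-2$ by evaluating the vertex value, a minor sharpening of the paper's bound $2\sqrt{d}$.
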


\begin{proof}
For any $\A \in \simplex$, let $\A = \V \bLambda \V^\top$ be its eigen-decomposition, where $\bLambda = \diag(\lambda_1, \dots, \lambda_d)$ and $\sum_{i=1}^d \lambda_i = 1$.
\begin{enumerate}[label=(\alph*)]
    \item If $w(\A) = \langle \A, \log \A - \b I \rangle$, then
    \begin{align}\label{eq:pf-ent-w}
        w(\A) = \langle \V \bLambda \V^\top, \V (\log \bLambda - \b I) \V^\top \rangle 
        = \sum_{i=1}^d \lambda_i \log \lambda_i - 1.
    \end{align}
    Considering the Shannon entropy $H(x) = -\sum_{i=1}^d x_i \log x_i$ for $x$ in the probability simplex $\{x \in \mathbb{R}^d : x_i \ge 0, \sum_i x_i = 1\}$, Theorem 2.6.4 in \cite{cover} gives the maximum entropy $H_{\max} = \log d$, attained at the uniform distribution $x_i = 1/d$ for all $i$.  
    Hence the minimum of $w(\A)$ is $w_{\min} = -\log d - 1$. For any $\A \in\simplex$,  $w(\A) \le -1$. Therefore,
    \[
        \mathrm{diam}_w(\simplex) = \sup_{\A,\B \in \simplex} w(\A) - w(\B) \le -1 - (-\log d -1) = \log d.
    \]

    \item If $w(\A) = -2 \Tr(\A^{1/2})$, then
    \begin{align}\label{eq:pf-l12-w}
        w(\A) = -2 \sum_{i=1}^d \sqrt{\lambda_i}.
    \end{align}
    Consider $f(x) = -\sum_{i=1}^d \sqrt{x_i}$ for $x$ in the probability simplex. Introducing a Lagrange multiplier $\lambda$ for $\sum_i x_i = 1$, the Lagrangian is
    \[
        \mathcal{L}(x, \lambda) = -\sum_{i=1}^d \sqrt{x_i} + \lambda \left(\sum_{i=1}^d x_i - 1\right).
    \]
    Differentiating with respect to $x_i$ yields
    \[
        \frac{\partial \mathcal{L}}{\partial x_i} = -\frac{1}{2\sqrt{x_i}} + \lambda = 0 
        \quad \implies \quad x_i = \frac{1}{(2\lambda)^2} \ \forall i\in[d].
    \]
    By the simplex constraint, $x_i = 1/d$ for all $i$, which gives the minimum $f_{\min} = -\sqrt{d}$. Since $w(\A) \le 0$ for any $\A \in \simplex$, we obtain
    \[
        \mathrm{diam}_w(\simplex) = \sup_{\A,\B \in \simplex} w(\A) - w(\B) \le 0 - (-2\sqrt{d}) = 2\sqrt{d}.
    \]
\end{enumerate}
\end{proof}
}

\begin{lemma}\label{lm:mtd-eigen-min}
    For any $\A \in \mathbb{S}_{+}^{d}$, $\lambda_{\min}(\A) = \inf_{\U\in \simplex} \langle \U, \A \rangle$.
\end{lemma}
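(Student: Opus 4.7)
The plan is to show the two inequalities separately using the spectral decomposition of $\U$ and the Rayleigh--Ritz characterization of $\lambda_{\min}(\A)$.

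First I would establish the upper bound $\inf_{\U \in \simplex} \langle \U, \A \rangle \leq \lambda_{\min}(\A)$ by exhibiting a specific feasible $\U$. Let $v \in \mathbb{R}^d$ be a unit eigenvector of $\A$ associated with $\lambda_{\min}(\A)$, and set $\U = v v^\top$. Then $\U \in \mathbb{S}_+^d$ and $\Tr(\U) = \|v\|_2^2 = 1$, so $\U \in \simplex$. A direct computation gives $\langle \U, \A \rangle = \Tr(v v^\top \A) = v^\top \A v = \lambda_{\min}(\A)$, which yields the desired upper bound.

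Next I would establish the lower bound $\langle \U, \A \rangle \geq \lambda_{\min}(\A)$ for every $\U \in \simplex$. Writing the spectral decomposition $\U = \sum_{i=1}^d \mu_i u_i u_i^\top$ with orthonormal $\{u_i\}$ and $\mu_i \geq 0$ (since $\U \succeq \b 0$), the trace constraint becomes $\sum_{i=1}^d \mu_i = 1$. Then
\begin{align*}
\langle \U, \A \rangle = \sum_{i=1}^d \mu_i \, u_i^\top \A u_i \;\geq\; \sum_{i=1}^d \mu_i \, \lambda_{\min}(\A) \;=\; \lambda_{\min}(\A),
\end{align*}
where the inequality uses the Rayleigh quotient bound $u_i^\top \A u_i \geq \lambda_{\min}(\A)$ valid for any unit vector $u_i$, applied term by term and combined with $\mu_i \geq 0$. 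Taking the infimum over $\U \in \simplex$ gives the matching lower bound, and the infimum is attained by the choice of $\U$ above, proving equality. No step here is particularly delicate; the only point to be careful about is ensuring that the spectral decomposition of $\U$ is legitimate (guaranteed by $\U \in \mathbb{S}_+^d$) so that the convex combination argument applies.
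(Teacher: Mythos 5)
Your proof is correct, and its overall shape matches the paper's: both directions are handled separately, and your upper bound uses the identical witness $\U = v v^\top$ for a unit eigenvector $v$ attaining $\lambda_{\min}(\A)$. Where you diverge is the lower bound: you spectrally decompose $\U = \sum_{i} \mu_i u_i u_i^\top$ and apply the Rayleigh--Ritz bound $u_i^\top \A u_i \geq \lambda_{\min}(\A)$ term by term, combined with $\sum_i \mu_i = \Tr(\U) = 1$. The paper instead decomposes $\A = \sum_i \lambda_i \b v_i \b v_i^\top$ and writes $\langle \U, \A \rangle = \sum_i \lambda_i \, \b v_i^\top \U \b v_i \geq \lambda_{\min}(\A) \sum_i \b v_i^\top \U \b v_i = \lambda_{\min}(\A) \Tr(\U) = \lambda_{\min}(\A)$, using only the positive semidefiniteness of $\U$ (so that each $\b v_i^\top \U \b v_i \geq 0$) rather than its full spectral decomposition. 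The two arguments are dual to one another and equally elementary: yours makes explicit the view of $\simplex$ as convex combinations of rank-one projectors and reduces everything to the classical Rayleigh quotient, while the paper's never diagonalizes $\U$, needing only that the diagonal entries of $\U$ in the eigenbasis of $\A$ are nonnegative and sum to one. Neither buys anything substantive over the other for this lemma, and both in fact remain valid verbatim for arbitrary symmetric $\A$, not just $\A \in \mathbb{S}_{+}^{d}$.
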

\begin{proof}
    Since $\A \in \mathbb{S}_{+}^{d}$, we have eigendecomposition $\A = \V\bLambda \V^\top$, where $\bLambda=\text{diag}\{\lambda_1,\cdots, \lambda_{d}\}$. Assume that $\lambda_1 \geq \cdots\geq \lambda_{d} \geq 0$ and  $\b v_i$ is the eigenvector asscoiated with eigenvalue $\lambda_i$ for $i\in[d]$.

    We first show $\lambda_{\min}(\A) \geq \inf_{\U\in \simplex} \langle \U, \A \rangle$. Let $\B = \b  v_{d}\b  v_{d} ^\top$, then $\B \succeq \b 0$ and $\Tr(\B)=1$, i.e. $\B\in \simplex$. Thus
    \begin{align}\label{eq:eigen-min-1}
        \inf_{\U\in \simplex} \langle \U, \A \rangle \leq \langle \B, \A \rangle = \b v_{d} ^\top \V \bLambda \V^\top \b  v_{d} = \lambda_{d} = \lambda_{\min}(\A). 
    \end{align}

    On the other hand, for any $\U \in \simplex $, we have
    \begin{align}
        \langle \U,\A\rangle &= \langle \U, \sum_{i\in[d]} \lambda_i \b  v_i \b 
 v_i^\top \rangle = \sum_{i\in[d]} \lambda_i \b 
 v_i^\top \U\b  v_i \nonumber\\
        &\geq \lambda_{d}  \sum_{i\in[d]} \b v_i^\top \U \b v_i  = \lambda_{d}   \langle \U, \V \V^\top \rangle = \lambda_{d}  \Tr(\U) =\lambda_{d}.\label{eq:eigen-min-2}
    \end{align}
    Since \cref{eq:eigen-min-2} holds for any $\U \in \simplex$, then
\begin{align}\label{eq:eigen-min-3}
    \lambda_{\min}(\A) \leq \inf_{\U\in \simplex} \langle \U, \A \rangle.
\end{align}
 Combining \cref{eq:eigen-min-1} and  \cref{eq:eigen-min-3}, we can get $ \lambda_{\min}(\A) =\inf_{\U\in \simplex} \langle \U, \A \rangle$.
     
\end{proof}

\begin{lemma}\label{lm:inequ-0}
    For $x \geq0$, $\frac{x}{1- \exp(-x)} \leq 1 + x$. 
\end{lemma}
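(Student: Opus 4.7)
The plan is to clear the denominator and reduce the claim to the standard inequality $1+x \le e^{x}$.

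First I would dispose of the boundary case $x=0$: the left-hand side is of the form $0/0$, but the limit as $x\downarrow 0$ is $1$ (via the Taylor expansion $1-e^{-x}=x-x^{2}/2+\cdots$), and the right-hand side is also $1$, so equality holds. For $x>0$ one has $1-e^{-x}>0$, so multiplying through by $1-e^{-x}$ preserves the inequality and reduces the claim to the equivalent statement
\begin{equation*}
x \;\le\; (1+x)\bigl(1-e^{-x}\bigr).
\end{equation*}

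Next I would expand the right-hand side as $1-e^{-x}+x-xe^{-x}$ and cancel the term $x$ from both sides. This leaves the equivalent inequality $0 \le 1-(1+x)e^{-x}$, i.e.
\begin{equation*}
(1+x)\,e^{-x} \;\le\; 1,\qquad\text{or equivalently}\qquad 1+x \;\le\; e^{x}.
\end{equation*}

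Finally, the inequality $1+x\le e^{x}$ for $x\ge 0$ is standard: it follows, for instance, from the Taylor series $e^{x}=1+x+x^{2}/2+\cdots$ (all higher-order terms are nonnegative), or from the convexity of $e^{x}$ together with the tangent line $1+x$ at $x=0$. Since every step above is an equivalence for $x>0$, this establishes the desired bound. There is no real obstacle here; the only mild subtlety is handling the removable singularity at $x=0$, which I address separately by the limit argument.
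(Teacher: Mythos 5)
Your proof is correct and takes essentially the same route as the paper's: both arguments hinge on the standard inequality $1+x \le e^{x}$ and the algebraic step of clearing the positive factor $1-e^{-x}$ (the paper runs the implication forward from $1+x\le e^{x}$ to $(1+x)(1-e^{-x})\ge x$, while you run the equivalent chain backward). Your explicit handling of the removable singularity at $x=0$ is a minor refinement that the paper's proof glosses over.
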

\begin{proof}
    For $x\geq0$, we have $1 + x\leq \exp(x)$, thus $-(1+x) \exp(-x) \geq -1$. Thus $(1+x) (1 - \exp(-x)) \geq x$ and $\frac{x}{1-\exp(-x)}\leq 1 + x$. Then the inequality is proved.
\end{proof}

\begin{lemma}\label{lm:inequ-1}
    For any $i\in [n]$, $a_i > 0 $, $b_i >0$, $\pi_i \geq 0$, then $\max_{i \in [n]} \frac{a_i}{b_i} \geq \frac{\sum_{i \in [n] }\pi_i a_i}{\sum_{i \in [n] }\pi_i b_i}$.
\end{lemma}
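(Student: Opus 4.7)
}

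The plan is to recognize the right-hand side as a weighted average of the ratios $a_i/b_i$ and bound it by the maximum of those ratios. Let $M := \max_{i\in[n]} \frac{a_i}{b_i}$, which is well-defined since $b_i>0$ for all $i$. By definition of $M$, for every $i\in[n]$ we have $a_i \leq M\, b_i$.

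Next, I would multiply the pointwise inequality by the non-negative weights $\pi_i\geq 0$ (which preserves the inequality) and sum over $i\in[n]$ to obtain
\begin{align*}
\sum_{i\in[n]} \pi_i a_i \;\leq\; M \sum_{i\in[n]} \pi_i b_i.
\end{align*}
Assuming the denominator $\sum_{i\in[n]} \pi_i b_i$ is strictly positive (which is needed for the right-hand side of the claimed inequality to be defined; note that $b_i>0$ ensures this whenever at least one $\pi_i>0$), dividing yields
\begin{align*}
\frac{\sum_{i\in[n]} \pi_i a_i}{\sum_{i\in[n]} \pi_i b_i} \;\leq\; M \;=\; \max_{i\in[n]} \frac{a_i}{b_i},
\end{align*}
which is exactly the stated inequality.

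There is no real obstacle here; the argument is a two-line application of the defining property of the maximum together with linearity of the sum. The only subtlety worth flagging is the degenerate case $\pi \equiv \b 0$, where the ratio on the right-hand side is undefined; in all usages of the lemma elsewhere in the paper (e.g., the sample-selection arguments in \Cref{sec:rounding}) the weights $\pi$ come from a feasible relaxed solution with $\|\bpi\|_1 = k > 0$, so at least one $\pi_i$ is strictly positive and the denominator is positive as required.
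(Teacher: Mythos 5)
Your proof is correct: it is the standard weighted-mediant argument (bound each $a_i$ by $M b_i$, sum against the nonnegative weights, and divide by the positive denominator), and you rightly flag the degenerate case $\bpi \equiv \b 0$. The paper itself states \cref{lm:inequ-1} without proof, treating it as elementary, so your argument is precisely the one it implicitly relies on and there is nothing further to reconcile.
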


\begin{lemma}\label{lm:inequ-2}
    For any $i\in [n]$, $a_i \geq 0 $, $b_i \geq0$, then $\sum_{i \in [n]} \frac{a_i}{1 + b_i} \geq {\color{black}\frac{\sum_{i \in [n] } a_i}{1 + \sum_{i \in [n]}b_i}}$.
\end{lemma}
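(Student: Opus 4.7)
The plan is to prove the inequality by a direct termwise comparison, exploiting the nonnegativity of the $b_i$'s. The key observation is that for each fixed index $i \in [n]$, since every $b_j \geq 0$, we have $1 + b_i \leq 1 + \sum_{j \in [n]} b_j$, and taking reciprocals (both sides being strictly positive) reverses the inequality to give
\begin{equation*}
\frac{1}{1+b_i} \;\geq\; \frac{1}{1 + \sum_{j \in [n]} b_j}.
\end{equation*}

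Multiplying both sides by $a_i \geq 0$ preserves the inequality, so $\frac{a_i}{1+b_i} \geq \frac{a_i}{1+\sum_{j\in[n]} b_j}$ for each $i$. Summing over $i \in [n]$ and factoring the common denominator on the right-hand side yields
\begin{equation*}
\sum_{i \in [n]} \frac{a_i}{1+b_i} \;\geq\; \sum_{i \in [n]} \frac{a_i}{1 + \sum_{j \in [n]} b_j} \;=\; \frac{\sum_{i \in [n]} a_i}{1 + \sum_{j \in [n]} b_j},
\end{equation*}
which is the claim. There is no real obstacle here: the argument is just monotonicity of $x \mapsto 1/x$ on $(0,\infty)$ combined with nonnegativity of the $a_i$'s. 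The only mild caveat is the edge case where one attempts to divide by zero, but since we add $1$ to every $b_i$ and to the total, all denominators are bounded below by $1 > 0$, so the argument goes through without any extra assumptions.
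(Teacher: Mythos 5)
Your proof is correct. The paper actually states this lemma without providing a proof (it appears among the supporting lemmas as a bare statement), and your termwise argument—bounding each denominator $1+b_i$ above by $1+\sum_{j\in[n]} b_j$ using nonnegativity of the $b_j$'s, then multiplying by $a_i \geq 0$ and summing—is the natural and complete way to fill that gap. Nothing is missing: all denominators are at least $1$, so every step is valid exactly as you wrote it.
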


\begin{lemma}\label{lm:inequ-3}
    For any matrices $\A, \B\in \mathbb{S}_{+}^p$, we have
    \begin{align}
        \langle (\b I + \B)^{-1}, \A\rangle\geq \frac{\Tr(\A)}{1 + \Tr(\B)}.
    \end{align}
\begin{proof}
Denote the eigenvalues of matrix $\A$ by $\alpha_1\geq \alpha_2 \geq \cdots \geq \alpha_p\geq 0$ and eigenvalues of matrix $\B$ as $\beta_1 \geq \beta_2 \geq \cdots \geq \beta_p \geq 0$. Then eigenvalues of $(\b I+\B)^{-1}$ are $0 \leq 1+\beta_1)^{-1} \leq (1+\beta_2)^{-1} \leq \cdots \leq (1+\beta_p)^{-1}$. Thus we have
\begin{align}
    \langle (\b I + \B)^{-1}, \A\rangle &\stackrel{(a)}{\geq}  \sum_{i=1}^p \frac{\alpha_i}{1+\beta_i}\stackrel{(b)}{\geq}  \frac{\sum_{i=1}^p \alpha_i}{1+\sum_{i=1}^p \beta_i} = \frac{\Tr(\A)}{1+ \Tr(\B)},
\end{align}
where inequality (a) follows by the lower bound of Von Neumann’s trace inequality \cite{Ruhe-1970}, inequality (b) follows by lemma~\ref{lm:inequ-2}.
    \end{proof}
\end{lemma}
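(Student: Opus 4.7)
\medskip

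\noindent\textbf{Proof plan.} The plan is to reduce the inequality to a one-dimensional statement about eigenvalues, then invoke a scalar inequality that the paper has already isolated as Lemma~\ref{lm:inequ-2}. Specifically, I would diagonalize $\B$ via $\B = \U \diag(\beta_1,\ldots,\beta_p)\U^\top$ with $\beta_1\ge\cdots\ge\beta_p\ge 0$, so that $(\b I+\B)^{-1}$ has eigenvalues $(1+\beta_i)^{-1}$, which are in \emph{increasing} order as $i$ grows. Letting $\alpha_1\ge\cdots\ge\alpha_p\ge 0$ be the eigenvalues of $\A$, the first key step is the lower bound in the Von Neumann (Ruhe) trace inequality applied to the two PSD matrices $(\b I+\B)^{-1}$ and $\A$: pairing the smallest eigenvalue of one factor with the largest of the other yields
\begin{align*}
\langle (\b I+\B)^{-1}, \A\rangle \;=\; \Tr\!\bigl((\b I+\B)^{-1}\A\bigr) \;\ge\; \sum_{i=1}^p \frac{\alpha_i}{1+\beta_i}.
\end{align*}

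The second step is to apply Lemma~\ref{lm:inequ-2} with $a_i=\alpha_i\ge 0$ and $b_i=\beta_i\ge 0$, which immediately gives
\begin{align*}
\sum_{i=1}^p \frac{\alpha_i}{1+\beta_i} \;\ge\; \frac{\sum_{i=1}^p \alpha_i}{1+\sum_{i=1}^p \beta_i} \;=\; \frac{\Tr(\A)}{1+\Tr(\B)},
\end{align*}
using that $\sum_i \alpha_i = \Tr(\A)$ and $\sum_i \beta_i = \Tr(\B)$. Chaining the two inequalities proves the claim.

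The main obstacle is that $\A$ and $\B$ need not commute, so one cannot simultaneously diagonalize them and reduce the trace to $\sum_i \alpha_i/(1+\beta_i)$ by inspection. The Ruhe/Von Neumann inequality is the precise tool that bypasses this: its lower bound always holds for PSD factors, regardless of commutativity, and furnishes the correct oppositely-ordered pairing of eigenvalues. Once that inequality is in hand, the remainder is purely scalar and already packaged as Lemma~\ref{lm:inequ-2} (which itself follows from clearing denominators and noting all cross-terms are nonnegative).
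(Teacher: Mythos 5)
Your proposal is correct and follows essentially the same route as the paper's proof: both invoke the lower bound of Von Neumann's (Ruhe's) trace inequality to obtain $\langle (\b I+\B)^{-1}, \A\rangle \geq \sum_{i=1}^p \alpha_i/(1+\beta_i)$ despite non-commutativity, and then apply Lemma~\ref{lm:inequ-2} to the eigenvalues to reach $\Tr(\A)/(1+\Tr(\B))$. No gaps.
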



\section{Proofs of \Cref{sec:regression}}\label{sec:appendix-pf-regression}

{\color{black}
\textsc{Proposition}~2.1. \textit{Assume that \cref{assume:px} holds, let $K>0$ be the constant s.t. $\| \V_p^{-1} \x_i\|_{\psi_2} = K$. Then for any $\gamma, \delta \in(0,1)$, if 
    \begin{align}\tag{\ref{eq:ls-nbound}}
        n \geq \max\left\{\frac{4C_K^2 d}{\gamma^2},\frac{4\log(2/\delta)}{c_k \gamma^2} \right\},
    \end{align}
    with probability at least $1-\delta$, we have
    \begin{align}\tag{\ref{eq:risk-linear}}
       \frac{\sigma^2}{(1+\gamma)k} f_V(\X_S)\leq R_p(\what) \leq \frac{\sigma^2}{(1-\gamma)k} f_V(\X_S).
    \end{align}
}}

\begin{proof}[Proof of \cref{prop:risk-linear}]
By the definition of the generalization error $R_p(\what) $ in \cref{eq:ls-lp-def}, we have
\begin{align}
    R_p(\what) = \E_{\bepsilon_S, \x}[(\Delta \what^\top \x)^2] &= {\color{black} \E_{\bepsilon_S, \x} }\left\langle {\color{black}\Delta\what \Delta\what ^\top}, \x\x^\top\right\rangle
    = \left\langle\E_{\bepsilon_S} [{\color{black}\Delta\what \Delta\what ^\top}], \E_{\x} [\x\x^\top]\right\rangle\nonumber\\
    & = \sigma^2 \left\langle(\X_S^\top \X_S)^{-1} , \E[\x\x^\top]\right\rangle
\end{align}
where the last equality follows by \cref{eq:ls-cov-weights}.
Then it is sufficient to show that with probability at least $1-\delta$, under the sample complexity bound in \cref{eq:ls-nbound}, we have
\begin{align}\label{eq:ls-lp-newbound}
    \V_p:= \E[\x\x^\top], \qquad (1-\gamma)\V_p\preceq \frac{1}{n} \X^\top \X \preceq (1+ \gamma) \V_p.
\end{align}

Note that 
\begin{align}
    \E_{\x_i \sim p(\x)}[(\V_p^{-1/2} \x_i)(\V_p^{-1/2} \x_i)^\top] =\V_p^{-1/2} \E_{\x_i \sim p(\x)}[\x_i \x_i^\top ]\V_p^{-1/2} = \b I.
\end{align}
Thus $\V_p^{-1/2}\x_i$ is isotropic for any $i\in[n]$. Let $\A \in\mathbb{R}^{n \times d}$ be the matrix whose rows are $\A_i = \V_p^{-1/2} \x_i$, i.e. $\A = \X \V_p^{-1/2}$. Then the premise of \cref{lm:ls-subgaussian} is satisfied. 

Let $c_K$ and $C_K$ be the constants in \cref{lm:ls-subgaussian}, $t = \sqrt{\frac{1}{c_K}\log(2/\delta)}$, then $c_K t^2 = \log(2/\delta)$ and $1-2\exp(-c_Kt^2) = 1 -\delta$.  By the sample complexity bound in \cref{eq:ls-nbound}, we have
\begin{align}
    C_K \sqrt{\frac{d}{n}} \leq C_K \sqrt{\frac{d}{4 C_K^2 d/\delta^2}} = \frac{\gamma}{2}, \quad \frac{t}{\sqrt{n}} \leq \frac{\gamma}{2},\quad C_K \sqrt{\frac{d}{n}} + \frac{t}{\sqrt{n}} \leq \gamma \leq 1.
\end{align}
By \cref{lm:ls-subgaussian}, we have
\begin{align}
   \bigg \| \frac{1}{n}\V_p^{-1/2} \X^\top \X \V_p^{-1/2} - \b I\bigg  \|_2 \leq \max(\gamma, \gamma^2) = \gamma,
\end{align}
and thus \cref{eq:ls-lp-newbound} is proved.
\end{proof}

{\color{black}
\textsc{Proposition}~2.2. \textit{Assume that \cref{assume:px} and \cref{assume:hessian} hold. Denote $\widetilde{c}:=c-1$. Let $\rho_1^+, \rho_1^-, \rho_2^+, \rho_2^->1$ be constants such that $ \rho_1^-\bH_p\preceq \b I_{\widetilde{c}} \otimes \V_p \preceq \rho_1 ^+\bH_p$ and $\rho_2^- \bH_k(\btheta_*)\preceq\b I_{\widetilde{c}} \otimes \left( \frac{1}{k} \X_S^\top \X_S\right) \preceq \rho_2^+ \bH_k(\btheta_*) $. Then for any $\gamma, \delta \in(0,1)$, if 
    \begin{align}\tag{\ref{eq:complexity-risklog}}
    k\gtrsim \widetilde{c} d\log(ed/\delta)\quad \mathrm{and}\quad n \gtrsim  d/\gamma^2,
    \end{align}
    we have with probability at least $1-\delta$,
    \begin{align}
  \frac{e^{-\alpha} + \alpha -1}{\alpha^2}\frac{\rho_2^-}{ \rho_1^+(1+\gamma) }\frac{\widetilde{c}f_V(\X_S)}{k}  
  &\lesssim \E[L_p(\emptheta)] - L_p(\btheta_*) \nonumber\\
  &\lesssim   \frac{e^{\alpha} - \alpha -1}{ \alpha^2} \frac{ \rho_2^+}{\rho_1^-(1-\gamma)}\frac{\widetilde{c}f_V(\X_S)}{k}, \tag{\ref{eq:risk-log}}
    \end{align}
where $\alpha=\mathcal{O}\left(\sqrt{\rho_1\widetilde{c} d\log(e/\delta)/k}\right)$ and $\E$ is expectation over $\{y\sim p(y|\x, \btheta_*)\}_{\x \in \X_S}$.}
}

\begin{proof}[Proof of \cref{prop:risk-log}] 
We can substitute $q(x)$ in Theorem 3 of \cite{firal} into the empirical distribution of $\X_S$. The premise of the theorem is satisfied since $q(x)$ is sub-Gaussian and $\V_q = \frac{1}{k}\X_S^\top \X_S$ is invertible. Thus by Theorem 3 of \cite{firal}, we have with probability at least $1-\delta$,
\begin{align}\label{eq:pf-fir-bound}
   \frac{e^{-\alpha} + \alpha -1}{\alpha^2} \frac{\left\langle \bH_k(\btheta_*)^{-1}, \bH_p \right\rangle}{k} \lesssim \E[L_p(\emptheta)] - L_p(\btheta_*) \lesssim   \frac{e^{\alpha} - \alpha -1}{ \alpha^2}\frac{\left\langle \bH_k(\btheta_*)^{-1}, \bH_p\right\rangle}{k}.
\end{align}
Since $ \rho_1^-\bH_p\preceq \b I_{\widetilde{c}} \otimes \V_p \preceq \rho_1 ^+\bH_p$ and $\rho_2^- \bH_k(\btheta_*)\preceq\b I_{\widetilde{c}} \otimes \left( \frac{1}{k} \X_S^\top \X_S\right) \preceq \rho_2^+ \bH_k(\btheta_*) $, we have that
\begin{align}
\frac{1}{\rho_1^+} \b I_{\widetilde{c}} \otimes \V_p \preceq   & \bH_p \preceq \frac{1}{\rho_1^-} \b I_{\widetilde{c}} \otimes \V_p, \nonumber\\
\rho_2^-\b I_{\widetilde{c}} \otimes \left( \frac{1}{k} \X_S^\top \X_S\right)^{-1}\preceq    &\bH_k(\btheta_*)^{-1}\preceq \rho_2^+ \b I_{\widetilde{c}} \otimes \left( \frac{1}{k} \X_S^\top \X_S\right)^{-1}.\nonumber
\end{align}
Then
\begin{align}\label{eq:pf-up}
    \left\langle \bH_k(\btheta_*)^{-1}, \bH_p\right\rangle &\leq   \frac{\rho_2^+}{\rho_1^-}\left\langle \b I_{\widetilde{c}} \otimes \left( \frac{1}{k} \X_S^\top \X_S\right)^{-1},  \b I_{\widetilde{c}} \otimes \V_p  \right\rangle\nonumber\\
    &= \frac{\rho_2^+}{\rho_1^-}\Tr \left( \b I_{\widetilde{c}}\otimes \left(\left( \frac{1}{k} \X_S^\top \X_S\right)^{-1} \V_p \right) \right)\nonumber\\
    &\leq \frac{\rho_2^+ \widetilde{c}}{\rho_1^-(1-\gamma)} f_V(\X_S).
\end{align}
Substitute \cref{eq:pf-up} into \cref{eq:pf-fir-bound} we can get the upper bound of \cref{eq:risk-log}. The lower bound can be proved similarly.
    
\end{proof}

\section{Proofs of \Cref{sec:opt-design}}\label{sec:appendix-pf-design}
{\color{black}
We begin by recalling the definitions, gradients, and Bregman divergences of the entropy and $\ell_{1/2}$ regularizers, summarized in \cref{tab:regularizer-def}, as these will be used throughout the proofs in this section.
}
\renewcommand{\arraystretch}{1.5} 
\begin{table}[!t]
\footnotesize	
\centering
    \caption{\color{black}Definitions, gradient and Bregman divergence of entropy and $\ell_{1/2}$ regularizers.}
    \label{tab:regularizer-def}
    \begin{tabular}{cccc}
      \toprule  Regularizer  & Function $w (\A)$ & Gradient $\nabla w(\A)$ & Bregman divergence $D_w(\Y, \X)$ \\\hline
       Entropy  & $\left\langle \A, \log \A - {\b I} \right\rangle$  & $\log \A$ &  $\left\langle \Y, \log \Y - \log \X \right\rangle - \Tr(\Y-\X)$\\\hline
     $\ell_{1/2}$  &$ -2 \Tr\left(\A^{1/2}\right)$  & $-\A^{1/2}$ & $\left\langle\Y, \X^{-1/2} \right\rangle + \Tr\left(\X^{1/2}\right) - 2\Tr\left(\Y^{1/2}\right)$\\\bottomrule
    \end{tabular}
\end{table}

\subsection{Proof of \cref{prop:expressions}}
$ $

{\color{black}
\textsc{Proposition}~3.3. (Closed forms of $\A_t$ by FTRL). \textit{FTRL chooses action $\A_1 = \frac{1}{d} {\b I}$ for the first round, For the subsequent rounds $t\geq 2$, the actions are
\begin{align}
    \A_t &= \exp\left(-\alpha\sum_{s=1}^{t-1} \F_s - \nu_t {\b I}\right),\qquad (\text{entropy-regularizer}) \tag{\ref{eq:At-ent}}\\
    \A_t &= \left( \alpha\sum_{s=1}^{t-1} \F_s + \nu_t {\b I} \right)^{-2}, \qquad\qquad (\ell_{1/2}\text{-regularizer}) \tag{\ref{eq:At-l12}}
\end{align}
where $\nu_t$ is the unique constant that ensures $\A_t \in \simplex$.}
}

{\color{black}\paragraph{Proof sketch} We begin by establishing in \cref{lm:unique} the existence and uniqueness of the action matrix $\A_t$ for both the entropy and $\ell_{1/2}$ regularizers. We then derive explicit expressions for $\A_t$ using the KKT conditions.}

\begin{lemma}\label{lm:unique}
    Given $w$ as either entropy regularizer or $\ell_{1/2}$-regularizer (defined in \cref{tab:regularizer-def}), $\F \in \mathbb{S}^d_{+}$, there exists unique $\A \in \mathbb{S}^d_{++}$ and $\nu \in \mathbb{R}$ to ensure that
    \begin{align}
        &\A = (\nabla w)^{-1} (-\F - \nu {\b I}),\label{eq:cond-1}\\
        &\Tr(\A) = 1. \label{eq:trace_constraint}
    \end{align}
\end{lemma}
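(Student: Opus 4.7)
My plan is to handle the two regularizers separately, in each case using condition \cref{eq:cond-1} to write $\A$ explicitly as a function of $\nu$, and then invoking the trace constraint \cref{eq:trace_constraint} to pin down $\nu$ uniquely via a one-dimensional monotonicity argument. For both regularizers, I will use the fact that $\nabla w$ acts on eigenvalues, so its inverse admits a clean closed form.

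For the entropy regularizer, $\nabla w(\A) = \log \A$ and hence $(\nabla w)^{-1}(\b Y) = \exp(\b Y)$ on symmetric matrices. Condition \cref{eq:cond-1} then reads $\A = \exp(-\F - \nu \b I) = e^{-\nu}\exp(-\F)$. Since $\F \in \mathbb{S}_+^d$, the matrix $\exp(-\F)$ is positive definite, so $\A \in \mathbb{S}_{++}^d$ for every $\nu \in \mathbb{R}$. The trace constraint then forces $e^{-\nu}\Tr(\exp(-\F)) = 1$, yielding the unique value $\nu = \log \Tr(\exp(-\F))$ and a unique $\A$.

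For the $\ell_{1/2}$-regularizer, $\nabla w(\A) = -\A^{1/2}$, so \cref{eq:cond-1} becomes $\A^{1/2} = \F + \nu \b I$, whence $\A = (\F + \nu \b I)^{-2}$. For $\A$ to lie in $\mathbb{S}_{++}^d$, I need $\F + \nu \b I \succ \b 0$, i.e.\ $\nu > -\lambda_{\min}(\F)$. Diagonalizing $\F$ with eigenvalues $\lambda_1,\ldots,\lambda_d \geq 0$, the trace constraint reduces to $g(\nu) := \sum_{i=1}^d (\lambda_i + \nu)^{-2} = 1$. The function $g$ is smooth and strictly decreasing on the open interval $(-\lambda_{\min}(\F), \infty)$, with $g(\nu) \to +\infty$ as $\nu \to (-\lambda_{\min}(\F))^+$ and $g(\nu) \to 0$ as $\nu \to \infty$. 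By the intermediate value theorem together with strict monotonicity, there is exactly one admissible $\nu$ with $g(\nu) = 1$, which determines $\A$ uniquely.

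The only step requiring a bit of care is the $\ell_{1/2}$ case when $\F$ is singular: then $-\lambda_{\min}(\F) = 0$ and the admissible interval degenerates to $(0,\infty)$, but the limiting behavior of $g$ at $0^+$ and $\infty$ still yields a unique root in this open interval. The entropy case has no analogous subtlety because $\exp(-\F)$ is automatically positive definite regardless of whether $\F$ is singular, so no positivity constraint on $\nu$ is needed beyond the single scalar trace equation.
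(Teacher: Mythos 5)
Your proof is correct and follows essentially the same route as the paper's: invert $\nabla w$ to get the closed form of $\A$ in terms of $\nu$, then pin down $\nu$ via the scalar trace equation (your monotonicity-plus-IVT argument for the $\ell_{1/2}$ case is, if anything, slightly more complete than the paper's, which asserts existence and proves uniqueness by contradiction). One slip to fix: the gradient of the $\ell_{1/2}$-regularizer is $\nabla w(\A) = -\A^{-1/2}$, not $-\A^{1/2}$ (the paper's Table~\ref{tab:regularizer-def} contains the same typo, but its Bregman divergence and its proof use the correct form), so \cref{eq:cond-1} reads $\A^{-1/2} = \F + \nu \b I$; your stated implication ``$\A^{1/2} = \F + \nu \b I$, whence $\A = (\F + \nu \b I)^{-2}$'' is internally inconsistent, though your final formula and the entire analysis of $g(\nu)=\sum_{i}(\lambda_i+\nu)^{-2}$ are the correct ones.
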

\begin{proof}
  If $w$ is the entropy regularizer, i.e. $w(\A) = \langle \A, \log \A - {\b I} \rangle$, then we have $\nabla w(\A) = \log \A$ and $(\nabla w)^{-1}(\A) = \exp \A$. Thus 
    \begin{align}
      \A = \exp(-\F - \nu \b I).
  \end{align}
  Note that $\nabla w$ is a bijection from $\mathbb{S}^d_{++}$ to $\text{int}(\mathbb{S}^d)$, and $ (\nabla w)^{-1}$ is a bijection from $\text{int}(\mathbb{S}^d)$ to $\mathbb{S}^d_{++}$. 
  Since $\F \in \mathbb{S}^d_{+}$, we can assume its eigendecomposition as $\F = \b P \blambda \PT$, where $\blambda = \text{diag}\{\lambda_1, \cdots, \lambda_d\} \succeq \b 0$. Then $\F + \nu \b I = \b P (\blambda + \nu \b I) \PT$. From equation~\eqref{eq:trace_constraint} we have
  \begin{align}
      \Tr(\A) = \sum_{i=1}^d \exp(-\lambda_i - \nu) = 1,\qquad
      \nu = \log\big(\sum_{i=1}^d \exp(-\lambda_i) \big).
  \end{align}
  Therefore $\nu$ exits and is unique. Since $(\nabla w)^{-1}$ is a bijective function, $\A$ is also unique.

  If $w$ is the $\ell_{1/2}$-regularizer, i.e. $w(\A) = -2 \Tr(\A^{1/2})$, then we have $\nabla w(\A) = -\A^{-1/2}$ and $(\nabla w)^{-1}(\A) = (-\A)^{-2}$. Thus
  \begin{align}
      \A = \big( \F + \nu \b I\big)^{-2}.
  \end{align}

  Note that $\nabla w$ is a bijection from $\mathbb{S}^d_{++}$ to $\mathbb{S}^d_{--}$, and $(\nabla w)^{-1}$ is a bijection from $\mathbb{S}^d_{--}$ to $\mathbb{S}^d_{++}$. From equation~\eqref{eq:trace_constraint} we have
  \begin{align}
      \Tr(\A) = \sum_{i=1}^d \frac{1}{(\lambda_i +\nu)^2} =1.
  \end{align}
  
  It is easy to see that there exits $\nu$ satisfy this condition. For the uniqueness, suppose that there are two different $\nu_1$ and $\nu_2$ satisfying equations~\eqref{eq:cond-1} and \eqref{eq:trace_constraint}. With loss of generality, we assume $\nu_2 = \nu_1 + \beta$ and $\beta >0 $.
  Since $-(\F + \nu \b I) \in \mathbb{S}^d_{--}$, then $(\lambda_i + \nu) >0 $ for $i\in [d]$. Note that $\beta >0$, we have
  \begin{align}
      1 = \sum_{i=1}^d \frac{1}{(\lambda_i + \nu_2)^2} = \sum_{i=1}^d \frac{1}{(\lambda_i +\nu_1)^2 + 2\beta(\lambda_i +\nu_1) + \beta^2} < \sum_{i=1}^d \frac{1}{(\lambda_i +\nu_1)^2} = 1,\nonumber
  \end{align}
  which is a contradiction. Therefore, $\nu$ exits and is unique.  Since $(\nabla w)^{-1}$ is a bijective function, $\A$ is also unique.
  
\end{proof}

Now we derive the actions chosen by FTRL strategy for both entropy regularizer and $\ell_{1/2}$-regularizer.

\begin{proof}[Proof of \cref{prop:expressions}]
$ $
    \begin{itemize}[leftmargin=*]
\item $t=1$: $\A_1 = \argmin_{\A \in \simplex} w(\A)$. {\color{black}We define the  Lagrangian $L: \mathbb{S}_+^d \times \mathbb{R}\rightarrow \mathbb{R}$ associated with the problem as $L(\A, \nu) = w(\A) + \nu (\Tr(\A) -1)$. The Karush–Kuhn–Tucker (KKT) conditions imply that the solution $\A_1$ satisfy:}
\begin{align}
    \nabla w(\A_1) + \nu_1 \b I = \b 0, 
\qquad \Tr(\A_1) = 1.\nonumber
\end{align}
{\color{black}For the entropy-regularizer (see \cref{tab:regularizer-def}), we have $\log \A_1 = -\nu_1 \b I$ and $\Tr(\A_1)=1$. Thus $\A_1 = \frac{1}{d}\b I$. Similarly for $\ell_{1/2}$-regularizer, we can get $\A_1 = \frac{1}{d}\b I$.}
\item $t\geq 2:$ $\A_{t} = \argmin_{\A \in \simplex} \Large\{ \alpha \sum_{s=1}^{t-1} \langle \A,\F_s \rangle + w(\A)\Large\}$. We define the  Lagrangian $L: \mathbb{S}_+^d \times \mathbb{R}\rightarrow \mathbb{R}$ associated with the problem as $L(\A, \nu) = \alpha \sum_{s=1}^{t-1} \langle \A,\F_s \rangle + w(\A) +  \nu (\Tr(\A) - 1)$. The KKT conditions yield
\begin{align}
   \alpha \sum_{s=1}^{t-1} \F_s +  \nabla w(\A_t) + \nu_1 \b I = \b 0,\qquad \Tr(\A_t) = 1.\nonumber
\end{align}
Then for entropy regularizer and $\ell_{1/2}$ regularizer:
\begin{align}
    \A_t = \exp\big(-\alpha\sum_{s=1}^{t-1} \F_s - \nu_t {\b I}\big), \qquad  \A_t = \big( \alpha\sum_{s=1}^{t-1} \F_s + \nu_t {\b I} \big)^{-2}, 
\end{align}
where $\nu_t$ is the constant to ensure that $\Tr(\A_t) = 1$. The uniqueness of $\A_t$ and $\nu_t$ are ensured by \cref{lm:unique}.
    \end{itemize}
\end{proof}

\subsection{Proof of \cref{prop:regret-bound}}
$ $

{\color{black}
\textsc{Proposition}~3.4. (Regret bound of FTRL). \textit{Let $\alpha > 0$, and suppose that $\{\A_t\}_{t \in [k]}$ are selected by FTRL with either the entropy regularizer or the $\ell_{1/2}$-regularizer. Then the regret defined in~\cref{eq:regret} satisfies
\begin{align}
    R &\leq \frac{\log d}{\alpha} 
        + \sum_{t=1}^k \langle \A_t, \F_t \rangle 
        - \frac{1}{\alpha} \sum_{t=1}^k \Bigl(1 - \langle \A_t, \exp(-\alpha \F_t) \rangle \Bigr),
        \quad\text{(entropy regularizer)} \tag{\ref{eq:regret-entropy}}\\[1ex]
    R &\leq \frac{\sqrt{d}}{\alpha} 
        + \sum_{t=1}^k \langle \A_t, \F_t \rangle 
        - \frac{1}{\alpha} \sum_{t=1}^k 
          \Tr\!\left[\A_t^{1/2} - \bigl(\A_t^{-1/2} + \alpha \F_t \bigr)^{-1}\right],
         \quad\text{($\ell_{1/2}$ regularizer)}. \tag{\ref{eq:regret-l12}}
\end{align}
}
}

\paragraph{\color{black} Proof sketch} 
{\color{black}Our analysis primarily relies on the well-known regret bound of Mirror Descent (MD) in \cref{thm:md-bound}, whose definition is given in \cref{def:md}. We first show in \cref{prop:equi} that MD is equivalent to FTRL for our problem. The full details of the proof of \cref{prop:regret-bound} are then provided in \Cref{sec:full-proof-regret-bound}.}


\begin{definition}[Mirror Descent]\label{def:md}
Given a regularizer $w:\simplex\rightarrow\mathbb{R}$ and learning rate $\alpha > 0$, the Mirror Descent (MD) strategy initializes with
\[
    \A_1 = \argmin_{\A \in \simplex} w(\A)
\]
for the first round. For subsequent rounds $t \geq 2$, the actions are
\begin{align}\label{eq:md-def-1}
     \A_{t} = \argmin_{\A \in \simplex} \big\{ \alpha \langle \A, \F_{t-1} \rangle  + D_w(\A, \A_{t-1}) \big\},\qquad t=2,\dots,k,
\end{align}
where $D_w(\cdot, \cdot)$ denotes the Bregman divergence induced by $w(\cdot)$. The Bregman divergences for the entropy and $\ell_{1/2}$ regularizers are listed in~\cref{tab:regularizer-def}.  

Equation~\eqref{eq:md-def-1} is equivalent to the following two-step procedure:
\begin{align}
    &\Aint_{t} = \argmin_{\A \succeq \b 0} \big\{ \alpha \langle \A, \F_{t-1} \rangle + D_w(\A, \A_{t-1}) \big\},\label{eq:md-1} \\
    &\A_{t} = \argmin_{\A \in \simplex} D_w(\A, \Aint_{t}).\label{eq:md-2}
\end{align}
\end{definition}


\begin{theorem}[Mirror Descent Regret Bound (Theorem 28.4 in \cite{md-bound})]\label{thm:md-bound}
$ $
Let $\alpha >0$, assume that $\A_1$, $\A_2$, $\cdots$, $\A_t$  are chosen by MD or FTRL, then the regret defiened in \cref{eq:regret} is bounded by
\begin{align}\label{eq:regret-md}
        R &\leq \frac{\text{diam}_w(\simplex)}{\alpha}+ \frac{1}{\alpha}\sum_{t=1}^k D_w(\A_t, \Aint_{t+1}),
    \end{align}
where $\text{diam}_w(\simplex) = \sup_{\A, {\b B} \in \simplex} w(\A) - w(\b B)$ is the diameter of $\simplex$ with respect to $w$.
\end{theorem}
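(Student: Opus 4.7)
} The plan is to give the standard two-ingredient proof of the mirror descent regret bound, working with the MD formulation and invoking \cref{prop:equi} so the same bound covers FTRL. The two ingredients are the three-point identity for Bregman divergences and the generalized Pythagorean inequality for Bregman projections onto convex sets.

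First I would extract the first-order optimality condition from the unprojected MD step~\cref{eq:md-1}, which gives $\nabla w(\Aint_{t+1}) = \nabla w(\A_t) - \alpha \F_t$, so that $\alpha \F_t = \nabla w(\A_t) - \nabla w(\Aint_{t+1})$. This rewrites the per-round instantaneous regret purely in terms of mirror-map differences. Next I would apply the three-point identity for Bregman divergences, namely $\langle \nabla w(\b Y) - \nabla w(\b Z), \b X - \b Y\rangle = D_w(\b X, \b Z) - D_w(\b X, \b Y) - D_w(\b Y, \b Z)$, with $\b X = \U$, $\b Y = \A_t$, and $\b Z = \Aint_{t+1}$. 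The result is the identity
\begin{equation*}
\alpha \langle \A_t - \U, \F_t \rangle = D_w(\U, \A_t) - D_w(\U, \Aint_{t+1}) + D_w(\A_t, \Aint_{t+1}).
\end{equation*}

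Next I would invoke the generalized Pythagorean inequality for the Bregman projection step~\cref{eq:md-2}: for any $\U \in \simplex$, $D_w(\U, \Aint_{t+1}) \geq D_w(\U, \A_{t+1}) + D_w(\A_{t+1}, \Aint_{t+1}) \geq D_w(\U, \A_{t+1})$. Substituting into the identity above yields the one-step inequality $\alpha \langle \A_t - \U, \F_t \rangle \leq D_w(\U, \A_t) - D_w(\U, \A_{t+1}) + D_w(\A_t, \Aint_{t+1})$. Summing over $t \in [k]$, the first two terms telescope, so that after dropping the non-positive residual $-D_w(\U, \A_{k+1})$ and taking the supremum over $\U \in \simplex$ in \cref{eq:regret}, I would obtain $\alpha R \leq D_w(\U^\star, \A_1) + \sum_{t=1}^k D_w(\A_t, \Aint_{t+1})$, where $\U^\star$ is the minimizer in \cref{eq:regret}. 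Because $\A_1 = \argmin_{\A \in \simplex} w(\A)$, its first-order optimality gives $\langle \nabla w(\A_1), \U - \A_1\rangle \geq 0$ for all $\U \in \simplex$, hence $D_w(\U, \A_1) = w(\U) - w(\A_1) - \langle \nabla w(\A_1), \U - \A_1\rangle \leq w(\U) - w(\A_1) \leq \mathrm{diam}_w(\simplex)$, which closes the bound.

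The main obstacle will be justifying the two ``geometric'' facts in the matrix setting $\mathbb{S}_{++}^d$: that $\Aint_{t+1}$ from \cref{eq:md-1} lies in the interior of the domain of $w$ (so $\nabla w$ is well-defined there), and that the generalized Pythagorean inequality genuinely applies to projections onto the trace-one slice $\simplex$. Both require $w$ to be a strictly convex Legendre function on the PSD cone with the correct boundary behavior, so that $D_w(\cdot,\cdot)$ is non-negative, jointly continuous on the interior, and the variational inequality characterizing Bregman projections holds. For the entropy and $\ell_{1/2}$ regularizers considered here, these regularity properties are standard on $\mathbb{S}_{++}^d$, and the PSD analogs of the scalar three-point identity and Pythagorean inequality hold verbatim; the only delicate point is checking that all iterates $\A_t$ and $\Aint_{t+1}$ indeed remain in the interior, which follows from the closed forms derived in \cref{prop:expressions}.
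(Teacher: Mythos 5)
The paper does not prove this statement at all: it is imported verbatim as Theorem 28.4 of the cited reference \cite{md-bound}, so there is no in-paper proof to compare against. Your blind proof is the standard textbook argument for exactly this bound, and it is correct: the stationarity condition $\nabla w(\Aint_{t+1}) = \nabla w(\A_t) - \alpha \F_t$ from \cref{eq:md-1}, the three-point identity (which I verified in your stated form), the generalized Pythagorean inequality for the projection \cref{eq:md-2}, telescoping, and the diameter bound on $D_w(\U, \A_1)$ are precisely the ingredients used in the cited source. Your handling of the two delicate points is also sound in this specific setting: the unprojected and projected iterates remain positive definite by the closed forms in \cref{prop:expressions} and \cref{lm:unique} (for entropy, $\Aint_{t+1} = \exp(\log \A_t - \alpha\F_t) \succ \b 0$; for $\ell_{1/2}$, $\Aint_{t+1}^{-1/2} = \A_t^{-1/2} + \alpha\F_t \succ \b 0$), so $\nabla w$ is defined at every iterate. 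One small simplification you could note: because $\simplex$ is the trace-one slice, the KKT conditions give $\nabla w(\A_1) = -\nu_1 \b I$ and, for the projection step, $\nabla w(\A_{t+1}) - \nabla w(\Aint_{t+1}) = -\nu\,\b I$ for a scalar multiplier, so the variational cross terms $\langle -\nu \b I, \U - \A_{t+1}\rangle$ vanish \emph{exactly} (since $\Tr(\U) = \Tr(\A_{t+1}) = 1$) rather than merely being nonnegative; your inequality-based version is correct but slightly weaker than what holds here. Finally, invoking \cref{prop:equi} to transfer the bound to FTRL introduces no circularity, since that equivalence is proved in the paper independently of this theorem.
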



\begin{proposition}\label{prop:equi}
Given the same learning rate ${\color{black}\alpha}>0$ and loss matrices $\{\F_t \}_{t=1}^k$ for all rounds, MD is equivalent to FTRL for both entropy regularizer and $\ell_{1/2}$-regularizer.
\end{proposition}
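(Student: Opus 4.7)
}
My plan is to prove equivalence by induction on the round index $t$, showing that the sequence $\{\A_t\}_{t=1}^k$ produced by MD coincides pointwise with that produced by FTRL. The base case $t=1$ is immediate since both strategies set $\A_1 = \argmin_{\A\in\simplex} w(\A)$, which \cref{prop:expressions} identifies as $\frac{1}{d}\b I$. For the inductive step, the key idea is to reduce the two-step MD update \cref{eq:md-1}--\cref{eq:md-2} to a stationarity condition on $\nabla w$, and then match this condition against the explicit FTRL closed forms \cref{eq:At-ent}--\cref{eq:At-l12}, absorbing any mismatched additive constants into the Lagrange multiplier $\nu_t$.

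Concretely, suppose by induction that $\A_1,\ldots,\A_{t-1}$ agree under MD and FTRL. In step 1 of MD, writing out the first-order optimality condition for the unconstrained problem \cref{eq:md-1} using the gradients in \cref{tab:regularizer-def} gives, for the entropy regularizer, $\log \Aint_t = \log \A_{t-1} - \alpha \F_{t-1}$, and for the $\ell_{1/2}$-regularizer, $\Aint_t^{-1/2} = \A_{t-1}^{-1/2} + \alpha \F_{t-1}$. Combining each with the FTRL closed form for $\A_{t-1}$ from \cref{prop:expressions} and telescoping yields
\begin{align*}
\log \Aint_t = -\alpha \sum_{s=1}^{t-1} \F_s - \nu_{t-1}\b I,
\qquad
\Aint_t^{-1/2} = \alpha \sum_{s=1}^{t-1} \F_s + \nu_{t-1}\b I,
\end{align*}
for the entropy and $\ell_{1/2}$ cases, respectively. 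In step 2, the projection \cref{eq:md-2} onto $\simplex$ is a convex problem whose KKT condition (with multiplier $\mu$ for the trace constraint) reads $\nabla w(\A_t) = \nabla w(\Aint_t) - \mu \b I$. For both regularizers this means either $\log \A_t = \log \Aint_t - \mu \b I$ or $\A_t^{-1/2} = \Aint_t^{-1/2} + \mu \b I$, so $\A_t$ differs from the FTRL expression only by a scalar shift inside $\nabla w$; setting $\nu_t := \nu_{t-1} + \mu$ recovers precisely the FTRL formulas \cref{eq:At-ent} and \cref{eq:At-l12}.

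Finally, \cref{lm:unique} guarantees that the constant $\nu_t$ (and hence $\A_t$) satisfying the trace-one constraint is unique, so both strategies must select the same matrix. The main technical point to handle carefully is that $\log \A_{t-1}$ and $\F_{t-1}$ need not commute in the entropy case, so the exponential update cannot be written as a product $\A_{t-1}\exp(-\alpha\F_{t-1})$; however, this is not an obstacle for the equivalence argument because the induction is carried out directly at the level of the gradient $\nabla w(\Aint_t)$ (i.e.\ $\log\Aint_t$ or $\Aint_t^{-1/2}$), where the FTRL cumulative sum appears linearly and all intermediate steps are purely additive. This keeps the argument clean and closes the induction.
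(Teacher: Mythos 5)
Your proposal is correct and follows essentially the same route as the paper's proof: induction on $t$, writing the stationarity/KKT conditions of the MD update in terms of $\nabla w$, substituting the FTRL closed form for $\A_{t-1}$ from \cref{prop:expressions}, absorbing the scalar shift into the multiplier ($\nu_t = \nu_{t-1} + \mu$), and invoking the uniqueness guarantee of \cref{lm:unique}. The only cosmetic difference is that you work through the two-step decomposition \cref{eq:md-1}--\cref{eq:md-2} while the paper applies KKT directly to the one-step form \cref{eq:md-def-1}; since the paper declares these equivalent in \cref{def:md}, the arguments coincide.
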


\begin{proof}[Proof of \cref{prop:equi}]
    For entropy regularizer or $\ell_{1/2}$-regularizer, we can use induction to prove the equivalence between mirror descent (MD) and FTRL. To differentiate these two methods, we use $(L,\A_t,\nu_t)$  and $(L^\prime,\A^\prime_t,\nu^\prime_t)$ to represent FTRL's  and MD's Lagrangian and its primal and dual optimal, respectively.
    
    The Lagrangian of the MD method when $t\geq 2$ (equation~\eqref{eq:md-def-1}) is
    \begin{align}
        L^\prime (\A^\prime, \nu^\prime) &= \alpha \langle \A^\prime, \F_{t-1} \rangle + D_w(\A^\prime, \A^\prime_{t-1}) + \nu^\prime (\Tr(\A^\prime) -1)\nonumber\\
        &= \alpha \langle \A^\prime, \F_{t-1} \rangle + w(\A^\prime) - w(\A^\prime_{t-1}) - \langle \nabla w(\A^\prime_{t-1}), \A^\prime - \A^\prime_{t-1} \rangle\nonumber\\
        &+ \nu^\prime (\Tr(\A^\prime) -1).
    \end{align}
    Since the constriants satisfy the Slater's condition, the strong duality holds and the primal and dual optimal, i.e. $\A^\prime_t$ and $\mu_t$ should satisfy the KKT conditions:
    \begin{align}
        \alpha \F_{t-1} + \nabla w(\A^\prime_t) - \nabla w(\A^\prime_{t-1}) + \nu^\prime_t \b I = \b 0, \qquad \Tr(\A^\prime_t) = 1. \nonumber
    \end{align}
    Then  
\begin{align}\label{eq:sol-md}
    \A^\prime_t = (\nabla w)^{-1}\big(\nabla w (\A^\prime_{t-1}) - \nu^\prime_t \b I - \alpha \F_{t-1}\big),
\end{align}
where $\nu^\prime_t$ is the constant to ensure that $\Tr(\A^\prime_t) = 1$.

When $t=1$, $\A_1 = \A_1^\prime$ by definition.

Assume that at some $t\geq 2$, $\A_{t-1} = \A^\prime_{t-1}$, we need to show that $\A_{t} = \A^\prime_{t}$. By \cref{prop:expressions}, 
\begin{align}
    \A^\prime_{t-1} = \A_{t-1} =  (\nabla w)^{-1}\big( -\alpha \sum_{s=1}^{t-2} \F_s - \nu_{t-1} \b I\big), \label{eq:sol-t-1}\\
    \A_{t} =  (\nabla w)^{-1}\big( -\alpha \sum_{s=1}^{t-1} \F_s - \nu_{t} \b I\big).\label{eq:sol-ftrl-t}
\end{align}
Substituting \eqref{eq:sol-t-1} into \eqref{eq:sol-md}, we can get
\begin{align}
    \A^\prime_{t} &= (\nabla w)^{-1}\big(-\alpha \sum_{s=1}^{t-2} \F_s - \nu_{t-1} \b I -\nu^\prime_t \b I - \alpha \F_{t-1})\nonumber \\
    &=(\nabla w)^{-1}\big(-\alpha \sum_{s=1}^{t-1} \F_s - (\nu_{t-1} + \nu^\prime_{t}) \b I \big)\label{eq:sol-md-t}
\end{align}
Since $\Tr(\A_t) = \Tr(\A^\prime_{t}) = 1$, by comparing equations~\eqref{eq:sol-md-t} and \eqref{eq:sol-ftrl-t} and by \cref{lm:unique}, we have
\begin{align}
    \A_t = \A^\prime_t,\qquad \nu_t = \nu_{t-1} + \nu^\prime_t. \nonumber
\end{align}
By induction, we have proved the equivalence between MD and FTRL when $w$ is either entropy regularizer or $\ell_{1/2}$-regularizer.
\end{proof}

\subsubsection{\color{black} Full proof of \cref{prop:regret-bound}}\label{sec:full-proof-regret-bound}
\begin{proof}
We mainly use \cref{eq:regret-md} to derive the regret bounds for entropy- and $\ell_{1/2}$-regularizer.
\begin{enumerate}[label=(\alph*),leftmargin=*]
\item  For the entropy regularizer $w(\A) = \langle \A, \log \A - {\b I} \rangle$, {\color{black} we have  $\text{diam}_w(\simplex) \leq \log d$ by \cref{lm:diameter}}. From the two-step implementation of MD in \eqref{eq:md-1} and \eqref{eq:md-2}, we have:
        \begin{align}
            \Aint_{t+1} = \exp(\log \A_t - \alpha \F_t),\quad \A_{t}=\exp(c_t {\b I} - \alpha \sum_{s=1}^{t-1} \F_s),
        \end{align}
        where $c_t$ is constant s.t. $\Tr(\A_{t}) = 1$.
        
        By the Bregman divergence definition,
        \begin{align}\label{eq:ent-1}
            D_w(\A_t, \Aint_{t+1}) &= \langle \A_t, \log \A_t - \log \Aint_{t+1} \rangle + \Tr(\Aint_{t+1}) - \Tr(\A_t) \nonumber\\
            &=
     \langle \A_t, \alpha \F_t \rangle +\Tr[\exp(\log \A_t - \alpha \F_t)] - 1.
        \end{align}
By using Golden-Thompson inequality \cite{golden-thompson}, we have 
    \begin{align}\label{eq:ent-11}
            \Tr[\exp(\log \A_t - \alpha \F_t)]  \leq \left\langle \A_t, \exp(-\alpha\F_t) \right\rangle.
        \end{align}
Substitute \cref{eq:ent-11} and \cref{eq:ent-1} together into \cref{eq:regret-md}, we can get the regret bound for the case of entropy-regularizer, i.e. \cref{eq:regret-entropy}.
\item  For the $\ell_{1/2}-$regularizer $w(\A) = -2 \Tr[\A^{-1/2}]$, {\color{black} we have $\text{diam}_w(\simplex) \leq 2\sqrt{d}$ by \cref{lm:diameter}}. From the 2-step MD implementation, we have:
        \begin{align}
            \Aint_{t+1}^{-1/2} = \A_t^{-1/2} + \alpha \F_t, \qquad \A_{t}=\left(c_t {\b I} +\alpha \sum_{s=1}^{t-1} \F_s\right)^{-2},\nonumber
        \end{align}
        where $c_t$ is constant s.t. $\Tr(\A_{t}) = 1$.
      \begin{align}
          D_w(\A_t, \Aint_{t+1}) &= \langle \A_t, \Aint_{t+1}^{-1/2} \rangle + \Tr(\Aint_{t+1}^{1/2}) -2 \Tr(\A_t^{1/2})\nonumber\\
    &=\langle\A_t,  \alpha \F_t\rangle + \Tr[(\A_t^{-1/2}+ \alpha \F_t)^{-1}] -\Tr(\A_t^{1/2}).\label{eq:l12-1}
      \end{align} 
Substitute \cref{eq:l12-1} into \cref{eq:regret-md}, then we can get the regret bound of FTRL with $\ell_{1/2}$-regularizer, i.e. \cref{eq:regret-l12}.
\end{enumerate}
    
\end{proof}

\subsection{Proof of \cref{thm:min-eigen}}\label{sec:proof-min-eigen}

{\color{black}
\thmmineigen*
}


\begin{proof}
We mainly use the regret bounds from \cref{prop:regret-bound} to get the lower bound of $\lambda_{\min}(\F)$. First, recall that the regret is defined by $R :=\sum_{t=1}^k \left\langle\A_t , \F_t \right\rangle -  \inf_{\U \in \simplex} \left\langle \U, \sum_{t=1}^k \F_t \right\rangle$. In \cref{lm:mtd-eigen-min}, we showed that $\lambda_{\min}(\F) = \inf_{\U \in \simplex} \left\langle \U, \sum_{t=1}^k \F_t \right\rangle$. Thus,
\begin{align}\label{eq:new-regret-def}
    R = \sum_{t=1}^k \left\langle\A_t , \F_t \right\rangle - \lambda_{\min}(\F)
\end{align}
\begin{enumerate}[label=(\alph*),leftmargin=*]
    \item 
    For entropy-regularizer, substituting \cref{eq:new-regret-def} into \cref{eq:regret-entropy}, we can get
    \begin{align}\label{eq:pf-eigen-min-1}
        \lambda_{\min}(\F) \geq -\frac{\log d}{\alpha} + \frac{1}{\alpha} \sum_{t=1}^k \Bigl(1 - \langle \A_t, \exp(-\alpha \F_t) \rangle \Bigr).
    \end{align}
    Let $\F_t = \xx \xxt$, $\U \in \mathbb{R}^{d\times d}$ be a unitary matrix such that the first column is $\xx/\|\xx\|_2$, then it is easy to see that
    \begin{align}
        \exp(-\alpha \xx\xx^\top) = \U (\b I - \bLambda)\U^\top, \quad \bLambda = \mathrm{diag}\{1-\exp(-\alpha \|\xx\|_2^2, 0, 0, \cdots, 0 \}.
    \end{align}
    Thus
    \begin{align}\label{eq:ent-3}
       1-  \langle\A_t, \exp(-\alpha \F_t)\rangle &= 1-\langle \A_t, \exp(-\alpha \xx \xxt)\rangle  \nonumber \\&= 1- \left\langle \A_t, \b I- \left(1 - \exp(-\alpha \|\xx\|_2^2) \right)\frac{\xx \xx^\top}{\|\xx\|_2^2} \right\rangle \nonumber\\
        & = \left(1 - \exp(-\alpha \|\xx\|_2^2)\right)\frac{\xx^\top\A_t \xx}{\| \xx\|_2^2}.
    \end{align}
    Substitute \cref{eq:ent-3} into \cref{eq:pf-eigen-min-1}, we can get \cref{eq:ent-bound}.
\item  
For $\ell_{1/2}$-regularizer, substituting \cref{eq:new-regret-def} into \cref{eq:regret-l12}, we can get
    \begin{align}\label{eq:pf-eigen-min-2}
        \lambda_{\min}(\F) \geq -\frac{\sqrt{d}}{\alpha} + \frac{1}{\alpha} \sum_{t=1}^k 
          \Tr\!\left[\A_t^{1/2} - \bigl(\A_t^{-1/2} + \alpha \F_t \bigr)^{-1}\right].
    \end{align}
    
Let $\alpha \F_t = \xx \xx^\top$, by Sherman–Morrison formula \cref{lm:sherman-morrison},
,
    \begin{align}\label{eq:l12-2}
        \Tr[(\A_t^{-1/2}+ \alpha \F_t)^{-1}]&= \Tr\bigg[\A_t^{1/2} - \frac{\A_t^{1/2} \xx\xxt \A_t^{1/2}}{1+\xxt \A_t^{1/2} \xx}\bigg]\nonumber\\
        &= \Tr(\A_t^{1/2}) - \frac{\xx \A_t\xxt}{1+ \xt \A_t^{1/2} \xx}.
    \end{align}
Substitute \cref{eq:l12-2} into \cref{eq:pf-eigen-min-2}, we can obtain \cref{eq:l12-bound}.
\end{enumerate}
\end{proof}

\subsection{Proof of \cref{thm:design-complexity}}\label{sec:pf-complexity}

{\color{black}
\thmcomplexity*
}

\paragraph{\color{black}Proof sketch} 
{\color{black}Let $\{x_{i_t}\}_{t=1}^k$ denote the points selected over $k$ steps, and define $\F = \sum_{t=1}^k x_{i_t} x_{i_t}^\top$. By \cref{prop:goal}, if we can establish a lower bound $\lambda_{\min}(\F) \ge \tau$, then it follows that $f(\X_S^\top \X_S) \le \tau^{-1} f^*$.

Using \cref{thm:min-eigen}, for both the entropy and $\ell_{1/2}$ regularizers, we obtain
\begin{align}
    \lambda_{\min}(\F) \ge c + \sum_{t=1}^k s_t,
\end{align}
where $c$ is a constant, and $s_t$ is the objective value of the point selected at step $t$ as defined in \cref{def:opt-sample}.

Next, we provide lower bounds for $s_t$ in \cref{lm:lowerbound-max}: (a) and (b) correspond to the entropy-regularizer case, while (c) applies to the $\ell_{1/2}$-regularizer.

Finally, we complete the proof of \cref{thm:design-complexity} by combining these results at the end of this section.}

\begin{lemma}\label{lm:lowerbound-max}
Suppose  $\sum_{i=1}^n \pistar \x_i \xt_i = \b I$, where $\pistar \geq 0$ and $\sum_{i=1}^n  \pistar = k $. Then for $1 \leq t \leq k$, $\alpha >0$, and any non-zero $\A_t \in \simplex$,  we have the followings:

\begin{flalign}
&\text{(a) }\max_{i \in [n]}\frac{1 - \exp(-\alpha \xnorm)}{\alpha \xnorm}  \xt_i \A_t \x_i \geq \frac{1}{k + \alpha d}.\nonumber&&\\
&\text{(b) }  \max_{i \in [n]}\frac{1 - \exp(-\alpha \xnorm)}{\alpha \xnorm}   \xt_i \A_t \x_i \geq \frac{1-\exp(-\alpha \min_{i\in[n]} \|x_i\|^2)}{\alpha d}.\nonumber &&\\
&\text{(c) }  \max_{i \in [n]}\frac{\xt_i \A_t \x_i}{1 + \alpha \xt_i \A_t^{1/2} \x_i }   \geq \frac{1}{k + \alpha \sqrt{d}}.\nonumber&&
\end{flalign}
\end{lemma}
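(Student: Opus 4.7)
The plan is to prove all three inequalities by a common device. Since the normalization $\sum_i \pi_i^* x_i x_i^\top = I$ together with $\sum_i \pi_i^* = k$ is available, I will apply the pointwise-to-average bound $\max_i a_i/b_i \geq \sum_i \pi_i^* a_i / \sum_i \pi_i^* b_i$ from \cref{lm:inequ-1} in each case, choosing $a_i,b_i$ so that the weighted numerator collapses to $\langle A_t, I\rangle = \Tr(A_t) = 1$ and the weighted denominator collapses to a simple trace expression in $A_t$ or $A_t^{1/2}$.

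For part (a), the preliminary step is to remove the exponential. Rewriting the quantity being maximized as $x_i^\top A_t x_i \big/ \bigl[\alpha\|x_i\|_2^2/(1-\exp(-\alpha\|x_i\|_2^2))\bigr]$ and invoking \cref{lm:inequ-0} gives the clean lower bound $\max_i x_i^\top A_t x_i/(1+\alpha\|x_i\|_2^2)$. Applying \cref{lm:inequ-1} with $a_i = x_i^\top A_t x_i$, $b_i = 1+\alpha\|x_i\|_2^2$ and weights $\pi_i^*$, the numerator becomes $\langle A_t, \sum_i \pi_i^* x_i x_i^\top\rangle = 1$ and the denominator becomes $k + \alpha \Tr(I) = k+\alpha d$, which yields the claim.

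For part (b), I will peel the exponential factor out of the max by monotonicity: since $\|x_i\|_2^2 \geq \min_j \|x_j\|_2^2$, we have $1-\exp(-\alpha\|x_i\|_2^2) \geq 1-\exp(-\alpha\min_j\|x_j\|_2^2)$, so the common factor can be pulled in front and the problem reduces to $\max_i (x_i^\top A_t x_i)/\|x_i\|_2^2 \geq 1/d$. This again follows from \cref{lm:inequ-1} with $a_i = x_i^\top A_t x_i$ and $b_i = \|x_i\|_2^2$: the weighted numerator is $\Tr(A_t)=1$ and the weighted denominator is $\Tr(I)=d$.

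For part (c), I will apply \cref{lm:inequ-1} directly with $a_i = x_i^\top A_t x_i$ and $b_i = 1+\alpha x_i^\top A_t^{1/2} x_i$. The numerator is $\Tr(A_t)=1$ as before, while the denominator is $k + \alpha \Tr(A_t^{1/2})$. Controlling this then requires the bound $\Tr(A_t^{1/2}) \leq \sqrt{d}$, which I will obtain from Cauchy--Schwarz: $\Tr(A_t^{1/2}) = \langle I, A_t^{1/2}\rangle \leq \sqrt{d}\sqrt{\Tr(A_t)} = \sqrt{d}$.

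The only nontrivial step is the substitution used in (a): recognizing that \cref{lm:inequ-0} is exactly what is needed to convert $\alpha\|x_i\|_2^2/(1-\exp(-\alpha\|x_i\|_2^2))$ into the linear quantity $1+\alpha\|x_i\|_2^2$, so that the pointwise-to-average inequality can produce a bound in which the $\pi_i^*$-weighted denominator reduces cleanly to $k+\alpha d$. Once this translation is in place, each part is just a two-line calculation, so I do not anticipate further obstacles.
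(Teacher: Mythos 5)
Your proposal is correct and follows essentially the same route as the paper's proof: the weighted mediant inequality (\cref{lm:inequ-1}) with weights $\pi_i^*$ so that the numerator collapses to $\Tr(\A_t)=1$, combined with \cref{lm:inequ-0} for (a), peeling the minimal exponential factor for (b), and the bound $\Tr(\A_t^{1/2})\leq\sqrt{d}$ for (c). The only cosmetic differences are that you apply \cref{lm:inequ-0} pointwise before averaging in (a) (the paper averages first, then bounds the weighted denominator), and you obtain $\Tr(\A_t^{1/2})\leq\sqrt{d}$ via Cauchy--Schwarz rather than concavity of the square root; both variants are equally valid.
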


\begin{proof}
$ $
\begin{enumerate}[label=(\alph*),leftmargin=*]
\item Recall that $\Tr(\A_t) =1$ and $\sum_{i=1}^n \pistar \x_i \xt_i = \b I$, then 
\begin{align}
    \sumin \pistar \xt_i \A_t \x_i = \sumin \pistar \langle \x_i \xt_i, \A_t \rangle = \langle \b I, \A_t \rangle = 1, \label{eq:norminator}\\
    \sumin \pistar \xnorm = \sumin \pistar \langle \x_i \xt_i, \b I \rangle = \langle \b I,\b I  \rangle = d.
\end{align}
By~\cref{lm:inequ-0}, we have
\begin{align}
    \entropyfactor \leq 1 + \alpha \xnorm,
\end{align}
and thus
\begin{align}\label{eq:ent-denorm-1}
    \sumin \pistar \entropyfactor\leq \sumin \pistar + \alpha \sumin \pistar \xnorm = k + \alpha d. 
\end{align}
Combining equations~\eqref{eq:norminator} and \eqref{eq:ent-denorm-1}, we can get
\begin{align}
    \max_{i\in[n]} \frac{1 - \exp(-\alpha \xnorm)}{\alpha \xnorm} \xt_i \A_t \x_i  \stackrel{\cref{lm:inequ-1}}{\geq} \frac{\sumin \pistar \xt_i \A_t \x_i}{\sumin \pistar \entropyfactor} \geq \frac{1}{k + \alpha d}.
\end{align}
\item We have
\begin{align}
     \max_{i\in[n]} \frac{1 - \exp(-\alpha \xnorm)}{\alpha \xnorm} \xt_i \A_t \x_i  &\geq \min_{i\in[n]} \left[1-\exp(-\alpha \|x_i\|^2) \right]\cdot \max_{i\in[n]} \frac{ \xt_i \A_t \x_i}{\alpha \xnorm}\nonumber \\
     & \geq \left[1- \exp(-\alpha \min_{i\in[n]} \|x_i\|^2)\right] \cdot \frac{\sumin \pistar \xt_i \A_t \x_i}{\sumin \pistar \alpha \xnorm} \nonumber\\
     &= \frac{1-\exp(-\alpha \min_{i\in[n]} \|x_i\|^2)}{\alpha d},
\end{align}
where the last inequality is derived by \cref{lm:inequ-1}.
\item Note that 
\begin{align}
    \sumin \pistar(1 + \alpha \xt_i \A_t^{1/2} \x_i) = \sumin \pistar + \alpha \sumin \pistar \xt_i \A_t^{1/2} \x_i = k + \Tr(\A_t^{1/2}).
\end{align}
Let $\lambda_1, \lambda_2, \cdots, \lambda_d$ be the eigenvalues of $\A_t$, then $\Tr(\A_t) = \sum_{j=1}^d \lambda_j = 1$. Since $\sqrt{x}$ is a concave function for $x\geq 0$, we have
\begin{align}
    \Tr(\A_t^{1/2}) = d \cdot \sum_{j=1}^d \sqrt{\lambda_j} /d \leq d\cdot \sqrt{\sum_{j=1}^d \lambda_j /d} = d \cdot \sqrt{1/d} = \sqrt{d}.
\end{align}
Thus
\begin{align}
     \max_{i \in [n]}\frac{\xt_i \A_t \x_i}{1 + \alpha \xt_i \A_t^{1/2} \x_i }   &\stackrel{\cref{lm:inequ-1}}{\geq} \frac{\sumin \pistar \xt_i \A_t \x_i}{\sumin\pistar(1 + \alpha \xt_i \A_t^{1/2} \x_i) }  \geq \frac{1}{k + \alpha \sqrt{d}}.
\end{align}
\end{enumerate}
\end{proof}

\begin{proof}[Full proof of \cref{thm:design-complexity}]
$ $
\begin{enumerate}[label=(\alph*),leftmargin=*]
\item 
Let $\alpha = 4 \log d/ \epsilon$, $k\geq16 d\log d/\epsilon^2$, i.e. $C_1 = 4$, $C_2 \geq 16$, then 
\begin{align}\label{eq:ent-complex-1}
    \frac{\log d}{\alpha} = \frac{\epsilon}{4}.
\end{align}
By \cref{eq:ent-sample} of the sampling strategy for entropy regularizer and \cref{lm:lowerbound-max}, we have
\begin{align}\label{eq:ent-complex-2}
    \frac{1}{\alpha}\sum_{t=1}^k [1-\exp(-\alpha \Tr(\F_t)] \frac{\langle \A_t, \F_t \rangle}{\Tr(\F_t)} &= \sum_{t=1}^k \max_{i\in[n]} \frac{1 - \exp(-\alpha \xxnorm)}{\alpha \xxnorm} \xxt_i \A_t \xx_i \nonumber\\
    &\geq \frac{k}{k + \alpha d} \geq \frac{1}{1+\frac{\epsilon}{4}} \geq 1-\frac{\epsilon}{4}.
\end{align}
Substituting equations~\eqref{eq:ent-complex-1} and \eqref{eq:ent-complex-2} into the lower bound of equation~\eqref{eq:ent-bound}, then for $\epsilon \in (0,1)$, we can get
\begin{align}
    \inf_{\U \in \simplex} \langle \U, \sum_{t=1}^k \F_t \rangle \geq 1-\frac{\epsilon}{4} - \frac{\epsilon}{4} \geq 1-\frac{\epsilon}{2} \geq \frac{1}{1+\epsilon}.\nonumber
\end{align}
By~\cref{lm:mtd-eigen-min}, we have $ \lambda_{\min} (\XXT \BS \XX) \geq 1/(1+\epsilon)$. We finish the proof for entropy-regularizer by using \cref{prop:goal}.
\item 
By \cref{eq:ent-sample} and \cref{lm:lowerbound-max}, we have
\begin{align}\label{eq:ent-complex-3}
 \frac{1}{\alpha}\sum_{t=1}^k [1-\exp(-\alpha \Tr(\F_t)] \frac{\langle \A_t, \F_t \rangle}{\Tr(\F_t)} &= \sum_{t=1}^k \max_{i\in[n]} \frac{1 - \exp(-\alpha \xxnorm)}{\alpha \xxnorm} \xxt_i \A_t \xx_i \nonumber\\
 &\geq \left[1 - \exp\left(-\min_{t\in[k]} \|\widetilde{x}_{i_t}\|^2\right)\right]\frac{k}{ \alpha d} \geq 1.
\end{align}
Substituting equations~\eqref{eq:ent-complex-3} into the lower bound of equation~\eqref{eq:ent-bound}, then for $\epsilon \in (0,1)$, we can get
\begin{align}
    \inf_{\U \in \simplex} \langle \U, \sum_{t=1}^k \F_t \rangle  \geq 1-\frac{\log d}{\alpha} \geq 1 -\frac{\epsilon}{2} \geq \frac{1}{1+\epsilon} .\nonumber
\end{align}
By \cref{lm:mtd-eigen-min}, we have $ \lambda_{\min} (\XXT \BS \XX) \geq 1/(1+\epsilon)$. We finish the proof by using \cref{prop:goal}.
\item 
Let $\alpha = 8 \sqrt{d}/\epsilon$, $k \geq 32 d/\epsilon^2$, i.e. $C_4 =8$, $C_5=32$, then
\begin{align}\label{eq:l12-complex-1}
    \frac{2\sqrt{d}}{\alpha} = \frac{\epsilon}{4}.
\end{align}
By \cref{eq:l12-sample} of the sampling strategy for entropy regularizer and \cref{lm:lowerbound-max}, we have
\begin{align}\label{eq:l12-complex-2}
    \sum_{t=1}^k \frac{\langle\A_t, \F_t \rangle}{1 + \langle \A_t^{1/2}, \alpha \F_t\rangle} & = \sum_{t=1}^k \max_{i \in [n]}\frac{\xxt_i \A_t \xx_i}{1 + \alpha \xxt_i \A_t^{1/2} \xx_i } \nonumber \\
    &\geq \frac{k}{k + \alpha \sqrt{d}} \geq \frac{1}{1 + \frac{\epsilon}{4}} \geq 1 - \frac{\epsilon}{4}.
\end{align}
Substituting equations~\eqref{eq:l12-complex-1} and \eqref{eq:l12-complex-2} into the lower bound of equation~\eqref{eq:l12-bound}, then for $\epsilon\in (0,1)$, we can get
\begin{align}
    \inf_{\U \in \simplex} \langle \U, \sum_{t=1}^k \F_t \rangle
      \geq 1-\frac{\epsilon}{4} - \frac{\epsilon}{4} \geq 1-\frac{\epsilon}{2} \geq \frac{1}{1+\epsilon}.\nonumber
\end{align}
By~\cref{lm:mtd-eigen-min}, we have $ \lambda_{\min} (\XXT \BS \XX) \geq 1/(1+\epsilon)$. We finish the proof by using~\cref{prop:goal}.

\end{enumerate}
\end{proof}

\section{Proofs of \Cref{sec:reg}}\label{sec:appendix-pf-regdesign}


\subsection{Proof of \cref{thm:reg-min-eigen}}\label{sec:pf-reg-min-eigen}

{\color{black}
\thmlowerboundreg*
}

\begin{proof}
Similar to proof of \cref{thm:min-eigen}, we mainly use the regret bound in \cref{eq:regret-md} to derive lower bounds for $\lambda_{\min}(\sum_{t\in[k]} \F_t)$ in cases of entropy-regularizer and $\ell_{1/2}$-regularizer. But for the regularized optimal design problem, we have $\F_t = \xx_{i_t} \xx_{i_t}^\top + \frac{\lambda}{{\color{black}k}} \optsigma^{-1}$.
\begin{enumerate}[label=(\alph*),leftmargin=*]
\item For the entropy regularizer $w(\A) = \langle \A, \log \A - {\b I} \rangle$, we have diameter over $\simplex$ upper bounded by $\log d$, i.e. $\text{diam}_w(\simplex) \leq \log d$. 
        
        From the two-step implementation of MD in \eqref{eq:md-1} and \eqref{eq:md-2}, we have:
        \begin{align}
            \Aint_{t+1} = \exp(\log \A_t - \alpha \F_t),\quad \A_{t}=\exp(c_t {\b I} - \alpha \sum_{s=1}^{t-1} \F_s),
        \end{align}
        where $c_t$ is constant s.t. $\Tr(\A_{t}) = 1$.
        
        By the Bregman divergence definition,
        \begin{align}\label{eq:reg-ent-1}
            D_w(\A_t, \Aint_{t+1}) &= \langle \A_t, \log \A_t - \log \Aint_{t+1} \rangle + \Tr(\Aint_{t+1}) - \Tr(\A_t) \nonumber\\
            &=
     \langle \A_t, \alpha \F_t \rangle +\Tr[\exp(\log \A_t - \alpha \F_t] - 1.
        \end{align}
    Now, let $\F_t = \xx \xxt + \frac{\lambda}{{\color{black}k}}\optsigma^{-1}$, by Golden–Thompson inequality~\cite{golden-thompson},
     \begin{align}\label{eq:reg-ent-2}
\Tr[\exp(\log \A_t - \alpha \F_t)] - 1 &= \Tr\big[\exp\big(\log \A_t -\alpha \frac{\lambda}{{\color{black}k}}\optsigma^{-1} - \alpha  \xx \xxt\big)\big] - 1\nonumber\\
&\leq \langle \exp\big(\log \A_t -\alpha \frac{\lambda}{{\color{black}k}}\optsigma^{-1}), \exp(-\alpha \xx \xxt)\rangle - 1.
    \end{align}
    We define $\B_t\in\mathbb{R}^{d\times d}$ as symmetric positive definite matrix such that 
    \begin{align}\label{eq:reg-ent-22}
        \B_t \triangleq  \exp\big(\log \A_t -\alpha \frac{\lambda}{{\color{black}k}}\optsigma^{-1}).
    \end{align}

    Combing \cref{eq:reg-ent-1,eq:reg-ent-2,eq:reg-ent-22} and substituting into \cref{eq:regret-md}, we can get
    \begin{align}\label{eq:reg-ent-23}
        \lambda_{\min}\left(\sum_{t=1}^k \F_t\right)\geq -\frac{\log d}{\alpha} + \frac{1}{\alpha} \sum_{t=1}^k 1 - \left \langle \B_t, \exp(-\alpha \xx\xx^\top)\right \rangle.
    \end{align}

    Let $\U \in \mathbb{R}^{d\times d}$ be a unitary matrix such that the first column is $\xx/\|\xx\|_2$, then it is easy to see that
    \begin{align}
        \exp(-\alpha \xx\xx^\top) = \U (\b I - \bLambda)\U^\top, \quad \bLambda = \mathrm{diag}\{1-\exp(-\alpha \|\xx\|_2^2, 0, 0, \cdots, 0 \}.
    \end{align}
    Then
    \begin{align}\label{eq:reg-ent-3}
        \langle \B_t, \exp(-\alpha \xx \xxt)\rangle - 1 &= \Big\langle \B_t, \b I- \big(1 - \exp(-\alpha \|\xx\|_2^2) \big)\frac{\xx \xx^\top}{\|\xx\|_2^2} \Big\rangle - \Tr(\A_t)\nonumber\\
        & = \Tr(\B_t - {\color{black}\A_t}) -[1 - \exp(-\alpha \|\xx\|_2^2)]\frac{\xx^\top{\color{black} \B_t} \xx}{\| \xx\|_2^2}.
    \end{align}
    Substituting \eqref{eq:reg-ent-3} into \eqref{eq:reg-ent-23}, we can get the lower bound in \eqref{eq:reg-ent-bound}.
\item For the $\ell_{1/2}-$regularizer $w(\A) = -2 \Tr[\A^{-1/2}]$, we have diameter over $\simplex$ upper bounded by $2\sqrt{d}$, i.e. $\text{diam}_w(\simplex) \leq 2\sqrt{d}$.
        
        From the two-step implementation of MD, we have:
        \begin{align}
            \Aint_{t+1}^{-1/2} = \A_t^{-1/2} + \alpha \F_t, \qquad \A_{t}=(c_t {\b I} +\alpha \sum_{s=1}^{t-1} \F_s)^{-2},\nonumber
        \end{align}
        where $c_t$ is constant s.t. $\Tr(\A_{t}) = 1$.
      \begin{align}\label{eq:reg-l12-1}
          D_w(\A_t, \Aint_{t+1}) &= \langle \A_t, \Aint_{t+1}^{-1/2} \rangle + \Tr(\Aint_{t+1}^{1/2}) -2 \Tr(\A_t^{1/2})\nonumber\\
    &=\langle\A_t, \A_t^{-1/2}+ \alpha \F_t\rangle + \Tr[(\A_t^{-1/2}+ \alpha \F_t)^{-1}] -2 \Tr(\A_t^{1/2})
      \end{align} 
        Let $\alpha \F_t = \xx \xx^\top \frac{\lambda}{k}\optsigma^{-1}$, by Sherman–Morrison formula \cref{lm:sherman-morrison},
,
    \begin{align}\label{eq:reg-l12-2}
        \Tr[(\A_t^{-1/2}+ \alpha \F_t)^{-1}]&=\Tr[(\A_t^{-1/2}+\alpha\frac{\lambda}{k}\optsigma^{-1} +\alpha \xx\xxt )^{-1}] \nonumber\\
        &= \Tr[(\B_t^{-1/2} +\alpha \xx\xxt )^{-1}]= \Tr\bigg[\B_t^{1/2} - \frac{\B_t^{1/2} \xx\xxt \B_t^{1/2}}{1+\xxt \B_t^{1/2} \xx}\bigg]  \nonumber\\
        &= \Tr(\B_t^{1/2}) - \frac{\xx \B_t\xxt}{1+ \xt \B_t^{1/2} \xx},
    \end{align}
    where we define $\B_t\in\mathbb{R}^{d\times d}$ as symmetric positive definite matrix such that
    \begin{align}
        \B_t^{-1/2} = \A_t^{-1/2} + \alpha \frac{\lambda}{k}\optsigma^{-1}.
    \end{align}
    By substituting \eqref{eq:reg-l12-2} into \cref{eq:reg-l12-1}, and then into \cref{eq:regret-md}, we obtain the right-hand side of \eqref{eq:reg-l12-bound}.
\end{enumerate}
\end{proof}

\subsection{Proof of Proposition \ref{prop:guarantee-entropy-reg}}\label{sec:pf-reg-entropy}
$ $

{\color{black}
\textsc{Proposition}~4.4.  \textit{For entropy-regularizer, if $k \geq \alpha d$, at each round $t\in[k]$, we have
\begin{align}\tag{\ref{eq:guarantee-entropy-reg}}
 \max_{i\in[n]} \frac{1}{\alpha}  \left\{\Tr\left(\A_t-\B_t\right) + \left[1-\exp\left(-\alpha \|\xx_{i}\|_2^2\right)\right] \frac{\xx_{i}^\top\B_t \xx_{i}}{\| \xx_{i}\|_2^2}\right\} \geq \frac{1}{k +\alpha d}.
\end{align}}
}

\paragraph{\color{black}Proof sketch} 
{\color{black}The key difference between the regularized and unregularized cases lies in the form of the loss matrix. 
If $i_t$ denotes the index of the selected point at step $t$, then in the unregularized case we have 
$\F_t = \xx_{i_t}\xx_{i_t}^\top$, whereas in the regularized case 
$\F_t = \xx_{i_t}\xx_{i_t}^\top + \tfrac{\lambda}{k}\bSigma_\diamond^{-1}$. 
The main challenge arises because no assumptions are made on the data points beyond the full-rank condition of the input matrix $\X$.  

To address the extra term $\tfrac{\lambda}{k}\bSigma_\diamond^{-1}$, we first derive a lower bound involving it, denoted by $h(\tD)$ in \cref{eq:pf-ent-1}, where $\tD = \tfrac{\lambda}{k}\bSigma_\diamond^{-1}$. 
We then show that $h$ is convex over its domain $\mathrm{dom}\,h = \{ \tD \in \mathbb{S}_{+}^{d} : \tD \preceq \tfrac{1}{k}\b I \}$. 
Finally, this convexity allows us to establish a lower bound on $h(\tD)$, which leads directly to \cref{eq:guarantee-entropy-reg}.}

\begin{proof}[Full proof]
    For the ease of discussion, we denote $\tD = \frac{\lambda}{k}\optsigma^{-1}$. By~\cref{eq:bt-ent} we have
    \begin{align}\label{eq:bt-pf-ent}
        \B_t = \exp(\log \A_t - \alpha \tD).
    \end{align}
By~\cref{eq:reg-transforms}, we have
    \begin{align}
        \sum_{i\in[n]} \pi_{i}^{\diamond} \xx_i \xx_i^\top &= \sum_{i\in[n]} \pi_{i}^{\diamond} \optsigma^{-1/2} \x_i \x_i^\top \optsigma^{-1/2} = \optsigma^{-1/2} (\optsigma- \lambda \b I) \optsigma^{-1/2} = \b I - \lambda \optsigma^{-1}\nonumber\\
        &= \b I - k\tD ,\label{eq:reg-cov_sum}
    \end{align}
and
\begin{align}\label{eq:reg-norm_sum}
     \sum_{i\in[n]} \pi_{i}^{\diamond} \| \xx_i\|_2^2 =\Tr( \sum_{i\in[n]} \pi_{i}^{\diamond} \xx_i \xx_i^\top) = d - k\Tr(\tD).
\end{align}

Since $\Tr(\A_t)=1$, we have
\begin{align}\label{eq:pf-ent-1}
    &\max_{i\in[n]} \frac{1}{\alpha}  \Big\{\Tr(\A_t-\B_t) + [1-\exp(-\alpha \|\xx_{i}\|_2^2)] \frac{\xx_{i}^\top\B_t \xx_{i}}{\| \xx_{i}\|_2^2}\Big\}\nonumber\\
    &=\max_{i\in[n]} \frac{1-\Tr(\B_t)} {\alpha} + \frac{\xx_{i}^\top\B_t \xx_{i}}{\alpha\| \xx_{i}\|_2^2/(1-\exp(-\alpha \|\xx_{i}\|_2^2))}\nonumber\\
    &\stackrel{\cref{lm:inequ-0}}{\geq}  \frac{1-\Tr(\B_t)} {\alpha} + \max_{i\in[n]}\frac{\xx_{i}^\top\B_t \xx_{i}}{1 + \alpha\| \xx_{i}\|_2^2}\nonumber\\
    &\stackrel{\cref{lm:inequ-1}}{\geq} \frac{1-\Tr(\B_t)} {\alpha} + \frac{\langle \B_t, \sum_{i\in[n]} \pi_i^{\diamond} \xx_{i}\xx_{i}^\top\rangle}{\sum_{i\in[n]} \pi_i^{\diamond} + \alpha\sum_{i\in[n]} \pi_i^{\diamond} \| \xx_{i}\|_2^2}\nonumber\\
    & \geq \frac{1 - \Tr(\B_t)}{\alpha} + \frac{\langle \B_t, \b I-k\tD\rangle}{k + \alpha d} \triangleq h(\tD),
\end{align}
where the last inequality follows by \cref{eq:reg-norm_sum} and the fact that $\sum_{i\in[n]} \pi_i^{\diamond} = k$.
We define $h:\mathbb{S}_{+}^{d}\rightarrow \mathbb{R}$. By \cref{eq:reg-cov_sum}, $k \tD \preceq \b I$ and thus the domain of function $h$ is $\mathrm{dom}h = \{ \tD \in \mathbb{S}_{+}^{d} : \tD \preceq \frac{1}{k }\b I\}$.

We intend to find a lower bound for $h(\tD)$. First we prove that $h(\tD)$ is convex in its domain. We can verify its convexity by considering an arbitrary line, given by $\Z+t\V$, where $\Z, \V \in\mathbb{S}_{+}^{d}$. Define $g(t) \triangleq h(\Z+t\V)$, where $t$ is restricted to the interval such that $\Z + t\V \in \mathrm{dom}h$. By convex analysis theory, it is sufficient to prove the convexity of a function $g$. Substitute $\tD =\Z+t\V $ in to \cref{eq:pf-ent-1}, we have
\begin{align}
    g(t) = h(\Z + t\V) &= \frac{1}{\alpha}-\frac{1}{\alpha}\Tr\big(\exp(\log \A_t  - \alpha \Z - \alpha t \V)\big)\nonumber\\
    &+ \frac{1}{k + \alpha d} \big \langle \exp(\log \A_t  - \alpha \Z - \alpha t \V), \b I - k\Z - kt \V \big \rangle,
\end{align}
and the second order derivative has the following form:
\begin{align}
    g^{\prime \prime }(t) &=-\alpha \Tr(\V^2 \B_t) + \frac{2 \alpha k}{k + \alpha d} \Tr(\V^2 \B_t) + \frac{\alpha^2}{k + \alpha d} \big \langle \V^2\B_t, \b I - k\Z - kt \V \big \rangle\nonumber\\
    &\stackrel{(a)}{\geq} \frac{\alpha(k-\alpha d)}{k+\alpha d} \Tr(\V^2 \B_t) \stackrel{(b)}{\geq} 0,
\end{align}
where inequalities (a) and (b) follow by $\b I - k\Z - kt \V \succeq \b 0$ and $\V\succeq \b 0$ and $\B_t \succ \b 0$. Since $g^{\prime \prime}(t) \geq 0$, $g(t)$ is convex and so is $h(\tD)$. Note that the gradient of $h(\tD)$ is
\begin{align}
    \nabla h(\tD) &= \B_t + \frac{1}{k + \alpha d}[-\alpha \B_t - k \B_t + \alpha k \B_t \tD]\nonumber\\
    &= \big(1 - \frac{\alpha + k}{k + \alpha d}\big) \B_t + \frac{\alpha k}{k + \alpha d} \B_t\tD \succeq \b 0.
\end{align}
Thus 
 \begin{align}
     \inf_{\tD \in \text{dom}h} h(\tD) = h(\b 0) = \frac{1}{k + \alpha d}.
 \end{align}    
Substitute this into \cref{eq:pf-ent-1}, we finish the proof.
 
\end{proof}

\subsection{Proof of Proposition \ref{prop:guarantee-l12-reg}}\label{sec:pf-reg-l12}
$ $

{\color{black}
\textsc{Proposition}~4.5. For the $\ell_{1/2}$-regularizer, at each round $t\in[k]$, the following holds:
    \begin{align}\tag{\ref{eq:guarantee-l12-reg}}
        \max_{i\in[n]} \frac{1}{\alpha} \left\{\Tr\left(\A_t^{1/2}-\B_t^{1/2}\right) +  \frac{\alpha\xx_{i}^\top \B_t\xx_{i}}{1 +\alpha\xx_{i}^\top\B_t^{1/2}\xx_{i}}\right\} \geq \frac{1 - \frac{\alpha}{2k}}{k + \alpha\sqrt{d}}.
    \end{align}

}

\paragraph{\color{black}Proof sketch} 
{\color{black}As in \cref{prop:guarantee-entropy-reg}, the main difficulty comes from handling the additional $\tfrac{\lambda}{k}\bSigma_\diamond^{-1}$ term in the loss matrix. 
We follow a similar strategy, though the $\ell_{1/2}$-regularizer case requires more involved algebraic manipulations.} 

{\color{black}We begin by applying the inequalities from \Cref{sec:support-lemma} to obtain a lower bound on the objective that incorporates $\tfrac{\lambda}{k}\bSigma_\diamond^{-1}$ (steps 1 and 2 of the full proof):
\begin{align}\label{eq:sketch-1}
    &\max_{i\in[n]}\frac{1}{\alpha}\left\{\Tr(\A_t^{1/2}-\B_t^{1/2}) +  \frac{\alpha\xx_{i}^\top \B_t\xx_{i}}{1 +\alpha\xx_{i}^\top\B_t^{1/2}\xx_{i}}\right\} \nonumber\\&\geq \frac{\langle \A_t, k\tD\rangle + \langle \B_t, \b I - k\tD\rangle}{k + \alpha[\langle\A_t^{1/2}, k\tD \rangle + \langle\B_t^{1/2}, \b I - k\tD \rangle]}.
\end{align}}

{\color{black}Next, using convex analysis, we derive a lower bound for the numerator of \cref{eq:sketch-1} (step 3):
\begin{align}\label{eq:sketch-2}
    \langle \A_t, k\tD\rangle + \langle \B_t, \b I- k\tD\rangle \geq 1-\frac{\alpha}{2k}.
\end{align}}

{\color{black}We then establish an upper bound for the denominator term involving $\tD$ (step 4):
\begin{align}\label{eq:sketch-3}
     \langle \A_t^{1/2},k\tD\rangle + \langle \B_t^{1/2}, \b I -k\tD\rangle  \leq \sqrt{d}.
\end{align}}

{\color{black}Finally, substituting \cref{eq:sketch-2,eq:sketch-3} into \cref{eq:sketch-1} yields the desired guarantee in \cref{eq:guarantee-l12-reg}.}

\begin{proof}[Full proof]
For the ease of discussion, we denote $\tD = \frac{\lambda}{k}\optsigma^{-1}$. By~\cref{eq:bt-l12,eq:reg-transforms} we have
    \begin{align}
        \B_t^{-1/2} &= \A_t^{-1/2} + \alpha \tD,\label{eq:bt-pf-l12}\\
        \sum_{i\in[n]} \pi_{i}^{\diamond} \xx_i \xx_i^\top &= \sum_{i\in[n]} \pi_{i}^{\diamond} \optsigma^{-1/2} \x_i \x_i^\top \optsigma^{-1/2} = \optsigma^{-1/2} (\optsigma- \lambda \b I) \optsigma^{-1/2} = \b I - \lambda \optsigma^{-1}\nonumber\\
        &= \b I - k\tD.\label{eq:reg-cov_sum_2}
    \end{align}
    
\textbf{step 1.} At this step, we intend to find a lower bound for $\frac{1}{\alpha}\Tr(\A_t^{1/2} - \B_t^{1/2})$. By \cref{eq:bt-pf-l12} and Woodbury's matrix identity (\cref{lm:woodbury}), we have
\begin{align}
    \B_t^{1/2} &= \left( \A_t^{-1/2} + \optsigma^{-1/2} \left(\frac{\alpha \lambda}{k}\b I\right) \optsigma^{-1/2}\right)^{-1}\nonumber\\
    &= \A_t^{1/2} -\alpha \A_t^{1/2} \underbrace{\optsigma^{-1/2} \left(\frac{k}{\lambda}\b I + \alpha \optsigma^{-1/2} \A_t^{1/2} \optsigma^{-1/2} \right)^{-1} \optsigma^{-1/2}}_{\triangleq \b E}  \A_t^{1/2}.\label{eq:pf-l12-Bt}
\end{align}
Thus
\begin{align}
    \frac{1}{\alpha} \Tr(\A_t^{1/2} -\B_t^{1/2})& =  \Tr \left( \A_t^{1/2}\optsigma^{-1/2} \left(\frac{{\color{black}k}}{\lambda}\b I + \alpha\optsigma^{-1/2} \A_t^{1/2} \optsigma^{-1/2} \right)^{-1} \optsigma^{-1/2}  \A_t^{1/2}\right)\nonumber\\
    & =  \left \langle \left(\frac{{\color{black}k}}{\lambda}\b I + \alpha\optsigma^{-1/2} \A_t^{1/2} \optsigma^{-1/2} \right)^{-1},\optsigma^{-1/2}  \A_t \optsigma^{-1/2}   \right \rangle    \nonumber\\
   &   = \left \langle \left( \b I + \alpha \frac{\lambda}{k} \optsigma^{-1/2} \A_t^{1/2} \optsigma^{-1/2} \right)^{-1}, \frac{\lambda}{k} \optsigma^{-1/2} \A_t\optsigma^{-1/2}    \right \rangle\nonumber\\
     & \stackrel{\cref{lm:inequ-3}}{\geq} \frac{\Tr\left(\frac{\lambda}{k} \optsigma^{-1/2} \A_t \optsigma^{-1/2}\right)}{1 + \alpha \Tr\left(\frac{\lambda}{k} \optsigma^{-1/2} \A_t^{1/2}\optsigma^{-1/2} \right)}\nonumber\\
    & = \frac{\langle \A_t, \tD \rangle}{1 + \alpha \langle \A_t^{1/2}, \tD\rangle}.\label{eq:pf-l12-bound-1}
\end{align}

\textbf{step 2.} Following \cref{eq:pf-l12-bound-1} and \cref{lm:inequ-2}, we have
\begin{align}
    \frac{1}{\alpha}\left\{\Tr(\A_t^{1/2}-\B_t^{1/2}) +  \frac{\alpha\xx_{i}^\top \B_t\xx_{i}}{1 +\alpha\xx_{i}^\top\B_t^{1/2}\xx_{i}}\right\} &\geq  \frac{\langle \A_t, d \rangle}{1 + \alpha \langle \A_t^{1/2}, d\rangle} + \frac{\langle \B_t, \xx_i \xx_i^\top\rangle}{1 + \alpha \langle \B_t^{1/2}, \xx_i \xx_i^\top\rangle} \nonumber\\
    & \geq \frac{\langle \A_t, \tD \rangle +\langle \B_t, \xx_i \xx_i^\top\rangle }{1 + \alpha [\langle \A_t^{1/2}, \tD\rangle +\langle \B_t^{1/2}, \xx_i \xx_i^\top\rangle ]} 
\end{align}
Thus by~\cref{lm:inequ-1} and \cref{eq:reg-cov_sum}, we have
\begin{align}
    &\max_{i\in[n]}\frac{1}{\alpha}\left\{\Tr(\A_t^{1/2}-\B_t^{1/2}) +  \frac{\alpha\xx_{i}^\top \B_t\xx_{i}}{1 +\alpha\xx_{i}^\top\B_t^{1/2}\xx_{i}}\right\} \nonumber\\
    \geq&  \max_{i\in[n]} \frac{\langle \A_t, \tD \rangle +\langle \B_t, \xx_i \xx_i^\top\rangle }{1 + \alpha [\langle \A_t^{1/2}, \tD\rangle +\langle \B_t^{1/2}, \xx_i \xx_i^\top\rangle ]}  \nonumber\\
     \geq& \frac{\sum_{i\in[n]} \pi_{i}^{\diamond}\langle \A_t, \tD \rangle + \langle\B_t, \sum_{i\in[n]} \pi_{i}^{\diamond} \xx_{i} \xx_{i}^\top\rangle  }{\sum_{i\in[n]} \pi_{i}^{\diamond} + \alpha [\sum_{i\in[n]} \pi_{i}^{\diamond} \langle\A_t^{1/2},\tD \rangle + \langle\B_t^{1/2},\xx_{i} \xx_{i}^\top \rangle]} \nonumber\\
     & = \frac{\langle \A_t, k\tD\rangle + \langle \B_t, \b I - k\tD\rangle}{k + \alpha[\langle\A_t^{1/2}, k\tD \rangle + \langle\B_t^{1/2}, \b I - k\tD \rangle]} \label{eq:pf-l12-bound-2},
\end{align}
where the last equality also follows by the fact that $\sum_{i\in[n]} \pi_{i}^{\diamond} = k$.

\textbf{step 3.} In this step, we intend to show that the numerator of \cref{eq:pf-l12-bound-2} is lower bounded by $1 - \alpha /2 k$.
 First note that we have derived that $\B_t^{1/2} = \A_t^{1/2} - \alpha  \A_t^{1/2} \b E  \A_t^{1/2}$ in \cref{eq:pf-l12-Bt}. Then
\begin{align}\label{eq:pf-l12-s3-1}
     \B_t &= (\A_t^{1/2} - \alpha  \A_t^{1/2} \b E  \A_t^{1/2}) ^2\nonumber\\
     &= \A_t - \underbrace{(\alpha \A_t \b E  \A_t^{1/2} + \alpha \A_t^{1/2} \b E  \A_t - \alpha^2 \A_t^{1/2} \b E   \A_t \b E  \A_t^{1/2})}_{\triangleq\b G} = \A_t - \b G.
\end{align}
Substitute this into the numerator of \eqref{eq:pf-l12-bound-2}, we have
\begin{align}\label{eq:pf-l12-s3-2}
    \langle \A_t, k\tD\rangle + \langle \B_t, \b I- k\tD\rangle &= \langle \A_t, k\tD\rangle + \langle\A_t - \b G, \b I -k\tD \rangle\nonumber\\
    &= \Tr(\A_t) - \langle \b G, \b I -k\tD\rangle = 1 - \langle \b G, \b I-k\tD\rangle,
\end{align}
where the last equality follows by $\Tr(\A_t) = 1$. 

Now we intend to find an upper bound for $\langle \b G, \b I - k\tD \rangle$. First note that since $\A_t^{1/2}\b E   \A_t \b E  \A_t^{1/2}\succeq \b 0 $, by the definition of $\b G$ in \cref{eq:pf-l12-s3-1} we have
\begin{align}\label{eq:pf-l12-s3-3}
    \b G \preceq \alpha \A_t \b E  \A_t^{1/2} + \alpha \A_t^{1/2} \b E  \A_t.
\end{align}
Recall the definition of $\b E$ in \cref{eq:pf-l12-Bt}, we claim that $\b E \preceq \tD$. Indeed, since $\optsigma^{-1/2} \A_t^{1/2} \optsigma^{-1/2}$ is positive definite, we have 
\[
\frac{k}{\lambda}\b I + \alpha \optsigma^{-1/2} \A_t^{1/2} \optsigma^{-1/2} \succeq \frac{k}{\lambda}\b I ,
\]
 Thus $\left[ \frac{k}{\lambda}\b I  + \alpha \optsigma^{-1/2} \A_t^{1/2} \optsigma^{-1/2}\right]^{-1} \preceq \frac{\lambda}{k}\b I $ and therefore, 
\begin{align}\label{eq:pf-l12-s3-4}
    \b E\triangleq \optsigma^{-1/2}\left[ \frac{k}{\lambda}\b I  + \alpha \optsigma^{-1/2} \A_t^{1/2} \optsigma^{-1/2}\right]^{-1}  \optsigma^{-1/2} \preceq \optsigma^{-1/2}\left(\frac{\lambda}{k}\b I\right)  \optsigma^{-1/2}= d.
\end{align}

Now we have
\begin{align}\label{eq:pf-l12-s3-5}
    \langle \b G, \b I-k\tD\rangle &\stackrel{\cref{eq:pf-l12-s3-3}}{\leq} \alpha \langle \A_t \b E  \A_t^{1/2} +  \A_t^{1/2} \b E  \A_t , \b I-k\tD\rangle \nonumber\\
    &=\alpha \langle \b E, \A_t^{1/2} (\b I-k\tD) \A_t\rangle + \alpha \langle\b E, \A_t(\b I-k\tD) \A_t^{1/2} \rangle\nonumber\\
    &\stackrel{\cref{eq:pf-l12-s3-4}}{\leq} \alpha \langle \tD, \A_t^{1/2} (\b I-k\tD) \A_t + \A_t(\b I-k\tD) \A_t^{1/2}\rangle\nonumber\\
    & = 2 \alpha \Tr(\A_t^{3/2} \tD) - 2\alpha {\color{black}k}\Tr(\A_t^{1/2} \tD \A_t \tD)\triangleq h(\tD),
\end{align}
where we define function $h:\mathbb{S}_{+}^{d}\rightarrow \mathbb{R}$. By \cref{eq:reg-cov_sum}, $k\tD\preceq \b I$ and thus the domain of function $h$ is $\mathrm{dom}h = \{ \tD \in \mathbb{S}_{+}^{d} : \tD \preceq \frac{1}{k }\b I\}$.

We intend to find an upper bound for $h(\tD)$. First we prove that $h(\tD)$ is a concave function. We can verify its concavity by considering an arbitrary line, given by $\Z+t\V$, where $\Z, \V \in\mathbb{S}_{+}^{d}$. Define $g(t): = h(\Z+t\V)$, where $t$ is restricted to the interval such that $\Z + t\V \in \mathrm{dom}h$. By convex analysis theory, it is sufficient to prove the concavity of function $g$. Then
\begin{align}
    g(t) &= 2\alpha \Tr[ \A_t^{3/2}(\Z+t\V)] - 2\alpha k\Tr[\A_t^{1/2} (\Z +t\V) \A_t (\Z+t\V)]\nonumber\\
    &= -2\alpha k t^2 \Tr(\A_t^{1/2} \V \A_t \V) + 2\alpha t\Tr( \A_t^{3/2} \V)  \nonumber\\
    & -2\alpha k t\Tr(\A_t^{1/2}\V \A_t \Z+\A_t^{1/2}\Z\A_t \V) + 2\alpha \Tr(\Z \A_t^{3/2}) -2\alpha k\Tr(\A_t^{1/2} \Z \A_t \Z),\nonumber
\end{align}
and 
\begin{align}
    g^{\prime \prime}(t) = -4\alpha k\Tr(\A_t^{1/2} \V \A_t \V).\nonumber
\end{align}
Since $\A_t^{1/2} \V \A_t \V \succeq \b 0$, we have $g^{\prime \prime}(t) \geq 0$. Therefore $g(t)$ is concave and so is $h(\tD)$. Now consider the gradient of  $h(d)$:
\begin{align}\label{eq:pf-l12-s3-6}
    \nabla h(\tD) = 2 \alpha \A_t^{3/2} - 4\alpha k\A_t^{1/2} \tD \A_t.
\end{align}
Let $\nabla h(\tD)=0$, we can get $\tD = \frac{1}{2k}\b I \in \text{dom}h$. Thus 
\begin{align}\label{eq:pf-l12-s3-7}
    \sup_{\tD\in \text{dom}h} h(\tD) = h\left(\frac{1}{2k}\b I\right)=\frac{\alpha}{k}\Tr(\A_t^{3/2})- \frac{\alpha}{2k}\Tr(\A_t^{3/2}) = \frac{\alpha}{2k}\Tr(\A_t^{3/2}) \leq \frac{\alpha}{2k},
\end{align}
where the last inequality follows by the fact that all eigenvalues of $\A_t$ lie in $[0,1]$ and $\Tr(\A_t) =1$.

Combining \cref{eq:pf-l12-s3-2,eq:pf-l12-s3-5,eq:pf-l12-s3-7}, we can conclude that
\begin{align}\label{eq:pf-l12-bound-3}
    \langle \A_t, k\tD\rangle + \langle \B_t, \b I- k\tD\rangle \geq 1-\frac{\alpha}{2k}.
\end{align}

\textbf{step 4.} Now we derive an upper bound for the denominator of the right hand side of \cref{eq:pf-l12-bound-2}. By \cref{eq:pf-l12-Bt}, we have
\begin{align}\label{eq:pf-l12-bound-4}
      \langle \A_t^{1/2},k\tD\rangle + \langle \B_t^{1/2}, \b I -k\tD\rangle  &=  \langle \A_t^{1/2},k\tD\rangle + \langle \A_t^{1/2} - \alpha\A_t^{1/2}\b E\A_t^{1/2}, \b I-k\tD\rangle\nonumber\\
      &=\Tr(\A_t^{1/2}) - \alpha \langle\A_t^{1/2}\b E\A_t^{1/2}, \b I- k\tD \rangle\nonumber\\
      &\stackrel{(a)}{\leq}\Tr(\A_t^{1/2})  \stackrel{(b)}{\leq }\sqrt{d},
\end{align}
where (a) follows by the fact that both $\A_t^{1/2}\b E\A_t^{1/2}$ and $\b I- kd$ are positive semidefinite, (b) follows by the following property:
\begin{align}
    \Tr(\A_t^{1/2}) = \sum_{i\in[d]} \lambda_i(\A_t^{1/2}) \leq \sqrt{d}\sqrt{\sum_{i\in[d]} \lambda_i^2(\A_t^{1/2})}  = \sqrt{d}\sqrt{\sum_{i\in[d]} \lambda_i(\A_t)}  =\sqrt{d}.
\end{align}
where $\lambda_i(\A_t)$ is the $i$-th eigenvalue of $\A_t$, the inequality follows by the Cauchy-Schwarz inequality, the last equality follows by $\Tr(\A_t)=1$.

\textbf{step 5.} Now substitute \cref{eq:pf-l12-bound-3} and \cref{eq:pf-l12-bound-4} into \cref{eq:pf-l12-bound-2}, we have
    \begin{align}
        &\max_{i\in[n]}\frac{1}{\alpha}\left\{\Tr(\A_t^{1/2}-\B_t^{1/2}) +  \frac{\alpha\xx_{i}^\top \B_t\xx_{i}}{1 +\alpha\xx_{i}^\top\B_t^{1/2}\xx_{i}}\right\} \nonumber\\
        &\geq \frac{\langle \A_t, k\tD\rangle + \langle \B_t, \b I- k\tD\rangle}{k + \alpha [\langle \A_t^{1/2}, k\tD\rangle + \langle \B_t^{1/2}, \b I- k\tD\rangle]} \geq \frac{1-\frac{\alpha}{2k}}{k +\alpha\sqrt{d}} .
    \end{align}
    
\end{proof}

\subsection{Proof of \cref{thm:reg-design-complexity}}

{\color{black}
\thmcomplexityreg*
}

\begin{proof}
Similar to the proof of \cref{thm:design-complexity}, our main approach involves utilizing \cref{prop:guarantee-entropy-reg,prop:guarantee-l12-reg} to establish the complexity guarantee for both the entropy-regularizer and $\ell_{1/2}$-regularizer.
    \begin{enumerate}[label=(\alph*)]
\item entropy-regularizer: Let $\alpha = 4 \log d/ \epsilon$, $k\geq16 d\log d/\epsilon^2$,, then 
\begin{align}\label{eq:reg-ent-complex-1}
    \frac{\log d}{\alpha} = \frac{\epsilon}{4}.
\end{align}
By \cref{prop:guarantee-entropy-reg}, we have
\begin{align}\label{eq:reg-ent-complex-2}
    &\frac{1}{\alpha} \sum_{t=1}^k \Big\{\Tr(\A_t-\B_t) + [1-\exp(-\alpha \|\xx_{i_t}\|_2^2)] \frac{\xx_{i_t}^\top\B_t \xx_{i_t}}{\| \xx_{i_t}\|_2^2}\Big\} \nonumber\\
    \geq &\frac{k}{k + \alpha d} \geq \frac{1}{1+\frac{\epsilon}{4}} \geq 1-\frac{\epsilon}{4}.
\end{align}
Substitute equations~\eqref{eq:reg-ent-complex-1} and \eqref{eq:reg-ent-complex-2} into \cref{eq:reg-ent-bound},  we can get
\begin{align}
   \lambda_{\min} (\sum_{t=1}^{\color{black}k} \F_t) \geq 1-\frac{\epsilon}{4} - \frac{\epsilon}{4} \geq 1-\frac{\epsilon}{2} \geq \frac{1}{1+\epsilon}.\nonumber
\end{align}
By \cref{prop:reg-goal}, we have
\begin{align}
    f\Big(\sum_{t=1}^k \F_t\Big) =f\Big( \X_S^\top \X_S + \lambda \b I\Big) \leq (1+\epsilon) f^*.
\end{align}

\item $\ell_{1/2}$-regularizer:
 Let $k= 32d/\epsilon^2 + 16\sqrt{d}/\epsilon^2$, $\alpha = 8 \sqrt{d}/\epsilon$, by \cref{prop:guarantee-l12-reg}, we have
    \begin{align}\label{eq:pf-reg-l12-complex-1}
        &\frac{1}{\alpha}\sum_{t=1}^k \Big\{ \Tr(\A_t^{1/2}-\B_t^{1/2}) +  \frac{\alpha\xx_{i_t}^\top \B_t\xx_{i_t}}{1 +\alpha\xx_{i_t}^\top\B_t^{1/2}\xx_{i_t}} \Big\}\nonumber\\
        \geq& \sum_{t=1}^k\frac{1-\frac{\alpha}{2k}}{k+\alpha\sqrt{d}} = \frac{k- \frac{\alpha}{2}}{k + \alpha\sqrt{d}} \geq \frac{32d/\epsilon^2 + 16\sqrt{d}/\epsilon^2 - 4\sqrt{d}/\epsilon}{32d/\epsilon^2 + 16\sqrt{d}/\epsilon^2  + 8 d/\epsilon}\nonumber\\
        \geq &\frac{32d/\epsilon^2 + 16\sqrt{d}/\epsilon^2+ 8 d/\epsilon - ( 8 d/\epsilon+4 \sqrt{d}/\epsilon )}{32d/\epsilon^2 + 16\sqrt{d}/\epsilon^2+ 8 d/\epsilon} \nonumber\\
        =& 1 - \frac{8 d/\epsilon+4 \sqrt{d}/\epsilon}{\frac{4}{\epsilon} (8 d/\epsilon+4 \sqrt{d}/\epsilon) + 8\sqrt{d}/\epsilon}
        \geq & 1 - \frac{\epsilon}{4}.
    \end{align}
Substitute \cref{eq:pf-reg-l12-complex-1}  into \cref{eq:reg-l12-bound} in \cref{thm:reg-min-eigen}, we have
\begin{align}
    \lambda_{\min} (\sum_{t=1}^b \F_t) 
      &\geq -\frac{2\sqrt{ d}}{\alpha} +  \frac{1}{\alpha}\sum_{t=1}^k \Big\{ \Tr(\A_t^{1/2}-\B_t^{1/2}) +  \frac{\alpha\xx_{i_t}^\top \B_t\xx_{i_t}}{1 +\alpha\xx_{i_t}^\top\B_t^{1/2}\xx_{i_t}} \Big\}\nonumber\\
      &\geq -\frac{2\sqrt{d}}{8 \sqrt{d}/\epsilon} + 1 - \frac{\epsilon}{4}= 1 - \frac{\epsilon}{2} \geq\frac{1}{1+\epsilon}.
\end{align}

By  \cref{prop:reg-goal}, we can get
\begin{align}
    f\Big(\sum_{t=1}^k \F_t\Big) =f\Big( \X_S^\top \X_S + \lambda \b I\Big) \leq (1+\epsilon) f^*.
\end{align}
    \end{enumerate}
\end{proof}


\section{Additional Experimental Details}\label{sec:new-experiments}
\definecolor{Gray}{gray}{0.9}
\definecolor{LightCyan}{rgb}{0.88,1,1}
\paragraph{Spectral embedding via normalized graph Laplacian} \Cref{algo:laplacian} outlines the algorithm for obtaining spectral embedding features using the normalized graph Laplacian.

\begin{algorithm}[t]
\caption{Spectral embedding via normalized graph Laplacian}
\label{algo:laplacian}
\hspace*{\algorithmicindent} \textbf{Input:} \nolinebreak
 data points $\X \in \mathbb{R}^{N \times D}$, nearest neighbor number $k$, target out put dimension $d$   \\
 \hspace*{\algorithmicindent} \textbf{Output:} \nolinebreak $\widehat{\X} \in\mathbb{R}^{N\times d}$
\begin{algorithmic}[1]
    \STATE Obtain $k$-nearest neighbor graph $\mathcal{G}$ on $\X$.
    \STATE Obtain adjacency matrix $\A$ and its degree matrix $\b D$ from $\mathcal{G}$ (using ones as weights).
   \STATE Calculate normalized Laplacian $\b L\gets \b I - {\b D}^{-1/2} \A {\b D}^{-1/2}$.
   \STATE Calculate the first $d$ eigenvectors of $\b L$ (corresponding to the $d$ smallest eigenvalues of $\b L$): $\{ v_i \}_{i\in[d]}$.
    \STATE Form matrix $\widehat{\X}$ by stacking  $\{v_i\}_{i\in[d]}$ column-wise.
\end{algorithmic}

\end{algorithm}

\paragraph{Logistic regression classifier}
For our classification tasks, we used the \texttt{LogisticRegression} implementation from scikit-learn (\cite{scikit-learn}, version~1.6.1). 
We employed the \texttt{lbfgs} solver with a maximum of 1000 iterations and set \texttt{class\_weight} to \texttt{balanced}. 
We applied $\ell_{2}$ regularization with inverse regularization strength $C = 1.0$, fixed the random seed to 0 for reproducibility, and disabled the intercept term by setting \texttt{fit\_intercept} to \texttt{False}. 
All other parameters were left at their default values.

\paragraph{Comparison between entropy- and $\ell_{1/2}$-regularizers} 
\Cref{fig:subset-cifar10-compare} compares \regretent and \regretlhalf on five randomly selected CIFAR-10 subsets, each formed by sampling 1,000 unlabeled examples per class. Similarly, \Cref{fig:subset-imagenet-compare} shows the results on five ImageNet-50 subsets, where each subset contains 400 randomly sampled examples per class.

\begin{figure}[tbp]
\centering
  \footnotesize
\begin{tikzpicture}
\node[] at (0.2,2.1) {\color{black}\regretent};
\node[] at (6.2,2.1) {\color{black}\regretlhalf};
\node[inner sep=0pt] (a1) at (0,0) {\includegraphics[width=5.5cm]{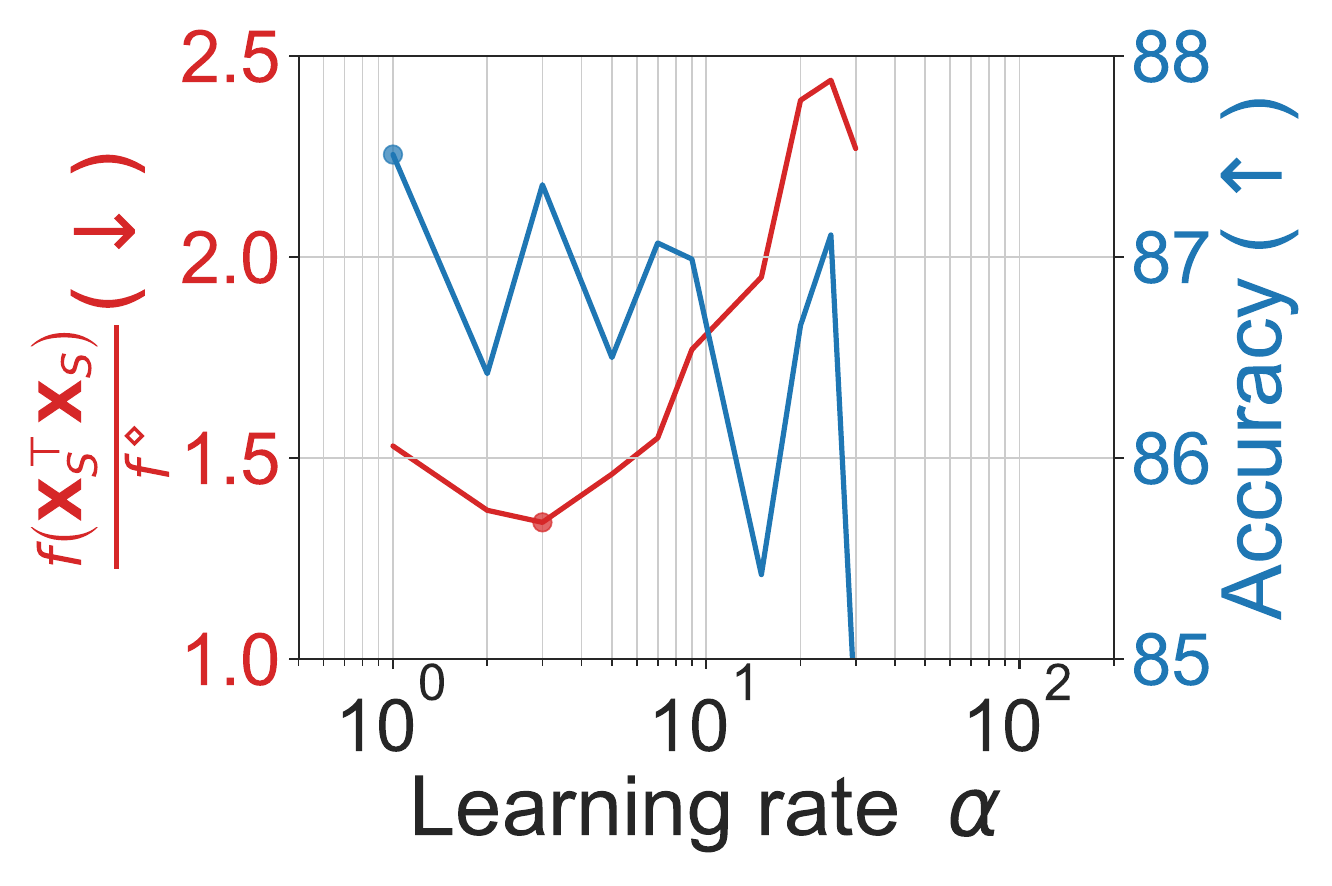}};
\node[inner sep=0pt] (b1) at (6,0) {\includegraphics[width=5.5cm]{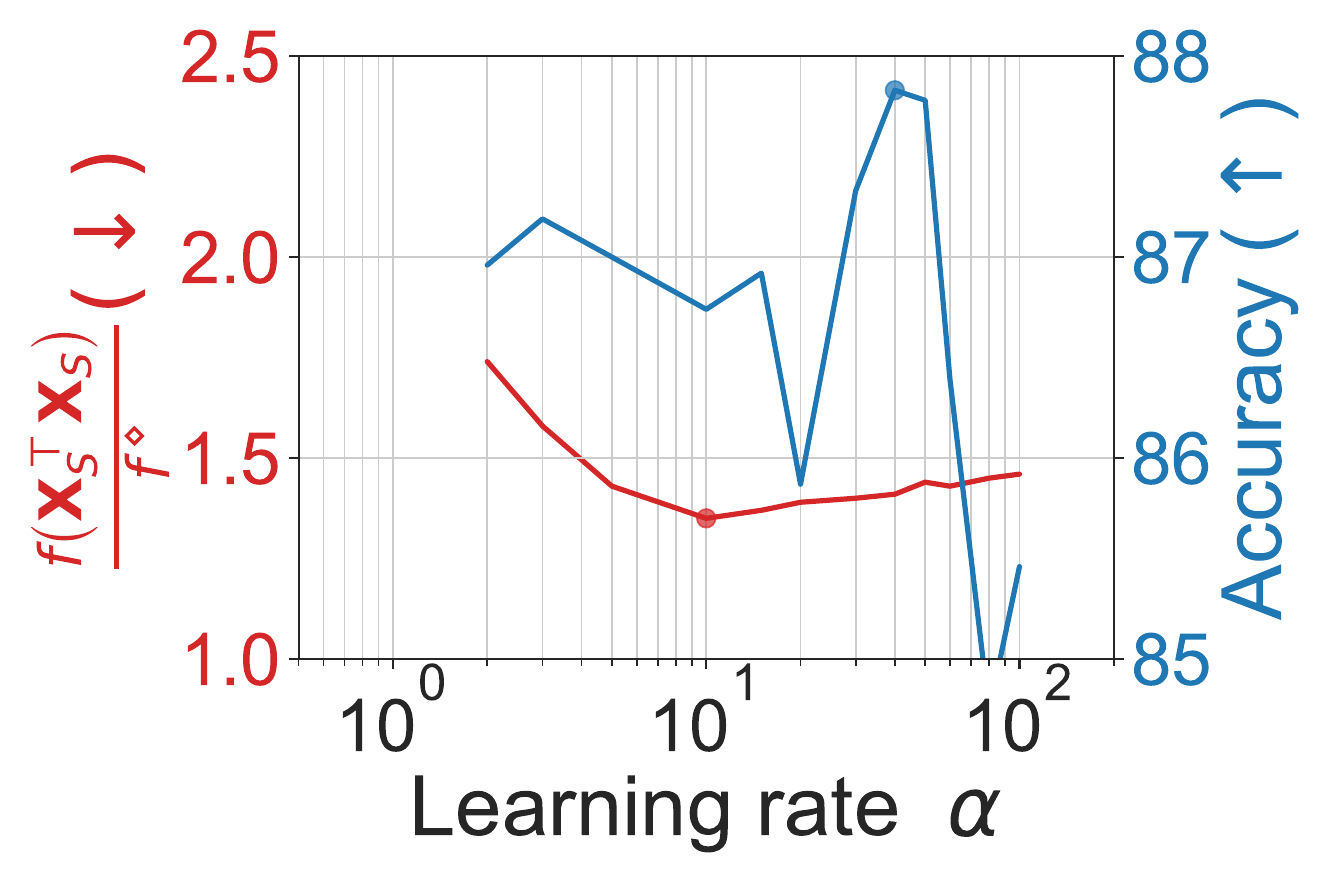}};
\node[inner sep=0pt] (a2) at (0,-3.6) {\includegraphics[width=5.5cm]{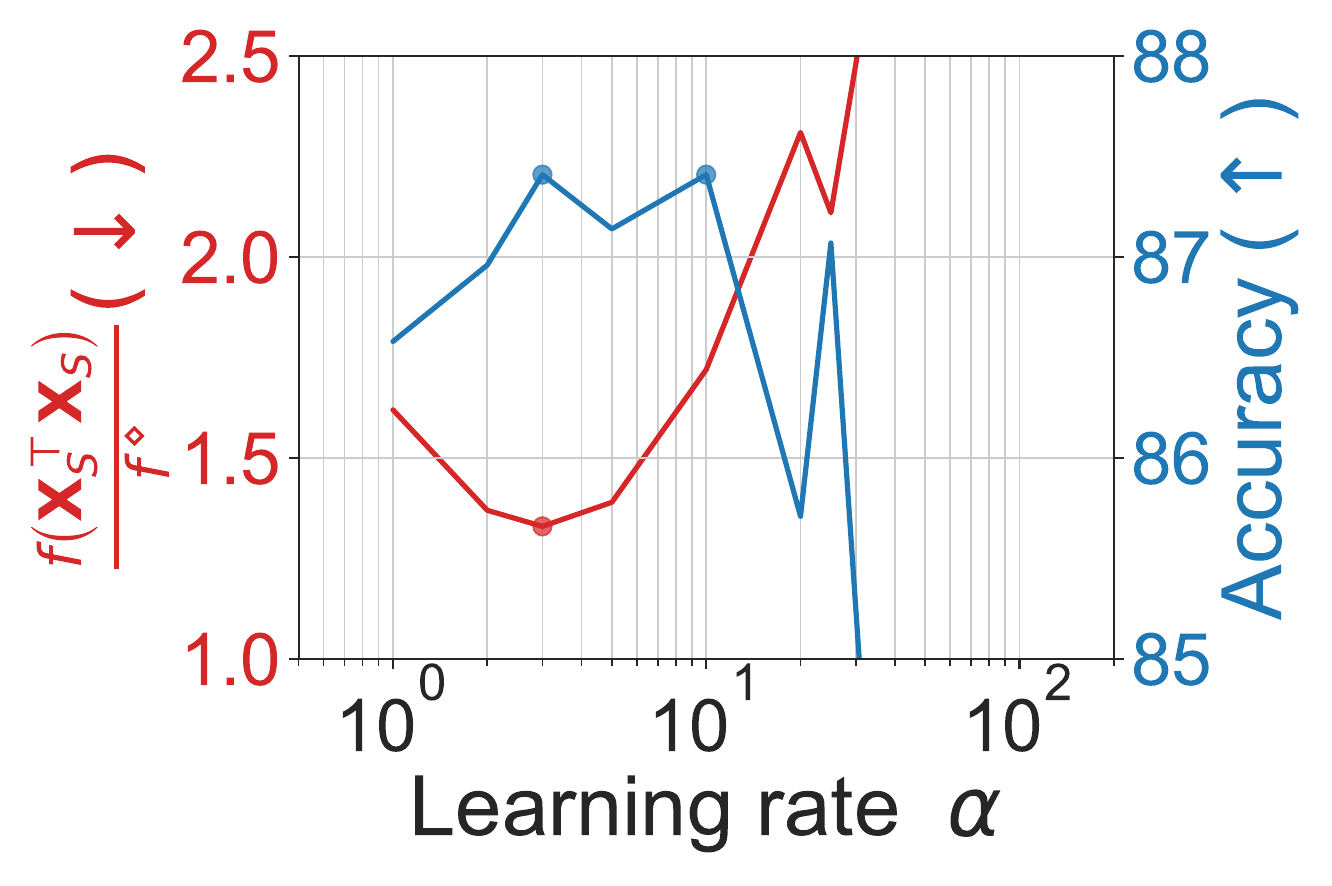}};
\node[inner sep=0pt] (b2) at (6,-3.6) {\includegraphics[width=5.5cm]{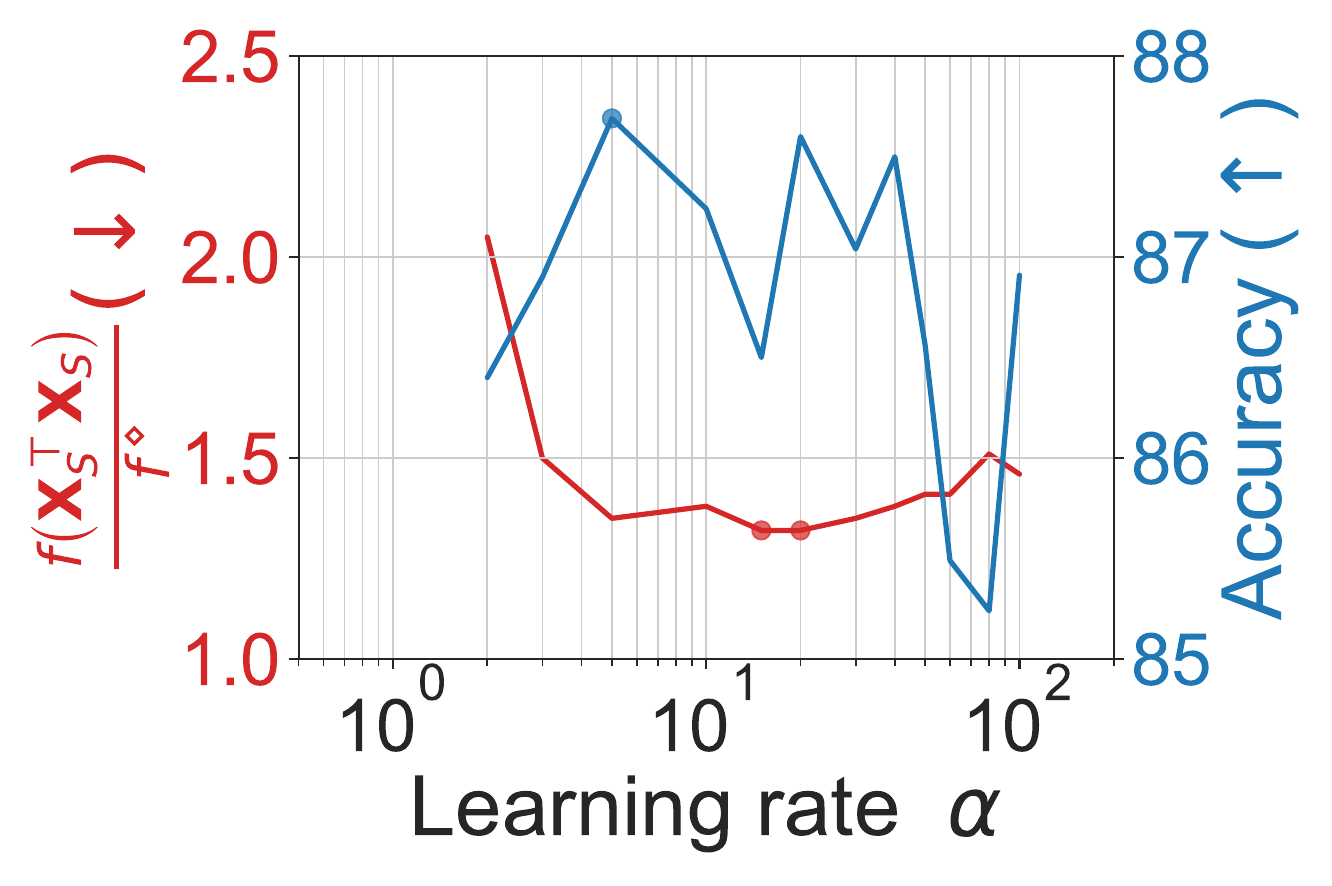}};
\node[inner sep=0pt] (a3) at (0,-7.2) {\includegraphics[width=5.5cm]{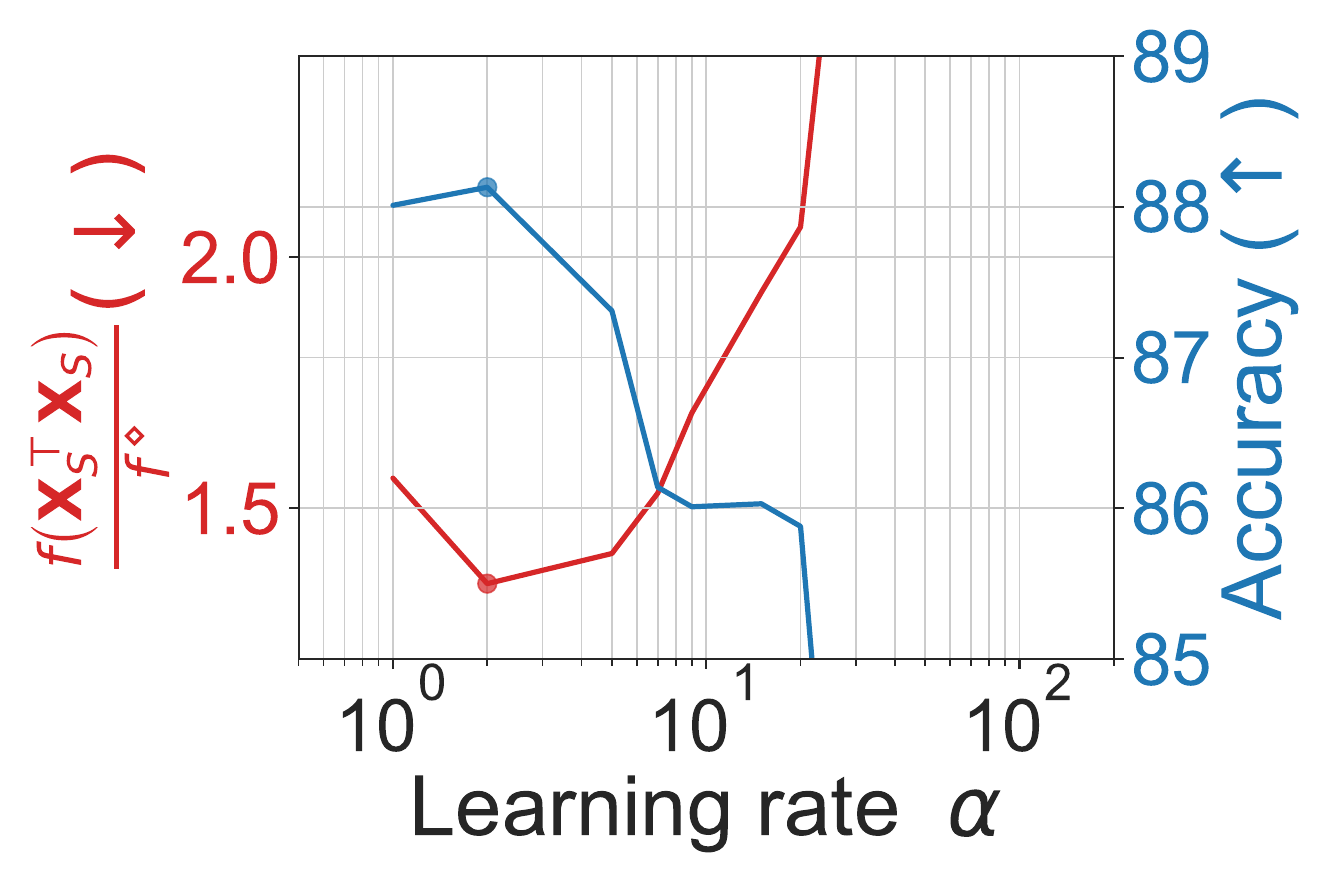}};
\node[inner sep=0pt] (b3) at (6,-7.2) {\includegraphics[width=5.5cm]{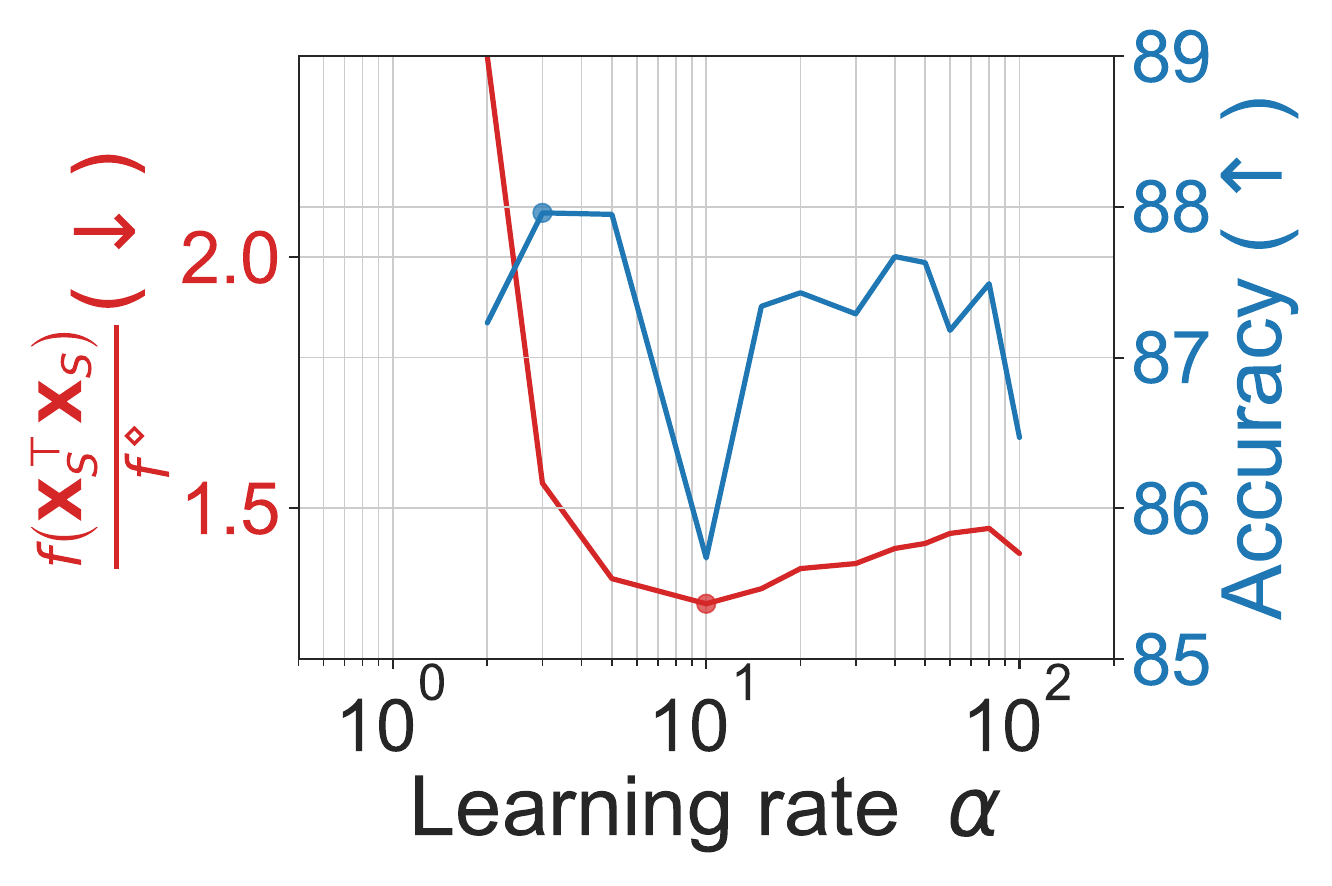}};
\node[inner sep=0pt] (a4) at (0,-10.8) {\includegraphics[width=5.5cm]{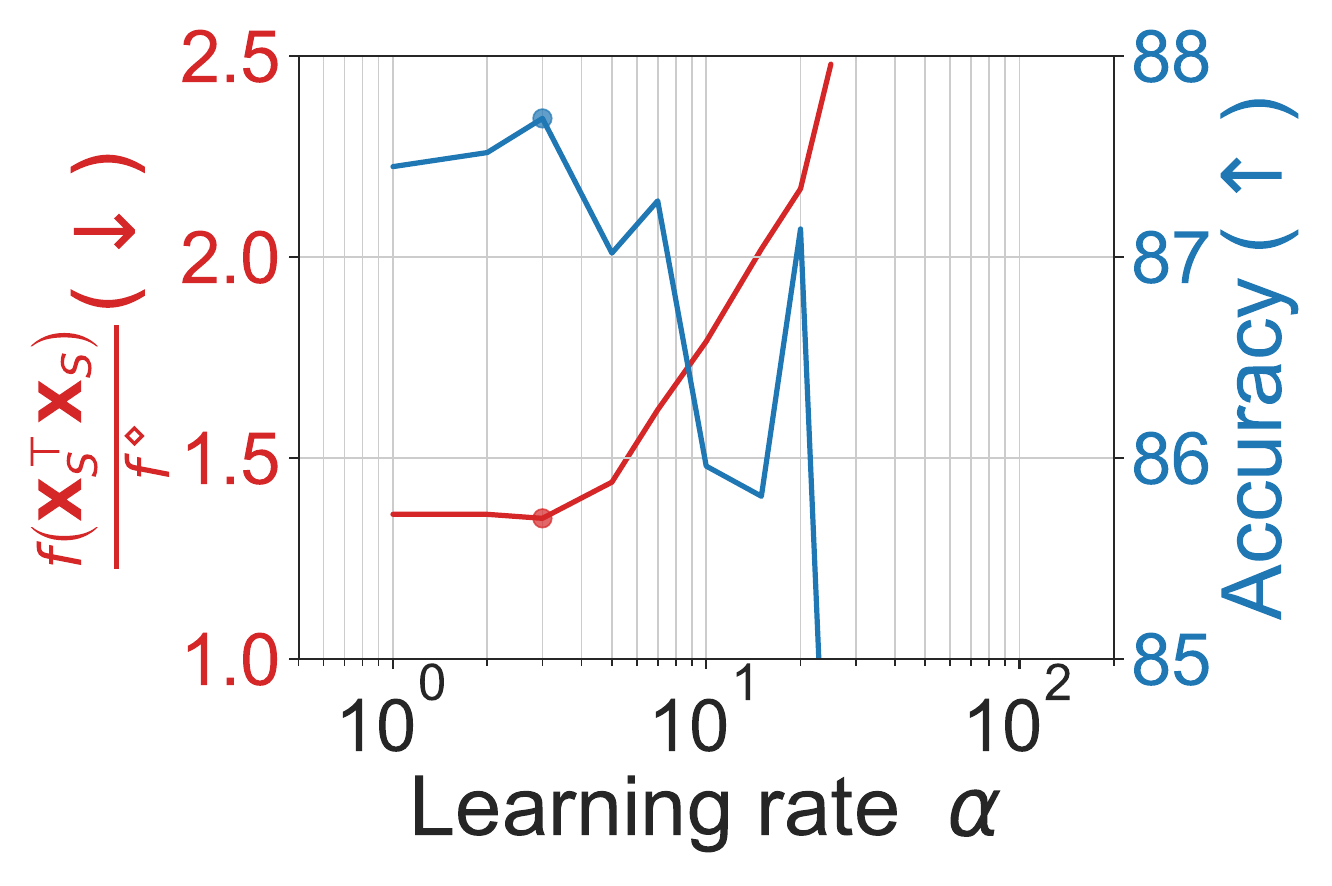}};
\node[inner sep=0pt] (b4) at (6,-10.8) {\includegraphics[width=5.5cm]{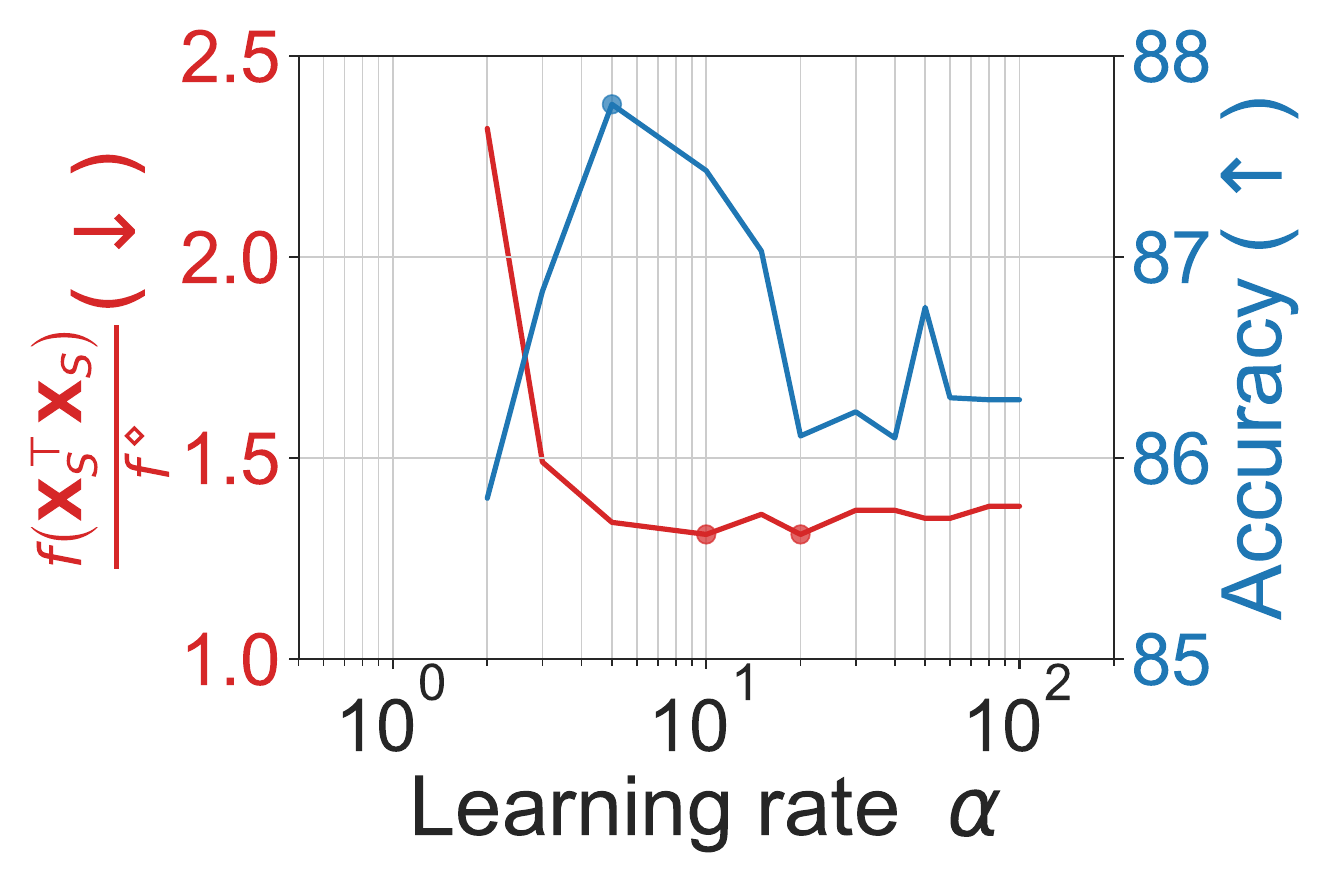}};
\node[inner sep=0pt] (a5) at (0,-14.4) {\includegraphics[width=5.5cm]{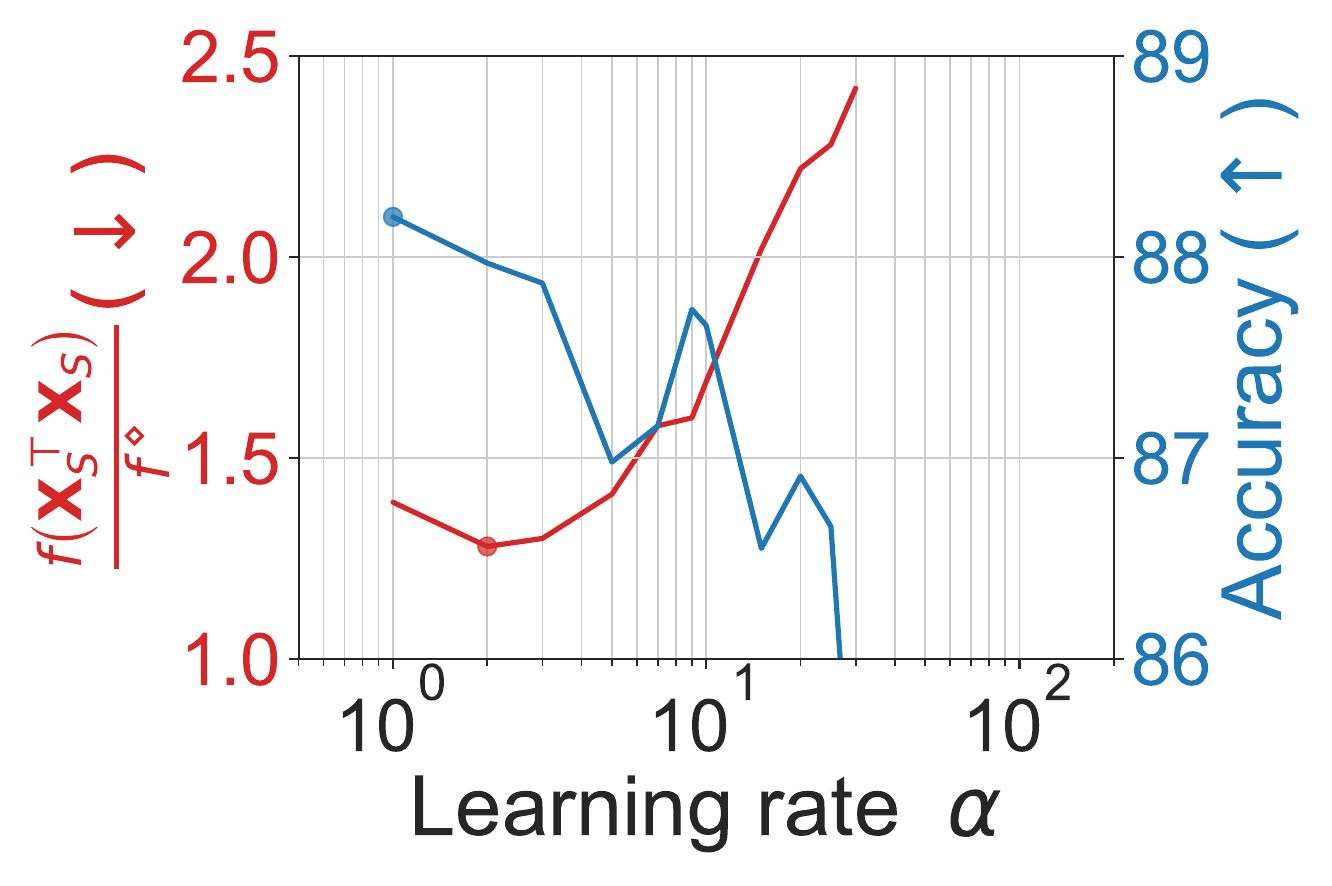}};
\node[inner sep=0pt] (b5) at (6,-14.4) {\includegraphics[width=5.5cm]{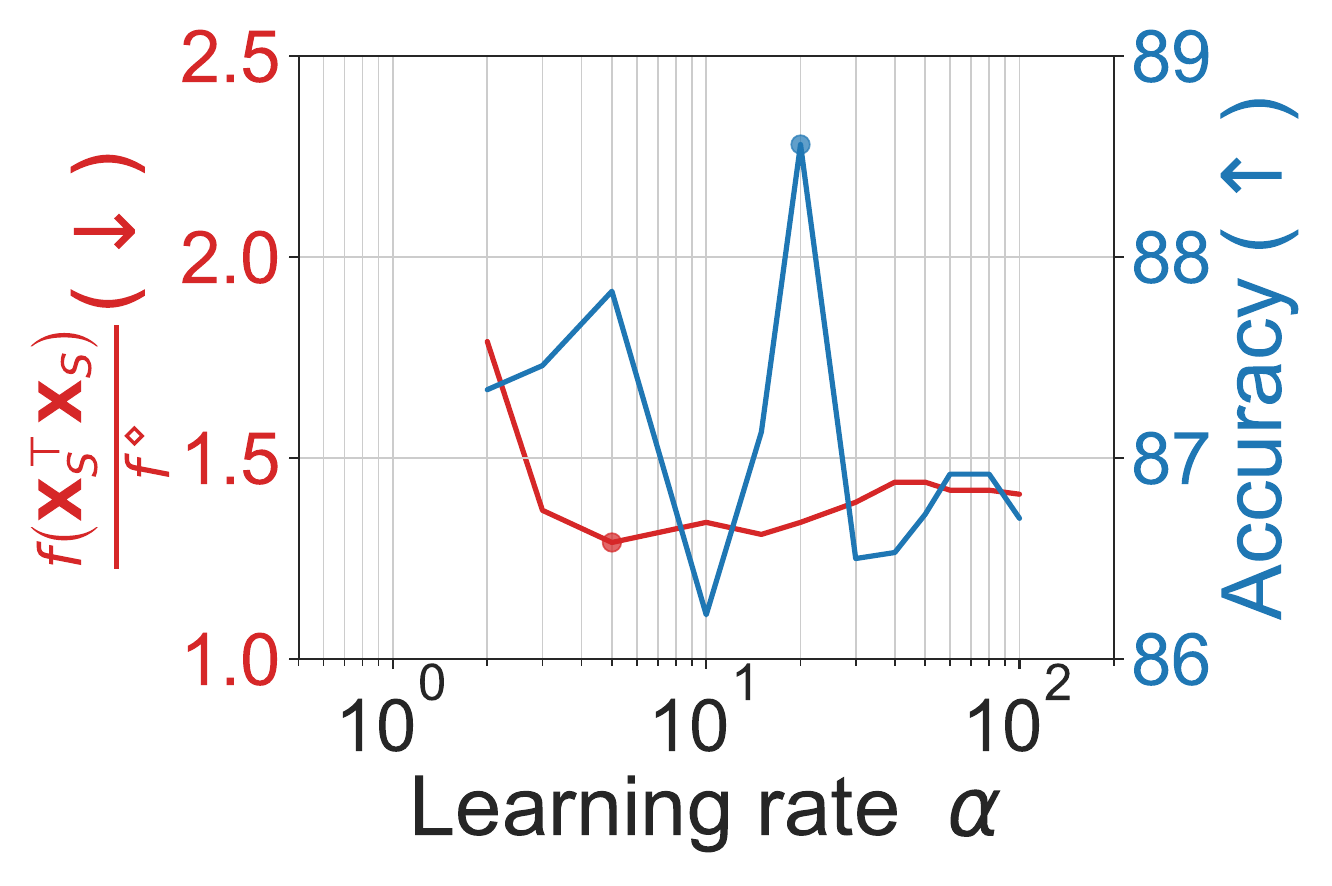}};
\node[rotate=90,anchor=north] at (-3.6, 0.2) {\color{black}Subset \rom{1}};
\node[rotate=90,anchor=north] at (-3.6, -3.4) {\color{black}Subset \rom{2}};
\node[rotate=90,anchor=north] at (-3.6, -7) {\color{black}Subset \rom{3}};
\node[rotate=90,anchor=north] at (-3.6, -10.6) {\color{black}Subset \rom{4}};
\node[rotate=90,anchor=north] at (-3.6, -14.2) {\color{black}Subset \rom{5}};
\draw[] (3,2.4) -- (3,-16.2);
\draw[] (-2.9,2.4) -- (-2.9,-16.2);
\draw (-4,-16.2) rectangle (8.8,1.8);
\draw (-4,1.8) rectangle (8.8,2.4);
\end{tikzpicture}
\caption{\color{black}Comparison of \regretent (left) and \regretlhalf (right) on five CIFAR-10 subsets. For each subset, we select $k=50$ points. Each subset is constructed by randomly sampling 1,000 samples per class. We use PCA-reduced features with dimension of 40, thus the input design pool for each subset data is $\X\in \mathbb{R}^{10,000\times 40}$.  The red lines in the plot represent relative value of the objective function $\frac{f(\X_S^\top \X_S)}{f^\diamond}$, where $\X_S$ is the selected samples and $f^\diamond$ is the optimal value of the relaxed problem~\cref{eq:lp}. The blue lines represent the logistic regression prediction accuracy. The dots on each line represent the optimal points.}
\label{fig:subset-cifar10-compare}
\end{figure}

\begin{figure}[tbp]
\centering
  \footnotesize
\begin{tikzpicture}
\node[] at (0.2,2.1) {\color{black}\regretent};
\node[] at (6.2,2.1) {\color{black}\regretlhalf};
\node[inner sep=0pt] (a1) at (0,0) {\includegraphics[width=5.5cm]{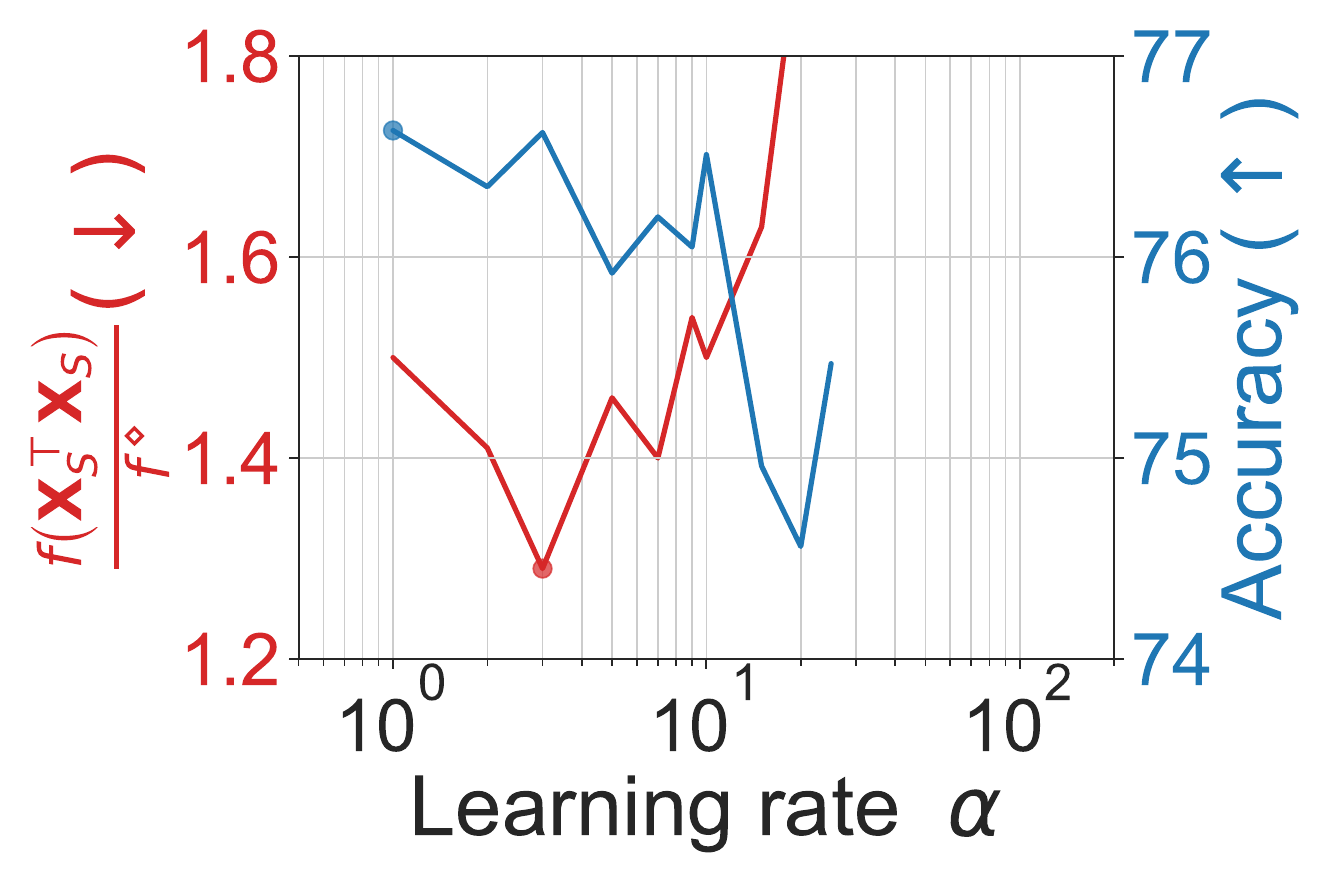}};
\node[inner sep=0pt] (b1) at (6,0) {\includegraphics[width=5.5cm]{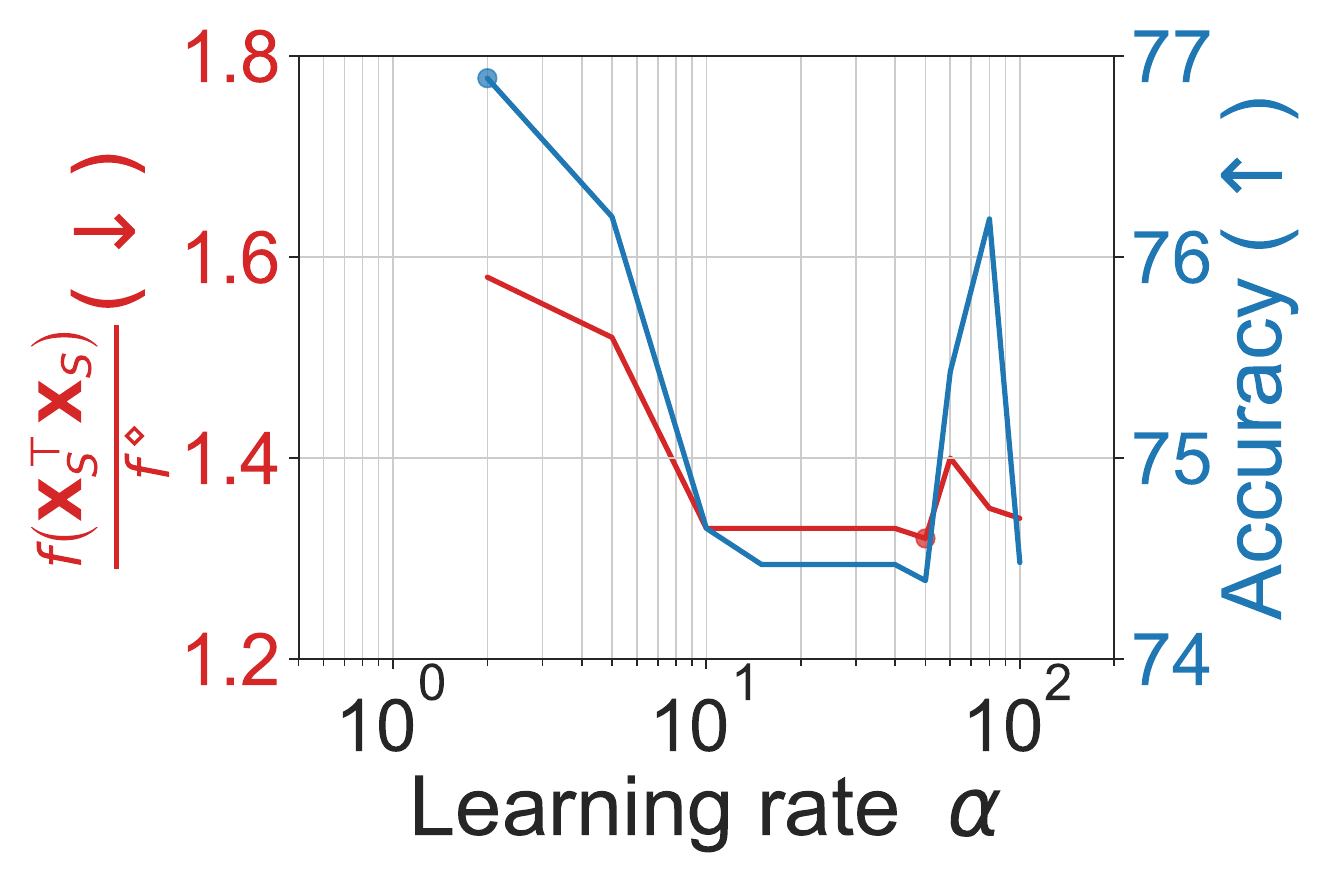}};
\node[inner sep=0pt] (a2) at (0,-3.6) {\includegraphics[width=5.5cm]{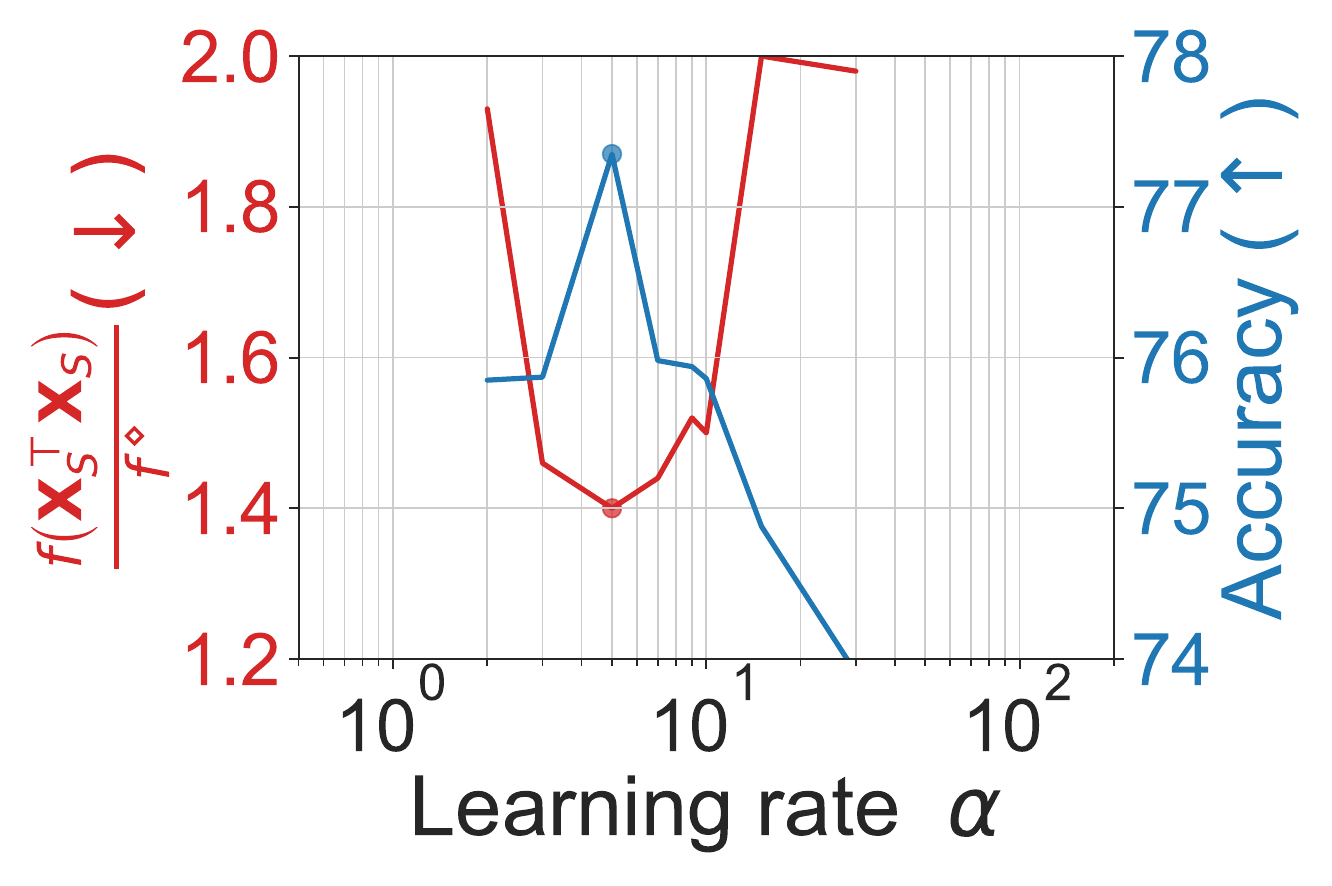}};
\node[inner sep=0pt] (b2) at (6,-3.6) {\includegraphics[width=5.5cm]{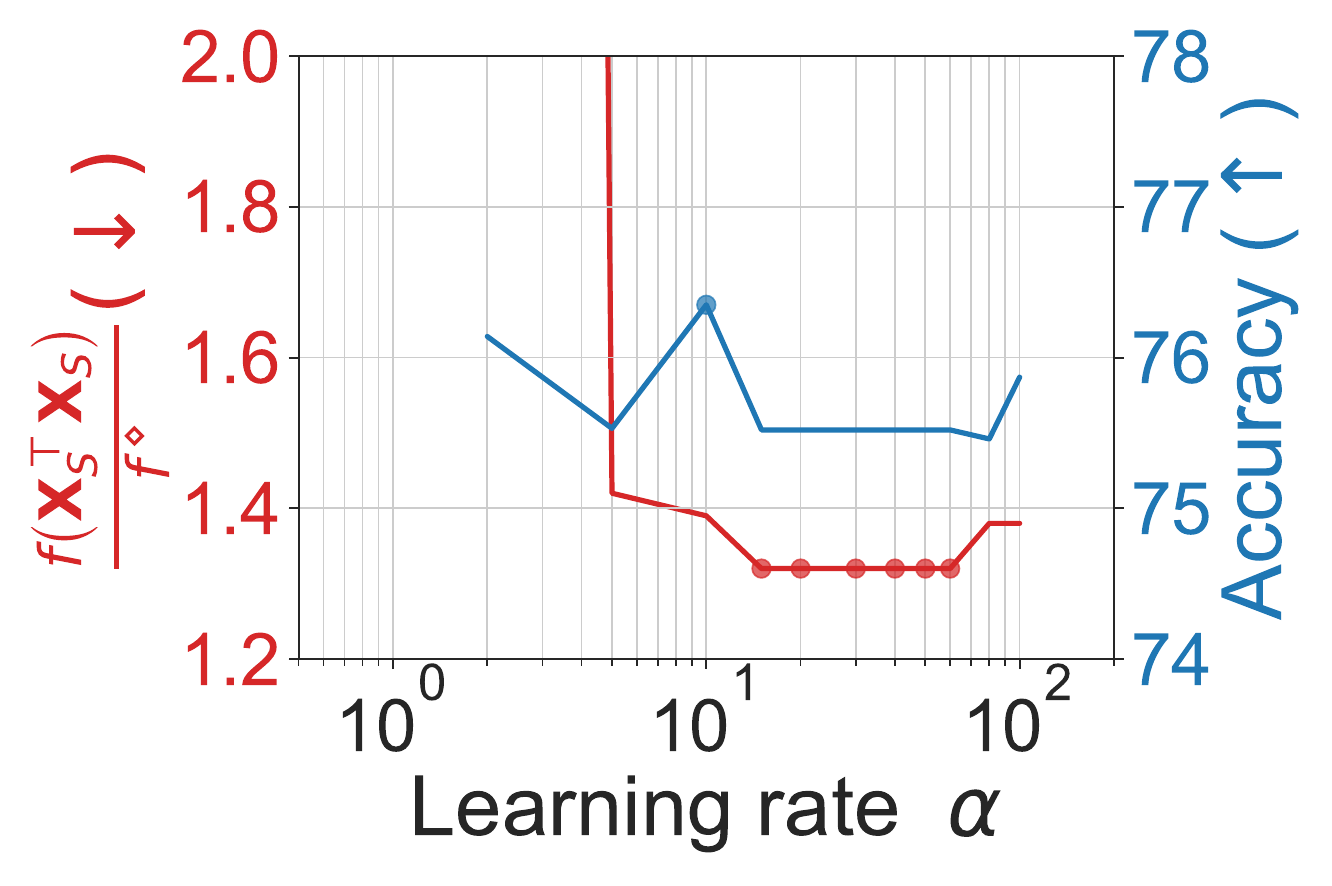}};
\node[inner sep=0pt] (a3) at (0,-7.2) {\includegraphics[width=5.5cm]{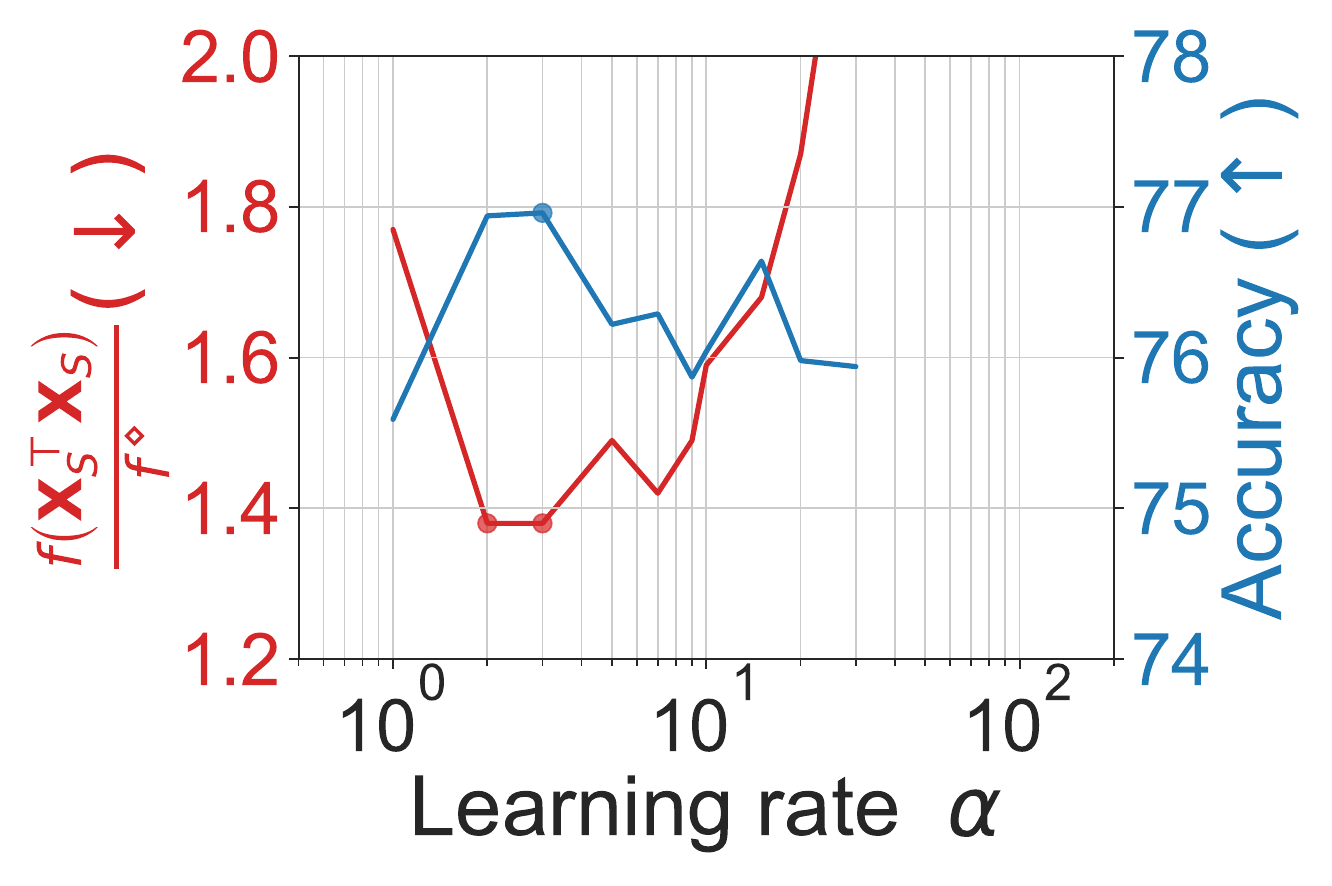}};
\node[inner sep=0pt] (b3) at (6,-7.2) {\includegraphics[width=5.5cm]{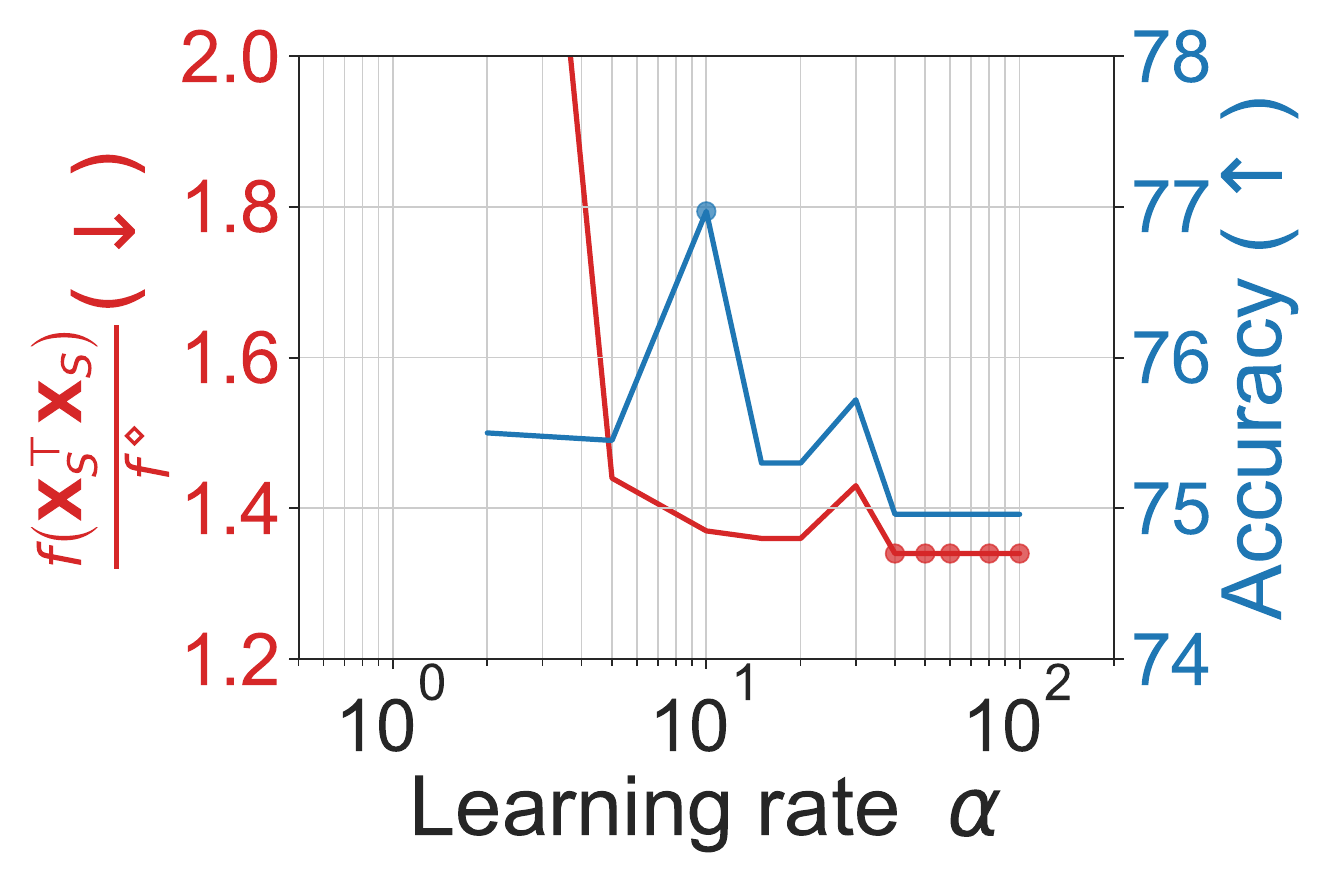}};
\node[inner sep=0pt] (a4) at (0,-10.8) {\includegraphics[width=5.5cm]{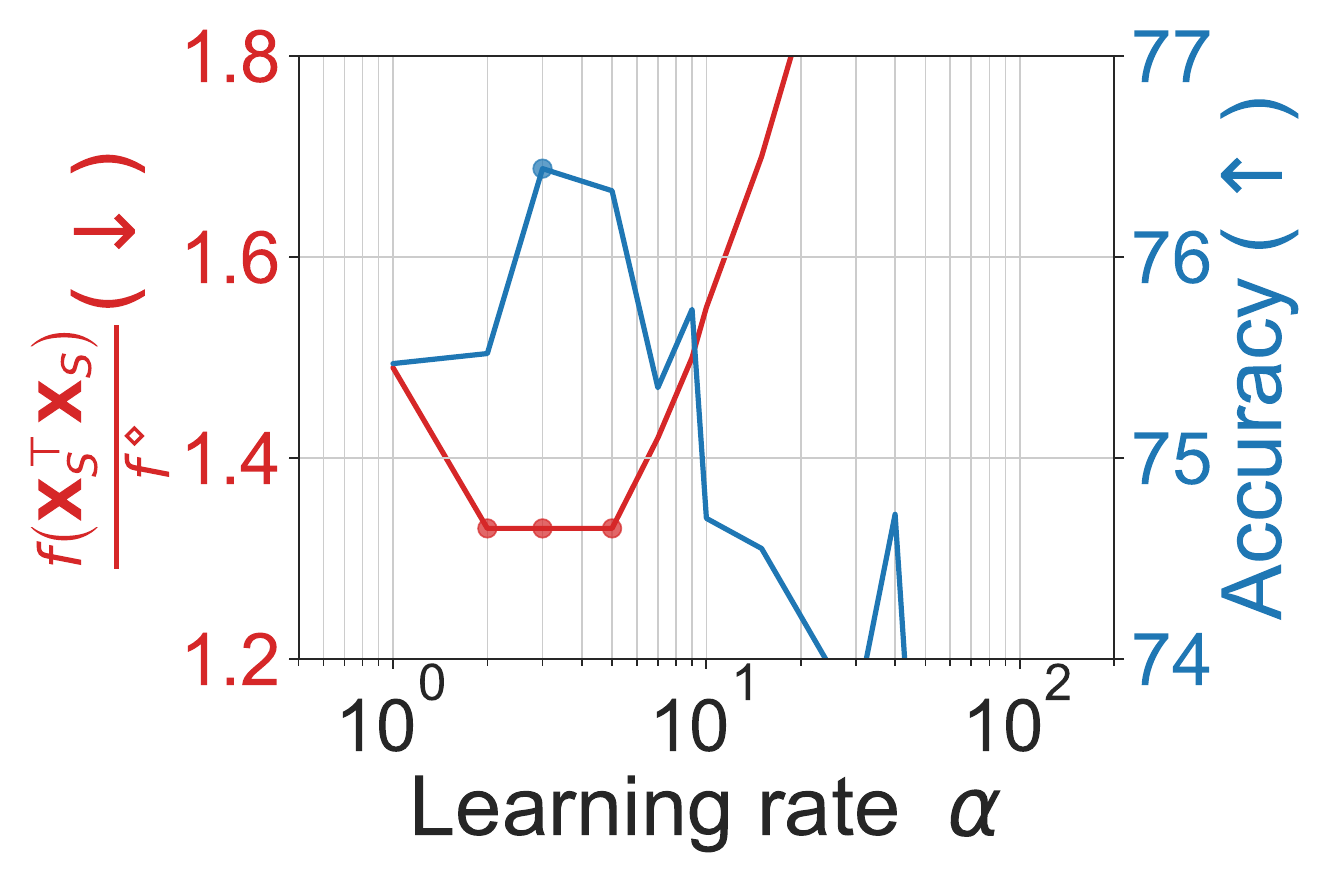}};
\node[inner sep=0pt] (b4) at (6,-10.8) {\includegraphics[width=5.5cm]{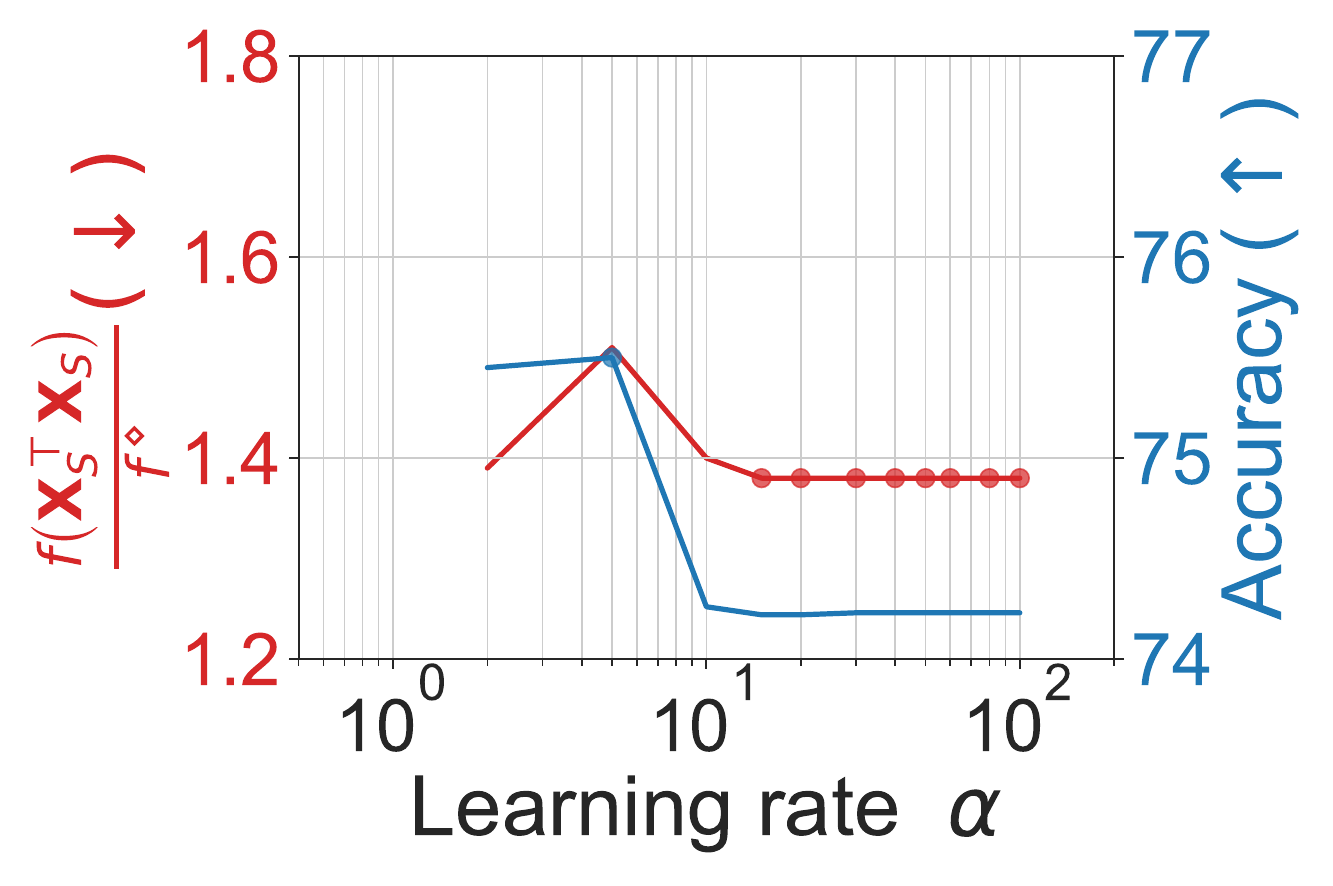}};
\node[inner sep=0pt] (a5) at (0,-14.4) {\includegraphics[width=5.5cm]{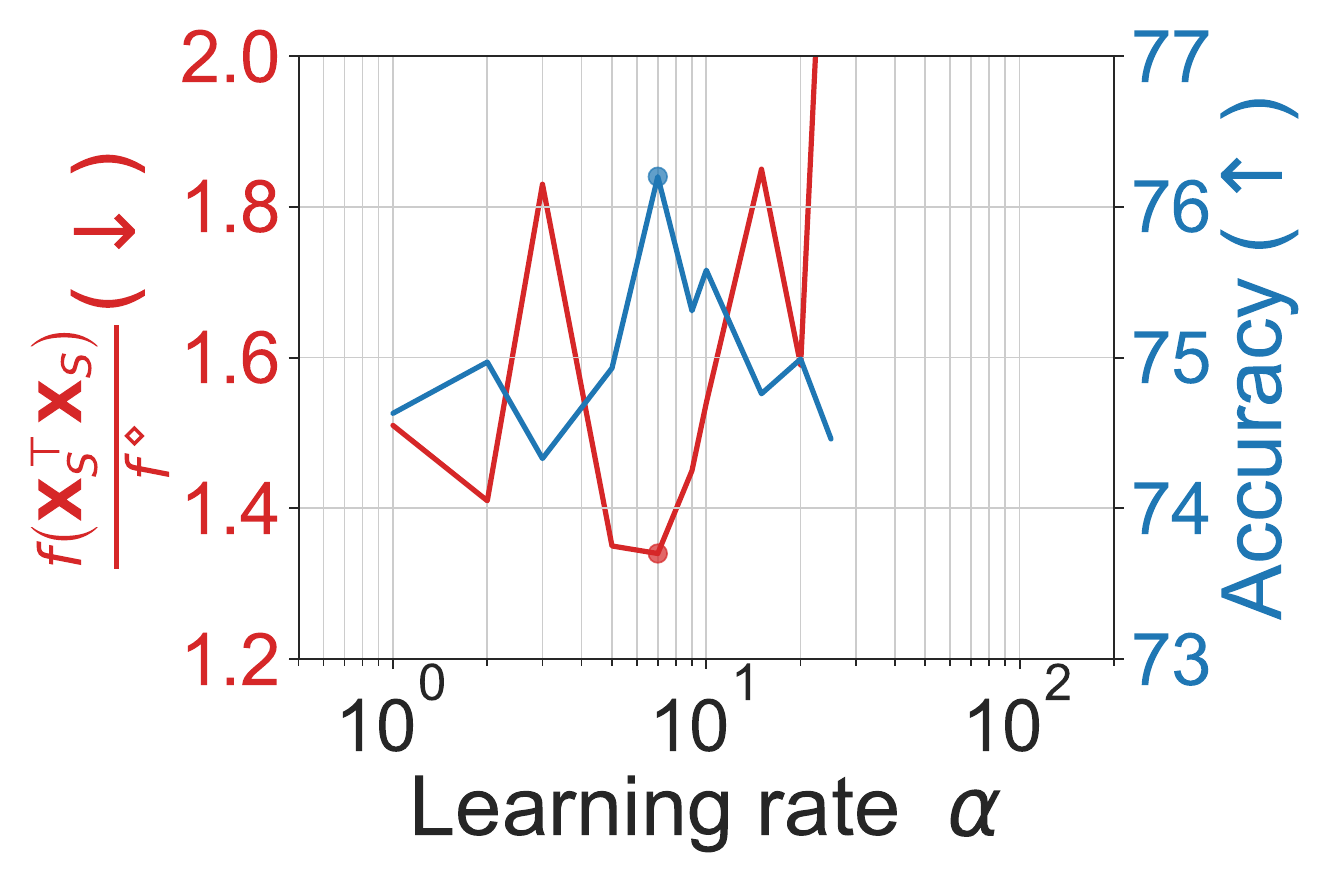}};
\node[inner sep=0pt] (b5) at (6,-14.4) {\includegraphics[width=5.5cm]{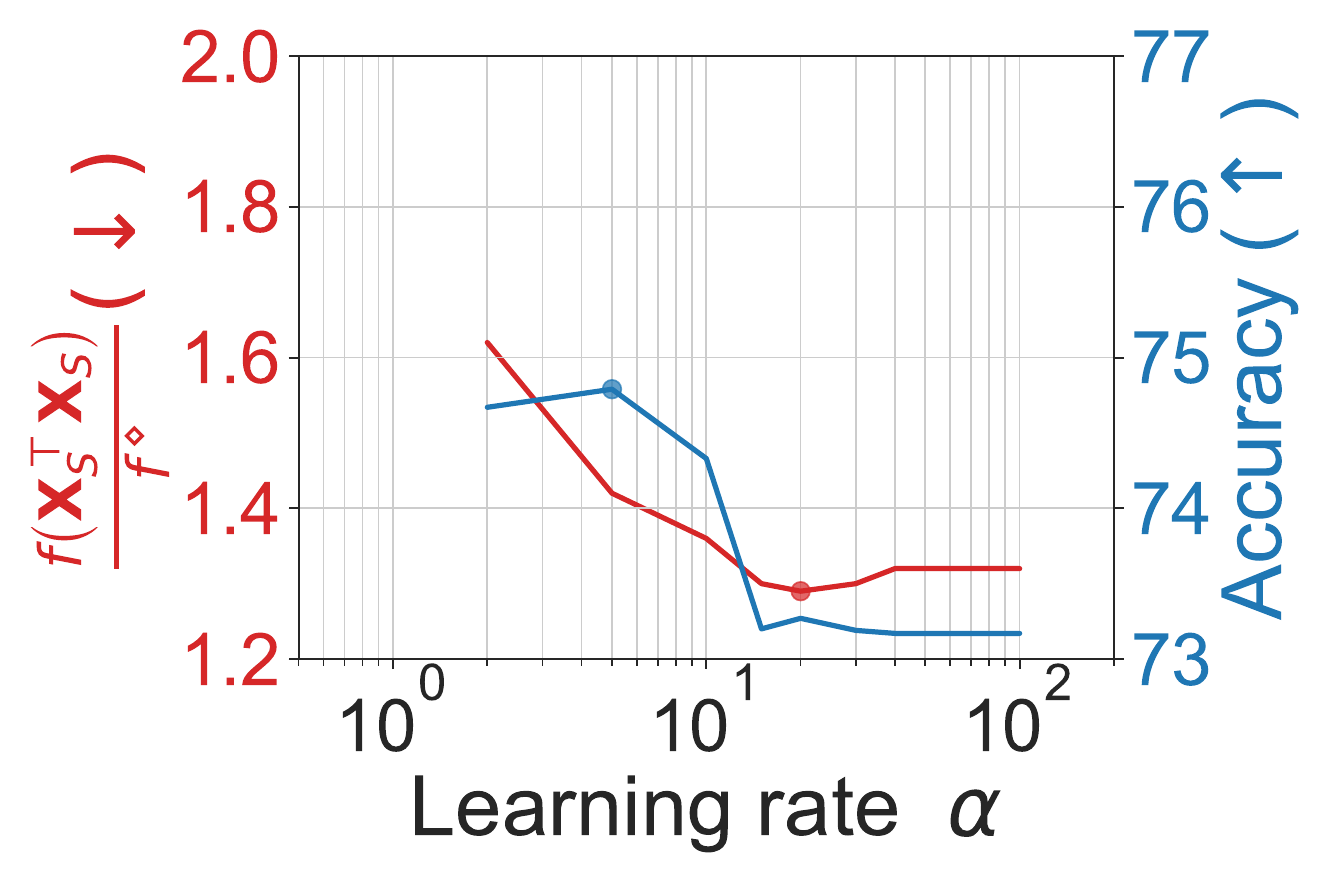}};
\node[rotate=90,anchor=north] at (-3.6, 0.2) {\color{black}Subset \rom{1}};
\node[rotate=90,anchor=north] at (-3.6, -3.4) {\color{black}Subset \rom{2}};
\node[rotate=90,anchor=north] at (-3.6, -7) {\color{black}Subset \rom{3}};
\node[rotate=90,anchor=north] at (-3.6, -10.6) {\color{black}Subset \rom{4}};
\node[rotate=90,anchor=north] at (-3.6, -14.2) {\color{black}Subset \rom{5}};
\draw[] (3,2.4) -- (3,-16.2);
\draw[] (-2.9,2.4) -- (-2.9,-16.2);
\draw (-4,-16.2) rectangle (8.8,1.8);
\draw (-4,1.8) rectangle (8.8,2.4);
\end{tikzpicture}
\caption{\color{black}Comparison of \regretent (left) and \regretlhalf (right) on five ImageNet-50 subsets. For each subset, we select $k=50$ points. Each subset is constructed by randomly sampling 400 samples per class. We use PCA-reduced features with dimension of 50, thus the input design pool for each subset data is $\X\in \mathbb{R}^{20,000\times 50}$.  The red lines in the plot represent relative value of the objective function $\frac{f(\X_S^\top \X_S)}{f^\diamond}$, where $\X_S$ is the selected samples and $f^\diamond$ is the optimal value of the relaxed problem~\cref{eq:lp}. The blue lines represent the logistic regression prediction accuracy. The dots on each line represent the optimal points.}
\label{fig:subset-imagenet-compare}
\end{figure}

\paragraph{Samples selected in CIFAR-10 test}
\Cref{fig:cifar10-images} displays the 40 images selected by each method on CIFAR-10 with 40-dimensional PCA-reduced features.

\paragraph{Samples selected in ImageNet-50 test}\Cref{table:image50-classname} lists the names of the 50 ImageNet-50 classes. Finally, \Cref{fig:imagenet-fig-1,fig:imagenet-fig-2,fig:imagenet-fig-3} present the 100 images selected by each sampling method on ImageNet-50 with 50-dimensional PCA-reduced features.

\begin{figure}[tbp]
\centering
  \footnotesize
\begin{tikzpicture}
\node[inner sep=0pt] (a1) at (0,0) {\includegraphics[width=12.5cm,height = 2.0cm]{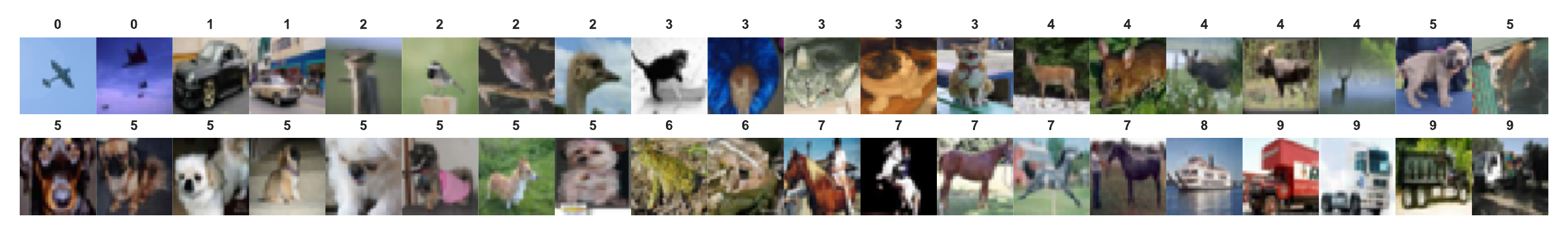}};
\node[rotate=90,anchor=north]  at (-6.6,0) {\textbf{Random}};
\node[inner sep=0pt] (a2) at (0,-1.8) {\includegraphics[width=12.5cm,height = 2.0cm]{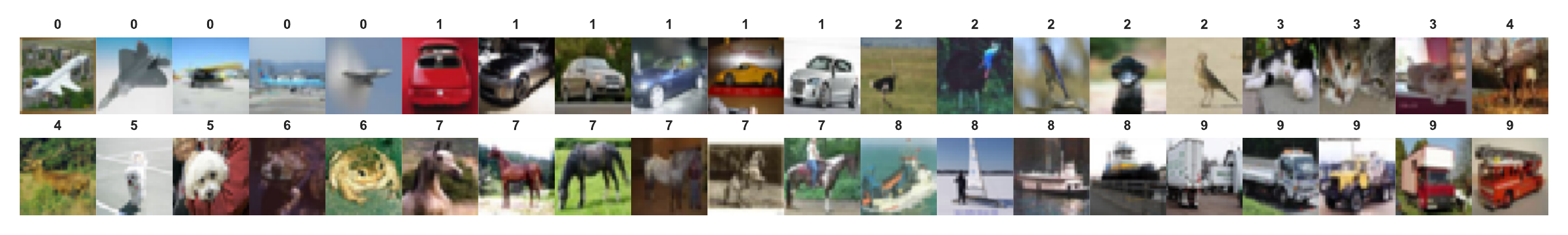}};
\node[rotate=90,anchor=north]  at (-6.6,-1.8) {\textbf{K-means}};
\node[inner sep=0pt] (a3) at (0,-3.6) {\includegraphics[width=12.5cm,height = 2.0cm]{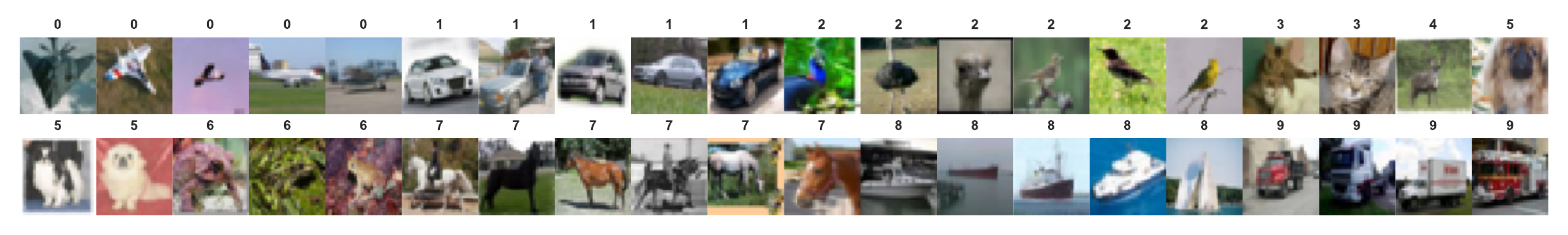}};
\node[rotate=90,anchor=north] at (-6.6,-3.6) {\textbf{RRQR}};
\node[inner sep=0pt] (a4) at (0,-5.4) {\includegraphics[width=12.5cm,height = 2.0cm]{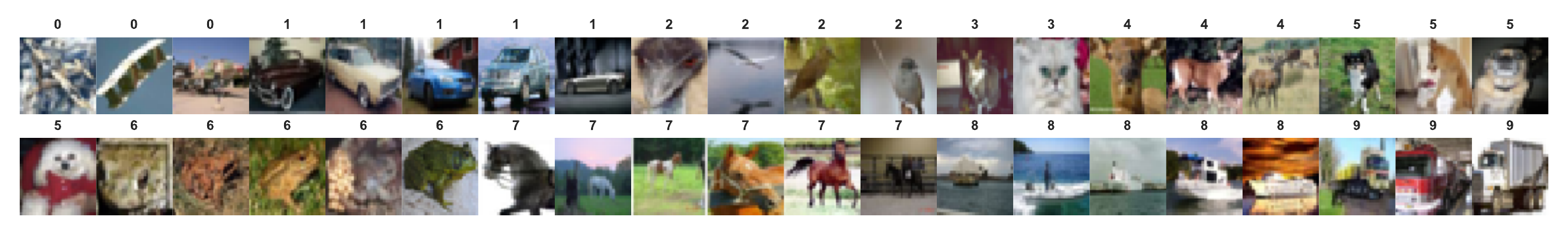}};
\node[rotate=90,anchor=north] at (-6.6,-5.4) {\textbf{MMD}};
\node[inner sep=0pt] (a5) at (0,-7.2) {\includegraphics[width=12.5cm,height = 2.0cm]{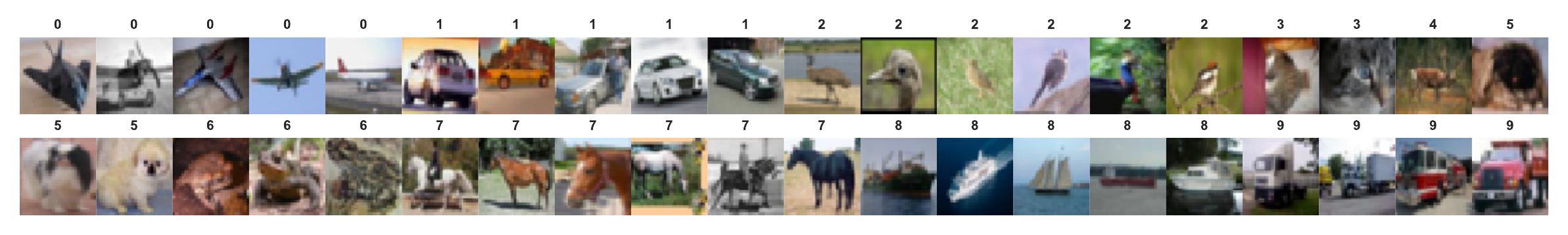}};
\node[rotate=90,anchor=north]  at (-6.6,-7.2) {\textbf{Regret-min}};
\end{tikzpicture}
\caption{40 samples selected by different sampling methods on CIFAR-10 (we use first 40 eigenvectors of the normalized graph Laplacian as features). The number above each image represents its label.}
\label{fig:cifar10-images}
\end{figure}

\begin{table}[t]
\footnotesize
\centering
\caption{Names of classes in ImageNet-50.}
\label{table:image50-classname}
\begin{tabular}{|c|C|c|C|c|C|}
    \hline
\# &class name &\# &class name &\# &class name \\ \hline
0  &combination lock  &1  &spotlight, spot  &2  &bonnet, poke bonnet\\ \hline
3  &gibbon, Hylobates lar  &4  &boxer  &5  &ice lolly, lolly, lollipop, popsicle\\ \hline
6  &Brabancon griffon  &7  &space bar  &8  &lacewing, lacewing fly\\ \hline
9  &African crocodile, Nile crocodile, Crocodylus niloticus  &10  &binder, ring-binder  &11  &dowitcher\\ \hline
12  &cup  &13  &ox  &14  &bubble\\ \hline
15  &bittern  &16  &wallaby, brush kangaroo  &17  &stingray\\ \hline
18  &hook, claw  &19  &Christmas stocking  &20  &parallel bars, bars\\ \hline
21  &canoe  &22  &home theater, home theatre  &23  &clog, geta, patten, sabot\\ \hline
24  &bathing cap, swimming cap  &25  &rock crab, Cancer irroratus  &26  &sea anemone, anemone\\ \hline
27  &measuring cup  &28  &hand blower, blow dryer, blow drier, hair dryer, hair drier  &29  &langur\\ \hline
30  &wooden spoon  &31  &pier  &32  &projectile, missile\\ \hline
33  &chime, bell, gong  &34  &airliner  &35  &dhole, Cuon alpinus\\ \hline
36  &tick  &37  &pillow  &38  &matchstick\\ \hline
39  &kimono  &40  &tripod  &41  &oxygen mask\\ \hline
42  &lorikeet  &43  &tusker  &44  &hummingbird\\ \hline
45  &gong, tam-tam  &46  &dogsled, dog sled, dog sleigh  &47  &whiptail, whiptail lizard\\ \hline
48  &bookcase  &49  &earthstar &  &\\ \hline
\end{tabular}
\end{table}

\begin{figure}[h]
\centering
  \footnotesize
\begin{tikzpicture}
\node[inner sep=0pt] (a1) at (0,0) {\includegraphics[width=8.6cm,height = 9cm]{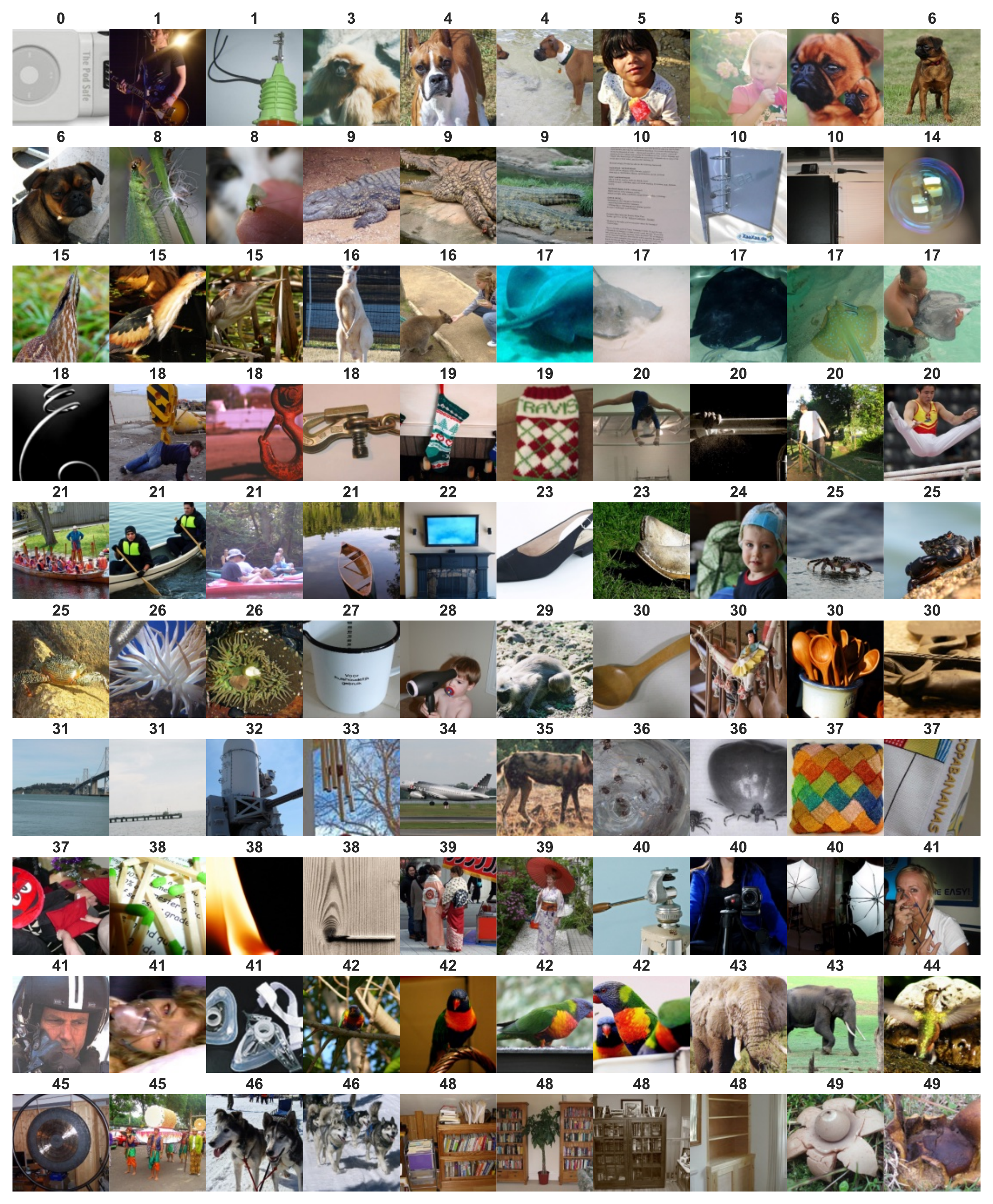}};
\node[]  at (-6,0.5) {\textbf{Random}};
\node[]  at (-6,-0) {\textbf{(\#classes:44)}};
\node[]  at (-6,-0.5) {\textbf{(Accuracy: 54.76\%)}};

\node[inner sep=0pt] (a2) at (0,-10) {\includegraphics[width=8.6cm,height = 9cm]{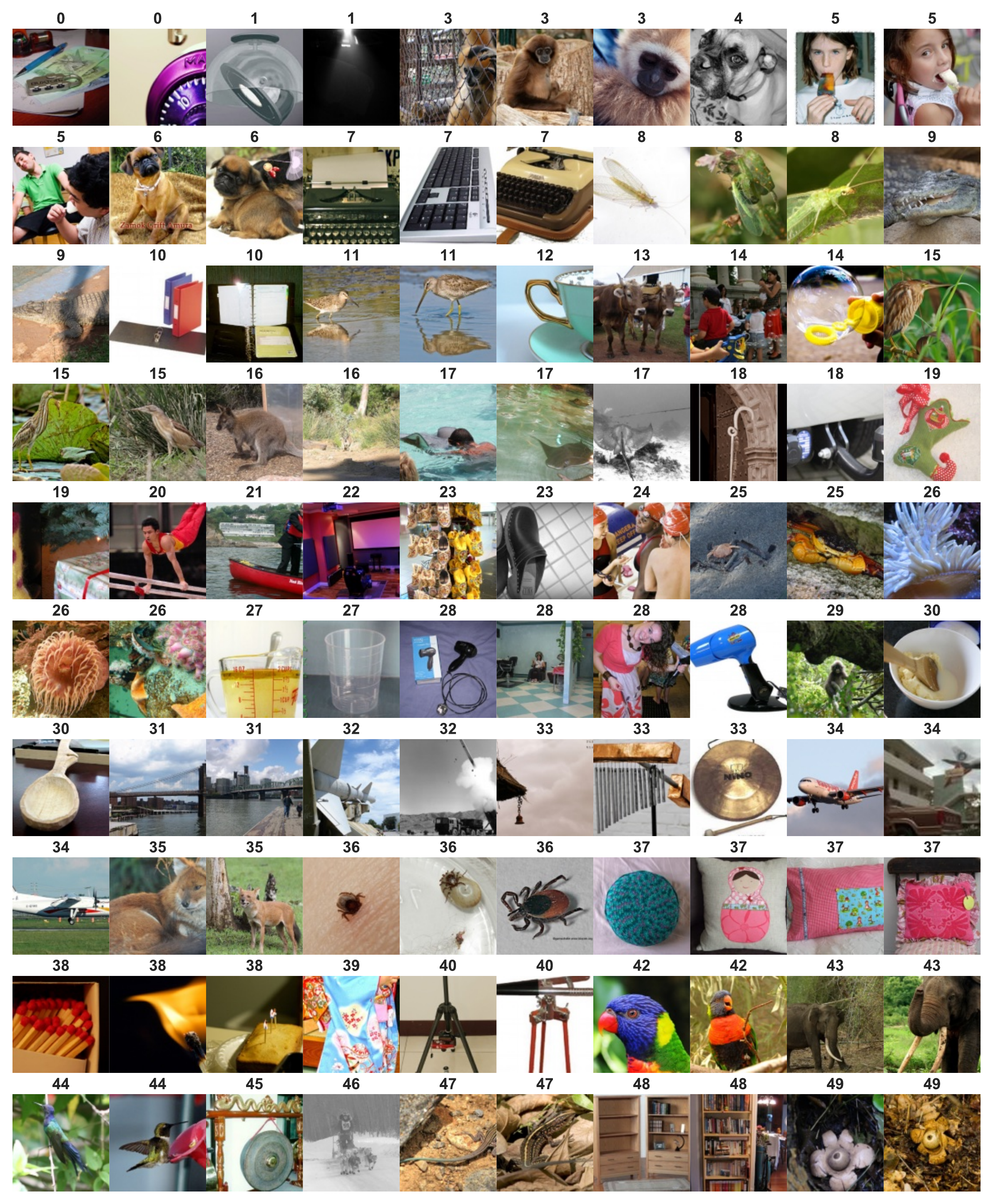}};
\node[]  at (-6,-7.5) {\textbf{K-means}};
\node[]  at (-6,-8) {\textbf{(\#classes:48)}};
\node[]  at (-6,-8.5) {\textbf{(Accuracy:68.26)}};
\end{tikzpicture}
\caption{100 ImageNet-50 samples selected by one trial of the  Random and K-means. The number of classes collected and the corresponding logistic regression prediction accuracy are reported within the brackets.}
\label{fig:imagenet-fig-1}
\end{figure}

\begin{figure}[h]
\centering
  \footnotesize
\begin{tikzpicture}
\node[inner sep=0pt] (a1) at (0,0) {\includegraphics[width=8.6cm,height = 9cm]{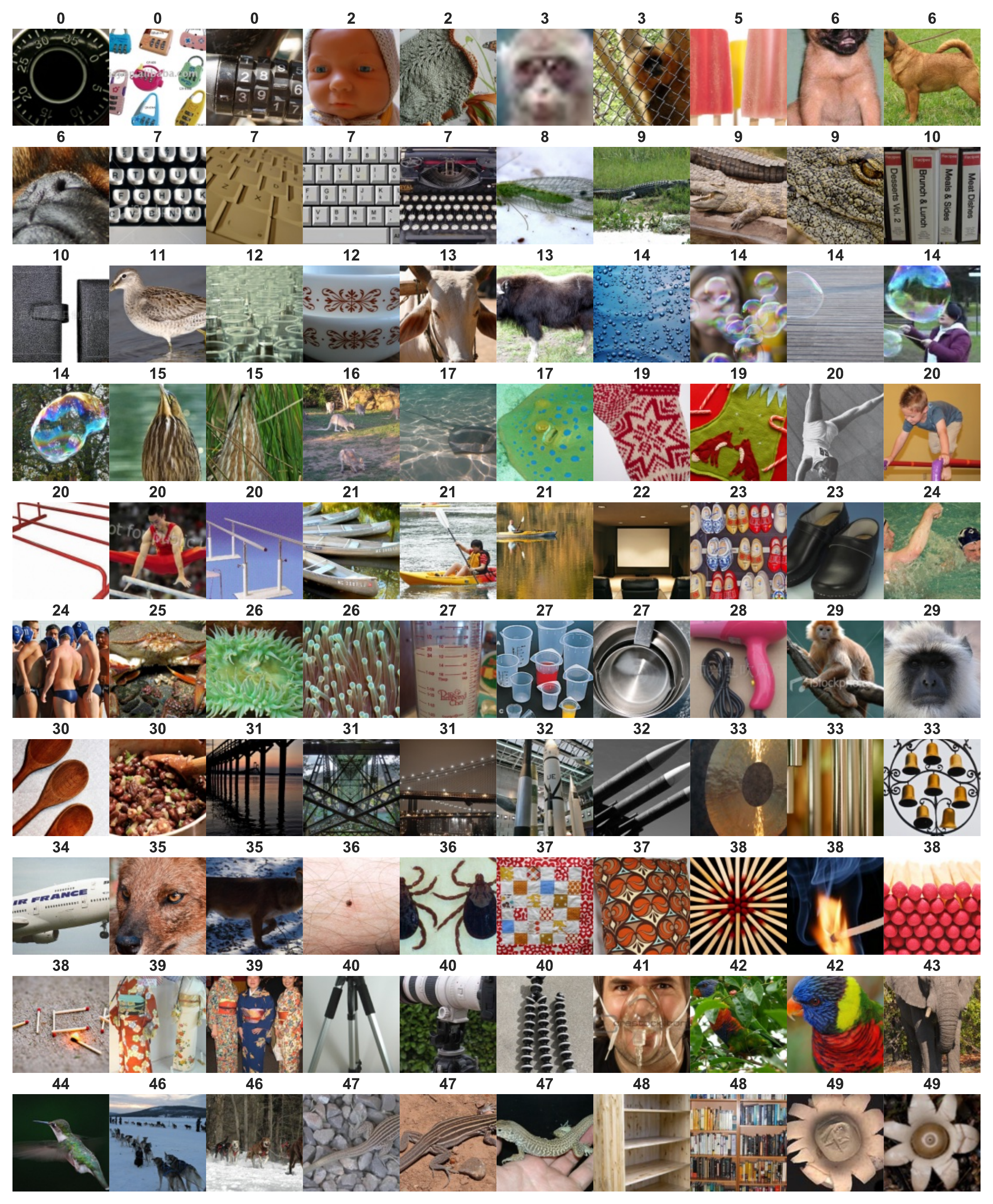}};
\node[]  at (-6,0.5) {\textbf{RRQR}};
\node[]  at (-6,-0) {\textbf{(\#classes:46)}};
\node[]  at (-6,-0.5) {\textbf{(Accuracy: 63.93\%)}};

\node[inner sep=0pt] (a2) at (0,-10) {\includegraphics[width=8.6cm,height = 9cm]{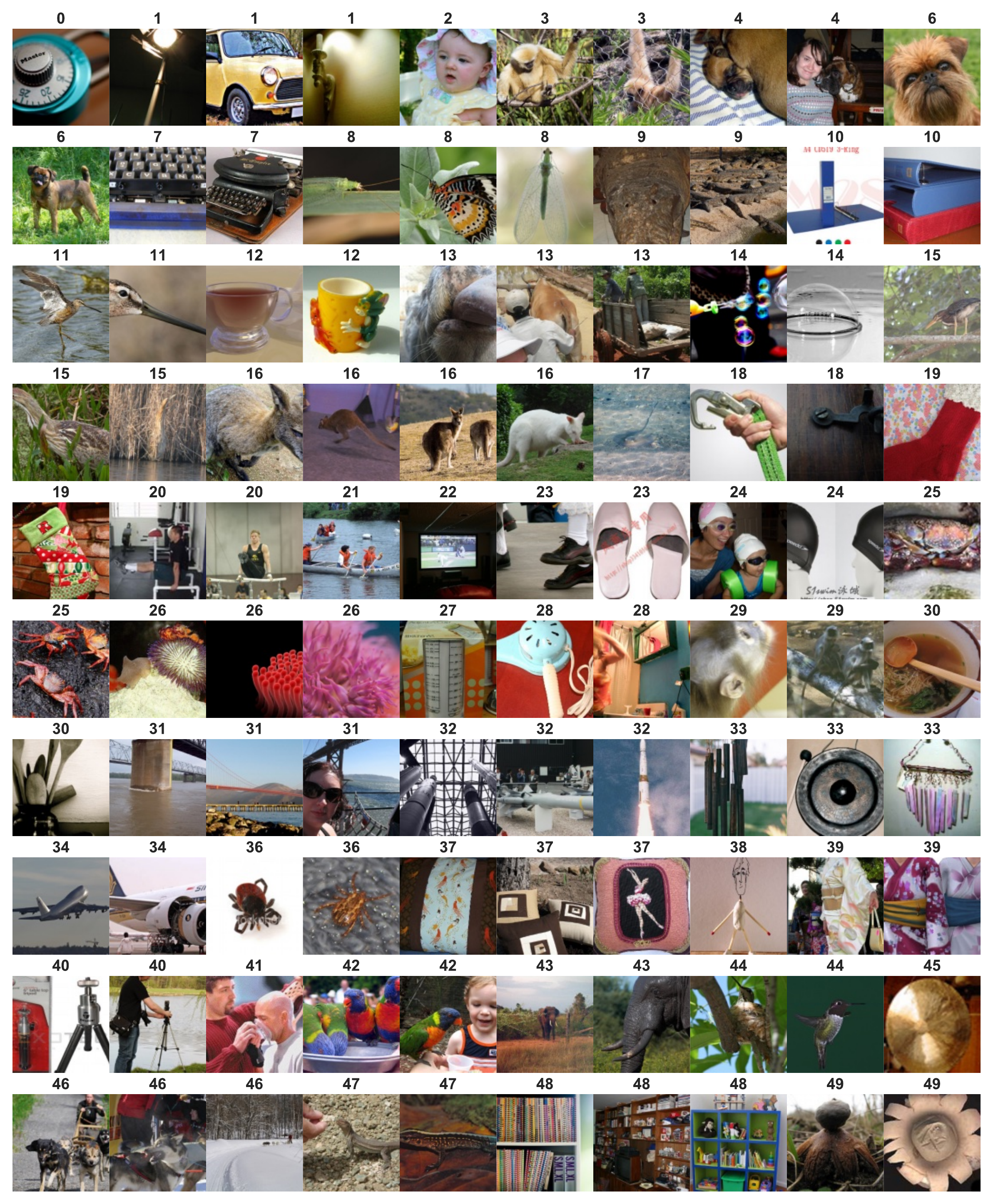}};
\node[]  at (-6,-7.5) {\textbf{MMD (dense)}};
\node[]  at (-6,-8) {\textbf{(\#classes:48)}};
\node[]  at (-6,-8.5) {\textbf{(Accuracy: 67.65\%)}};\end{tikzpicture}
\caption{100 ImageNet-50 samples selected by RRQR and MMD. The number of classes collected and the corresponding logistic regression prediction accuracy are reported within the brackets.}
\label{fig:imagenet-fig-2}
\end{figure}

\begin{figure}[h]
\centering
  \footnotesize
\begin{tikzpicture}
\node[inner sep=0pt] (a1) at (0,0) {\includegraphics[width=8.6cm,height = 9cm]{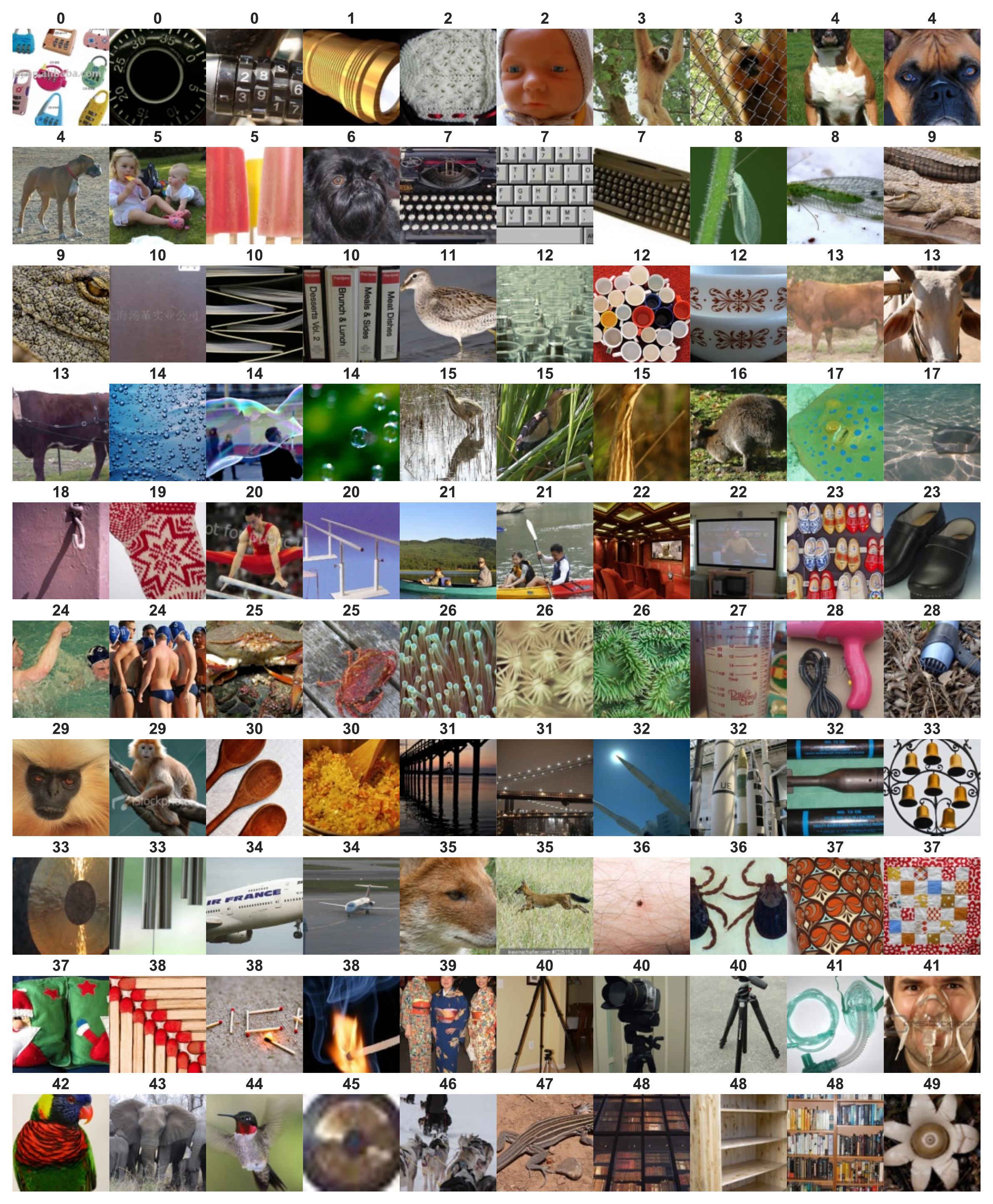}};
\node[]  at (-6.1,0) {\textbf{Regret-min (A-design)}};
\node[]  at (-6.1,-0.5) {\textbf{(\#classes: 50)}};
\node[]  at (-6.1,-1) {\textbf{(Accuracy: 70.95\%)}};
\end{tikzpicture}
\caption{100 ImageNet-50 samples selected by Regret-Min. The number of classes collected and the corresponding logistic regression prediction accuracy are reported within the brackets.}
\label{fig:imagenet-fig-3}
\end{figure}

\newpage
\bibliographystyle{siamplain}
\bibliography{references}

\end{document}